\newtheorem{theorem}{Theorem}[section] 
\newtheorem{definition}[theorem]{Definition}
\newtheorem{lemma}[theorem]{Lemma}
\newtheorem{corollary}[theorem]{Corollary}
\newcommand{\nnz}{\operatorname{nnz}}
\newsavebox{\@brx}
\newcommand{\llangle}[1][]{\savebox{\@brx}{\(\m@th{#1\langle}\)}%
  \mathopen{\copy\@brx\kern-0.5\wd\@brx\usebox{\@brx}}}
\newcommand{\rrangle}[1][]{\savebox{\@brx}{\(\m@th{#1\rangle}\)}%
  \mathclose{\copy\@brx\kern-0.5\wd\@brx\usebox{\@brx}}}
\renewcommand{\varsigma}{\sigma}
\DeclareMathOperator*{\argmin}{arg\,min}
\newcommand{\seclab}[1]{\label{sec:#1}}
\newcommand{\secref}[1]{Section~\ref{sec:#1}}
\newcommand{\subsecref}[1]{Subsection~\ref{sec:#1}}
\newcommand{\thmlab}[1]{{\label{theo:#1}}}
\newcommand{\thmref}[1]{Theorem~\ref{theo:#1}}
\newcommand{\lemlab}[1]{\label{lemma:#1}}
\newcommand{\lemref}[1]{Lemma~\ref{lemma:#1}}
\newcommand{\deflab}[1]{\label{def:#1}}
\newcommand{\defref}[1]{Definition~\ref{def:#1}}
\newcommand{\corlab}[1]{\label{cor:#1}}
\newcommand{\corref}[1]{Corollary~\ref{cor:#1}}
\providecommand{\Oh}[1]{\ensuremath{{O\left(#1\right)}}}
\renewcommand{\Pr}[1]{\ensuremath{\mathbf{Pr}\left[#1\right]}}
\newcommand{\REAL}{\ensuremath{\mathbb{R}}}
\newcommand{\removed}[1]{}
\newcommand{\eps}{\varepsilon}
\providecommand\phantomsection{}
\begin{document}

\title{Scalable Learning of Item Response Theory Models \thanks{The conference version of this paper appeared in the 27th International Conference on Artificial Intelligence and Statistics (AISTATS) 2024, pages~1234--1242.}
}
\author{
   Susanne Frick
   \thanks{Department of Statistics, TU Dortmund, Germany; 
} 
   \and
   Amer Krivo\v{s}ija%
   \thanks{Department of Statistics, TU Dortmund, Germany; 
} 
   \and
   Alexander Munteanu%
   \thanks{Department of Statistics and Center for Data Science \& Simulation (DoDaS), TU Dortmund, Germany; 
} 
}

\date{\today}

\maketitle

\begin{abstract}
  Item Response Theory (IRT) models aim to assess latent abilities of $n$ examinees along with latent difficulty characteristics of $m$ test items from categorical data that indicates the quality of their corresponding answers. Classical psychometric assessments are based on a relatively small number of examinees and items, say a class of $200$ students solving an exam comprising $10$ problems. More recent global large scale assessments such as PISA, or internet studies, may lead to significantly increased numbers of participants. Additionally, in the context of Machine Learning where algorithms take the role of examinees and data analysis problems take the role of items, both $n$ and $m$ may become very large, challenging the efficiency and scalability of computations. To learn the latent variables in IRT models from large data, we leverage the similarity of these models to logistic regression, which can be approximated accurately using small weighted subsets called \emph{coresets}. We develop coresets for their use in alternating IRT training algorithms, facilitating scalable learning from large data.
\end{abstract}

\clearpage

\tableofcontents


\clearpage

\section{INTRODUCTION}

Item Response Theory (IRT) is a paradigm often employed in psychometrics to estimate the ability of tested persons, called \emph{examinees}, through tests comprising multiple questions, called \emph{items}. The probability $p_{ij}$ that an item $i\in [m]:=\{ 1,\ldots,m\}$ will be solved by a person $j\in [n]$, depends on characteristic parameters of the item as well as on an ability parameter of the examinees.

The number of tested persons can be very large in contemporary global large scale assessments. For instance, the Programme for International Student Assessment (PISA) evaluates the education quality across $38$ OECD countries by measuring the literacy of $15$ year old students in reading, mathematics, and sciences. In this and other large scale (meta-)studies, nearly $n\approx 600\,000$ examinees are being tested regularly \citep{MuncerHGCWH21,PISA}. The number of items in the case of PISA is, however, comparatively small, $m\approx 10-30$ in each category. Beyond educational applications, IRT can be applied to benchmark studies where the examinees are artificial intelligence agents or machine learning algorithms, and the items are various problems. Then, the number of both, items and examinees, can in principle become arbitrarily large \citep{MartinezPMH19}. 
When the input data dimensions, $n$ and $m$, become large as motivated above, the computational effort to learn the parameters of IRT models grows. Sometimes it is not even possible to store the entire input or all latent variables simultaneously in main memory, which limits the applicability of IRT algorithms in large scale settings. 

A basic algorithmic pattern for learning IRT models is an alternating optimization procedure akin to EM algorithms. This is a classical approach taught in standard undergraduate courses in psychology, and thus it is highly significant. Given fixed values for the ability parameters, we optimize the item specific difficulty characteristics. Then, the updated difficulty characteristics are fixed while the abilities are being optimized. These two steps constitute one phase that is iterated over and over again until some termination criterion is met, such as convergence or exhaustion of an iteration budget.

To make this algorithmic pattern scalable to large data, we note that especially learning the item parameters from a huge number of examinees takes considerable time and space to be processed. In automated settings with a large number of test items, the same situation appears in the second step of each phase. Here, we note that in simple so called 1PL and 2PL (one/two parameter logistic) IRT models, each step consists of solving a set of logistic regression problems, where only the labels differ for each examinee or item. For logistic regression, it is known how to handle large data in a time and memory efficient way using a succinct summary as a replacement for the data. Such a proxy is commonly known as a \emph{coreset} that provably preserves the negative log-likelihood up to little errors \citep{MunteanuS18}.

\subsection{Our Contributions}
We review and motivate IRT models for various tasks and from different perspectives, ranging from the educational and social sciences to machine learning, where scalable IRT algorithms become important. From this starting point
\begin{enumerate}
\item we leverage the similarity of 2PL IRT models to logistic regression and adapt previous coresets to facilitate scalable learning of 2PL models,
   \item we develop new coresets for the more general and more challenging class of 3PL IRT models,
   \item we empirically evaluate the computational benefits of coresets for IRT algorithms while preserving their statistical accuracy up to little distortions.
\end{enumerate}
To our knowledge, our work provides the first sublinear approximation to the IRT subproblems considered in the alternating optimization steps with proven mathematical guarantees. 

\subsection{Related Work}

\paragraph{Development of IRT}
The history of IRT began with the formulation of the Rasch model \citep{rasch1960}. This was soon extended to modeling items with several parameters such as the 2PL and 3PL models \citep{Birnbaum68}. IRTs became popular in the United States through the book of \citet{lord1968}. Other extensions include models for items with several ordered categories~\citep{masters1982,samejima1969}, and models with continuous data such as the 2PL model with beta distributions~\citep{noel2007beta}. By now, IRT models are widely used for developing and scoring tests. For instance, large-scale assessments such as PISA~\citep{OECD2009,PISA} and the Trends in International Mathematics and Science Study (TIMSS)~\citep{vondavier2020} use IRT models for scoring responses, making them comparable between students who received different sets of items.

\paragraph{IRT in Machine Learning} To the best of our knowledge there are no rigorous theoretical guarantees on algorithms for learning the latent parameters of IRT models. Recently, IRT models have been used as a tool for analyzing machine learning classifiers~\citep{MartinezPMH19}. An extension building on beta distributions is the $\beta^3$-model by \citet{ChenFPDF19} introduced and applied to assess the ability of machine learning classifiers. IRT was also introduced to ensemble learning~\citep{chen2020item}. Recently, an IRT based analysis of regression algorithms and problems was suggested by \citet{munoz2021instance}.  \citet{martinez2022ai} proposed an empirical estimation for the difficulty of AI tasks using IRT models.

\paragraph{Coresets for Logistic Regression} \citet{ReddiPS15} used gradient-based methods to construct coresets for logistic regression, though without a bound on their size. Later, \citet{HugginsCB16} applied the framework of sensitivity sampling \citep{LangbergS10} noting that there are instances that require linear size to be approximated. \citet{MunteanuSSW18} proved that compression below $\Omega(n)$ is not possible in general. They developed the first provably sublinear coresets for logistic regression on \emph{mild} inputs $X$ of size $n$ and dimension $d$, introducing a data dependent parameter $\mu(X)$ to capture the complexity of compressing the data. This enabled a parameterized analysis giving a coreset, which for a given parameter $\eps\in(0,1/2)$ provides a multiplicative approximation factor of $(1+\varepsilon)$ within size $\tilde O(\mu^3 d^3 / \eps^4)$, hiding polylogarithmic terms in $n$. This was recently improved to $\tilde O(\mu^2 d/\eps^2)$ \citep{MaiMR21} by importance subsampling using $\ell_1$ Lewis weights as a replacement for the previous square root of $\ell_2$-leverage scores. More recently, it was extended to a single pass online algorithm along with a lower bound claiming linear dependence on $\mu$ \citep{WoodruffY23}. Coresets for logistic regression were recently extended to $p$-generalized probit models \citep{MunteanuOP22} giving the first coresets in this line whose size are independent of $n$. There are further extensions to a certain class of near-convex functions \citep{TukanMF20} and to monotonic functions \citep{TolochinskyJF22}.

\section{PRELIMINARIES}
\seclab{preliminaries}

\paragraph{IRT Models}
There are various IRT models that are employed in the literature, mainly differing in their number of parameters used to describe the characteristics of examinees and items, respectively. Although an examinee can in principle be described using multiple parameters, a common choice is only one \emph{ability} parameter, denoted $\theta_j$ for examinee $j\in [n]$. The number of parameters describing item characteristics varies more distinctively across IRT models, building or generalizing one over the other. The simplest of all is the Rasch model, named after its inventor \citep{rasch1960}, and is mathematically equivalent to the 1PL model. Here, one only takes into account how the {ability} $\theta_j$ differs from the \emph{difficulty} $b_i$ of solving item $i$, expressed in units of the ability parameter $\theta_j$ \citep{BakerKim04}. The 2PL-model, introduced by \citet{Birnbaum68}, is a basic model that is most commonly used. It describes item $i$ introducing a \emph{discrimination} or scale parameter $a_i$ in addition to its difficulty. The next step in this sequence of generalizations is adding to each item a \emph{default guessing} parameter $c_i$, which leads us to the 3PL model. We note that there exist even more general 4PL models \citep{Barton1981upper}. In this paper, however, we do not go into details about more general models than 3PL. Putting all parameters together in a probabilistic model, we arrive at the item characteristic curve (ICC)\footnote{The exponent in the ICC is often defined as $-a_i(\theta_j-b'_i)$. Rescaling $b'_i = b_i/a_i$ (note $a_i>0$) yields our definition.} 
specifying the probability of passing test item $i$ depending on the ability para\-meter $\theta_j$:
\begin{align}
\label{eqn:3pl:def}
p_i(\theta_j) = c_i + \frac{1-c_i}{1+\exp (-a_i\theta_j+b_i)},
\end{align}
The probability of an incorrect answer is consequently
\begin{align}
\label{eqn:3pl:def:comp}
1-p_i(\theta_j) = \frac{1-c_i}{1+\exp (a_i\theta_j-b_i)}.
\end{align}
We note that this defines a logistic sigmoid curve, see \cref{fig:ICC}, with a lower asymptote of $c_i\geq 0$. 

\noindent 
We describe the interpretation of the parameters corresponding to an item $i$:
\begin{itemize}
    \item The discrimination parameter $a_i$ specifies how flat or steep the curve ascends from $c_i$ to $1$. For example, a very steep ascend indicates that the item is nearly unsolvable unless the examinee has gained a special competence or knowledge. A knowledgeable examinee, however, is nearly guaranteed to pass the item. A flat curve indicates that the examinee needs to learn the necessary competences and gain some 'experience' in solving the task.
    \item The difficulty parameter $b_i$ specifies the threshold where passing or failing the item have equal $0.5$ probability (when $c_i=0$). Examinees with a significantly smaller ability $\theta_j$ have a low probability of passing, while those with a much larger ability have a high probability of passing.
    \item Finally, the guessing parameter $c_i$ indicates the probability of passing, say a multiple choice item, by randomly answering the question without having any knowledge or ability for solving the task.
\end{itemize}

In the special case of $c_i=0$ for all $i$, \cref{eqn:3pl:def} simplifies to the 2PL model and further constraining $a_i=1$ for all $i$ yields the 1PL (Rasch) model.

\begin{figure}[ht!]
\caption{Item Characteristic Curve examples
}
\label{fig:ICC}
\begin{center}
    \includegraphics[width=0.65\linewidth]{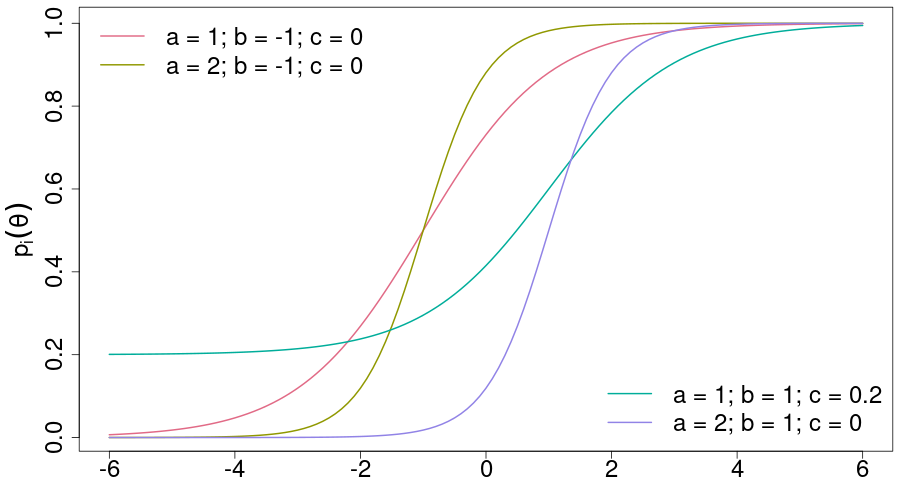} 
\end{center}
\end{figure}
The 2PL parameters are in principle unbounded, i.e., $a_i, b_i\in \mathbb{R}$, though we may safely assume that $a_i>0$ to account for the reasonable fact that with growing ability it becomes more likely to solve an item, but the reverse situation never occurs. Another prior knowledge that we may assume for the additional guessing probability is that $c_i \in [0,0.5)$ since we do not want a randomly answered item to be solved with higher probability than a coin flip. In practical settings where we encounter multiple choice items we may often assume a lower bound such as $c_i>c_{\min}=1/\kappa$, where $\kappa$ is the number of offered choices.

The difficulty in learning IRT models as introduced above comes from the fact that all parameters are unobserved latent variables, meaning that they are neither given nor explicitly observed. The data only consists of binary observations\footnote{Some literature specifies labels in \{0,1\}.}
$Y_{ij}\in\{-1,1\}$, indicating for item $i\in[m]$ and examinee $j\in[n]$ whether the item was answered correctly $Y_{ij}=1$ or not  $Y_{ij}=-1$. For notational convenience, we let the data be arranged in a matrix $Y=(Y_{ij})_{i\in [m], j\in [n]}\in \{-1,1\}^{m\times n}$.

We stress that our coreset results are quite general in that they approximate the IRT model, and their use is not restricted to a specific algorithm. Nevertheless, we choose to build and evaluate our coresets on the following classical approach due to its high significance in standard undergraduate courses in psychology.
Learning the latent parameters of IRT models involves a non-convex joint maximum likelihood optimization problem that encounters identifiability problems \citep{sanmartin2015}. Due to the fact that the parameter space increases with the sample size, we need to condition on one set of parameters to optimize for the other. This yields an alternating two-step optimization approach that operates as follows \citep[cf.][]{BakerKim04}:

\bigskip

\phantomsection
\label{algo}
\noindent
\textbf{General Algorithmic IRT Framework}
\begin{enumerate}
    \item Initialize all latent parameters.
    \item While termination criterion is not met:
    \begin{enumerate}
        \item Learn the ability parameters, given fixed item characteristics.
        \item Learn the item characteristics, given fixed ability parameters.
    \end{enumerate}
\end{enumerate}

Starting from a proper initialization, the algorithm optimizes one set of parameters given the other until convergence (to a local optimum) is detected or a given iteration budget is exhausted.
It is noteworthy that in the case of a 2PL IRT model, the two conditional optimization subproblems are not only convex but correspond exactly to standard logistic regression problems in two dimensions. The 3PL model, however, is more challenging, since it involves optimization over a combination of unbounded logistic loss functions as well as bounded non-convex sigmoid functions. We will elaborate on this in \secref{coresets:models} below.

\paragraph{Coresets for the IRT Framework}
Given massively large input data and a potential solution to an optimization problem, it is often already prohibitively expensive to evaluate or even to optimize the loss function with respect to the entire input. In such situations, it is preferable to have a much smaller subset of the data, such that solving the optimization problem on this small summary gives us an accurate approximate solution compared to the result obtained from analyzing the entire data.

This leads us to the concept of \emph{coresets} that we want to compute in order to make the optimization steps 2(a) and 2(b) scalable to large data. Both can be treated similarly. For the sake of presentation, we thus focus on the optimization in step 2(b) since in most natural settings the number of examinees exceeds the number of items, i.e. $n\gg m$. The optimization step 2(b) can be decomposed into $m$ independent instances, indexed by $i\in[m]$, of the following form, each summing over the huge number of $n$ examinees:
$f_w(X\eta_i) = \sum_{j\in [n]} w_j g(x_j\eta_i),$
where $X$ is an $n\times d$ matrix comprising the currently fixed ability parameters as \emph{row} vectors $x_j \in \mathbb{R}^{d}$, along with their corresponding labels $Y_{ij}$ from the data matrix, $\eta_i \in \mathbb{R}^{d}$ are vectors comprising the item characteristic parameters to be optimized in the current iteration, and $w \in \mathbb{R}^{n}$ is a vector of non-negative weights that is dropped from the notation whenever all weights equal $w_j=1$. 

A significantly smaller subset $K\subseteq X, k:=|K|\ll|X|$ together with corresponding weights $u\in \mathbb{R}^{k}$ is a $(1+\varepsilon)$-coreset for $X$ if it satisfies that 
\begin{equation}
\label{eqn:coreset:property}
\forall \eta\in\REAL^d\colon \lvert f_{w} (X \eta) - f_{u}(K \eta)\rvert\leq \varepsilon\cdot f_{w}(X \eta).
\end{equation}
We refer to \defref{coresets} in the appendix for details. 
Intuitively, a coreset evaluates for each possible solution to the same value as the original point set up to a factor of $(1\pm\eps)$, and moreover it implies that the minimum obtained from optimizing over the coreset is within a $(1+O(\eps))$ approximation to the original optimum (see \lemref{coreset:error:approx}), while the memory and computational requirements are significantly reduced.

Unfortunately, $(1+\varepsilon)$-coresets of size $k\ll n$ cannot be obtained for the logistic regression problem in general. Thus, such coresets can neither exist for 2PL IRT models, nor for 3PL models. To facilitate an analysis beyond the worst case, a data dependent parameter $\mu$ was introduced by \citet{MunteanuSSW18}, which can be used to bound the size of data summaries with the above accuracy guarantees and thus it enables a formal analysis and construction of small coresets for the logistic regression problem, as well as for other related problems.
Their original definition will suffice for the 2PL model.

Here, we extend the definition slightly to impose that additionally to the $\ell_1$-norm ratio between the positive and the negative entries, also their fraction in terms of $\ell_0$-norm\footnote{The case $p=0$ is often abusively referred to as a norm in the literature. 
} is bounded, i.e., the ratio of the number of positive and negative entries. This will be needed in our extension to 
the 3PL model. We let for $p\in\{0,1\}$\footnote{We note that $\mu$-complexity has been generalized to arbitrary $p \in \{0\} \cup [1,\infty)$ \citep{MunteanuOP22,TukanMF20}. Here, we require only the cases $p\in\{0,1\}$.}
 \begin{align*}
 \mu_p(X)=\sup_{\eta\in\REAL^d\setminus\lbrace 0\rbrace} \frac{\sum_{x_i\eta \geq 0 }|x_i\eta|^p}{\sum_{x_i\eta <0 }|x_i\eta|^p} = \sup_{\eta\in\REAL^d\setminus\lbrace 0\rbrace} \frac{\|(X\eta)^+\|_p}{\|(X\eta)^-\|_p}
\end{align*}
and say $X$ is $\mu_p$-complex if $\mu_p(X)\leq \mu_p$ for a bounded $1\leq \mu_p < \min\{m,n\}$. We say $X$ is $\mu$-complex if $\max\{\mu_0,\mu_1\}\leq \mu < \min\{m,n\}$.
It follows that 
\begin{align}
\label{mu:property}
\|(X\eta)^-\|_p /\mu \leq \|(X\eta)^+\|_p \leq \mu\cdot \|(X\eta)^-\|_p.
\end{align}
For the left hand side inequality, note that for every $\eta$ the supremum also considers $-\eta$, for which the roles of positive and negative entries are reversed.

\paragraph{Constructing Coresets}
Recall that the loss functions that we encounter when we train IRT models are defined as sums of individual point-wise losses. It is well-known from the related work on logistic regression that the multiplicative approximation guarantees provided by coresets cannot be obtained by uniform sampling. We elaborate on this with a focus on IRT in~Appendix~\ref{appendix:uniform} for completeness of presentation.
 
A common method for obtaining coresets to approximate such functions by importance sampling is called the \emph{sensitivity framework} that was introduced by \citet{LangbergS10}. They defined the sensitivity of an input point as their worst case individual contribution to the entire loss function. The sensitivity of a point $x_j$ for the function $f_w(X\eta)=\sum_{j\in [n]} w_j g(x_j\eta)$ is 
\begin{align*}
\sigma_j = \sup_\eta \frac{w_j g(x_j\eta)}{f_w(X\eta)}.
\end{align*}
This was subsequently combined with the theory of VC dimension to obtain a meta-theorem. It states that we can take a properly reweighted subsample using sampling probabilities that are proportional to the sensitivities. This yields a $(1+\eps)$-coreset if its size is taken to be $k=O(\frac{S}{\eps^2}(\Delta\log S + \log \frac{1}{\delta}))$. Here $S=\sum_{j\in[n]} \sigma_j$ denotes the total sensitivity, $\Delta$ denotes the VC dimension of a set system derived from the functions $g(x_i\eta)$, and $\delta$ is the failure probability \citep{FeldmanSS20}. One complication, however, is that computing the exact sensitivities is usually as hard as solving the problem under study. Fortunately, any upper bounds on the sensitivities suffice as a replacement. However their overestimation should be controlled carefully since the total sensitivity grows and is an important parameter that determines the coreset size. Further details on the sensitivity framework are in \cref{sec:sensitivityframework}. In the following we can assume that the problem of constructing coresets reduces to bounding the VC dimension and estimating the sensitivities for the functions under study.

\section{CORESETS FOR IRT MODELS}
\seclab{coresets:models}
\subsection{2PL Models}
\seclab{2PL:model}

For a suitable presentation of our technical results on coresets for IRT models, we use the following notation. For the item parameters, we define vectors $\alpha_i=(a_i,b_i)^T, i\in[m]$ and similarly we define for the examinees $\beta_j=(\theta_j,-1)^T, j\in[n]$ and collect them in matrices $A= \begin{bmatrix} \alpha_1&\ldots&\alpha_m\end{bmatrix}\in \REAL^{2\times m}$ and $B=\begin{bmatrix}\beta_1&\ldots&\beta_n\end{bmatrix}\in \REAL^{2\times n}$. Now, given the item characteristics and the ability parameters, the probability of observing the data matrix $Y$ can be rewritten as
\begin{align}
\label{eqn:2pl:prob}
\Pr{Y|A,B} = \prod\nolimits_{i\in [m], j\in [n]} \frac{1}{1+\exp( - Y_{i j} \alpha_i^T \beta_j )}.
\end{align}
To compute a joint maximum likelihood estimate of the item and ability parameters, a basic approach is to fix one set, say the item parameters $A$, and optimize over the ability parameters $B$, and then switch their roles. This process is repeated in an alternating manner \citep{BakerKim04} as we introduced in the general algorithmic IRT framework, see \cref{algo}. This leads us to minimizing the following negative log-likelihood function switching back and forth between the roles of data and variables: 
\begin{align*}
f(A\mid B) = \sum\nolimits_{i\in[m], j\in [n]} \ln( 1+\exp( -Y_{i j} \alpha_i^T \beta_j)) = f(B\mid A ).
\end{align*}
In particular, for a given fixed $B\in \REAL^{2\times n}$, we can write $x_j=-Y_{i j}\beta_j^T$ for every $j\in [n]$, and then set $X_{(i)}=( x_j)_{j\in[n]} \in \REAL^{n\times 2}$ for each $i\in [m]$ to optimize for
\begin{align}
\label{eqn:2pl:logistic:b}
\min\nolimits_{\alpha_i\in \REAL^2} \sum\nolimits_{j\in [n]} \ln ( 1+ \exp( x_j \alpha_i)).
\end{align}
By symmetry, for a given fixed $A\in \REAL^{2\times m}$, we can write $x_i=-Y_{i j}\alpha_i^T$ for every $i\in [m]$, and set $X_{(j)}=( x_i)_{i\in[m]} \in \REAL^{m\times 2}$ for each $j\in [n]$ to optimize for
\begin{align}
\label{eqn:2pl:logistic:a}
\min\nolimits_{\beta_j\in \REAL^2} \sum\nolimits_{i\in [m]} \ln ( 1+ \exp( x_i \beta_j)).
\end{align}
Note that the objective functions given in \cref{eqn:2pl:logistic:b,eqn:2pl:logistic:a} 
are equivalent to plain logistic regression \citep[cf.][]{MunteanuSSW18}, where coresets for logistic regression were constructed using the sensitivity framework.
To obtain an upper bound on the sensitivity of the input, the authors related the single contributions of input points $x_j$ to the square root of the so called $\ell_2$-leverage scores: $l_j = \sup_{\eta\in \REAL^d \setminus \lbrace 0\rbrace} {|x_j \eta|^2}/{\| X\eta\|_2^2}\;,$
a measure that can be derived from the row norms of an orthonormal basis for the space spanned by the data matrix, see \defref{leverage:scores} and \lemref{lem:leverageequiv} for details.

However, in \citep{MunteanuSSW18}, the label vector $Y$ was a fixed vector in $\REAL^{n}$, while here, $Y$ is a matrix in $\REAL^{m\times n}$, i.e., we have to deal with a different label vector for each item, respectively for each ability parameter, that is fixed in one iteration, and thus the matrices $X_{(i)}$ differ across a large number of iterations. Fortunately, the leverage scores -- only depending on the spanned subspace, not on its representation -- are invariant to sign flips as we show in the next lemma.
\begin{lemma}
\lemlab{leverage:invariant}
Suppose we are given a matrix $X\in \REAL^{m\times n}$ (for any $m,n\in \mathbb{N}$) and an arbitrary diagonal matrix $D=(d_{i j})_{i\in [m], j\in [m]}$, with $d_{i j}\in \lbrace -1,1\rbrace$ if $i=j$, and $d_{i j}=0$ otherwise. Then 
the leverage scores of $X$ are the same as the leverage scores of $DX$.
\end{lemma}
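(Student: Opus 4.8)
The plan is to argue directly from the supremum characterization of the leverage scores, $l_i = \sup_{\eta\in\REAL^n\setminus\{0\}} |x_i\eta|^2/\norm{X\eta}_2^2$, where $x_i$ denotes the $i$-th \emph{row} of $X$. Since $D$ is diagonal with $d_{ii}\in\{-1,1\}$ and zeros off the diagonal, the $i$-th row of $DX$ is exactly $d_{ii}x_i$. The key observation is that a sign flip of a single vector is annihilated by squaring: for any fixed $\eta$ the numerator obeys $|d_{ii}x_i\eta|^2 = d_{ii}^2|x_i\eta|^2 = |x_i\eta|^2$, using $d_{ii}^2=1$.

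I would then handle the denominator in the same spirit. Expanding the squared Euclidean norm row-by-row gives $\norm{DX\eta}_2^2 = \sum_{k}(d_{kk}x_k\eta)^2 = \sum_k d_{kk}^2(x_k\eta)^2 = \sum_k(x_k\eta)^2 = \norm{X\eta}_2^2$, where again every $d_{kk}^2=1$ and the off-diagonal entries contribute nothing. Hence for every $\eta\neq 0$ the ratio defining the $i$-th leverage score coincides for $X$ and $DX$; taking the supremum over $\eta$ yields $l_i(DX)=l_i(X)$ for each row $i$, which is the claim.

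As an alternative that makes the invariance conceptually transparent, I could invoke the orthonormal-basis characterization (\lemref{lem:leverageequiv}): the leverage score of a row equals the squared norm of the corresponding row of any orthonormal basis $U$ of the column space of $X$. Because $D^T D = I$, the matrix $D$ is orthogonal, so $DU$ has orthonormal columns and still spans the column space of $DX$; thus $DU$ is a valid basis, and $\norm{(DU)_{i,:}}_2^2 = d_{ii}^2\norm{U_{i,:}}_2^2 = \norm{U_{i,:}}_2^2$. Either route delivers the result.

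There is no genuine obstacle here: the statement is essentially a one-line consequence of $d_{ii}^2=1$. The only care required is bookkeeping — ensuring the sign cancellation is applied consistently in the numerator and in every term of the denominator, and that the leverage score is read off row-wise so that the per-row scaling $d_{ii}$ lines up with the $i$-th leverage score rather than being mismatched across rows.
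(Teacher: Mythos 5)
Your proof is correct and is essentially the paper's argument: the paper plugs the sign matrix into the third characterization of \lemref{lem:leverageequiv}, computing $e_i^T(DX)(X^TD^TDX)^{-1}(X^TD^T)e_i = d_{ii}^2\,e_i^TX(X^TX)^{-1}X^Te_i$ and using $D^TD=I$, while you plug it into the second (supremum) characterization; both reduce to the single fact $d_{ii}^2=1$. Your orthonormal-basis alternative is likewise valid and equally immediate.
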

This insight allows us to use the square root of the $\ell_2$-leverage scores of $A$, respectively $B$, as a fixed importance sampling distribution across \emph{all} iterations where the same latent parameter matrix is involved as a fixed 'data set' even though the signs may arbitrarily change in each iteration.
Let us consider the optimization problem in \cref{eqn:2pl:logistic:b}\footnote{The subsequent discussion also applies verbatim to the problem in \cref{eqn:2pl:logistic:a}.}. Here, we are given the ability parameter matrix $B\in \REAL^{2\times n}$ and the label matrix $Y\in\REAL^{m\times n}$.
We can directly use Theorem 15 of 
\citep{MunteanuSSW18}, for logistic regression in $d=2$ dimensions (with uniform weights) to get a small reweighted coreset for each optimization of an $\alpha_i\in \REAL^2$.
To this end, we approximate the $\ell_2$-leverage scores $l_j,j\in[n]$ of $B$ and sample a coreset proportional to $\sqrt{l_j} + 1/n$, where $\sqrt{l_j}$ captures the importance of coordinates with a large linear contribution, and the augmented uniform $1/n$ is useful to capture small elements near zero that can dominate when their number is large, since their logistic loss is bounded below by a nonzero constant. As in \citep{MunteanuSSW18}, this yields a coreset whose size is dominated by an $O(\sqrt{n})$ factor which can be repeated recursively $O(\log\log n)$ times to decrease the dependence to $\operatorname{polylog}(n)$.
Moreover, by \lemref{leverage:invariant} it suffices to sample one single coreset that is valid across all iterations $i\in [m]$ optimizing for $\alpha_i$ and whose size is only inflated by an additive $\log(m)$ term to control the overall failure probability using a union bound over the $m$ iterations. This yields the following theorem. 

\begin{theorem}
\thmlab{2pl:main:result}
Let $X_{(i)}=( -Y_{ij} \beta_j^T)_{j\in[n]}\in\REAL^{n\times 2}$ be $\mu_1$-complex, for each $i\in[m]$. Let $\varepsilon\in (0,1/2)$. There exists a weighted set $K\in \REAL^{k\times 2}$ of size\footnote{The $\tilde{O}$ notation omits $o(\log n)$ terms for a clean presentation. The full statements can be found in the appendix.} $k\in \tilde O(\frac{\mu^3}{\varepsilon^4}(\log(n)^4 + \log(m))$, that is a $(1+\varepsilon)$-coreset simultaneously for all $X_{(i)}$, $i\in[m]$ for the 2PL IRT problem. The coreset can be constructed with constant probability and in $\tilde O(n)$ time.  
\end{theorem}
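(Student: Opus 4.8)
The plan is to build a single importance-sampling distribution on the $n$ examinees, draw one weighted subsample from it, and then argue via a union bound that this one sample is simultaneously a $(1+\eps)$-coreset for each of the $m$ logistic regression subproblems in \cref{eqn:2pl:logistic:b}. The crucial observation that lets a \emph{single} draw suffice is \lemref{leverage:invariant}: since $X_{(i)}=(-Y_{ij}\beta_j^T)_{j\in[n]}$ arises from $B^T$ by applying the sign-flip diagonal matrix $\operatorname{diag}(-Y_{i1},\ldots,-Y_{in})$, all $m$ matrices $X_{(i)}$ share the identical $\ell_2$-leverage scores, namely those of $B$. Hence the sampling distribution is the same for every item and need not be recomputed across iterations.

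First I would approximate the $\ell_2$-leverage scores $l_j$ of $B$ in $\tildeO{n}$ time with a standard fast leverage-score estimator, and form sampling probabilities proportional to $\sqrt{l_j}+1/n$. The square-root term controls points with large linear contribution to the logistic loss, while the additive $1/n$ guards against the many near-zero terms whose loss is bounded below by a constant. This is precisely the distribution analyzed in Theorem~15 of \citep{MunteanuSSW18}, whose sensitivity bound uses the $\mu_1$-complexity of each $X_{(i)}$ to bound the total sensitivity $S$, and whose VC-dimension term $\Delta$ is $\Oh{1}$ because $d=2$.

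Next, for each fixed $i\in[m]$, I would invoke that theorem: a reweighted sample of size $\Oh{\frac{S}{\eps^2}(\Delta\log S+\log(1/\delta))}$ from this distribution is a $(1+\eps)$-coreset for $X_{(i)}$ with probability at least $1-\delta$, i.e.\ it satisfies \cref{eqn:coreset:property} over all $\alpha_i\in\REAL^2$. Setting $\delta=\delta'/m$ and taking a union bound over the $m$ items shows the single drawn subsample is a coreset for all $X_{(i)}$ at once with constant probability, which replaces $\log(1/\delta)$ by an additive $\log(m)$ term in the size. The raw total-sensitivity estimate contributes an $\Oh{\sqrt{n}}$ factor, which I would suppress to $\polylog(n)$ by applying the construction recursively $\Oh{\log\log n}$ times as in \citep{MunteanuSSW18}; together with the bookkeeping of the polylogarithmic factors this yields the claimed size $\tildeO{\frac{\mu^3}{\eps^4}(\log(n)^4+\log(m))}$ and $\tildeO{n}$ construction time.

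The main obstacle is justifying that the \emph{same} random draw serves all $m$ subproblems at once rather than merely producing $m$ separate coresets. \lemref{leverage:invariant} removes the data dependence of the distribution across iterations, but one still has to establish the coreset guarantee---an approximation uniform over \emph{all} candidate $\alpha_i\in\REAL^2$---per item and then combine them, for which the union bound is the right tool at the modest cost of the additive $\log(m)$. The remaining point requiring care is that $\mu_1$-complexity is preserved under the sign flips, so that the sensitivity bound applies uniformly to every $X_{(i)}$; this follows because $\mu_1$ is itself defined by a supremum over all $\eta$ and is invariant under the same sign-flip reparametrization that relates $X_{(i)}$ to $B$.
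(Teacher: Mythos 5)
Your proposal follows essentially the same route as the paper's proof: \lemref{leverage:invariant} fixes a single sampling distribution proportional to $\sqrt{l_j}+1/n$ computed once from $B$, Theorem 15/19 of \citep{MunteanuSSW18} in $d=2$ dimensions supplies the per-item coreset guarantee, a union bound over the $m$ items (taking $\delta'=\delta/m$) contributes the additive $\log(m)$ term, and $O(\log\log n)$ recursive applications reduce the $O(\sqrt{n})$ total-sensitivity factor to $\polylog(n)$; the running time follows from fast (CountSketch-based) leverage score approximation.

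One correction to your closing paragraph: $\mu_1$-complexity is \emph{not} invariant under row-wise sign flips. Unlike the leverage scores, which depend only on the spanned subspace, $\mu_1$ depends on which entries of $X\eta$ are positive, and flipping the sign of a row moves its contribution between the numerator and the denominator of the defining ratio. For instance, in one dimension the rows $1$ and $-1$ give $\mu_1=1$, whereas the rows $1$ and $1$ give $\mu_1=\infty$. The supremum over $\eta$ only yields invariance under $\eta\mapsto-\eta$, not under the diagonal sign matrix relating $X_{(i)}$ to $B^T$. This is exactly why the theorem assumes $\mu_1$-complexity of \emph{every} $X_{(i)}$ separately rather than of $B$ alone; that hypothesis already gives you the uniform sensitivity bound you need, so no derivation is required --- but the invariance argument you sketch is false and should be dropped.
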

We note that despite the fact that there are more recent theoretical improvements such as \citep{MaiMR21,MunteanuOP22}, we build our results on the techniques of 
\citet{MunteanuSSW18}. Even though an analogue of 
\lemref{leverage:invariant} can be proven for the scores of these references, the practical performance of the classic result is often better or only slightly worse than the competitors and at the same time it is significantly faster to compute \citep[cf.][]{MaiMR21,MunteanuOP22}. 
Recent advances \citep{WoodruffYasudaICML23} also improve theoretical bounds for the \emph{root leverage scores} of \citet{MunteanuSSW18}, which partially explain and corroborate their success in practical applications, though in a different setting from ours.

\subsection{3PL Models}
\seclab{3PL:model}

An often addressed concern about 3PL IRT models is the difficulty to properly estimate the guessing parameter $c_i$ \citep{BakerKim04}, since it is hard to distinguish between sufficiently high abilities, and a large guessing probability. Different to the 2PL model, the subproblem of optimizing the item characteristics, conditioned on fixed ability parameters is already non-convex. Thus, parameter estimation is significantly more challenging\footnote{Indeed, parameters are not identifiable \citep{sanmartin2015}.} 
and can greatly benefit from an input size reduction. To this end, we now develop coresets for the 3PL model.

We would like to reduce the 3PL model to solving logistic regression problems, as we have done for the 2PL model, by first fixing the additional parameter $c_i$ in order to learn all other parameters $(a_i,b_i,\theta_j)_{i\in[m], j\in[n]}$ as before, and at the end of one iteration of the main loop fix the other parameters in the model to optimize only for $c_i,i\in[m]$. Unfortunately, if we would optimize the guessing parameter $c_i$ in this way, the optimizer would conclude that either\footnote{This can be any other upper or lower bound on $c_i \in \{c_{\min},c_{\max}\}$, but the problem remains the same.} $c_i=0$ or $c_i=1$ since the objectives are monotonic in $c_i$. Thus, we would never reach a realistic estimate for $c_i$. 

Using the notation of \secref{2PL:model}, we cannot rewrite \cref{eqn:3pl:def,eqn:3pl:def:comp}  
in a uniform way to express the probability of observing the label matrix $Y$ as in \cref{eqn:2pl:prob}. Although the guessing parameters $c_i$ are inseparable from the corresponding $a_i,b_i$ parameters during optimization, we denote them in a separate vector $C= ( c_1, \ldots, c_m)^T$. Then, we have that
\begin{align}
\label{eqn:3pl:prob}
\Pr{Y | A,B,C} = \prod\nolimits_{i\in [m], j\in [n]}^{[Y_{i j}=-1]} \left( \frac{1-c_i}{1+\exp( \alpha_i^T \beta_j )}\right) 
\times \prod\nolimits_{i\in [m], j\in [n]}^{[Y_{i j}=1]} \left( c_i + \frac{1-c_i}{1+\exp( -\alpha_i^T \beta_j )}\right),
\end{align}
where the products iterate only over all indexes in the subscript, that satisfy the condition in the superscript. Similar notations are used for the sums below.
Let $g_i(z)=-\ln( \frac{1-c_i}{1+\exp( z )}) = \ln( 1+\exp( z )) - \ln( 1-c_i)$ and $h_i(z)= - \ln( c_i + \frac{1-c_i}{1+\exp( -z )})$. The general algorithmic IRT framework with an alternating optimization, see \cref{algo}, that we already dealt with for the 2PL models, can be applied to the 3PL models as well for the following objective function 
\begin{align*}
f( A,C\mid B) = f(B\mid A,C )= \sum_{i\in [m], j\in [n]}^{[Y_{i j}=-1]} g_i(-Y_{i j}\alpha_i^T \beta_j ) + \sum_{i\in [m], j\in [n]}^{[Y_{i j}=1]} h_i( -Y_{i j}\alpha_i^T \beta_j ) .
\end{align*}
Let us assume that $A$ and $C$ are fixed, the other case will be addressed later. As in the case of 2PL we can write $x_i=-Y_{i j} \alpha_i^T$, for each $i\in[m]$, and $X_{(j)}=\left( x_i \right)_{i\in [m]}\in \REAL^{m \times 2}$. Then, we aim at minimizing for each $j\in [n]$ over $\beta_j\in\REAL^2$, the objective 
\begin{align}
\label{eqn:3pl:a}
f(\beta_j \mid A,C) = \sum\nolimits_{i\in [m]}^{\left[Y_{i j}=-1\right]} g_i( x_i \beta_j ) + \sum\nolimits_{i\in [m]}^{\left[Y_{i j}=1\right]} h_i( x_i \beta_j ).
\end{align}

For all $z$ it holds that $g_i(z)>0$ and $h_i(z)>0$. The functions $g_i(z)$ and $h_i(z)$ have different shapes and cannot be represented as a single function. In particular, all functions $g_i(z)$ are similar to the logistic regression loss up to an additive shift of $-\ln(1-c_i)$, with $0\leq -\ln(1-c_i)<\ln 2$, since $c_i\in [0,0.5)$.   
The others, $h_i(z)$, are sigmoid functions satisfying $0<h_i(z)<\ln\left(1/c_i\right)$, for all values of $z$. 

In the 3PL case, assuming that each matrix $X_{(j)}$ is $\mu_1$-complex does not give sufficient bounds for the distribution of input points to the two different types of functions. Therefore we split $X_{(j)}$ into submatrices $X_{(j)}'$, containing the rows indexed by $i$ with labels $Y_{ij}=-1$, and $X_{(j)}''$ containing the rows with $Y_{ij}=1$. Now, we assume that $X_{(j)}'$ and $X_{(j)}''$ are both $\mu$-complex, and $\sup_{\eta\in\mathbb{R}\setminus\{0\}}{\|X'_{(j)}\eta\|_1}/{\|X''_{(j)}\eta\|_1} \leq 2\mu_1.$

The detailed technical analysis is deferred to the appendix due to page limitations. Here, we only give a high level description. We first upper bound the sensitivities for both types of functions separately and show that the total sensitivity over all functions remains sublinear. To this end consider the set of (shifted) logistic functions $g_i$. Those can be handled using the $\mu_1$-complexity of $X'_{(j)}$ as in \citep{MunteanuSSW18} up to technical modifications and adjusting constants.

For the second set of sigmoid functions $h_i$ we use the $\mu_0$-complexity property of both sets to bound the total number of elements in $X'_{(j)}$ and $X''_{(j)}$ from below. This is needed to obtain uniform upper bounds for the sensitivities across all labelings, which together with \lemref{leverage:invariant} assures that one coreset suffices across all iterations $j\in[n]$. We further leverage the $\mu_0$-complexity of $X''_{(j)}$ to conclude that the fraction of positive elements in $X''_{(j)}\beta_j$ is sufficiently large. 

The final open issue is to bound the VC dimension. Again, we handle both sets of functions separately. Since both types of functions are strictly monotonic and invertible tranformations of a dot product, they can be related to a set of affine separators that have bounded VC dimension of $d+1=3$ \citep{Kearns1994introduction}. By a classic result of~\citet{BlumerEHW89} the VC dimension of the union of both sets of functions can be bounded by $O(d+1)$. Leveraging the disjointness of our sets, we can give a simpler proof that leads to a bound of $2(d+1)=6$. Another union over $O(\log m)$ weight classes concludes the VC dimension bound of $O(\log m)$.
This yields our second main result:
\begin{theorem}
\thmlab{3pl:main:result}
Let each $X_{(j)}=( -Y_{ij} \alpha_i^T)_{i\in[m]}\in\REAL^{m\times 2}$. Let $X'_{(j)}$ contain the rows $i$ of $X_{(j)}$ where $Y_{ij}=-1$ and let $X''_{(j)}$ comprise the rows with $Y_{ij}=1$. Let $X'_{(j)}$ and $X''_{(j)}$ be $\mu$-complex, and 
$\sup_{\eta\in\mathbb{R}\setminus\{0\}}\frac{\|X'_{(j)}\eta\|_1}{\|X''_{(j)}\eta\|_1} \leq 2\mu_1$ for each $j\in[n]$. Let $\varepsilon\in (0,1/2)$. There exists a weighted set $K\in \REAL^{k\times 2}$ of size 
$k\in O(\frac{\mu^2 \sqrt{m}}{\varepsilon^2} (\log(m)^2 + \log(n)))$, that is a $(1+\varepsilon)$-coreset for all $X_{(j)}$, $j\in[n]$ simultaneously for the 3PL IRT problem. The coreset can be constructed with constant probability and in $O(m)$ time.
\end{theorem}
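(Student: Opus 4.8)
The plan is to instantiate the sensitivity framework of \citet{LangbergS10,FeldmanSS20} \emph{separately} for the two heterogeneous loss families appearing in \cref{eqn:3pl:a} -- the shifted logistic losses $g_i$ and the bounded sigmoid losses $h_i$ -- and then merge the two importance-sampling distributions and the two induced set systems into a single coreset. I would fix one arbitrary split $X_{(j)}=X'_{(j)}\cup X''_{(j)}$ and argue that every resulting sensitivity and VC bound is \emph{invariant under the sign flips} induced by the different labelings, so that by \lemref{leverage:invariant} the same sampling distribution, computed once from the $2\times m$ parameter matrix, is simultaneously valid for every $j\in[n]$. The failure probability over all $n$ iterations is then absorbed by a union bound, which is exactly the source of the additive $\log n$ term in $\log(1/\delta)$.

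First I would handle the logistic block $\{g_i\}$. Since $g_i(z)=\ln(1+\exp z)-\ln(1-c_i)$ with $0\le -\ln(1-c_i)<\ln 2$, these functions differ from the plain logistic loss only by a bounded additive constant, and discarding the nonnegative sigmoid terms only shrinks the denominator $f(\beta_j)$. Hence the per-point sensitivities are controlled as in \citet{MunteanuSSW18}, using the root $\ell_2$-leverage scores of $X'_{(j)}$ augmented by a uniform term together with the $\mu_1$-complexity of $X'_{(j)}$, the additive shift contributing only a lower-order $O(1)$ mass; a Cauchy--Schwarz summation over at most $m$ rows yields the characteristic $\sqrt m$ scaling. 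The nontrivial twist is that a \emph{single} labeling-independent distribution must upper bound these sensitivities across \emph{all} splits, which forces me to transfer the norm and leverage quantities from the submatrix $X'_{(j)}$ to the full matrix via the $\ell_1$-ratio assumption $\|X'_{(j)}\eta\|_1/\|X''_{(j)}\eta\|_1\le 2\mu_1$; this transfer costs an additional factor of $\mu$, giving a logistic contribution to the uniform total sensitivity of order $\mu^2\sqrt m$.

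The crux, and the main obstacle, is the sigmoid block $\{h_i\}$, which is non-convex and bounded, so the leverage-score heuristic that rewards large linear contributions no longer certifies small sensitivities. Here I would instead exploit boundedness, $0<h_i(z)<\ln(1/c_i)\le\ln(1/c_{\min})$, to keep every numerator at $O(1)$, and then lower bound the denominator $f(\beta_j)$ \emph{uniformly} over all directions and all labelings. The $\mu_0$-complexity of $X''_{(j)}$ guarantees that for every $\beta_j$ at least a $1/(1+\mu_0)$ fraction of the entries of $X''_{(j)}\beta_j$ are non-positive, and on those entries $h_i$ is bounded below by the positive constant $h_i(0)$; combined with the $\mu_0$-complexity used to lower bound the number of sigmoid rows, this forces $f(\beta_j)=\Omega(m''/\mu_0)$ for every $\beta_j$. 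Dividing the $O(1)$ numerators by this labeling-independent lower bound and summing over the at most $m$ sigmoid points keeps their total sensitivity at $O(\mu)$, so the overall total sensitivity stays $S=\tildeO{\mu^2\sqrt m}$. The essential care is that none of these estimates may depend on the particular split, so that one coreset indeed serves all $j$.

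Finally I would bound the VC dimension of the induced range space. Both $g_i$ and $h_i$ are strictly monotone, invertible transformations of the dot product $x_i\beta_j$, so the sublevel sets of each family reduce to affine halfspaces in $\REAL^{d}$ with $d=2$, of VC dimension $d+1=3$ \citep{Kearns1994introduction}; exploiting the disjointness of the index sets of the two families gives a clean union bound of $2(d+1)=6$ rather than the generic $O(d)$ of \citet{BlumerEHW89}, and an additional union over the $O(\log m)$ weight classes of the sampling scheme yields $\Delta=O(\log m)$. Plugging $S=\tildeO{\mu^2\sqrt m}$, $\Delta=O(\log m)$, and $\log(1/\delta)=O(\log n)$ into the meta-theorem size bound $k=O(\tfrac{S}{\varepsilon^2}(\Delta\log S+\log\tfrac1\delta))$ produces the claimed $k\in O(\tfrac{\mu^2\sqrt m}{\varepsilon^2}(\log^2 m+\log n))$. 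The $O(m)$ construction time follows since the distribution only requires the $\ell_2$-leverage scores of the $2\times m$ parameter matrix, computable in $O(m)$ time for constant dimension $d=2$.
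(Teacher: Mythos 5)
Your proposal follows essentially the same route as the paper's proof: it splits the objective into the shifted-logistic block (bounded via root $\ell_2$-leverage scores, the $\mu_1$-complexity of $X'_{(j)}$, and the $\ell_1$-ratio assumption, giving total sensitivity $O(\mu^2\sqrt{m})$) and the bounded sigmoid block (bounded via the $\mu_0$-complexity guarantee that a $1/(1+\mu_0)$ fraction of entries have $h_i$ at least $h_i(0)$), bounds the VC dimension by $2(d+1)$ per weight class times $O(\log m)$ classes using disjointness, and uses the sign-flip invariance of leverage scores plus a union bound over $j\in[n]$ to obtain a single coreset of the claimed size in $O(m)$ time. This matches \cref{lemma:3pl:lemma12,lemma:3pl:lemma13,lemma:3pl:lemma14,lemma:3pl:lemma:sig,lem:totalsensitivity,cor:VC:union} and the paper's proof of \cref{theo:3pl:main:result} in all essentials.
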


The remaining case $f( A,C\mid B)$ requires another $\frac{\mu^2}{\eps}$ factor. The analysis is deferred to \cref{appendix:theory} due to page limitations. The discussion starts above \lemref{logistic:sum:lowerbound}. In addition, we provide a parameter estimation guarantee for $\tau$-PL, with 
$\tau\in\{2,3\}$:

\begin{theorem}[Informal version of \cref{thm:quality:coreset} in \cref{sec:quality:coreset}]\label{thm:l1error}
Assume the conditions of 
\cref{theo:2pl:main:result} resp. \cref{theo:3pl:main:result}. Then the optimal solutions for the $\tau$-PL problem, for $\tau\in\{2,3\}$, on the full input ($\eta_{\text{opt}}$) and on the coreset ($\eta_{\text{core}}$) satisfy $$ \|\eta_{\text{opt}} - \eta_{\text{core}} \|_1 \leq O(\mu^{\tau-1}) \cdot f(X \eta_{\text{opt}}). $$
\end{theorem}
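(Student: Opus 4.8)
The plan is to convert the multiplicative function-value guarantee of the coreset into a bound on the parameters by means of a one-sided growth estimate that $\mu$-complexity supplies for the (shifted) logistic summands. This deliberately avoids any appeal to strong convexity, which is unavailable for the 3PL objective, and in fact the whole argument never uses convexity of $f$ itself — only that each of $\eta_{\text{opt}}$ and $\eta_{\text{core}}$ is a global minimizer of its respective objective.

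First I would record the near-optimality of the coreset solution. Combining the coreset property \cref{eqn:coreset:property} with the optimality of $\eta_{\text{core}}$ on $K$ and of $\eta_{\text{opt}}$ on $X$ (this is exactly \lemref{coreset:error:approx}) gives, for $\varepsilon\in(0,1/2)$,
\[
f(X\eta_{\text{core}}) \le \frac{1+\varepsilon}{1-\varepsilon}\, f(X\eta_{\text{opt}}) \le 3\, f(X\eta_{\text{opt}}).
\]
Since this uses only that both points are global minimizers of the corresponding losses, it transfers verbatim to the non-convex 3PL objective. Second, for the 2PL case (\cref{theo:2pl:main:result}) I would derive the elementary lower bound $f(X\eta)\ge \|(X\eta)^+\|_1$ from $\ln(1+e^z)\ge z^+$, and then use the $\mu$-complexity inequality \cref{mu:property} with $p=1$, namely $\|(X\eta)^-\|_1 \le \mu\,\|(X\eta)^+\|_1$, to obtain the key one-sided estimate $\|X\eta\|_1 = \|(X\eta)^+\|_1 + \|(X\eta)^-\|_1 \le (1+\mu)\,f(X\eta)$, valid for every $\eta$. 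A triangle inequality then closes the $\tau=2$ case:
\[
\|X(\eta_{\text{opt}}-\eta_{\text{core}})\|_1 \le \|X\eta_{\text{opt}}\|_1 + \|X\eta_{\text{core}}\|_1 \le (1+\mu)\bigl(f(X\eta_{\text{opt}})+f(X\eta_{\text{core}})\bigr) = O(\mu)\, f(X\eta_{\text{opt}}).
\]
Identifying this prediction-level distance with the parameter distance $\|\eta_{\text{opt}}-\eta_{\text{core}}\|_1$ is exact up to constants once $X$ is expressed in the orthonormal coordinates underlying the leverage-score computation (columns orthonormal, so $\|X\eta\|_2=\|\eta\|_2$), where, in the fixed dimension $d=2$, the $\ell_1$ and $\ell_2$ norms agree up to a factor $\sqrt{2}$.

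For the 3PL case (\cref{theo:3pl:main:result}) I would run the same estimate on the negative-label block $X'_{(j)}$. Because every $h_i>0$ we have $f \ge f_g := \sum_{i:\,Y_{ij}=-1} g_i(x_i\beta)$, and $g_i(z)=\ln(1+e^z)-\ln(1-c_i)\ge \ln(1+e^z)\ge z^+$ since $c_i\in[0,1/2)$; hence the $\mu$-complexity of $X'_{(j)}$ yields $\|X'_{(j)}\eta\|_1 \le (1+\mu)\,f(\eta)$ exactly as above, and therefore $\|X'_{(j)}(\beta_{\text{opt}}-\beta_{\text{core}})\|_1 = O(\mu)\,f(\beta_{\text{opt}})$. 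The main obstacle, and the place where I expect the genuine work to lie, is the positive-label block $X''_{(j)}$: the sigmoids $h_i$ are bounded by $\ln(1/c_i)$ and non-convex, so they provide no lower bound on the loss and carry no information about the magnitude of $\beta$ in their directions — small loss is compatible with $\|X''_{(j)}\beta\|_1$ being large. The only leverage available is to relate the two blocks through the extra hypotheses of \cref{theo:3pl:main:result}, namely the $\mu_0$-complexity (so that neither block degenerates in cardinality and the uniform sensitivity bounds remain valid) together with the $\ell_1$-ratio between $X'_{(j)}$ and $X''_{(j)}$, and thereby transfer the control already established on the well-behaved block $X'_{(j)}$ to the full parameter vector. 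This transfer is precisely what inflates the bound by one additional factor of $\mu$, producing $O(\mu^2)=O(\mu^{\tau-1})$ for $\tau=3$; carrying the constants for the two function types through this bridging step, while matching the directionality of the ratio assumption, is the delicate technical core of the argument and is deferred to the appendix version \cref{thm:quality:coreset}.
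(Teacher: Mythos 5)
Your 2PL argument coincides with the paper's proof of \cref{thm:quality:coreset}: the same chain of \lemref{coreset:error:approx}, the triangle inequality $\|X\eta_{\text{opt}}-X\eta_{\text{core}}\|_1\leq\|X\eta_{\text{opt}}\|_1+\|X\eta_{\text{core}}\|_1$, the $\mu_1$-complexity bound $\|X\eta\|_1\leq(1+\mu)\|(X\eta)^+\|_1$ from \cref{mu:property}, and $z\leq g_i(z)$ for $z\geq 0$ (\lemref{3pl:reg:linbound}). The one presentational difference is how the prediction-level distance is converted to a parameter distance: the paper stays in the original coordinates and pays an explicit factor $1/\sigma^{(1)}_{\min}(X)$ with $\sigma^{(1)}_{\min}(X)=\inf_{x\neq 0}\|Xx\|_1/\|x\|_1$, whereas you orthonormalize. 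Note that writing $X=UR$ and working with orthonormal columns bounds the distance of the \emph{transformed} parameters $R\eta$; returning to the original $\eta$ reintroduces exactly the conditioning factor the paper makes explicit, so ``exact up to constants'' holds only in the rotated coordinates. Since the informal statement absorbs this factor into the $O(\cdot)$ anyway, this is cosmetic.

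The genuine gap is the 3PL case. You correctly diagnose that the sigmoid block $X''_{(j)}$ provides no lower bound on the loss and that the extra factor of $\mu$ must come from bridging the two blocks, but you do not perform the bridge, and the route you sketch points the wrong way: the hypothesis $\sup_\eta\|X'_{(j)}\eta\|_1/\|X''_{(j)}\eta\|_1\leq 2\mu_1$ upper-bounds the $X'$ block by the $X''$ block, i.e., it \emph{lower}-bounds $\|X''_{(j)}\eta\|_1$, which cannot be used to upper-bound the $X''$ contribution to $\|X(\eta_{\text{opt}}-\eta_{\text{core}})\|_1$. The paper does not decompose $\|X\eta\|_1$ block by block at all; its chain is $\|X\eta\|_1\leq(1+\mu)\|(X\eta)^+\|_1\leq(1+\mu)^2\|(X'\eta)^+\|_1\leq(1+\mu)^2 f(X\eta)$, where the single inequality $\|(X\eta)^+\|_1\leq(1+\mu)\|(X'\eta)^+\|_1$ is precisely where the additional $(1+\mu)$ is paid, and only afterwards is $g_i(z)\geq z$ invoked on the $X'$ rows. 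Declaring this transfer the ``delicate technical core'' and deferring it leaves the $\tau=3$ statement unproved: what you have actually established is the 2PL bound restricted to the submatrix $X'_{(j)}$, not a bound on $\|\eta_{\text{opt}}-\eta_{\text{core}}\|_1$ for the full 3PL objective.
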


\section{EXPERIMENTS}

All experiments were run on a HPC workstation with AMD Ryzen Threadripper PRO 5975WX, 32 cores at 3.6GHz, 512GB DDR4-3200.
Our Python code\footnote{Our Python code is available at \url{https://github.com/Tim907/IRT}.} implements the IRT framework introduced in \cref{algo} where Steps 2(a) and 2(b) solve Eq.~(\ref{eqn:2pl:logistic:b})~and~(\ref{eqn:2pl:logistic:a}), resp. their 3PL variants.  
The coreset is only computed in step 2(b) for reducing the number of examinees, i.e., the dominating dimension $n$, since the number of items $m$ is relatively small in our data; the coreset construction would dominate over analyzing the complete data.

\paragraph{Experimental Setup}
We focus on 2PL models, which can be estimated more stably, as discussed before. We generate synthetic 2PL/3PL data by drawing item and ability parameters for each $j\in[n]$, $i\in[m]$ from the following distributions: $a_i \sim N(2.75, 0.3)$ truncated at $0$, $b_i \sim N(0,1)$ and $\theta_j \sim N(0,1)$. For 2PL, we fix $c_i = 0$, and for 3PL, we truncate $c_i \sim N(0.1, 0.1)$ within $[0,0.5)$. The response probabilities $p_{ij}:=p_i(\theta_j)$ are computed as in  \cref{eqn:3pl:def}.  
Each label is drawn from a Bernoulli distribution with the corresponding response probability, i.e., $Y_{ij}\sim \operatorname{Bernoulli}(p_{ij})$.
{We also use real world data (see their dimensions in \cref{tab:results}): SHARE~\citep{boersch2022survey}, measuring health indication of elderly Europeans, \nocite{Boersch2005a,Boersch2005b,boersch2013data} and NEPS~\citep{NEPS,NEPS-SC4}, measuring high school abilities of ninth grade students.}\footnote{While PISA serves as a motivational example, their data is not available readily analyzable in one large batch.}

In our estimation algorithm, the ability parameters $\theta_j$ and the item difficulties $b_i$ are bounded by $b_i,\theta_j\in[-6,6]$, and the item discrimination parameters are bounded by $a_i\in (0,5]$. 
Without imposing identification restrictions, the scale of estimated IRT parameters $a$, $b$, and $\theta$ is arbitrary. Therefore, we rescale them to obtain standardized parameters. To this end, we subtract the mean of $\theta$ from each $b_i$, multiply $a_i$ by the standard deviation of $\theta$ and finally standardize $\theta$ to zero mean and unit variance.

We vary the number of examinees $n$, the number of items $m$, and the size of the coreset $k$. For every combination we run 50 iterations of the main loop. Each experiment is repeated 20 times. We report results for a few selected configurations in \cref{tab:results,fig:param_exp_appendix_pareto:main,fig:param_exp}. The majority of the results is in \cref{appendix:experiments}.

Since $\mu$ is a crucial complexity parameter, we estimate its value for all different data sets in \cref{appendix:estimation:mu}. The majority and mean values for $\mu$ are small constants ranging between 2 and 20. Only in rare cases $\mu$ takes large maximum values for some label vectors. We checked the corresponding labels, and found that the large values occur only in degenerate cases, in which the maximum likelihood estimator of the model is undefined, for example, when an item is solved by all or none of the students. 

\paragraph{Computational Savings}
The parameter estimation using coresets is significantly faster than using the full input set. The coresets use only a small fraction of the memory used by the full data, while approximating the objective function very closely.

For the 2PL models on  
the synthetic data sets, the running time gains were at least $32\,\%$ and up to $66\,\%$ (see \cref{tab:results,tab:results_appendix}). At the same time, the amount of memory used never exceeds $1\,\%$ of the original size.
The largest instances we found across the literature are $n\approx 500\,000$ \citep{PISA} and $m\approx 5\;000$ \citep{munoz2021instance}. We added a synthetic example of this size whose total running time (for a single repetition) was reduced from $6.5$ to $3.8$ \emph{days}. Besides running time, the memory spent for this large experiment is larger than 5\,GB, impossible to be handled by standard psychometric tools.

For the real-world data sets, SHARE~\citep{boersch2022survey} and NEPS~\citep{NEPS-SC4}, we show that a relative error of $\hat\varepsilon=0.05$ can be achieved using less than $6\,\%$ of the memory used when working on the full data. For the (relatively small) NEPS data set, the running time gain was about $30\,\%$, except when the coreset sizes exceed half of the input size. We note that for the SHARE data set, the running time gains are small, and can even be (slightly) negative. This is due to its very small original dimensions (especially $m=10$), for which the time for the coreset construction can dominate the overall running time.

For 3PL models, solving the original problem is more difficult and thus takes longer. Indeed, the subproblems estimating the sets of parameters in each phase are non-convex and cause the computational issues discussed in \cref{sec:3PL:model}. As a consequence, reducing the input size increases the running time gain up to $86\,\%$ (see \cref{tab:results,tab:results_appendix4}). {The memory used by the coresets is between $5\,\%$ and $20\,\%$ of the original data.}

The data dimensions considered across our experiments are huge compared to data that is usually collected for IRT studies. On the other hand, even the largest data dimensions, are chosen small enough to be able to estimate the models on the full data set. However, our theoretical results prove that the subsample grows sublinearly with arbitrarily increasing data, showing the potential for larger future data.

\begin{figure}[ht!]
\caption{2PL Experiments on real world SHARE and NEPS data: Coreset sizes vs. relative error and mean absolute deviation (MAD), cf.~\cref{tab:results_appendix3:b,fig:param_exp_appendix_pareto}.
}
\begin{center}
\includegraphics[width=.65\columnwidth]{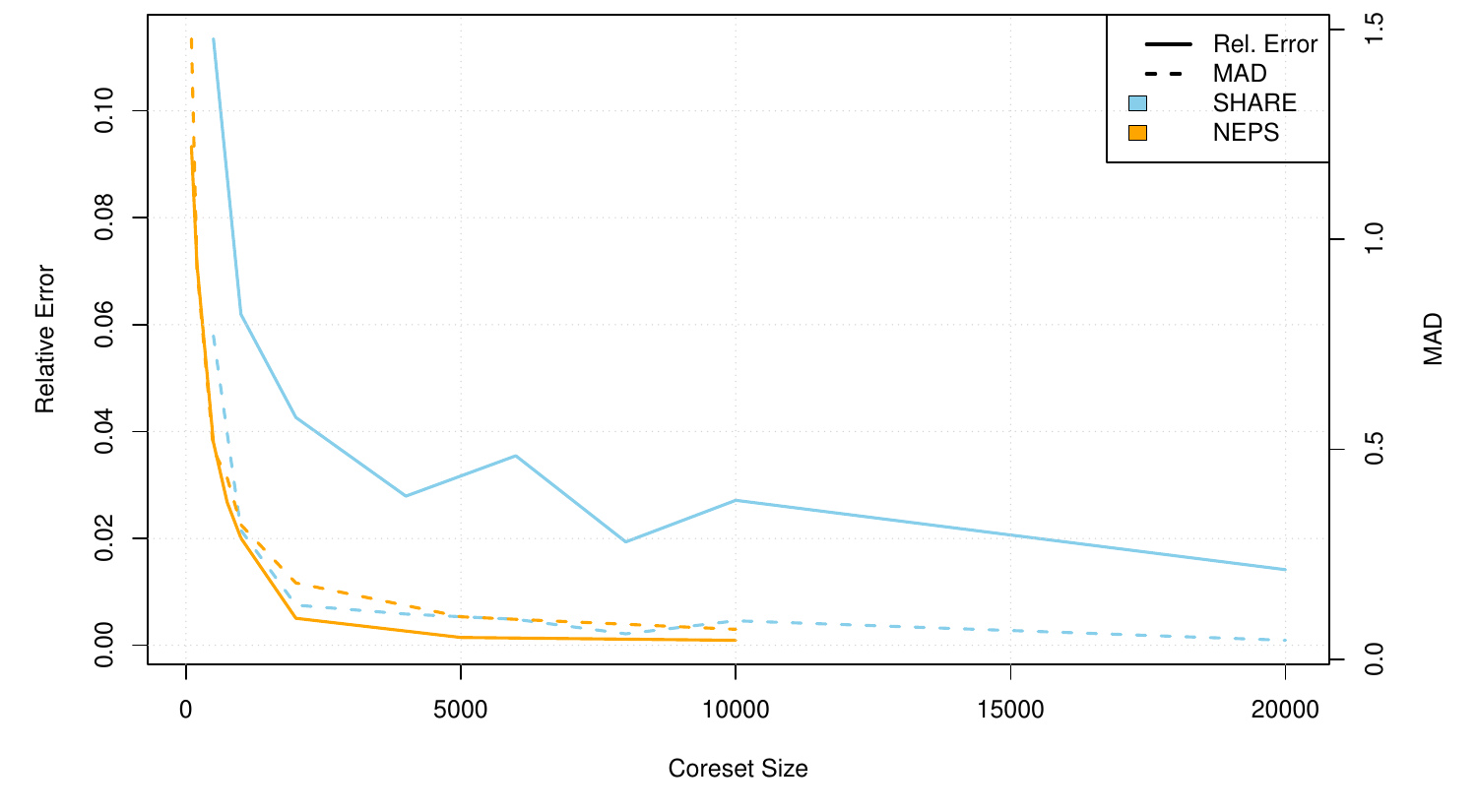} 
\label{fig:param_exp_appendix_pareto:main}
\end{center}
\end{figure}

\begin{table*}[ht!]
\caption{{Mean running times (in minutes) taken across $20$ repetitions (of $50$ iterations of the main loop) per data set 2-/3-PL, (Sy)nthetic, SH(ARE), NE(PS), for different configurations of their data dimensions: number of items $m$, number of examinees $n$, and coreset size $k$. } The (relative) gain is defined as  $(1-\mathrm{mean}_{\textsf{core}}/\mathrm{mean}_{\textsf{full}})\cdot 100\,\%$. For the quality of the solutions, let $f_{\sf full}$ and $f_{{\sf core}(j)}$ be the optimal objective values on the input and on the coreset for the $j$-th repetition, resp. Let $f_{\sf core} = \min_j f_{{\sf core}(j)}$.
Relative error: \textbf{r.err.} $\hat{\varepsilon}=|f_{\sf core} - f_{\sf full}|/f_{\sf full}$ (cf. \lemref{coreset:error:approx}).
Mean Absolute Deviation: $\textbf{mad}(\alpha)=\frac{1}{n}\sum (|a_{\textsf{full}}-a_{\textsf{core}}| + |b_{\textsf{full}}-b_{\textsf{core}}| + |c_{\textsf{full}}-c_{\textsf{core}}| )$; $\textbf{mad}(\theta)=\frac{1}{m}\sum |\theta_{\textsf{full}}-\theta_{\textsf{core}}|$, evaluated on the parameters {attaining}
the optimal $f_{\textsf{full}}$ and $f_{\sf core}$.
}
\label{tab:results}
\begin{center}
\begin{tabular}{ c l| r r r | r c c }
    {\bf data} &
		{$\mathbf n, \mathbf m, \mathbf k$}  &  {$\textbf{mean}_{\textsf{full}}(\text{min})$} & {$\textbf{mean}_{\textsf{core}}(\text{min})$} & {\bf gain} & {\bf r.err. $\hat{\varepsilon}$}  & {\bf $\text{mad}(\alpha)$} & {\bf $\text{mad}(\theta)$} \\ \hline 
    2PL-Sy & $50\,000, 500, 500$ & $136.981$ & $45.547$ & $66.749\,\%$ & $0.04803$ & $0.525$ & $0.008$ \\ \hline
    2PL-Sy & $100\,000, 200, 1\,000$ & $122.252$ & $61.459$ & $49.727\,\%$ & $0.03404$ & $0.379$ & $0.008$ \\ \hline
    2PL-Sy & $500\,000, 500, 5\,000$ & $1\,278.845$ & $591.878$ & $53.718\,\%$  & $0.01445$ & $0.171$ & $0.001$ \\ \hline
    2PL-Sy & $500\,000, 5\,000, 5\,000$ & $9\,363.750$ & $5\,536.684$ & $40.871\,\%$ & $0.00076$ & $0.120$ & $0.013$ \\ \hline    
    2PL-SH & $138\,997, 10, 8\,000$ & $28.853$ & $27.637$ & $4.216\,\%$ & $0.01935$ & $0.061$ & $0.007$ \\ \hline
    2PL-NE & $11\,532, 88, 1\,000$ & $5.968$ & $4.009$ &  $32.829\,\%$ & $0.02007$ & $0.320$ & $0.045$ \\ \hline	
    3PL-Sy & $50\,000, 100, 10\,000$ &  $211.468$ & $93.780$ & $55.653\,\%$ & $0.00212$ & $0.384$ & $0.010$ \\ \hline
    3PL-Sy & $50\,000, 200, 10\,000$ & $369.816$  & $145.674$ &  $60.609\,\%$ & $0.02186$ & $0.488$ & $0.001$ \\ \hline
    3PL-Sy & $200\,000, 100, 10\,000$ &  $893.183$ & $196.802$ & $77.966\,\%$ & $0.01789$ & $0.524$ & $0.003$ \\ \hline
\end{tabular}
\end{center}
\end{table*}

\begin{figure*}[ht!]
\caption{Parameter estimates for the coresets compared to the full data sets. The first row shows the bias for the item parameters $a,b$ (and $c$ for 3PL). The vertical axis is scaled to display $2\,{\mathrm{std.}}$ ($4\,{\mathrm{std.}}$ for 3PL) of the parameter estimate obtained from the full data set. The second row shows a kernel density estimate for the ability parameters $\theta$, standardized to zero mean and unit variance, with a LOESS regression line in dark green.}
\label{fig:param_exp}
\begin{center}
\begin{tabular}{lll}
{\tiny{\textbf{2PL Syn:}}} {\tiny{$\mathbf{n=500\,000,m=500,k=5\,000}$}}&{\tiny{\textbf{2PL NEPS:}}} {\tiny{$\mathbf{n=11\,532,m=88,k=1\,000}$}}&{\tiny{\textbf{3PL Syn:}}} {\tiny{$\mathbf{n=50\,000,m=100,k=10\,000}$}}
\\
\includegraphics[width=0.312\linewidth]{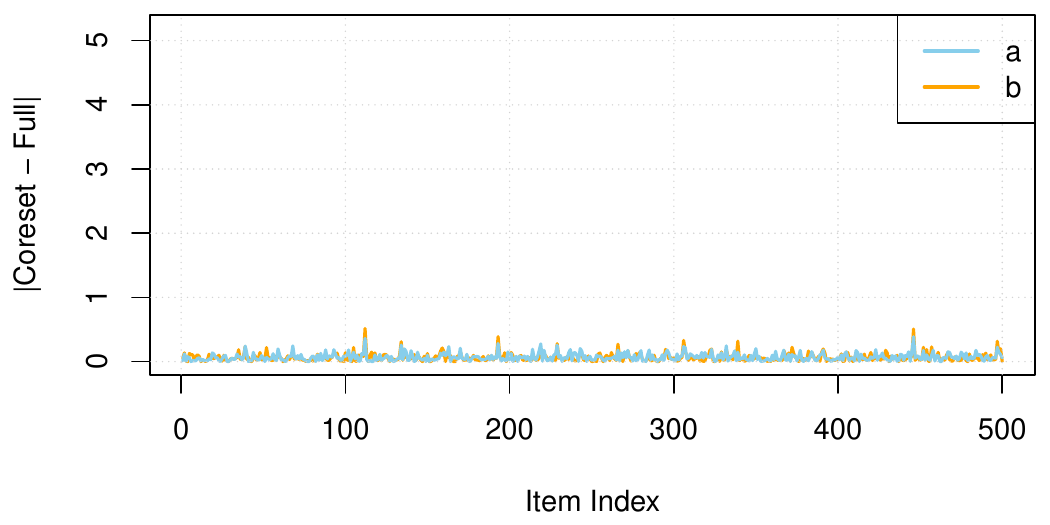}&
\includegraphics[width=0.312\linewidth]{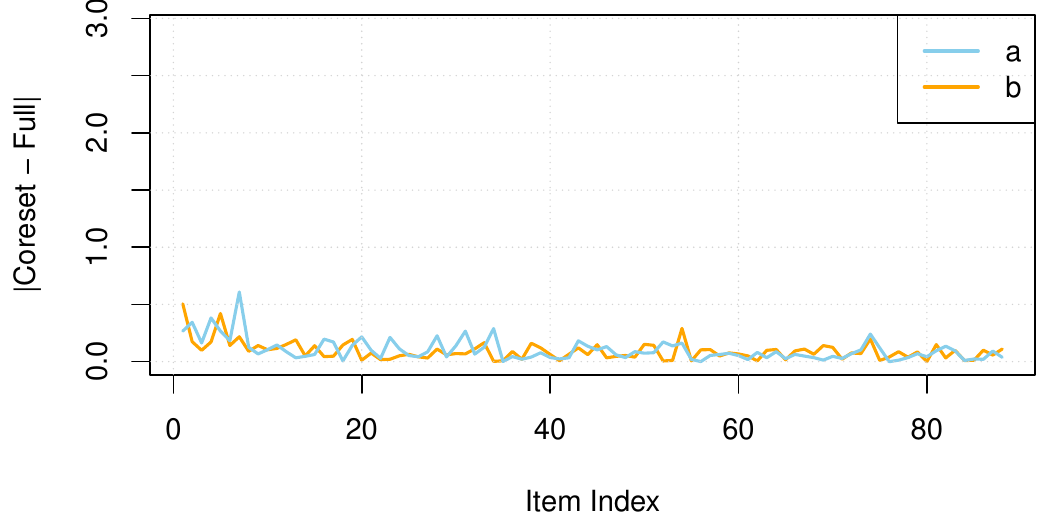}&
\includegraphics[width=0.312\linewidth]{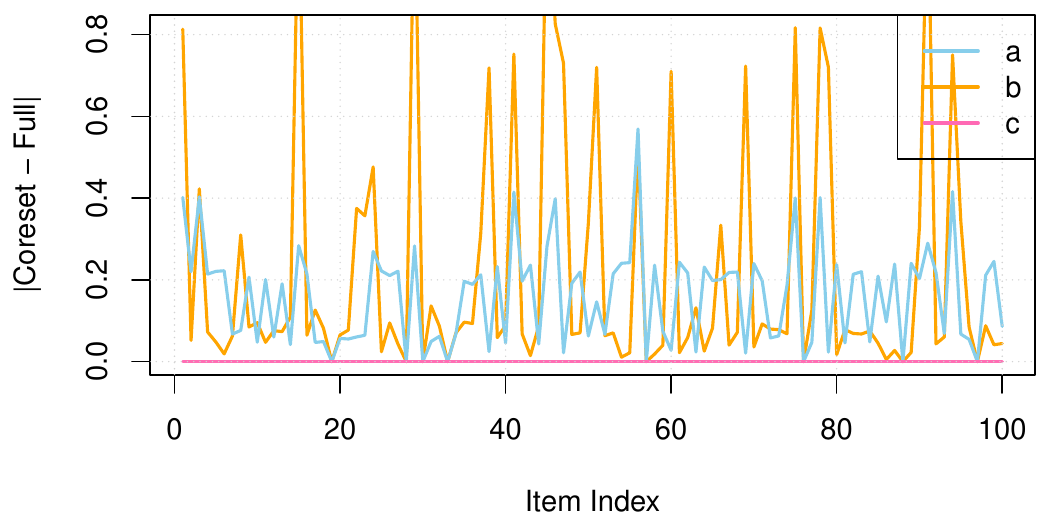}
\\
\includegraphics[width=0.312\linewidth]{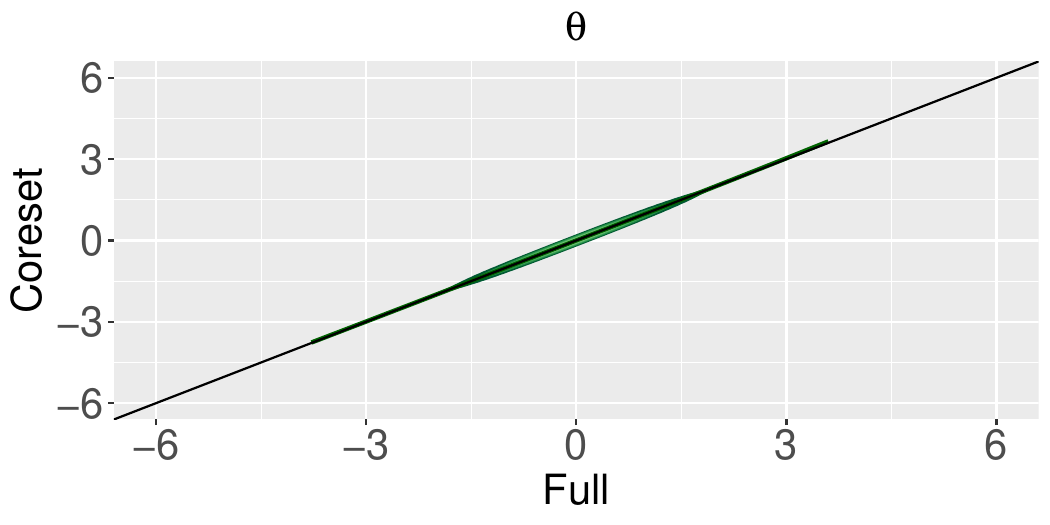}&
\includegraphics[width=0.312\linewidth]{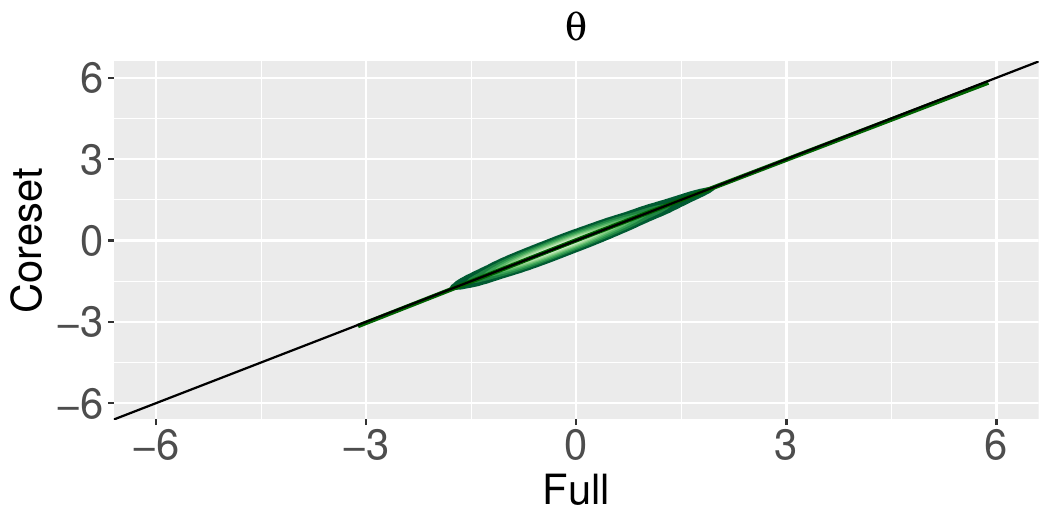}&
\includegraphics[width=0.312\linewidth]{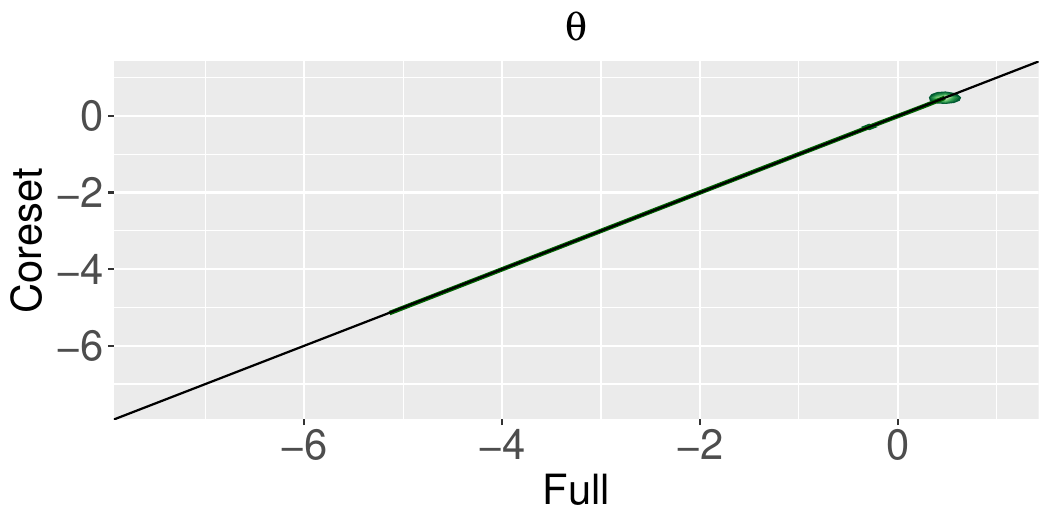}\\
\end{tabular}
\end{center}
\end{figure*}

\paragraph{Parameter Estimation Accuracy} Overall, we find that incorporating coresets leads to comparable estimates as on the full data set. The differences are larger for 3PL.
The bounded $\ell_1$ norm deviation 
(see \cref{thm:l1error}/\ref{thm:quality:coreset})
explains that either small errors are evenly distributed over many parameters, or large deviations affect only a few spikes. The accuracy clearly improves with increasing coreset size, cf.~\cref{tab:results,fig:param_exp_appendix_pareto:main}, and \cref{appendix:experiments}, especially~\cref{tab:results_appendix:b,fig:param_exp_appendix_pareto}. Our coresets compare favorably against the results obtained from uniform sampling, and clustering coresets as baselines, cf. \cref{appendix:uniform,appendix:clustering}. They also compare similarly to $\ell_1$ Lewis weights and $\ell_1$ leverage scores, see \cref{sec:app:L1scores}.

For the 2PL models, the bias for the parameters estimated on the coresets in comparison to the full data sets are small and negligible in comparison to the scale of the parameter, see \cref{fig:param_exp}. For the 3PL models, the bias is larger. This is because the item parameters of the 3PL model are not identifiable \citep{sanmartin2015} in the estimation approach, where even the sub-problems are non-convex. In this case, the coresets and the full data set (or, similarly, different starting values) may lead to different parameter estimates although they have a similar likelihood. Indeed, the close likelihood approximation provided by coresets not only mimics good model fit. Even when the model fits badly, it ensures that a proper diagnosis for detecting misspecification can be performed on coresets. For the ability parameters $\theta$ in 2PL models, the estimates are almost identical between the coresets and the full data. For 3PL, the estimates are bi-modal due to multiple local optima (\cref{fig:param_exp}, bottom right).

\section{CONCLUSIONS}

We develop coresets to facilitate scalable and efficient learning of large scale Item Response Theory models. Coresets enable significantly larger IRT studies and will hopefully motivate larger surveys. Our implementation and experiments illustrate that standard algorithms for IRT can greatly benefit from using coresets in the estimation process. We observe large computational savings as well as accurate parameter recovery on a small but carefully selected fraction of the large data. We note that in our experiments, estimates were recovered with negligible errors
when using coresets. Future research could incorporate coresets into state of the art IRT solvers that are more complicated than the standard approach but achieve much better estimation accuracy already on the original data. Further, it would be interesting to develop coresets for more general IRT models, including (ordered) categorical \citep{masters1982}, continuous \citep{ChenFPDF19}, multidimensional \citep{demars2016}, and multilevel \citep{adams1997} IRT models. Other interesting avenues are to extend to probit IRT models \citep{MunteanuOP22} or to incorporate sketching for logistic regression \citep{MunteanuOW21,MunteanuOW23,Munteanu23} such as to avoid storing the full latent parameter matrices.



\clearpage

\section*{Acknowledgements}
We thank the anonymous reviewers for their valuable comments. We thank Philipp Doebler for pointing us to IRT models. We thank Tim Novak and Rieke Möller-Ehmcke for their help with implementations and experiments.
The authors were supported by the project ``From Prediction to Agile Interventions in the Social Sciences (FAIR)'' funded by the Ministry of Culture and Science MKW.NRW, Germany.
Alexander Munteanu acknowledges additional support by the TU Dortmund - Center for Data Science and Simulation (DoDaS).

\bibliography{aistats2024_arxiv}


\clearpage
\appendix


\section{OMITTED PROOFS}
\label{appendix:theory}

\subsection{Technical Details on the Sensitivity Framework}
\label{sec:sensitivityframework}
 \begin{definition}[Coreset, cf. \citealp{FeldmanSS20}]
 \deflab{coresets}
 Let $X\in \REAL^{n\times d}$ be a set of points $\lbrace x_1,\ldots,x_n\rbrace$, weighted by $w\in \REAL_{>0}^n$. For any $\eta\in \REAL^d$, let the cost of $\eta$ w.r.t. the point $x_i$ be described by a function $w_i\cdot f\left(x_i \eta\right)$ mapping from $\REAL$ to $(0,\infty)$. Thus, the cost of $\eta$ w.r.t. the (weighted) set $X$ is $f_w\left( X \eta\right)=\sum_i w_i\cdot f\left(w_i \eta\right)$. Then a set $K\in\REAL^{k\times d}$, (re)weighted by $u\in\REAL_{>0}^k$ is a $(1+\varepsilon)$-coreset of $X$ for the function $f_w$ if $k\ll n$ and
 \begin{align*}
 \forall \eta\in\REAL^d\colon \left\lvert f_w\left(X \eta\right) - f_u\left(K \eta\right)\right\rvert\leq \varepsilon\cdot f_w\left(X \eta\right).
 \end{align*}
 \end{definition}

 In our analysis we use sampling based on so-called sensitivity scores, the range space induced by the set of functions, and the VC-dimension. We define these notions next.

\begin{definition}[Sensitivity, \citep{LangbergS10}]
  \deflab{sensitivity}
  Consider a family of functions $\mathcal{F}=\lbrace g_1,\ldots,g_n \rbrace$ mapping from $\REAL^d$ to $[0,\infty)$ and weighted by $w\in \REAL_{>0}^n$. The sensitivity of $g_\ell$ for the function $f_w(\eta)=\sum_{\ell\in [n]} w_\ell g_\ell(\eta)$, where $\eta\in \REAL^d$, is
  \begin{align}
  \varsigma_\ell = \sup \frac{w_\ell g_\ell(\eta)}{f_w(\eta)},
  \end{align}
  The total sensitivity is $\mathfrak{S}=\sum_{\ell\in [n]} \varsigma_\ell$.
  \end{definition}

 \begin{definition}[Range space; VC dimension]
 A range space is a pair $\mathfrak{R}=\left( \mathcal{F}, \verb|ranges|\right)$, where $\mathcal{F}$ is a set and $\verb|ranges|$ is a family of subsets of $\mathcal{F}$. The VC dimension $\Delta(\mathfrak{R})$ of $\mathfrak{R}$ is the size $|G|$ of the largest subset $G\subseteq \mathcal{F}$ such that $G$ is shattered by $\verb|ranges|$, i.e., $|\lbrace G\cap R:R\in\verb|ranges|\rbrace |=2^{|G|}$.
 \end{definition}

 \begin{definition}[Induced range space]
 Let $\mathcal{F}$ be a finite set of functions mapping from $\REAL^d$ to $\REAL_{\geq 0}$. For every $x\in\REAL^d$ and $r\in \REAL_{\geq 0}$, let $\verb|range|_{\mathcal{F}} (x,r) = \lbrace f\in \mathcal{F} : f(x)\geq r\rbrace$, and $\verb|ranges|(\mathcal{F}) = \lbrace \verb|range|_{\mathcal{F}}(x,r) : x\in \REAL^d, r\in\REAL_{\geq 0} \rbrace$. Let $\mathfrak{R}_\mathcal{F} = \left( \mathcal{F}, \verb|ranges|(\mathcal{F})\right)$ be the range space induced by $\mathcal{F}$.
 \end{definition}

 To construct coresets for the IRT models, we use a framework that combines sensitivity scores with the theory of VC dimension, originally proposed by \citet{BravermanFL16,Bravermanetal21a}. We employ a more recent and slightly modified version, stated in the following theorem.

\begin{theorem}[\citealp{FeldmanSS20}, Theorem 31]
 \thmlab{coreset:sensitivity}
 Consider a family of functions $\mathcal{F}=\lbrace f_1,\ldots,f_n\rbrace$ mapping from $\REAL^d$ to $[0,\infty]$ and a vector of weights $w\in\REAL_{>0}^n$. Let $\varepsilon,\delta\in (0,1/2)$. Let $s_i\geq \varsigma_i$. Let  $S=\sum_{i=1}^n s_i \geq \mathfrak{S}$. Given $s_i$ one can compute in time $\Oh{|\mathcal{F}|}$ a set $\mathcal{R} \subset \mathcal{F}$ of
 \begin{align*}
 \Oh{\frac{S}{\varepsilon^2} \left( \Delta \log S + \log\frac{1}{\delta}\right)}
 \end{align*}
 weighted functions such that with probability $1-\delta$ we have for all $\eta\in \REAL^d$ simultaneously 
 \begin{align*}
 \left\lvert \sum_{f\in \mathcal{F}} w_i f_i(\eta) - \sum_{f\in \mathcal{R}} u_i f_i(\eta) \right\rvert \leq \varepsilon \sum_{f\in \mathcal{F}} w_i f_i(\eta), 
 \end{align*}
 where each element of $\mathcal{R}$ is sampled i.i.d. with probability $p_j=\frac{s_j}{S}$ from $\mathcal{F}$, $u_i=\frac{S w_j}{|\mathcal{R}| s_j}$ denotes the weight of a function $f_i\in\mathcal{R}$ that corresponds to $f_j\in \mathcal{F}$, and where $\Delta$ is an upper bound on the VC dimension of the range space $\mathfrak{R}_{\mathcal{F}^*}$ induced by $\mathcal{F}^*$ that can be defined by defining $\mathcal{F}^*$ to be the set of functions $f_j\in \mathcal{F}$ where each function is scaled by $\frac{S w_j}{|\mathcal{R}| s_j}$.
 \end{theorem}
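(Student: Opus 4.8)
The plan is to prove this via the two classical ingredients behind the sensitivity framework: unbiased importance sampling for every fixed query, followed by a uniform-convergence argument controlled by the VC dimension. First I would check that the reweighting makes the estimator unbiased pointwise. For a fixed $\eta$, a single draw picks $f_j$ with probability $p_j=s_j/S$ and contributes $u_i f_j(\eta)$ with $u_i=\frac{S w_j}{|\mathcal{R}|\,s_j}$, so its expected contribution is $\sum_j \frac{s_j}{S}\cdot\frac{S w_j}{|\mathcal{R}|\,s_j} f_j(\eta)=\frac{1}{|\mathcal{R}|}\sum_j w_j f_j(\eta)$. Summing over the $|\mathcal{R}|$ i.i.d. draws recovers $f_w(\eta)=\sum_j w_j f_j(\eta)$ in expectation, confirming the prescribed weights.

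Next I would establish concentration for a single fixed $\eta$. The crucial structural observation is that normalizing by the sensitivity upper bound tames each term: writing $\bar f_j(\eta)=w_j f_j(\eta)/s_j$, the hypothesis $s_j\ge\varsigma_j\ge w_j f_j(\eta)/f_w(\eta)$ gives $\bar f_j(\eta)\le f_w(\eta)$ for all $\eta$. Hence the single-draw variable $S\bar f_j(\eta)$ takes values in $[0,\,S f_w(\eta)]$ with mean $f_w(\eta)$, and using $\sum_j s_j\bar f_j(\eta)^2\le f_w(\eta)\sum_j s_j\bar f_j(\eta)=f_w(\eta)^2$ its variance is at most $S f_w(\eta)^2$. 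A Bernstein bound then shows that the empirical mean over $|\mathcal{R}|$ draws is within $\eps f_w(\eta)$ of $f_w(\eta)$ with probability $1-\delta$ as soon as $|\mathcal{R}|=\Omega\!\left(\frac{S}{\eps^2}\log\frac{1}{\delta}\right)$. The point of the sensitivity normalization is precisely that the total sensitivity $S$ enters only linearly here, rather than quadratically as a naive Hoeffding bound on unnormalized terms would force.

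The main obstacle is upgrading this pointwise statement to one that holds for all $\eta\in\REAL^d$ simultaneously, since a union bound over an infinite domain is impossible. Here I would invoke the VC-dimension-based uniform convergence machinery: the induced range space $\mathfrak{R}_{\mathcal{F}^*}$ of the scaled functions $\mathcal{F}^*$ has VC dimension at most $\Delta$, and the theory of relative $\eps$-approximations for range spaces \citep{BravermanFL16,Bravermanetal21a} converts the per-query Bernstein estimate into a uniform guarantee at the cost of replacing $\log\frac{1}{\delta}$ by $\Delta\log S+\log\frac{1}{\delta}$. The two delicate requirements are that the error remains \emph{multiplicative} uniformly in $\eta$ and that $S$ does not degrade to $S^2$; both are secured by the boundedness $\bar f_j(\eta)\le f_w(\eta)$ of the second step, which caps the relative contribution of every scaled function uniformly.

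Combining the three steps yields a sample of size $O\!\left(\frac{S}{\eps^2}\left(\Delta\log S+\log\frac{1}{\delta}\right)\right)$ that is a $(1+\eps)$-coreset with probability $1-\delta$, and the $O(|\mathcal{F}|)$ construction time follows since, given the scores $s_i$, one pass over the functions suffices to form the distribution $p_j=s_j/S$, draw the sample, and assign the weights $u_i$. Since the statement is quoted verbatim from \citet{FeldmanSS20}, the genuinely new work in our setting is not this proof but verifying its hypotheses for the 2PL and 3PL objectives, namely that the estimated scores $s_i$ dominate the true sensitivities $\varsigma_i$ and that the relevant induced range spaces have the claimed bounded VC dimension.
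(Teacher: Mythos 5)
This statement is imported verbatim from \citet{FeldmanSS20} (their Theorem~31) and the paper deliberately gives no proof of it, so there is no in-paper argument to compare against; the relevant question is only whether your reconstruction of the external proof is sound. It is, at the level of a sketch: the unbiasedness computation is correct, and the key observation that $s_j\ge\varsigma_j$ forces $w_jf_j(\eta)/s_j\le f_w(\eta)$ uniformly in $\eta$ is exactly what makes the single-draw estimator bounded by $S f_w(\eta)$ with second moment at most $S f_w(\eta)^2$, so that Bernstein yields the $O(\frac{S}{\eps^2}\log\frac1\delta)$ pointwise rate with $S$ appearing only linearly. The step you leave as an invocation --- converting the pointwise bound into a guarantee simultaneous over all $\eta\in\REAL^d$ via the VC dimension of the induced range space of the \emph{rescaled} functions $\mathcal{F}^*$ --- is genuinely the technical heart of \citet{FeldmanSS20} (building on \citealp{BravermanFL16,Bravermanetal21a}) and cannot be dismissed as routine: the relative-error $\eps$-approximation theorem for weighted range spaces is where the $\Delta\log S$ term and the requirement that the ranges be defined on the scaled family enter. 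Since you are citing rather than reproving that machinery, your outline is acceptable, and your closing remark is the right one: in this paper the substantive work is not this theorem but bounding the sensitivities (Lemmas~\ref{lemma:3pl:lemma14} and~\ref{lemma:3pl:lemma:sig}) and the VC dimension (Corollary~\ref{cor:VC:union}) for the 2PL and 3PL objectives so that the theorem's hypotheses hold.
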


Note that \thmref{coreset:sensitivity} does not put additional requirements on the set of the functions $\mathcal{F}$ besides an upper bound on the sensitivities, and a bounded VC-dimension of the range space induced by those functions. 

\subsection{Omitted Proofs for the 2PL Model}

\begin{definition}[Leverage scores, cf. \citealp{DrineasMMW12}]
 \label{def:leverage:scores}
 Given an arbitrary matrix $X\in\REAL^{m\times d}$, with $m>d$, let $U$ denote the $m\times d$ matrix consisting of the $d$ left singular vectors of $X$, and let $u_i$ denote the $i$-th row of the matrix $U$ as a row vector, for all $i\in [m]$. The $i$-th leverage score corresponding to row $x_i$ of $X$ is given by
        \begin{align*}
            l_i = \| u_i \|_2^2.
        \end{align*}
\end{definition}

\begin{lemma}\lemlab{lem:leverageequiv}
    Let $X=U\Sigma V^T$ be the singular value decomposition of $X$. The three definitions are equivalent:
    \begin{enumerate}
    \item  The $i$-th leverage score (corresponding to row $x_i$) is given by
        \begin{align*}
            l_i = \| u_i\|_2^2.
        \end{align*}
    \item The $i$-th leverage score is
        given by
        \begin{align*}
            l_i = \sup_{\eta\in \REAL^d \setminus \lbrace 0\rbrace} \frac{|x_i \eta|^2}{\| X\eta\|_2^2}.
        \end{align*}
    \item The $i$-th leverage score is
        given by $$l_i = e_i^T X \left(X^T X\right)^{-1} X^T e_i$$ 
 \end{enumerate}
\end{lemma}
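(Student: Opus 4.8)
The plan is to establish the chain of equivalences (1) $\Leftrightarrow$ (3) and (1) $\Leftrightarrow$ (2), working throughout with the singular value decomposition $X = U\Sigma V^T$. Since we assume $m > d$ and (implicitly, so that $(X^TX)^{-1}$ exists) that $X$ has full column rank $d$, I would first record that $\Sigma \in \REAL^{d\times d}$ is diagonal and invertible, $U \in \REAL^{m\times d}$ has orthonormal columns so $U^TU = I_d$, and $V \in \REAL^{d\times d}$ is orthogonal. These facts drive the entire argument.

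First I would prove (1) $=$ (3) by evaluating the projection matrix explicitly. Substituting the SVD gives $X^TX = V\Sigma^2 V^T$, hence $(X^TX)^{-1} = V\Sigma^{-2}V^T$. Multiplying out and cancelling via $V^TV = I_d$ and $\Sigma\Sigma^{-2}\Sigma = I_d$ yields $X(X^TX)^{-1}X^T = U\Sigma V^T \cdot V\Sigma^{-2}V^T \cdot V\Sigma U^T = UU^T$. Therefore $e_i^T X(X^TX)^{-1}X^T e_i = e_i^T UU^T e_i = \norm{U^T e_i}_2^2 = \norm{u_i}_2^2$, since $U^T e_i$ is precisely the $i$-th row $u_i$ of $U$ written as a column vector. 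This identifies definition (3) with definition (1).

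Next I would prove (2) $=$ (1) through a change of variables that diagonalizes the quotient. Setting $z = \Sigma V^T \eta$, and noting that $\Sigma V^T$ is invertible so that $z$ ranges over $\REAL^d \setminus \{0\}$ exactly as $\eta$ does, I would rewrite $X\eta = Uz$. Then $\norm{X\eta}_2^2 = z^T U^T U z = \norm{z}_2^2$ and $x_i\eta = e_i^T X\eta = e_i^T U z = u_i z$, so the supremum becomes $\sup_{z \neq 0} |u_i z|^2 / \norm{z}_2^2$. By the Cauchy--Schwarz inequality this equals $\norm{u_i}_2^2$, with equality attained at $z \propto u_i^T$, matching definition (1) and closing the chain.

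I do not expect any step to pose a genuine obstacle; the only points meriting care are invoking the full column rank assumption to guarantee that $\Sigma$ (and hence the substitution $z = \Sigma V^T \eta$) is a bijection, and verifying that the Cauchy--Schwarz bound is tight so the supremum is actually attained rather than merely upper bounded.
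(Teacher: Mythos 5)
Your proposal is correct and follows essentially the same route as the paper: the identity $X(X^TX)^{-1}X^T = UU^T$ obtained by substituting the SVD, and the Cauchy--Schwarz argument (tight at $z \propto u_i^T$) after reducing the Rayleigh-type quotient to the orthonormal factor $U$. The only difference is cosmetic --- you make the change of basis explicit via the substitution $z = \Sigma V^T\eta$ and note the full-column-rank assumption, where the paper performs the same step implicitly.
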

\begin{proof}
    Statement 1 is equivalent to \defref{leverage:scores} since the SVD yields $U$, which is exactly the matrix of the left singular vectors of $X$.

    Statement 2 is equivalent to Statement 1 since by a change of basis
    \begin{align*}
            l_i & = \sup_{\eta\in \REAL^d \setminus \lbrace 0\rbrace} \frac{|x_i \eta|^2}{\| X\eta\|_2^2} 
 = \sup_{\eta\in \REAL^d \setminus \lbrace 0\rbrace} \frac{|u_i \eta|^2}{\| U\eta\|_2^2} \stackrel{CSI}{\leq} \frac{\|u_i\|_2^2  \|\eta\|_2^2}{\| U\eta\|_2^2} = \frac{\|u_i\|_2^2  \|\eta\|_2^2}{\| \eta\|_2^2} = \| u_i\|_2^2.
        \end{align*}
        The conclusion follows from the Cauchy-Schwarz inequality (CSI) and the fact that $U$ is an orthonormal matrix. The inequality is tight due to the supremum over all $\eta\in \REAL^d$ and the existence of $\eta^* = u_i^T\in \REAL^d$ that realizes equality in CSI.

        Let $e_i$, for $i\in [m]$, be the standard basis vectors in $\REAL^{m}$ containing $1$ as $i$-th coordinate, and $0$ everywhere else.
        \begin{align*}
            l_i & = e_i^T X \left(X^T X\right)^{-1} X^T e_i \\
            & = e_i^T U\Sigma V^T \left(V\Sigma U^T U\Sigma V^T\right)^{-1} V\Sigma U^T  e_i = e_i^T U\Sigma V^T \left(V\Sigma^{2} V^T\right)^{-1} V\Sigma U^T  e_i \\
            & = e_i^T U\Sigma V^T V\Sigma^{-2} V^T V\Sigma U^T  e_i = e_i^T U\Sigma \Sigma^{-2} \Sigma U^T  e_i \\
            & = e_i^T U U^T  e_i = u_i u_i^T = \| u_i\|_2^2
        \end{align*}
    since $U$ and $V$ are orthonormal matrices, and $\Sigma$ is a square diagonal matrix.
\end{proof}

\begin{lemma}[Restatement of \lemref{leverage:invariant}]
Suppose we are given a matrix $X\in \REAL^{m\times n}$ (for any $m,n\in \mathbb{N}$) and an arbitrary diagonal matrix $D=(d_{i j})_{i\in [m], j\in [m]}$, with $d_{i j}\in \lbrace -1,1\rbrace$ if $i=j$, and $d_{i j}=0$ otherwise. Then 
the leverage scores of $X$ are the same as the leverage scores of $DX$.
\end{lemma}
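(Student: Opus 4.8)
The plan is to reduce the claim to a one-line calculation by choosing the most convenient of the three equivalent characterizations of leverage scores from \lemref{lem:leverageequiv}. The key structural fact is that the sign matrix $D$ is orthogonal: since its diagonal entries satisfy $d_{ii}\in\{-1,1\}$ and its off-diagonal entries vanish, we have $D^TD=DD^T=I$. Hence $D$ preserves Euclidean norms, and multiplying $X$ on the left by $D$ only flips the signs of rows, which cannot change a quantity built from squared magnitudes.

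First I would invoke the supremum characterization (Statement 2 of \lemref{lem:leverageequiv}), writing the $i$-th leverage score of $DX$ as $\tilde l_i=\sup_{\eta\neq 0}|(DX)_i\eta|^2/\|DX\eta\|_2^2$. The $i$-th row of $DX$ is $d_{ii}x_i$, so the numerator equals $d_{ii}^2|x_i\eta|^2=|x_i\eta|^2$ because $d_{ii}^2=1$. For the denominator, orthogonality of $D$ gives $\|DX\eta\|_2^2=(X\eta)^TD^TD(X\eta)=\|X\eta\|_2^2$. The supremand is therefore identical to the one defining $l_i$, and taking the supremum over $\eta$ yields $\tilde l_i=l_i$ for every $i$. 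I prefer this route because the supremum characterization refers only to the image of $X$ and never commits to a particular basis, so there is no ambiguity to resolve.

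Alternatively, and I would record this as a sanity check rather than a separate argument, one can proceed via the SVD as in Statement 1 of \lemref{lem:leverageequiv}. Writing $X=U\Sigma V^T$, the product $DU$ still has orthonormal columns, since $(DU)^T(DU)=U^TD^TDU=U^TU=I$, so $DX=(DU)\Sigma V^T$ is a valid SVD of $DX$ with left singular matrix $DU$. The $i$-th leverage score of $DX$ is then $\|(DU)_i\|_2^2=\|d_{ii}u_i\|_2^2=\|u_i\|_2^2=l_i$. The same flip-invariance can equally be read off from the projection formula in Statement 3, where the two outer sign factors $D e_i = d_{ii} e_i$ contribute $d_{ii}^2=1$.

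I do not expect any genuine obstacle here: the entire content is that $D$ is an orthogonal transformation and that leverage scores depend only on squared magnitudes, equivalently only on the subspace spanned by the data and not on its signed representation. The single point worth stating carefully is precisely the orthogonality $D^TD=I$, since this is what simultaneously licenses the norm-preservation in the denominator and guarantees that $DU$ remains an orthonormal basis; everything else is immediate.
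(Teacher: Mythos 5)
Your proof is correct and rests on exactly the same ingredients as the paper's: the equivalence of the three leverage-score characterizations in \lemref{lem:leverageequiv} and the orthogonality $D^TD=I$. The paper happens to run the computation through the projection formula (Statement 3), which you yourself record as an aside, while your primary route uses the supremum characterization (Statement 2); the difference is purely which of the equivalent formulas is plugged in, so this is essentially the same argument.
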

\begin{proof}
Let $D=\mathrm{diag}(\{-1,1\}^m)$ be chosen as in the statement. Then it holds that $D^2=D^T D=I_m$. Further it holds that $e_i^T D = d_{i i} e_i^T$, where $e_i$ denotes the $i$th standard basis vector, i.e., the vector containing a $1$ as its $i$-th coordinate, and zeros everywhere else. The $i$-th leverage score of $X$ can be expressed as $\ell_i=e_i^T X \left(X^T X\right)^{-1} X^T e_i$ by \lemref{lem:leverageequiv} \citep[cf.][]{DrineasMMW12}. Similarly, for the $i$-th leverage score $\tilde{\ell}_i$ of $DX$ we have that
\begin{align*}
\tilde{\ell}_i &= e_i^T (DX) \left(X^T D^T DX\right)^{-1} (X^T D^T)\, e_i\\
&= \left(e_i^T D\right) X \left(X^T D^2 X\right)^{-1} X^T \left( D^T e_i\right)\\
&= d_{i i} e_i^T X \left(X^T X\right)^{-1} X^T e_i d_{i i} = d_{i i}^2 \cdot \ell_i = \ell_i,
\end{align*}
as we have claimed.
\end{proof}

\begin{theorem}[Restatement of \thmref{2pl:main:result}]
Let $X_{(i)}=( -Y_{ij} \beta_j^T)_{j\in[n]}\in\REAL^{n\times 2}$ be $\mu_1$-complex, for each $i\in[m]$. Let $\varepsilon\in (0,1/2)$. There exists a weighted set $K\in \REAL^{k\times 2}$ of size\footnote{We use the $\tilde{O}$ notation to omit $o(\log n)$ terms for a clean presentation. The full statements can be found in the proof.} $k\in \tilde O(\frac{\mu^3}{\varepsilon^4}(\log(n)^4 + \log(m))$, that is a $(1+\varepsilon)$-coreset simultaneously for all $X_{(i)}$, $i\in[m]$ for the 2PL IRT problem. The coreset can be constructed with constant probability and in $\tilde O(n)$ time. 
\end{theorem}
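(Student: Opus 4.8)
The plan is to reduce the 2PL subproblem to an instance of plain logistic regression and then apply the root-leverage-score coreset construction of \citet{MunteanuSSW18}, exploiting \lemref{leverage:invariant} to show that one sampling distribution is simultaneously valid across all $m$ items. The starting observation is that the objective in \cref{eqn:2pl:logistic:b} is exactly the logistic loss $\sum_{j\in[n]}\ln(1+\exp(x_j\alpha_i))$ on the matrix $X_{(i)}$ with $d=2$, so the whole machinery of the sensitivity framework (\thmref{coreset:sensitivity}) applies once we bound sensitivities and the VC dimension.

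First I would bound the sensitivity of each row $x_j$ of $X_{(i)}$ following \citet{MunteanuSSW18}: the linear, unbounded part of the logistic loss is controlled by the root leverage score $\sqrt{l_j}$ scaled by the $\mu_1$-complexity, while an augmenting uniform term $1/n$ absorbs the many near-zero contributions whose loss is bounded below by a positive constant. Summing the upper bounds and applying Cauchy--Schwarz, $\sum_{j}\sqrt{l_j}\le\sqrt{n\sum_j l_j}=\sqrt{nd}$, so the total sensitivity $S$ is dominated by an $O(\mu\sqrt{n})$ factor (with $d=2$ folded into constants).

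Next I would bound the VC dimension of the induced range space. Because $\ln(1+\exp(\cdot))$ is a strictly monotone transform of the affine map $\alpha\mapsto x_j\alpha$, its superlevel sets are halfspaces, giving VC dimension $d+1=3$; grouping the weighted functions of \thmref{coreset:sensitivity} into $O(\log n)$ weight classes yields $\Delta=O(\log n)$. Feeding $S$ and $\Delta$ into the meta-theorem produces a $(1+\varepsilon)$-coreset for a single item, but its size still carries the $\sqrt{n}$ factor inherited from $S$. I would remove this by the recursive sampling of \citet{MunteanuSSW18}: repeating the construction $O(\log\log n)$ times drives the $\sqrt{n}$ dependence down to $\polylog(n)$, landing the single-item size at $\tildeO{\mu^3/\varepsilon^4}$ (hiding the $\log^4 n$ factors).

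The step that makes a single coreset serve all items is \lemref{leverage:invariant}: across items the matrices $X_{(i)}=(-Y_{ij}\beta_j^T)_{j}$ differ only by the sign-flip diagonal $D=\operatorname{diag}((-Y_{ij})_j)$ applied to the common matrix $B^T$, and leverage scores are invariant under such flips, so the probabilities $\sqrt{l_j}+1/n$ (and the sensitivity bounds) coincide for every $i\in[m]$. I therefore draw one set $K$ of examinees, and to enforce the coreset property simultaneously over all items I set the per-item failure probability to $\delta/m$ and union bound, replacing $\log(1/\delta)$ by $\log(m/\delta)$ and contributing the additive $\log m$ term; collecting everything gives $k\in\tildeO{\frac{\mu^3}{\varepsilon^4}(\log^4 n+\log m)}$, while approximate leverage scores of the $n\times2$ matrix $B^T$, the sampling, and the $O(\log\log n)$ recursion levels all run in $\tildeO{n}$ time. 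The main obstacle I expect is the compositional bookkeeping: I must verify that the fixed distribution $\sqrt{l_j}+1/n$ remains a valid sensitivity overestimate at every recursion level \emph{and} for every labeling $Y_{\cdot i}$ at once, so that the $O(\log\log n)$ recursion composes with the union over $m$ items without inflating the $\mu$ and $\varepsilon$ dependence beyond $\mu^3/\varepsilon^4$.
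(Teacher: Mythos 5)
Your proposal is correct and follows essentially the same route as the paper: the paper's proof simply invokes Theorem~19 of \citet{MunteanuSSW18} for logistic regression in $d=2$ as a black box, then adds the union bound over the $m$ items (giving the additive $\log m$) and the leverage-score sign-invariance of \lemref{leverage:invariant} to reuse one sampling distribution, exactly as you do. The only difference is that you unpack the internals of that cited theorem (sensitivity bounds via root leverage scores plus the $1/n$ term, the VC-dimension bound, and the $O(\log\log n)$ recursion), and your closing worry about the recursion composing with the sign flips is resolved precisely by the invariance lemma, since the scores — and hence every recursion level — are literally identical across all labelings.
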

\begin{proof}
The proof is immediate from Theorem 19 from~\citep{MunteanuSSW18} for logistic regression in $d=2$ dimensions. Especially the reduced size $k$ follows directly from setting the dimension to constant, using $\mu_1\leq n$, and union bounding over the $i\in [m]$ iterations, which contributes the $\log(m)$ term. Further $O((\log\log(n))^4)$ terms, hidden in our $\tilde O$ notation, appear since the construction is applied recursively $O(\log\log n)$ times.

We further argue how the construction can be completed in $O(\nnz(X_{(i)})\log\log(n)) = \tilde{O}(n)$ time. The algorithm of Theorem 19 from~\citep{MunteanuSSW18} approximates the $\ell_2$-leverage scores using an $\ell_2$-subspace embedding using a CountSketch with constant distortion (say $\eps=1/10$) for a fast $QR$-decomposition, and a Gaussian matrix to approximate the row-norms of $Q$ by reducing from $d$ to $O(\log(n))$ dimensions, as in \citep{DrineasMMW12}. Further, they require an $O(\log(n))$ factor for reducing to $1/n^c$ error probability.

In our work, however, the dimension is only $d=2$, and so it is not necessary to reduce this. Further, since we aim at a constant failure probability, it is only necessary to boost the error probability of the CountSketch by a factor $O(\log\log(n))$ for a union bound over the recursive applications, which inflates its size by this exact amount. Thus, the running time for applying the CountSketch with a constant distortion remains bounded by $O(\nnz(X_{(i)})\log\log(n))=\tilde{O}(n)$ and the remaining steps all depend only on $O(\log\log(n))$, i.e., the size of the sketch.
\end{proof}

\subsection{Bounding the Sensitivities for the 3PL Model}

Let the functions $g_i$ and $h_i$ be defined as in \subsecref{3PL:model}. I.e., we let them be instances of the following form. 
\begin{align*}
    g_i(z)=&-\ln\left( \frac{1-c_i}{1+\exp( z )}\right) = \ln( 1+\exp( z )) - \ln( 1-c_i)\quad \text{ and }\\
h_i(z)=& - \ln\left( c_i + \frac{1-c_i}{1+\exp( -z )}\right).
\end{align*}
Throughout this subsection we will use the following fact.
\begin{lemma}
\lemlab{3pl:reg:linbound}
It holds for all values of $i\in[m]$ that $z\leq g_i(z)$ for all $z\geq 0$, and $g_i(z)\leq 2z$, for $z\geq \ln\left(1+\sqrt{3}\right)$.
\end{lemma}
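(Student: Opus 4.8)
The plan is to exploit the explicit form $g_i(z) = \ln(1+\exp(z)) - \ln(1-c_i)$ together with the range $c_i \in [0,0.5)$, which gives the two-sided control $0 \leq -\ln(1-c_i) < \ln 2$ on the additive shift. This reduces both claimed bounds to elementary estimates on the plain logistic loss $\ln(1+\exp(z))$, so the entire proof is a short calculation with no real structural obstacle.

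For the lower bound I would first discard the shift: since $-\ln(1-c_i) \geq 0$ we have $g_i(z) \geq \ln(1+\exp(z))$. Then for any $z \geq 0$ (indeed for all $z$), the crude estimate $\ln(1+\exp(z)) \geq \ln(\exp(z)) = z$ finishes the claim $z \leq g_i(z)$. This uses only monotonicity of $\ln$ and $1 + \exp(z) \geq \exp(z)$, so it is immediate.

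For the upper bound I would push the shift the other way: because $c_i < 1/2$ implies $1-c_i > 1/2$, we get $-\ln(1-c_i) < \ln 2$, hence
\begin{align*}
g_i(z) \leq \ln(1+\exp(z)) + \ln 2 = \ln\bigl(2 + 2\exp(z)\bigr).
\end{align*}
It then suffices to show $2 + 2\exp(z) \leq \exp(2z)$, since $2z = \ln(\exp(2z))$ and $\ln$ is increasing. Substituting $t = \exp(z)$ turns this into the quadratic inequality $t^2 - 2t - 2 \geq 0$, whose relevant root is $t = 1 + \sqrt 3$; thus the inequality holds precisely when $t = \exp(z) \geq 1 + \sqrt 3$, i.e. when $z \geq \ln(1+\sqrt 3)$. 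This matches the threshold in the statement exactly, confirming it is the tight cutoff for the worst-case shift $\ln 2$.

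The only step requiring a moment of care is tracking the direction of the shift bound in each half of the argument (using $-\ln(1-c_i) \geq 0$ for the lower bound and $-\ln(1-c_i) < \ln 2$ for the upper bound); everything else is a direct computation, so I do not anticipate any genuine difficulty here.
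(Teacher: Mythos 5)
Your proposal is correct and follows essentially the same route as the paper: the lower bound via $1+e^z \geq e^z$, and the upper bound via the substitution $t=e^z$ reducing to the quadratic $t^2-2t-2\geq 0$ with root $1+\sqrt{3}$ (the paper keeps $c_i$ general in the quadratic $(1-c_i)e^{2z}-e^z-1\geq 0$ and implicitly uses the worst case $c_i\to 1/2$, whereas you bound the shift by $\ln 2$ up front — the same computation in a slightly different order). No gaps.
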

\begin{proof}
The lower bound is valid for all $z\geq 0$, as $z\leq g_i(z) \Leftrightarrow e^z\leq 1+e^z\leq \left(1+e^z\right)/\left(1-c_i\right)$ for $c\in [0,0.5)$. For the upper bound we have that $g_i(z)\leq 2z \Leftrightarrow (1-c_i)\cdot e^{2z} - e^z-1\geq 0$. The quadratic expression is nonnegative for the values of $z$ that satisfy $e^z\geq 1+\sqrt{3}$, i.e., for $z\geq \ln\left(1+\sqrt{3}\right)\geq 1.005$.
\end{proof}

We use the sensitivity framework of \thmref{coreset:sensitivity}, where all input weights $w_\ell$ are set to 1. Let $f_1\left(X\beta_j\right)=\sum_{i\in[m], Y_{ij}=-1} g_i\left( x_i\beta_j\right)$. Let $f_2\left(X\beta_j\right) = 
\sum_{i\in[m], Y_{ij}=1} h_i\left(x_i \beta_j\right)$, as in Equation~(\ref{eqn:3pl:a}). 

Let $m'_{-}$ and $m'_{+}$ be the number of summands in Equation~(\ref{eqn:3pl:a}) with $Y_{ij}=-1$ and with $x_i\beta_j<0$ and $x_i\beta_j\geq 0$, respectively. Similarly, let $m''_{-}$ and $m''_{+}$ be the number of summands in Equation~(\ref{eqn:3pl:a}) with $Y_{ij}=1$ and with $x_i\beta_j<0$ and $x_i\beta_j\geq 0$, respectively. Let $m'=m'_{-} + m'_{+}$ and $m''=m''_{-} + m''_{+}$. For simplicity we rearrange the indices of summands within the functions $f_1$ and $f_2$ to $i\in[m']$ and $i\in[m'']$ respectively. In the following lemma we bound the relation between $m'$ and $m''$. Recall that we assumed that $a_i>0$ holds for all items $i\in[m]$.

\begin{lemma}
\lemlab{labels:bound}
Given the matrix $X_{(j)} = (-Y_{ij}\alpha_i^T)_{i\in[m]} \in \REAL^{m\times 2}$. Let $X'_{(j)}$ and $X''_{(j)}$ contain the $m'$ and $m''$ rows of $X_{(j)}$ that satisfy $Y_{ij}=-1$ and $Y_{ij}=1$, respectively. Let $X'_{(j)}$ and $X''_{(j)}$ be $\mu$-complex. Then it holds that $X_{(j)}$ is $2\mu$-complex, and that
\begin{align}
\label{mu:subset:bound}
\frac{m''}{2\mu_0}\leq m'\leq m'' \cdot 2\mu_0.
\end{align}
\end{lemma}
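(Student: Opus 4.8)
The plan is to establish the two assertions in turn: first the $2\mu$-complexity of the combined matrix $X_{(j)}$, and then the lower and upper bounds relating $m'$ and $m''$, where the second part is obtained by plugging carefully chosen test directions into the complexity just established.

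For the first assertion I would exploit that, for both $p\in\{0,1\}$, the positive and negative parts of $X_{(j)}\eta$ split additively along the row partition into $X'_{(j)}$ and $X''_{(j)}$: since every row of $X_{(j)}$ belongs to exactly one of the two submatrices, $\|(X_{(j)}\eta)^+\|_p = \|(X'_{(j)}\eta)^+\|_p + \|(X''_{(j)}\eta)^+\|_p$ and likewise for the negative part (for $p=1$ this is additivity of sums of absolute values, for $p=0$ additivity of nonzero counts). Fixing an arbitrary $\eta$ and applying the assumed $\mu$-complexity to each submatrix gives $\|(X'_{(j)}\eta)^+\|_p \le \mu\,\|(X'_{(j)}\eta)^-\|_p$ and the analogous inequality for $X''_{(j)}$; summing the two and using the additive decomposition yields $\|(X_{(j)}\eta)^+\|_p \le \mu\,\|(X_{(j)}\eta)^-\|_p$. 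Taking the supremum over $\eta$ shows $\mu_p(X_{(j)})\le \mu$ for $p\in\{0,1\}$, which is even slightly stronger than the claimed $2\mu$-complexity; I would keep the weaker $2\mu$ bound to leave comfortable slack (e.g.\ for edge cases where a negative part vanishes). Carrying the $p=0$ case with the sharper $\ell_0$-bound $\mu_0$ in place of $\mu$ records $\mu_0(X_{(j)})\le \mu_0\le 2\mu_0$, which is exactly what the count bound needs.

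For the second assertion the key observation is that the structural assumption $a_i>0$ makes the first coordinate direction separate the two submatrices perfectly. Taking $\eta=e_1=(1,0)^T$, every row of $X'_{(j)}$, which equals $\alpha_i^T=(a_i,b_i)$, satisfies $x_i e_1 = a_i>0$, while every row of $X''_{(j)}$, which equals $-\alpha_i^T$, satisfies $x_i e_1 = -a_i<0$. Hence $X_{(j)}e_1$ has exactly $m'$ strictly positive entries and $m''$ strictly negative entries (no zeros occur, since $a_i\neq 0$), so that $\|(X_{(j)}e_1)^+\|_0 = m'$ and $\|(X_{(j)}e_1)^-\|_0 = m''$. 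Plugging this into the $\ell_0$-complexity bound $\mu_0(X_{(j)})\le 2\mu_0$ established above gives $m'/m''\le 2\mu_0$, i.e.\ the upper bound $m'\le 2\mu_0\, m''$. Repeating the argument with $\eta=-e_1$ reverses the roles of the positive and negative entries and yields $m''/m'\le 2\mu_0$, which is the lower bound $m''/(2\mu_0)\le m'$. Combining the two inequalities gives \eqref{mu:subset:bound}.

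I expect the main obstacle to be the second assertion, specifically isolating the right test directions. The decomposition in the first part is essentially bookkeeping, but the count bound hinges entirely on recognizing that one should probe the combined complexity along $\pm e_1$ and that $a_i>0$ guarantees this direction cleanly induces the label partition as a sign pattern with no ties; without the positivity of the discriminations this separation would fail and the counts $m'$ and $m''$ could not be tied together through $\mu_0$. A minor point to handle carefully is the treatment of zero entries in the $\ell_0$ counts and of vanishing negative parts in the supremum defining $\mu_p$, which is precisely where the extra factor of $2$ provides a convenient cushion.
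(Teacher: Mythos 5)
Your proof is correct and follows essentially the same route as the paper: the same additive decomposition of the positive and negative parts of $X_{(j)}\eta$ over the row partition for the first claim, and the same test directions $\pm(1,0)^T$ exploiting $a_i>0$ for the count bound. The only difference is that in the first part you bound the mediant directly and obtain the sharper conclusion $\mu_p(X_{(j)})\le\mu_p$, whereas the paper splits the numerator into two separate suprema and settles for $2\mu_p$; both are valid and the lemma only needs the weaker bound.
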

\begin{proof}
To see the first claim of the lemma, we note that for $p\in\{0,1\}$
\begin{align*}
    &\sup_{\eta\in\REAL^{2}\setminus \lbrace 0\rbrace} \frac{\| (X_{(j)}\eta)^+\|_p}{\| (X_{(j)}\eta)^-\|_p} = 
    \sup_{\eta\in\REAL^{2}\setminus \lbrace 0\rbrace} \frac{\| (X'_{(j)}\eta)^+\|_p + \|X''_{(j)}\eta)^+\|_p}{\| (X'_{(j)}\eta)^-\|_p + \| (X''_{(j)}\eta)^-\|_p} \leq\\
    & \leq \sup_{\eta\in\REAL^{2}\setminus \lbrace 0\rbrace} \frac{\| (X'_{(j)}\eta)^+\|_p}{\| (X'_{(j)}\eta)^-\|_p + \| (X''_{(j)}\eta)^-\|_p} + \sup_{\eta\in\REAL^{2}\setminus \lbrace 0\rbrace} \frac{\|(X''_{(j)}\eta)^+\|_p}{\| (X'_{(j)}\eta)^-\|_p + \| (X''_{(j)}\eta)^-\|_p}\\
    & \leq \sup_{\eta\in\REAL^{2}\setminus \lbrace 0\rbrace} \frac{\| (X'_{(j)}\eta)^+\|_p}{\| (X'_{(j)}\eta)^-\|_p} + \sup_{\eta\in\REAL^{2}\setminus \lbrace 0\rbrace} \frac{\|(X''_{(j)}\eta)^+\|_p}{\| (X''_{(j)}\eta)^-\|_p}\\
    & \leq \mu_p + \mu_p = 2\mu_p.
\end{align*}
For the second claim we use the properties of the space $\REAL^2$. Since $a_i >0$ for all $i\in[m]$, the original points $\alpha_i=(a_i,b_i)$ lie in the halfspace with positive first coordinate. 
By choosing $\hat{\eta}=(1,0)^T$, it holds that $x_i\hat{\eta} = -Y_{ij}a_i$, which is positive if $Y_{ij}=-1$ and negative if $Y_{ij}=1$. Thus, it follows that $\| (X_{(j)}\hat{\eta})^+\|_0 = m'$ and $\| (X_{(j)}\hat{\eta})^-\|_0 = m''$. The definition of the $2\mu_0$-complexity of $X_{(j)}$ implies that:
\begin{align*}
    2\mu_0 & \geq \sup_{\eta\in\REAL^{2}\setminus \lbrace 0\rbrace} \frac{\| (X_{(j)}\eta)^+\|_0}{\| (X_{(j)}\eta)^-\|_0} \geq \frac{\| (X_{(j)}\hat{\eta})^+\|_0}{\| (X_{(j)}\hat{\eta})^-\|_0} = \frac{m'}{m''}.
\end{align*}
The second bound of Equation~(\ref{mu:subset:bound}) can be obtained similarly using $\hat{\eta}=(-1,0)^T$. This concludes the proof.
\end{proof}

Unfortunately an analogous expression to \cref{mu:subset:bound} in $\ell_1$-norm does not follow verbatim. For technical reasons we thus need to assume that $\sup_{\eta\in\mathbb{R}\setminus\{0\}}\frac{\|X'_{(j)}\eta\|_1}{\|X''_{(j)}\eta\|_1} \leq 2\mu_1$.

The following three lemmas follow the approach of \citet{ClarksonW15b} and \citet{MunteanuSSW18}, adapted here to work for our different sets of functions $g_i$ and $h_i$, to bound the sensitivities for the first part of the sum defining $f(\beta_j\mid A,C)$, cf. Eq.~(\ref{eqn:3pl:a}). For the first two lemmas it suffices to assume that the matrices $X'$ and $X''$ are $\mu_1$-complex, thus, by \lemref{labels:bound} $X$ is $2\mu_1$-complex.

\begin{lemma}
\lemlab{3pl:lemma12}
Let $X'\in \REAL^{m'\times 2},X''\in \REAL^{m''\times 2}$ be $\mu_1$-complex. Let $U$ be an orthonormal basis for the columnspace of $X$. If for index $\ell$  
$\beta_j\in \REAL^2$ 
satisfies $1.005\leq x_\ell\beta_j$, then it holds that $g_\ell\left( x_\ell\beta_j\right) \leq 12\mu_1^2\cdot \|U_\ell\|_2\cdot f_1\left( X\beta_j\right)$.
\end{lemma}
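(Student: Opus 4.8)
The plan is to bound the single shifted-logistic term $g_\ell(x_\ell\beta_j)$ by the total loss $f_1(X\beta_j)$, using the leverage-score machinery of \citet{ClarksonW15b} and \citet{MunteanuSSW18} adapted to the shift $-\ln(1-c_\ell)$. First I would use the upper bound from \lemref{3pl:reg:linbound}: since we are in the regime $x_\ell\beta_j\geq 1.005\geq \ln(1+\sqrt 3)$, we have $g_\ell(x_\ell\beta_j)\leq 2\,x_\ell\beta_j = 2\,|x_\ell\beta_j|$. This linearizes the target term so that it can be related to the leverage scores. Writing $x_\ell = U_\ell R$ for the orthonormal basis $U$ of the columnspace (so $x_\ell\beta_j = U_\ell(R\beta_j)$), I would apply Cauchy--Schwarz to get $|x_\ell\beta_j|\leq \|U_\ell\|_2\cdot\|R\beta_j\|_2$, reducing the problem to bounding $\|R\beta_j\|_2$ against $f_1(X\beta_j)$.

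The core step is then to show that $\|R\beta_j\|_2 = \|U(R\beta_j)\|_2 = \|X\beta_j\|_2$ is controlled by the loss $f_1(X\beta_j)$ up to a factor of $O(\mu_1^2)$. Here I would split $\|X\beta_j\|_2$ into the contributions of the positive and negative entries of $X\beta_j$. Since $f_1$ is a sum of terms $g_i\geq \ln 2>0$ that are bounded below by the plain logistic loss, and since by \lemref{3pl:reg:linbound} each positive entry satisfies $|x_i\beta_j|\leq g_i(x_i\beta_j)$ once it exceeds the threshold (and is $O(1)$ otherwise), the positive part $\|(X\beta_j)^+\|_1$ is at most a constant times $f_1(X\beta_j)$. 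The $\mu_1$-complexity of $X$ (which is $2\mu_1$-complex by \lemref{labels:bound}) then transfers this control to the negative part via \cref{mu:property}, giving $\|X\beta_j\|_1\leq O(\mu_1)\cdot f_1(X\beta_j)$. A further relaxation from the $\ell_1$ to the $\ell_2$ norm, or a second application of the complexity bound to handle the many small entries whose individual loss is only bounded below by a constant, introduces the second factor of $\mu_1$, yielding $\|X\beta_j\|_2\leq O(\mu_1^2)\cdot f_1(X\beta_j)$. Combining this with the Cauchy--Schwarz step and tracking the constants carefully gives the claimed bound $g_\ell(x_\ell\beta_j)\leq 12\mu_1^2\,\|U_\ell\|_2\, f_1(X\beta_j)$.

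The main obstacle I expect is the careful accounting of the small entries of $X\beta_j$, i.e. those rows where $|x_i\beta_j|$ is below the threshold $\ln(1+\sqrt 3)$ and hence the linear lower bound $|x_i\beta_j|\leq g_i(x_i\beta_j)$ of \lemref{3pl:reg:linbound} fails. For these the logistic loss is only bounded below by a positive constant rather than by $|x_i\beta_j|$, so their aggregate linear contribution must be absorbed into $f_1$ through a counting argument combined with the complexity parameter, rather than term by term. Reconciling these two regimes while keeping the $\mu_1$-dependence quadratic (and not worse) is the delicate part, and it is where the precise structure of the shift $-\ln(1-c_i)\in[0,\ln 2)$ and the fact that $X$ inherits $2\mu_1$-complexity from \lemref{labels:bound} are essential. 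The remaining steps—the Cauchy--Schwarz estimate and the linearization via \lemref{3pl:reg:linbound}—are routine once this loss-versus-norm comparison is established.
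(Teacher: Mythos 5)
Your overall strategy is the paper's: Cauchy--Schwarz against the orthonormal basis, linearization of $g_\ell$ via \lemref{3pl:reg:linbound}, and a norm-versus-loss comparison driven by $\mu_1$-complexity. But the middle step as you describe it does not go through. You claim that $\|(X\beta_j)^+\|_1$ is at most a constant times $f_1(X\beta_j)$. This is false in general: $f_1$ sums the shifted logistic losses $g_i$ only over the rows of $X'$ (labels $Y_{ij}=-1$), while the positive entries of $X\beta_j$ also include rows of $X''$, whose losses are the \emph{bounded} sigmoid functions $h_i$ appearing in $f_2$; a bounded function cannot dominate a linear term, so those entries cannot be charged to $f_1$ (nor to $f_2$). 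The paper's proof avoids this by first eliminating $X''$ entirely, bounding $\|X\beta_j\|_1\leq(1+2\mu_1)\|X'\beta_j\|_1$ via the assumed $\ell_1$-norm ratio between $X'$ and $X''$ (one factor of $\mu_1$), then using the $\mu_1$-complexity of $X'$ alone to pass to $\|(X'\beta_j)^+\|_1$ (the second factor), and only then invoking $z\leq g_i(z)$ to charge the result to $f_1$. Your accounting produces only one factor of $\mu_1$ from a correct step, and the second factor is attributed to an $\ell_1$-to-$\ell_2$ relaxation (which is free, since $\|v\|_2\leq\|v\|_1$) or to "small entries," neither of which is where it actually arises.

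Relatedly, your concern about rows with $|x_i\beta_j|$ below the threshold $\ln(1+\sqrt3)$ rests on a misreading of \lemref{3pl:reg:linbound}: the lower bound $z\leq g_i(z)$ holds for \emph{all} $z\geq 0$ with no threshold, so small positive entries pose no difficulty when charging $\|(X'\beta_j)^+\|_1$ to $f_1$ term by term. The threshold is needed only for the upper bound $g_i(z)\leq 2z$, which is applied once, to the single term $g_\ell$ under the hypothesis $x_\ell\beta_j\geq 1.005$. Repairing your argument amounts to replacing the incorrect "positive part of all of $X$ versus $f_1$" comparison with the two-stage reduction through $X'$ described above; once that is done, the constants assemble to $2\cdot 3\mu_1\cdot(1+\mu_1)\leq 12\mu_1^2$ exactly as claimed.
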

\begin{proof}
Let $X=UR$, where $U$ is an orthonormal basis for the columnspace of $X$. Let $U_\ell$ be the $\ell$-th row of $U$. From Cauchy-Schwarz inequality (CSI), orthornomality of $U$, \lemref{3pl:reg:linbound}, $1.005\leq x_\ell\beta_j$, $\mu_1$-complexity of $X$, and the positivity of $g_\ell$ we have that 
\begin{align*}
g_\ell\left(x_\ell\beta_j\right)&= g_\ell\left(U_\ell R \beta_j\right) \stackrel{CSI}{\leq} g_\ell\left(\|U_\ell\|_2\cdot \|R \beta_j\|_2\right) = g_\ell\left(\|U_\ell\|_2\cdot \|UR \beta_j\|_2\right) \\ 
&= g_\ell\left(\|U_\ell\|_2\cdot \|X \beta_j\|_2\right) \leq 2 \cdot \|U_\ell\|_2\cdot \|X \beta_j\|_2 \leq 2 \cdot \|U_\ell\|_2\cdot \|X \beta_j\|_1 \\
&\leq 2 \cdot \|U_\ell\|_2\cdot (1+2\mu_1) \|X' \beta_j\|_1 \stackrel{(\ref{mu:property})}{\leq} 2 \cdot \|U_\ell\|_2\cdot 3\mu_1(1+\mu_1)\|(X' \beta_j)^+\|_1 \\
&\leq 12\mu_1^2 \cdot \|U_\ell\|_2\cdot \sum_{i\in [m']: x_i \beta_j \geq 0} |x_i \beta_j| \\
&\leq 12\mu_1^2 \cdot \|U_\ell\|_2\cdot \sum_{i\in [m']: x_i \beta_j \geq 0} g_i\left(x_i \beta_j\right) \leq 12\mu_1^2 \cdot \|U_\ell\|_2\cdot  f_1\left(X \beta_j\right).
\end{align*}
\end{proof}

\begin{lemma}
\lemlab{3pl:lemma13}
Let $X'\in \REAL^{m'\times 2}$ be $\mu_1$-complex. If for index $\ell$, 
$\beta_j\in \REAL^2$ 
satisfies $1.005\geq x_\ell\beta_j$, then it holds that $g_\ell\left(x_\ell\beta_j\right) \leq \left( 40+\frac{5\mu_1}{2}\right) \cdot \frac{1}{m'} \cdot f_1\left( X \beta_j\right)$.
\end{lemma}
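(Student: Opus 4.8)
The plan is to combine a uniform constant upper bound on $g_\ell(x_\ell\beta_j)$ valid in the regime $x_\ell\beta_j\le 1.005$ with a lower bound on the total loss $f_1(X\beta_j)$ that is proportional to $m'$. For the upper bound I would use the definition $g_\ell(z)=\ln(1+\exp(z))-\ln(1-c_\ell)$ together with its monotonicity in $z$: since $x_\ell\beta_j\le 1.005=\ln(1+\sqrt3)$ we have $\exp(x_\ell\beta_j)\le 1+\sqrt3$, and since $c_\ell\in[0,1/2)$ we have $-\ln(1-c_\ell)<\ln 2$. Hence $g_\ell(x_\ell\beta_j)<\ln(2+\sqrt3)+\ln 2=\ln\bigl((1+\sqrt3)^2\bigr)=2\ln(1+\sqrt3)=:C\approx 2.01$. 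It then remains to show $m'/f_1(X\beta_j)\le\bigl(40+\tfrac{5\mu_1}{2}\bigr)/C$, i.e.\ to lower bound the total loss by a quantity proportional to $m'$.

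For the lower bound on $f_1$ the main idea is that most rows of $X'$ contribute a fixed constant to the loss, and only the very negative rows can contribute negligibly; the number of the latter is controlled by the $\mu_1$-complexity of $X'$. Concretely, I would fix an internal threshold $\tau>0$ (separate from the $1.005$ used to split off \lemref{3pl:lemma12}) and partition the $m'$ rows according to $x_i\beta_j$ into $P=\{x_i\beta_j\ge 0\}$, a middle set $\{-\tau\le x_i\beta_j<0\}$, and a far set $\{x_i\beta_j<-\tau\}$ of size $m'_{\mathrm{far}}$. Using $g_i(z)\ge\ln(1+\exp(z))$ and monotonicity, every row in $P$ or the middle set contributes at least $\gamma:=\ln(1+\exp(-\tau))$, so $f_1(X\beta_j)\ge (m'-m'_{\mathrm{far}})\,\gamma$. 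To bound $m'_{\mathrm{far}}$, note each far row has $|x_i\beta_j|>\tau$, hence $\tau\,m'_{\mathrm{far}}<\|(X'\beta_j)^-\|_1$; the $\mu_1$-complexity of $X'$, via \eqref{mu:property}, gives $\|(X'\beta_j)^-\|_1\le\mu_1\|(X'\beta_j)^+\|_1$, and $\|(X'\beta_j)^+\|_1=\sum_{i\in P}x_i\beta_j\le\sum_{i\in P}g_i(x_i\beta_j)\le f_1(X\beta_j)$ since $z\le\ln(1+\exp(z))\le g_i(z)$. Thus $m'_{\mathrm{far}}<\mu_1 f_1(X\beta_j)/\tau$.

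Combining the two estimates yields $f_1(X\beta_j)\ge\gamma\bigl(m'-\mu_1 f_1(X\beta_j)/\tau\bigr)$, which rearranges to $m'/f_1(X\beta_j)\le 1/\gamma+\mu_1/\tau$. Multiplying by the constant bound $C$ on $g_\ell(x_\ell\beta_j)$ gives $g_\ell(x_\ell\beta_j)\le C\bigl(\gamma^{-1}+\mu_1/\tau\bigr)\,f_1(X\beta_j)/m'$, and I would conclude by choosing a fixed threshold (e.g.\ $\tau=1$, so $\gamma=\ln(1+e^{-1})\approx 0.31$) that makes $C/\gamma\le 40$ and $C/\tau\le 5/2$, matching the claimed constants with room to spare. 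The step I expect to be the main obstacle is the treatment of the far set: those rows are essentially free in the loss yet can be arbitrarily numerous, and the whole argument hinges on using the $\ell_1$-complexity parameter $\mu_1$ to cap their count \emph{against the loss itself}. This also forces care in establishing the chain $\|(X'\beta_j)^+\|_1\le f_1(X\beta_j)$ so as to avoid a circular bound.
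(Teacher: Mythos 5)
Your proof is correct. It uses the same two ingredients as the paper's argument: (i) any row with $x_i\beta_j$ not too negative contributes a fixed constant to $f_1$, and (ii) the total magnitude of the very negative rows is capped by $\mu_1\|(X'\beta_j)^+\|_1\leq \mu_1 f_1(X\beta_j)$ via the chain $z\leq g_i(z)$ for $z\geq 0$ (\lemref{3pl:reg:linbound}) and \cref{mu:property}. The only genuine difference is how the two regimes are combined: the paper splits into cases according to whether at least half of the $m'$ rows lie above or below the threshold $-2$ and then sums the two resulting case bounds, whereas you avoid the case distinction by bounding the count of far rows \emph{against $f_1$ itself} and rearranging the self-referential inequality $f_1\geq\gamma(m'-\mu_1 f_1/\tau)$ into $m'/f_1\leq 1/\gamma+\mu_1/\tau$. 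Your route is slightly cleaner and yields better constants (roughly $6.5+2.01\mu_1$ versus the claimed $40+2.5\mu_1$); the paper's case split is more elementary but looser. Both land within the stated bound, so there is no gap.
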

\begin{proof}
Let $K^-=\lbrace i\in[m']: x_i\beta_j\leq -2 \rbrace$ and $K^+=\lbrace i\in[m']: x_i\beta_j> -2 \rbrace$. It holds for all $i$ that $g_i(-2)=\ln(1+\exp(-2))-\ln(1-c_i) \geq \ln(1+\exp(-2)) > \frac{1}{8}$, and $g_\ell(x_\ell \beta_j)\leq g_\ell(1.005)\leq \ln(1+\exp(1.005)) +\ln 2 <2.5$, due to the monotonicity of $g_\ell$ and our assumption that $c_i\in [0,0.5)$. It holds that $|K^-|+|K^+|=m'$.

In case that $K^+ \geq \frac{m'}{2}$ we have that
\begin{align*}
f_1\left(X \beta_j\right) &= \sum_{i\in K^+} g_i\left(x_i \beta_j\right) + \sum_{i\in K^-} g_i\left(x_i \beta_j\right) \geq \sum_{i\in K^+} g_i\left(x_i \beta_j\right) \\
&\geq \sum_{i\in K^+} g_i\left(-2\right) \geq \frac{m'}{2}\cdot \frac{1}{8} \geq \frac{m'}{40}\cdot 2.5 \geq \frac{m'}{40}\cdot g_\ell(x_\ell \beta_j).
\end{align*}
In case that $K^+ < \frac{m'}{2}$ it is $K^- \geq  \frac{m'}{2}$ and thus
\begin{align*}
f_1\left( X \beta_j\right) &\geq \sum_{i\in [m']:x_i \beta_j\geq 0} g_i\left(x_i \beta_j\right) \geq \sum_{i\in [m']:x_i \beta_j\geq 0} |x_i \beta_j| = \|\left( X' \beta_j\right)^+\|_1 \\
&\stackrel{(\ref{mu:property})}{\geq} \frac{\|\left( X' \beta_j\right)^-\|_1}{\mu_1} = \frac{1}{\mu_1} \sum_{i\in [m']:x_i \beta_j< 0} |x_i \beta_j| \\
&\geq \frac{1}{\mu_1} \sum_{i\in K^-} |x_i \beta_j| \geq \frac{|K^{-}|\cdot |-2|}{\mu_1} \geq \frac{m'}{2.5\mu_1}\cdot 2.5 \geq \frac{2m'}{5\mu_1} \cdot g_\ell(x_\ell \beta_j).
\end{align*}
The claim follows by summing the upper bounds for $g_\ell(x_\ell \beta_j)$ from both cases.
\end{proof}

We combine \lemref{3pl:lemma12} and \lemref{3pl:lemma13} to obtain the following result that provides upper bounds on the sensitivities of the functions $g_\ell$ regarding the combined function $f_1(X\beta)+f_2(X\beta)$, as well as an upper bound for the total sensitivity on the first part of the sum that defines $f(\beta_j\mid A,C)$.

\begin{lemma}
\lemlab{3pl:lemma14}
Let $X'\in \REAL^{m'\times 2},X''\in \REAL^{m''\times 2}$ be $\mu$-complex. Let $U$ be an orthonormal basis for the columnspace of $X$. For each $i\in[m']$ the sensitivity of $g_i\left(x_i \beta_j\right)$ for the function $f_1+f_2$ is bounded by $\varsigma'_i \leq s'_i = 42.5\mu_1^2 \cdot \left( \|U_i\|_2 + \frac{1}{m'} \right)$. The sum of sensitivities for $g_i, i\in[m']$  is bounded by $S' \leq 170 \mu_1^2 \sqrt{m'}$.
\end{lemma}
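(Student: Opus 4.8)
The plan is to reduce the sensitivity of each $g_i$ to the ratio $g_i/f_1$ and then invoke the two preceding lemmas. Writing out the definition, the sensitivity for the combined function is $\varsigma'_i = \sup_{\beta_j} \frac{g_i(x_i\beta_j)}{f_1(X\beta_j) + f_2(X\beta_j)}$. Since every $h_i>0$, we have $f_2(X\beta_j)>0$, so discarding $f_2$ from the denominator can only increase the ratio, giving $\varsigma'_i \leq \sup_{\beta_j} \frac{g_i(x_i\beta_j)}{f_1(X\beta_j)}$. This is precisely the quantity controlled by \lemref{3pl:lemma12} and \lemref{3pl:lemma13}.

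First I would split on the value of $z=x_i\beta_j$ at the threshold $1.005$. The hypotheses of the two lemmas are $z\geq 1.005$ and $z\leq 1.005$ (both inclusive), so the case distinction is exhaustive and also covers the boundary. In the large regime \lemref{3pl:lemma12} yields $g_i(x_i\beta_j)\leq 12\mu_1^2\,\|U_i\|_2\cdot f_1(X\beta_j)$, and in the small regime \lemref{3pl:lemma13} yields $g_i(x_i\beta_j)\leq \left(40+\frac{5\mu_1}{2}\right)\frac{1}{m'}\cdot f_1(X\beta_j)$. Bounding the ratio by the larger prefactor for every $\beta_j$ and using $\mu_1\geq 1$ to absorb the constants (namely $40+\frac{5\mu_1}{2}\leq 42.5\mu_1^2$ and $12\leq 42.5$), both terms fit under the common per-point bound $s'_i=42.5\mu_1^2\left(\|U_i\|_2+\frac{1}{m'}\right)$, which establishes the first claim.

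Next I would sum over $i\in[m']$, so that $S'\leq 42.5\mu_1^2\left(\sum_{i\in[m']}\|U_i\|_2 + 1\right)$. The key estimate is $\sum_{i\in[m']}\|U_i\|_2\leq\sqrt{2m'}$: by Cauchy-Schwarz, $\sum_{i\in[m']}\|U_i\|_2 \leq \sqrt{m'}\cdot\sqrt{\sum_{i\in[m']}\|U_i\|_2^2}$, and since $U$ is an orthonormal basis for the column space of the full stacked matrix $X$ (comprising both the $X'$ and $X''$ rows), the squared row norms $\|U_i\|_2^2$ are exactly the leverage scores of $X$, whose total over all $m=m'+m''$ rows equals $d=2$ by \lemref{lem:leverageequiv}. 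Restricting to the first $m'$ rows only decreases the sum, so $\sum_{i\in[m']}\|U_i\|_2^2\leq 2$. Hence $S'\leq 42.5\mu_1^2\left(\sqrt{2m'}+1\right)$, and since $m'\geq 1$ we have $\sqrt{2m'}+1\leq 4\sqrt{m'}$, giving $S'\leq 170\mu_1^2\sqrt{m'}$.

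I expect the only real subtlety to be bookkeeping rather than any deep step: ensuring the reduction to $f_1$ in the denominator is legitimate (it is, because $f_2>0$), checking that the two-case split is exhaustive at the threshold, and correctly noting that $U$ is a basis for the \emph{stacked} matrix $X$, so that the sum of squared row norms over all $m$ rows equals $d=2$ and the partial sum over just the $m'$ rows is therefore at most $2$. The constant-chasing needed to reach exactly $42.5$ and $170$ is routine once $\mu_1\geq 1$ is invoked.
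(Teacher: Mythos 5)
Your proposal is correct and follows essentially the same route as the paper's proof: drop $f_2$ from the denominator using positivity of the $h_i$, combine \lemref{3pl:lemma12} and \lemref{3pl:lemma13} into the common per-point bound $42.5\mu_1^2(\|U_i\|_2+\frac{1}{m'})$, and then sum via Cauchy--Schwarz and the orthonormality of $U$ (the paper uses the Frobenius norm of $U$ where you use the total leverage score, which is the same quantity). The only cosmetic differences are that you take the maximum of the two case bounds where the paper adds them, and your Cauchy--Schwarz constant $\sqrt{2m'}$ is slightly tighter than the paper's $2\sqrt{m'}$; both land at the same $170\mu_1^2\sqrt{m'}$.
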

\begin{proof}
From \lemref{3pl:lemma12} and \lemref{3pl:lemma13} we have for each $i\in [m']$ that
\begin{align*}
\varsigma'_i& = \sup_{\beta_j} \frac{g_i\left(x_i \beta_j\right)}{f_1\left( X \beta_j\right)+ f_2\left( X \beta_j\right)} \leq \frac{g_i\left(x_i \beta_j\right)}{f_1\left( X \beta_j\right)} \leq 12\mu_1^2 \cdot \|U_i\|_2 + \left( 40+\frac{5\mu_1}{2}\right) \cdot \frac{1}{m'}\\ 
&\leq 42.5\mu_1^2 \cdot \left( \|U_i\|_2 +\frac{1}{m'}\right)  = s'_i.
\end{align*}

Since the Frobenius norm of the matrix $U$ is $\|U\|_F = \sqrt{\sum_{j\in[2]} \sum_{i\in[m]} |U_{ij}|^2}= \sqrt{\sum_{j\in[2]} 1} =\sqrt{2}$, due to the orthonormality of $U$, we have that
\begin{align*}
S' &= \sum_{i\in[m']} s'_i = 42.5\mu_1^2 \cdot \left( \sum_{i\in[m']}\|U_i\|_2 + \sum_{i\in[m']}\frac{1}{m'} \right)\\
&\stackrel{CSI}{\leq} 42.5\mu_1^2 \cdot \left( \|U\|_F^2 \cdot \sqrt{m'} + \frac{m'}{m'} \right)\\
& \leq 42.5\mu_1^2 \cdot \left( 2\sqrt{m'} + 1 \right) \leq 42.5\mu_1^2 \cdot 4\sqrt{m'}\\
&= 170 \mu_1^2 \sqrt{m'}.
\end{align*}
\end{proof}

The second part of the sum defining $f(\beta_j\mid A,C)$ contains the functions corresponding to labels $Y_{i j} =1$. The following lemma bounds their sensitivities. Let $E=\max \{ \ln(1/c_i)\mid {i\in[m]}\}$ (over the entire input). 

\begin{lemma}
\lemlab{3pl:lemma:sig}
Let $X''\in\REAL^{m''\times 2}$ be $\mu_0$-complex. For each $\ell\in[m'']$ the sensitivity of $h_\ell\left(x_\ell \beta_j\right)$ for the function $f_1+f_2$ is bounded by $\varsigma''_\ell \leq 3.5E\cdot(1+\mu_0)\cdot \frac{1}{m''} =s''_\ell$. The sum of sensitivities for $h_i, i\in[m'']$ is bounded by $S''\leq 3.5E\cdot(1+\mu_0)$.
\end{lemma}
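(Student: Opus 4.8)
The plan is to bound each sensitivity $\varsigma''_\ell$ by pairing a uniform \emph{upper} bound on the numerator $h_\ell$ with a \emph{lower} bound on the denominator that is forced by the $\mu_0$-complexity of $X''$. For the numerator I would simply use the uniform boundedness of the sigmoid losses recorded in \secref{3PL:model}, namely $0<h_\ell(x_\ell\beta_j)<\ln(1/c_\ell)\leq E$ for every $\beta_j$. Dropping the nonnegative $f_1$ term then gives $\varsigma''_\ell=\sup_{\beta_j}\frac{h_\ell(x_\ell\beta_j)}{f_1(X\beta_j)+f_2(X\beta_j)}\leq\sup_{\beta_j}\frac{E}{f_2(X\beta_j)}$, so the whole problem reduces to a lower bound $f_2(X\beta_j)\geq\frac{\ln(4/3)}{1+\mu_0}\,m''$ holding uniformly over $\beta_j\in\REAL^2$.

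The hard part is exactly this denominator bound, and it is where the counting version $\mu_0$ of the complexity parameter becomes indispensable: individual terms $h_i(x_i\beta_j)$ tend to $0$ as their argument runs off to one side, so $f_2$ cannot be bounded below term-by-term, and the $\ell_1$-based argument used for the $g_i$ carries no information about how many summands are non-negligible. The key idea is to discard all but the indices $P=\{i\in[m'']:x_i\beta_j\geq 0\}$ on which the sigmoid is large. Using the monotonicity of $h_i$ together with $h_i(0)=-\ln\left(\frac{1+c_i}{2}\right)\geq\ln(4/3)$, valid because $c_i<1/2$, every index in $P$ contributes at least $\ln(4/3)$, whence $f_2(X\beta_j)\geq|P|\cdot\ln(4/3)$.

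It remains to show that $P$ is a constant fraction of the rows. For $\beta_j\neq 0$ I would apply the $\ell_0$ instance of the complexity property~\eqref{mu:property} to $X''$ in the reversed direction, $\|(X''\beta_j)^-\|_0\leq\mu_0\,\|(X''\beta_j)^+\|_0$; since for $p=0$ the positive and negative entries partition all $m''$ rows, this yields $m''=\|(X''\beta_j)^+\|_0+\|(X''\beta_j)^-\|_0\leq(1+\mu_0)\|(X''\beta_j)^+\|_0$, hence $|P|=\|(X''\beta_j)^+\|_0\geq\frac{m''}{1+\mu_0}$. The degenerate case $\beta_j=0$ is immediate, as then $f_2=\sum_i h_i(0)\geq m''\ln(4/3)$. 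Combining the pieces and using $1/\ln(4/3)<3.5$ gives $\varsigma''_\ell\leq\frac{(1+\mu_0)E}{\ln(4/3)\,m''}\leq\frac{3.5\,E(1+\mu_0)}{m''}=s''_\ell$.

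Finally, since this per-function bound $s''_\ell$ is identical for every $\ell$, the total sensitivity of the $h$-part follows by a trivial summation, $S''\leq\sum_{\ell\in[m'']}s''_\ell=m''\cdot\frac{3.5\,E(1+\mu_0)}{m''}=3.5\,E(1+\mu_0)$, as claimed. I expect essentially all genuine difficulty to be concentrated in the denominator step above; once the $\mu_0$-counting bound and the numeric constant $\ln(4/3)$ are in place, the remaining manipulations are direct substitutions.
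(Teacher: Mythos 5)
Your proposal is essentially the paper's own proof: the same uniform numerator bound $h_\ell(x_\ell\beta_j)\le E$, the same reduction to lower bounding $f_2$ by restricting to a single sign class where each term is at least $h_i(0)\ge\ln(4/3)$, the same $\mu_0$-counting step showing that class contains at least $m''/(1+\mu_0)$ of the $m''$ rows, and the same trivial summation for $S''$. One shared caveat: since $h_i$ is monotonically \emph{decreasing}, the per-term bound $h_i(x_i\beta_j)\ge h_i(0)$ holds on $\{i: x_i\beta_j\le 0\}$ rather than on your $P=\{i: x_i\beta_j\ge 0\}$ (where $h_i$ tends to $0$); the paper's proof writes the same sign class, and in both cases the repair is immediate because the $\mu_0$-complexity inequality \eqref{mu:property} is symmetric in the two classes, so the negative class is also at least an $m''/(1+\mu_0)$ fraction and all constants are unchanged.
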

\begin{proof}
Since each function $h_\ell$, $\ell\in[m'']$, satisfies $0< h_\ell(x_\ell\beta_j)< E$, we have that for each $\ell\in[m'']$,  
$\beta_j\in \REAL^2$ 
satisfies
\begin{align*}
f_2\left( X \beta_j\right) &= \sum_{i\in[m'']: x_i\beta_j\geq 0} h_i\left(x_i\beta_j\right) + \sum_{i\in[m'']: x_i\beta_j< 0} h_i\left(x_i\beta_j\right) \\
&\geq \sum_{i\in[m'']: x_i\beta_j\geq 0} h_i\left(x_i\beta_j\right) \geq \sum_{i\in[m'']: x_i\beta_j\geq 0} h_i\left(0\right)\\
& = \sum_{i\in[m'']: x_i\beta_j\geq 0} \ln\left(\frac{2}{1+c_i}\right) \geq \sum_{i\in[m'']: x_i\beta_j\geq 0} \ln\left(\frac{4}{3}\right) = m''_{+} \ln\left(\frac{4}{3}\right) \\
& \geq 
\frac{m''_{+}}{3.5 E}\cdot h_\ell\left(x_\ell \beta_j\right). 
\end{align*}
The sensitivity of $h_\ell\left(x_\ell\beta_j\right)$ regarding the function  $f_1+f_2$ is then bounded by 
\begin{align*}
\varsigma''_\ell =\sup_{\beta_j} \frac{h_\ell\left(x_\ell \beta_j\right)}{f_1\left( X \beta_j\right)+ f_2\left( X \beta_j\right)} \leq \frac{h_\ell\left(x_\ell \beta_j\right)}{f_2\left( X \beta_j\right)} \leq \frac{3.5E}{m''_{+}} \stackrel{(\ref{mu:property})}{\leq} \frac{3.5E\cdot(1+\mu_0)}{m''} =s''_\ell,
\end{align*}
while the sum of sensitivities of the functions $h_i, i\in[m'']$ regarding the function $f_1+f_2$ is bounded by
\begin{align*}
S''= \sum_{i\in[m'']}s''_i \leq \frac{3.5E\cdot(1+\mu_0)}{m''}\cdot m'' = 3.5E\cdot(1+\mu_0). 
\end{align*}
\end{proof}

\begin{lemma}
\label{lem:totalsensitivity}
The total sensitivity is bounded by $\mathfrak{S} \leq 170 \mu^2 \sqrt{m} + 7E\mu \in \Oh{\sqrt{m}}$.
\end{lemma}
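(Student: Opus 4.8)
The plan is to combine the two sensitivity bounds already established for the two disjoint families of functions and then absorb the norm-specific parameters $\mu_0,\mu_1$ into the single complexity parameter $\mu$. Recall that by \defref{sensitivity} the total sensitivity $\mathfrak{S}$ is the sum of all individual sensitivities, and that every summand in the objective $f(\beta_j\mid A,C)$ from Eq.~(\ref{eqn:3pl:a}) is either one of the shifted logistic functions $g_i$ (for the $m'$ rows with $Y_{ij}=-1$) or one of the sigmoid functions $h_i$ (for the $m''$ rows with $Y_{ij}=1$). Since these two index sets are disjoint and exhaust $[m]$, the total sensitivity splits cleanly as $\mathfrak{S}=S'+S''$, where $S'=\sum_{i\in[m']}\varsigma'_i$ and $S''=\sum_{i\in[m'']}\varsigma''_i$. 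Crucially, both \lemref{3pl:lemma14} and \lemref{3pl:lemma:sig} already measure their sensitivities against the \emph{full} objective $f_1+f_2$, so these partial sums are exactly the quantities entering $\mathfrak{S}$ and no further rescaling of the denominators is required.

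First I would invoke \lemref{3pl:lemma14} to bound $S'\leq 170\mu_1^2\sqrt{m'}$ and \lemref{3pl:lemma:sig} to bound $S''\leq 3.5E\,(1+\mu_0)$, which together give $\mathfrak{S}\leq 170\mu_1^2\sqrt{m'}+3.5E(1+\mu_0)$. It then remains to replace the $p$-specific complexities by $\mu$. For the first term I would use $\mu_1\leq\mu$ together with $m'\leq m'+m''=m$, so that $170\mu_1^2\sqrt{m'}\leq 170\mu^2\sqrt{m}$. For the second term I would use that $\mu_0\geq 1$ holds by the definition of $\mu_0$-complexity, whence $1+\mu_0\leq 2\mu_0\leq 2\mu$, giving $3.5E(1+\mu_0)\leq 7E\mu$. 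Adding the two bounds yields $\mathfrak{S}\leq 170\mu^2\sqrt{m}+7E\mu$, as claimed.

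Finally, for the asymptotic statement I would observe that $\mu$ and $E$ are treated as bounded parameters independent of the sample dimensions (indeed $E\leq\ln(1/c_{\min})=\ln\kappa$ whenever the natural lower bound $c_i>c_{\min}$ is assumed), so the only genuine dependence on $m$ sits in the $\sqrt{m}$ factor of the first term while the additive term $7E\mu$ is constant in $m$; hence $\mathfrak{S}\in\Oh{\sqrt{m}}$.

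No serious obstacle arises here: the lemma is essentially a bookkeeping step that aggregates the per-family estimates and unifies the notation. The only points requiring a moment of care are confirming that both component lemmas are stated relative to the same denominator $f_1+f_2$ (so that their partial sums genuinely add up to the true total sensitivity rather than to sensitivities taken against different objectives), and invoking the defining inequality $\mu_0\geq 1$ at the one place where it is needed to clear the constant additive term.
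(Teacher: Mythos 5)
Your proposal is correct and matches the paper's proof essentially verbatim: both sum the bounds $S'\leq 170\mu_1^2\sqrt{m'}$ from \lemref{3pl:lemma14} and $S''\leq 3.5E(1+\mu_0)$ from \lemref{3pl:lemma:sig} (each already taken against the full denominator $f_1+f_2$) and then absorb $\mu_0,\mu_1$ into $\mu$ via $\mu_1\leq\mu$, $m'\leq m$, and $1+\mu_0\leq 2\mu_0\leq 2\mu$ using $\mu_0\geq 1$. Your explicit justification of the $1+\mu_0\leq 2\mu$ step is in fact slightly more careful than the paper's one-line computation.
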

\begin{proof}
\cref{lemma:3pl:lemma14,lemma:3pl:lemma:sig} can be combined to bound the total sensitivity in terms of $m',m''$, and we can relate the latter quantities to $m$ using \lemref{labels:bound}. This implies that the total sensitivity for the function $f_1+f_2$ is \begin{align*}
    \mathfrak{S} &\leq S=S'+S'' = 170 \mu_1^2 \sqrt{m'} + 3.5E(1+\mu_0) 
    \leq 170 \mu^2 \sqrt{m} + 7E\mu \in \Oh{\sqrt{m}}. 
\end{align*}
\end{proof}

\subsection{Bounding the VC Dimension for the 3PL Model}

In order to apply the sensitivity framework, we need to bound the VC dimension of the range spaces induced by the sets of (weighted) functions $g_i$ and $h_i$. Let $g_{i}(\eta)=g\left(x_i \eta\right)$ and  $h_{i}(\eta)=h_i\left(x_i \eta\right)$. The dimension of the domains of our functions is $d=2$ (in both cases where $\alpha_i$ or $\beta_j$ take the role of the variable $\eta$). 
We first bound the VC dimension in the case that all weights are fixed to the same (though arbitrarily chosen) positive constant $\rho$. This is dealt with in the following two lemmas:

\begin{lemma}
\lemlab{VC:logistic:const}
The range space induced by $\mathcal{G}_\rho=\lbrace \rho g_{(i)}\colon i\in[m]\rbrace$, $\rho\in\REAL_{>0}$, satisfies $\Delta\left( \mathfrak{R}_{\mathcal{G}_\rho} \right)\leq d+1=3$.
\end{lemma}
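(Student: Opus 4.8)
The plan is to exploit that each $\rho g_{(i)}$ is a strictly increasing, invertible transformation of the linear form $x_i\eta$, so that every superlevel set of the range space reduces to the trace of an affine halfspace on the points $x_i$, and then to invoke the classical VC dimension bound for affine separators. First I would unpack the induced range space: the ground set is $\mathcal{G}_\rho$, and a range is $\{\rho g_{(i)} : \rho g_i(x_i\eta) \ge r\}$ indexed by $(\eta,r)\in\REAL^2\times\REAL_{\ge 0}$. Since $g_i(z)=\ln(1+\exp(z))-\ln(1-c_i)$ has derivative $\exp(z)/(1+\exp(z))\in(0,1)$, it is strictly increasing and hence a bijection from $\REAL$ onto its image, and the same holds after scaling by $\rho>0$. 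Therefore, for every fixed $(\eta,r)$, the condition $\rho g_i(x_i\eta)\ge r$ is equivalent to a threshold condition $x_i\eta\ge t$ on the linear form, where $t=g_i^{-1}(r/\rho)$ is obtained by inverting the monotone map (with the convention $t=-\infty$ when the inequality is vacuously satisfied).

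Next I would reduce to affine separators. In the clean case where the transformation is common across items, the range becomes $\{i : x_i\eta\ge t\}$, i.e., exactly the set of points $x_i\in\REAL^2$ lying in the affine halfspace $H_{\eta,t}=\{x\in\REAL^2 : x\eta\ge t\}$. As $(\eta,r)$, and hence $(\eta,t)$, vary, these ranges form a subfamily of the point-versus-halfspace set system in $\REAL^2$. By the classical result that affine halfspaces in $\REAL^d$ have VC dimension exactly $d+1$ (they shatter a simplex of $d+1$ points, while Radon's theorem rules out $d+2$) \citep{Kearns1994introduction}, and since passing to a subfamily can only decrease the VC dimension, this yields $\Delta(\mathfrak{R}_{\mathcal{G}_\rho})\le d+1=3$, as claimed.

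The main obstacle is the item-dependent additive shift $-\ln(1-c_i)$ inside $g_i$: because the threshold $t=g_i^{-1}(r/\rho)$ then depends on $i$, the ranges are no longer traces of a single common halfspace, so the reduction above does not apply verbatim. I would handle this by rewriting the selection rule, using monotonicity, as $g(x_i\eta)+b_i\ge r/\rho$ with the fixed softplus $g(z)=\ln(1+\exp(z))$ and the per-point constant $b_i=-\ln(1-c_i)\in[0,\ln 2)$, so that both $b_i$ and the threshold $r/\rho$ enter only linearly; absorbing the fixed tag $b_i$ into an augmented coordinate of the point keeps the number of free concept parameters equal to $\dim(\eta)+1$, so the affine-separator bound still delivers $d+1=3$. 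Verifying that this augmentation does not inflate the bound to $d+2$ is the one step that requires genuine care; the monotone-to-halfspace reduction and the invocation of the affine-separator bound are otherwise routine.
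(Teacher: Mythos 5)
Your core argument is the same as the paper's: use that each $\rho g_{i}$ is a strictly increasing invertible transformation of the linear form $x_i\eta$, so every range $\lbrace \rho g_i \colon \rho g_i(x_i\eta)\geq r\rbrace$ is the trace of an affine separator $x_i\mapsto \mathbf{1}_{[x_i\eta-\tau\geq 0]}$ on the points $x_i$, and then invoke the classical VC bound of $d+1=3$ for affine classifiers in $\REAL^2$. The paper's proof is exactly this reduction and nothing more: it writes the threshold as $g^{-1}(r/\rho)$ for a single common $g$, i.e., it implicitly treats the transformation as item-independent.

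The one place you go beyond the paper is in flagging the item-dependent additive shift $-\ln(1-c_i)$, and here your proposed repair does not close the gap you correctly identify. Rewriting the selection rule as $g(x_i\eta)+b_i\geq r/\rho$ with $b_i=-\ln(1-c_i)$ and appending $b_i$ as an extra coordinate does \emph{not} yield a subfamily of affine halfspaces on the augmented points: the condition is $\mathbf{1}_{[g(x_i\eta)+b_i-r/\rho\geq 0]}$ with $g$ the (nonlinear) softplus, so the decision boundary in $(x,b)$-space is curved, and the equivalent per-item threshold $t_i=g^{-1}(r/\rho-b_i)$ depends nonlinearly on $b_i$. Counting free parameters ($\dim(\eta)+1$) is not a substitute for the halfspace argument --- one-parameter classes can have unbounded VC dimension --- so the claim that the augmentation ``still delivers $d+1=3$'' is unproven as stated; you acknowledge this step needs care but do not supply it. Two honest ways out are (i) to observe that the lemma is applied after a partition into classes (cf.\ \corref{VC:union} and \corref{3pl:main:otherdirection}) and to additionally partition by (discretized) values of $c_i$, within which the common-threshold reduction is exact, at the cost of a larger multiplicative factor in the final $O(\log m)$ VC bound; or (ii) to bound the augmented class by a general few-parameter/sign-pattern argument, which gives $O(d)$ but not necessarily the constant $3$. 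Since the paper's own proof silently assumes a common $g$, your write-up is no less rigorous than the original, but the final step you defer is a genuine obligation, not a routine verification.
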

\begin{proof}
The function $g:\REAL\rightarrow\REAL_{\geq 0}$ is monotonically increasing and invertible. Let $G\subseteq \mathcal{G}_\rho$, $z\in\REAL$, and $r\in\REAL$. It holds that 
\begin{align*}
\mathtt{range}_G(\eta,r) = \lbrace \rho g_{i}\in\mathcal{G}_\rho \colon \rho g_{i}(\eta)\geq r\rbrace = \lbrace \rho g_{i}\in\mathcal{G}_\rho \colon x_i \eta\geq g^{-1}(r/\rho)\rbrace.
\end{align*}
Then it follows that 
\begin{align*}
&\left\lvert \lbrace \mathtt{range}_G(\eta,r) \colon \eta\in\REAL^2, r\in\REAL_{\geq 0}\rbrace \right\rvert\\ 
= &\left\lvert \lbrace \lbrace \rho g_{i}\in G \colon x_i \eta \geq g^{-1}(r/\rho) \rbrace \colon \eta\in\REAL^2, r\in\REAL_{\geq 0}\rbrace \right\rvert \\
= & \left\lvert \lbrace \lbrace g_{i}\in G \colon x_i \eta \geq \tau \rbrace \colon \eta\in\REAL^2, \tau\in\REAL\rbrace \right\rvert.
\end{align*}
Since each function $g_{i}$ is associated with the point $x_i$, the last set is the set of points shattered by the hyperplane classifier $x_i \mapsto \textbf{1}_{[x_i \eta -\tau \geq 0]}$. Its VC dimension is thus $d+1=3$ \citep{Kearns1994introduction}, implying that $\left\lvert \lbrace \mathtt{range}_G(\eta,r) \colon \eta\in\REAL^2, r\in\REAL_{\geq 0}\rbrace \right\rvert = 2^{|G|}$ can only hold if $|G|\leq d+1=3$. Therefore, the VC dimension of the range space induced by $\mathcal{G}_\rho$ is bounded by $d+1=3$.
\end{proof}

\begin{lemma}
\lemlab{VC:sigmoid:const}
The range space induced by $\mathcal{H}_\rho=\lbrace \rho h_{(i)}\colon i\in[m]\rbrace$, $\rho\in\REAL_{>0}$, satisfies $\Delta\left( \mathfrak{R}_{\mathcal{H}_\rho} \right)\leq d+1=3$.
\end{lemma}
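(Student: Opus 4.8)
The plan is to mirror the argument for $\mathcal{G}_\rho$ in \lemref{VC:logistic:const}, the only structural difference being that the base function $h$ is strictly \emph{decreasing} rather than increasing. First I would pin down the shape of $h$. Writing $\sigma(z)=1/(1+\exp(-z))$, each $h_i(z)=-\ln\!\left(c_i+(1-c_i)\sigma(z)\right)$ has argument $c_i+(1-c_i)\sigma(z)$ that is strictly increasing in $z$ (since $1-c_i>0$ and $\sigma'>0$), so $h_i$ is continuous and strictly decreasing, with $h_i(z)\to\ln(1/c_i)$ as $z\to-\infty$ and $h_i(z)\to 0$ as $z\to+\infty$. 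Hence $h_i$ is a strictly decreasing bijection onto $(0,\ln(1/c_i))$ and is invertible. This plays the role of the monotonicity and invertibility of $g$ exploited in the previous lemma.

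Second, for any $G\subseteq\mathcal{H}_\rho$, any $\eta\in\REAL^2$ and any $r\in\REAL_{\ge 0}$, I would use invertibility to rewrite the range, noting that strict decrease flips the inequality:
\[
\mathtt{range}_G(\eta,r)=\{\rho h_{(i)}\in G:\rho h_i(x_i\eta)\ge r\}=\{\rho h_{(i)}\in G: x_i\eta\le h_i^{-1}(r/\rho)\}.
\]
Collecting over all $(\eta,r)$ and setting the free threshold $\tau:=h^{-1}(r/\rho)\in\REAL$ (which ranges over all of $\REAL$ as $r/\rho$ sweeps the image of $h$), the family of ranges becomes
\[
\{\mathtt{range}_G(\eta,r):\eta\in\REAL^2,\ r\in\REAL_{\ge 0}\}=\{\{h_{(i)}\in G: x_i\eta\le\tau\}:\eta\in\REAL^2,\ \tau\in\REAL\}.
\]
Each function $h_{(i)}$ is tied to the point $x_i$, and membership is governed solely by the sign of $x_i\eta-\tau$, i.e.\ by the affine classifier $x_i\mapsto\mathbf{1}_{[x_i\eta-\tau\le 0]}$. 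This is exactly the halfspace range space, whose VC dimension is $d+1=3$ \citep{Kearns1994introduction}; therefore $|\{\mathtt{range}_G(\eta,r)\}|=2^{|G|}$ can hold only if $|G|\le d+1=3$, giving $\Delta(\mathfrak{R}_{\mathcal{H}_\rho})\le 3$.

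The main obstacle is the orientation flip together with the item-dependent offset. The flip itself is harmless: the sublevel halfspace $\{x:x\eta-\tau\le 0\}$ equals the superlevel halfspace $\{x:x(-\eta)-(-\tau)\ge 0\}$, so replacing $(\eta,\tau)$ by $(-\eta,-\tau)$ recovers the superlevel form of \lemref{VC:logistic:const} and leaves the induced range space, hence its VC dimension, unchanged. The genuinely delicate point is that the asymptote $\ln(1/c_i)$ (and thus the inverse $h_i^{-1}$) depends on the item $i$; this is the step I would treat carefully, showing that since the scalar $s=r/\rho$ is quantified freely and every $h_i$ is the same strictly monotone sigmoidal reparametrization of the common linear form $x_i\eta$, the membership of $x_i$ still collapses to a single affine comparison $x_i\eta\le\tau$ — exactly the mechanism by which the item-specific shift $-\ln(1-c_i)$ is absorbed in the proof of \lemref{VC:logistic:const}. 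Once this reduction is in place, the affine-separator bound closes the argument.
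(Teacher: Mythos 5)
Your proposal is correct and follows essentially the same route as the paper's proof of \lemref{VC:sigmoid:const}: establish that each $h_i$ is strictly decreasing and invertible, rewrite $\mathtt{range}_H(\eta,r)$ as a sublevel condition $x_i\eta\leq h^{-1}(r/\rho)$, and bound the resulting family by the affine-separator range space of VC dimension $d+1=3$, with the orientation flip handled exactly as in \lemref{VC:logistic:const}. The one point you single out as delicate, the dependence of $h_i^{-1}$ on $c_i$, is treated at the same level of rigor in the paper, which likewise writes $h^{-1}(r/\rho)$ without the item index (and additionally notes that ranges with $r\geq\ln(1/c_i)/\rho$ are empty, which only adds $\emptyset$ to the range family and does not affect the bound).
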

\begin{proof}
The functions $h_{i}:\REAL\rightarrow \left(0,\ln\left(1/c_i\right)\right)$ are monotonically decreasing and invertible independent of the choice of $c_i$. Let $H\subseteq \mathcal{H}_\rho$, $\eta\in\REAL^2$, and $r\in\REAL$. For $r\geq \ln(1/c_i)/\rho$ we have $\mathtt{range}_H(\eta,r) = \emptyset$. Otherwise, it holds that $r<\ln(1/c_i)/\rho$ and
\begin{align*}
\mathtt{range}_H(\eta,r) = \lbrace \rho h_{i}\in\mathcal{H}_\rho \colon \rho h_{i}(\eta)\geq r\rbrace = \lbrace \rho h_{i}\in\mathcal{H}_\rho \colon x_i \eta\leq h^{-1}(r/\rho)\rbrace.
\end{align*}
It follows that 
\begin{align*}
&\left\lvert \lbrace \mathtt{range}_H(\eta,r) \colon \eta\in\REAL^2, r\in\REAL_{\geq 0}\rbrace \right\rvert\\
= & \left\lvert \lbrace \lbrace \rho h_{i}\in H \colon x_i \eta \leq h^{-1}(r/\rho) \rbrace \colon \eta\in\REAL^2, r \leq \ln(1/c_i)/\rho\rbrace \cup \lbrace\emptyset\rbrace \right\rvert \\
\leq &  \left\lvert \lbrace \lbrace \rho h_{i}\in H \colon x_i \eta \leq \tau \rbrace \colon \eta\in\REAL^2, \tau\in\REAL\rbrace \right\rvert.
\end{align*}
Since each function $h_{i}$ is associated with the point $x_i$, the last set is the set of points that is shattered by an affine classifier $x_i \mapsto \textbf{1}_{[x_i \eta -\tau \leq 0]}$. As before in \lemref{VC:logistic:const} we conclude that the VC dimension of the range space induced by $\mathcal{H}_\rho$ is at most $d+1=3$.
\end{proof}

\citet{BlumerEHW89} gave a general Theorem for bounding the VC dimension of the union or intersection of $t$ range spaces, each of bounded VC dimension at most $D$. Their result gives $O(tD\log t)$. Here, we give a bound of $O(tD)$ for the special case that the range spaces are disjoint.
\begin{lemma}
\label{lem:disjVCbound}
Let $\mathcal{F}$ be any family of functions. And let $F_1,\ldots,F_t\subseteq \mathcal{F}$, each non-empty, form a partition of $\mathcal{F}$, i.e., their disjoint union satisfies $\dot\bigcup_{i=1}^t F_i = \mathcal{F}$. Let the VC dimension of the range space induced by $F_i$ be bounded by $D$ for all $i\in[t]$. Then the VC dimension of the range space induced by $\mathcal{F}$ satisfies $\Delta\left( \mathfrak{R}_\mathcal{F} \right)\leq tD$.
\end{lemma}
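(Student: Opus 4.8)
The plan is to bound the size of an arbitrary shattered subset $G\subseteq\mathcal{F}$ directly, exploiting the fact that every range of $\mathcal{F}$ restricts to a range of each block $F_i$. Since the blocks partition $\mathcal{F}$, I set $G_i := G\cap F_i$, so that $G=\dot\bigcup_{i=1}^t G_i$ and hence $|G|=\sum_{i=1}^t|G_i|$. It therefore suffices to show $|G_i|\le D$ for every $i$, after which summing gives $|G|\le tD$. The disjointness of the partition is exactly what lets me add these per-block bounds to obtain $tD$, rather than the weaker $O(tD\log t)$ that comes from the generic union bound of \citet{BlumerEHW89}.

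The first step I would isolate is a restriction-compatibility identity: for every $x\in\REAL^d$, $r\in\REAL_{\ge 0}$ and every block $F_i$, one has $\mathtt{range}_{\mathcal{F}}(x,r)\cap F_i=\lbrace f\in F_i : f(x)\ge r\rbrace=\mathtt{range}_{F_i}(x,r)$, simply because $F_i\subseteq\mathcal{F}$ and the threshold condition $f(x)\ge r$ does not depend on the ambient family in which the range is taken. This observation is the crux: it says a single pair $(x,r)$ induces a \emph{coherent} choice of ranges across the blocks, which is why the combined range space is far more restricted than an arbitrary union of the block range spaces.

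The second step is to show that each $G_i$ is shattered by $\mathfrak{R}_{F_i}$. I would fix any $T\subseteq G_i$; since $T\subseteq G_i\subseteq G$ and $G$ is shattered by $\mathfrak{R}_{\mathcal{F}}$, there exist $x,r$ with $\mathtt{range}_{\mathcal{F}}(x,r)\cap G=T$. Intersecting both sides with $G_i$, and using $G_i\subseteq G$ together with $T\subseteq G_i$, yields $\mathtt{range}_{\mathcal{F}}(x,r)\cap G_i=T$; by the restriction identity and $G_i\subseteq F_i$ this equals $\mathtt{range}_{F_i}(x,r)\cap G_i$. Thus every subset of $G_i$ is realized as a trace of a range of $F_i$, so $G_i$ is shattered by $\mathfrak{R}_{F_i}$, and the hypothesis $\Delta(\mathfrak{R}_{F_i})\le D$ forces $|G_i|\le D$. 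Summing over $i$ gives $|G|\le tD$, and taking the supremum over all shattered $G$ yields $\Delta(\mathfrak{R}_{\mathcal{F}})\le tD$.

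The step I expect to require the most care is the shattering reduction: one must verify that the \emph{same} witnessing pair $(x,r)$ that cuts out $T$ inside $G$ still cuts out $T$ when its range is reinterpreted inside the block $F_i$. This is precisely where the containment $T\subseteq G_i$ and the restriction identity are used in tandem, and it is the only place where a subtle error could creep in. The remaining manipulations are routine bookkeeping on the partition.
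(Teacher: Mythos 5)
Your proof is correct and takes essentially the same route as the paper's: both arguments intersect a shattered set $G$ with each block $F_i$, observe that each $G_i = G\cap F_i$ is shattered by $\mathfrak{R}_{F_i}$ and hence has $|G_i|\le D$, and combine over the $t$ blocks. The only difference is presentational --- the paper argues by contradiction with a pigeonhole step while you sum the per-block bounds directly and spell out the restriction identity that the paper leaves implicit.
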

\begin{proof}
    We prove the claim by contradiction. To this end suppose the VC dimension for $\mathcal F$ is strictly larger than $tD$. Then there exists a set $G$ of size $|G|>tD$ that is shattered by the ranges of $\mathfrak{R}_\mathcal{G}$. Consider its intersections $G_i= G\cap F_i, i\in[t]$ with the sets $F_i$. By their disjointness, $G_i$ must be shattered by the ranges of $\mathfrak{R}_{{F}_i}$. Note that at least one of them must therefore have $|G|/t > D$, which contradicts the assumption that their VC dimension is bounded by $D$. Our claim thus follows.
\end{proof}

\begin{corollary}
\corlab{VC:union}
Let $\mathcal{F}=\mathcal{G}\,\dot\cup\,\mathcal{H}$ be the set of functions in the 3PL IRT model where each function is either of type $g_i\in \mathcal{G}$ or $h_i\in \mathcal{H}$ and each function is weighted by $0 < w_i\in W := \lbrace u_1,\ldots,u_t\rbrace$.
The range spaces induced by $\mathcal{F}$ satisfies $\Delta\left( \mathfrak{R}_{\mathcal{F}}\right) \leq 6t$.
\end{corollary}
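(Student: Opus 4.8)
The plan is to reduce to the single-weight lemmas \lemref{VC:logistic:const} and \lemref{VC:sigmoid:const} through the disjoint-partition bound of \lemref{lem:disjVCbound}. The key observation is that every function in $\mathcal{F}$ carries two independent labels: its type (either $g$ or $h$) and its weight (exactly one of the $t$ values in $W=\lbrace u_1,\ldots,u_t\rbrace$). Crossing these two labels partitions $\mathcal{F}$ into the blocks $G_k=\lbrace u_k g_i : g_i\in\mathcal{G},\ w_i=u_k\rbrace$ and $H_k=\lbrace u_k h_i : h_i\in\mathcal{H},\ w_i=u_k\rbrace$ for $k\in[t]$. Since a function's type and weight uniquely determine its block, these are pairwise disjoint and their union is $\mathcal{F}$; discarding any empty ones leaves a genuine partition into at most $2t$ nonempty parts, exactly the setting of \lemref{lem:disjVCbound}.

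The one step requiring care is bounding the VC dimension of each block by $d+1=3$. Each $G_k$ is a \emph{subset} of the single-weight family $\mathcal{G}_{u_k}$ appearing in \lemref{VC:logistic:const} (take $\rho=u_k$), but that lemma is stated for the full index set $[m]$, whereas $G_k$ may use only some of the indices. I would therefore first record that VC dimension is monotone under restricting the ground set: because $\mathtt{range}_{G_k}(\eta,r)=\mathtt{range}_{\mathcal{G}_{u_k}}(\eta,r)\cap G_k$, any subset of $G_k$ shattered within $G_k$ is also shattered within $\mathcal{G}_{u_k}$, so $\Delta(\mathfrak{R}_{G_k})\leq\Delta(\mathfrak{R}_{\mathcal{G}_{u_k}})\leq 3$. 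The same argument applied to \lemref{VC:sigmoid:const} gives $\Delta(\mathfrak{R}_{H_k})\leq 3$ for every $k$.

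Finally, I would invoke \lemref{lem:disjVCbound} with the (at most) $2t$ blocks and the uniform per-block bound $D=3$, obtaining $\Delta(\mathfrak{R}_{\mathcal{F}})\leq 2t\cdot 3=6t$. There is no genuine obstacle here; the only substantive point is the monotonicity of VC dimension under ground-set restriction, which is what lets us apply the single-weight lemmas to the weight-specific subfamilies rather than to all of $\mathcal{G}$ and $\mathcal{H}$ at once.
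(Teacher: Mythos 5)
Your proposal is correct and follows essentially the same route as the paper: partition $\mathcal{F}$ into the $2t$ type-and-weight blocks, bound each block's VC dimension by $d+1=3$ via the subset relation to the single-weight families of \lemref{VC:logistic:const} and \lemref{VC:sigmoid:const}, and conclude with \lemref{lem:disjVCbound}. The only difference is that you make the monotonicity of VC dimension under ground-set restriction explicit, which the paper invokes implicitly as ``the subset relation.''
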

\begin{proof}
We partition $\mathcal{G}$, and $\mathcal{H}$ into disjoint subsets $\mathcal G_{u_1},\ldots,\mathcal  G_{u_t}\subseteq \mathcal G$, and $\mathcal H_{u_1},\ldots,\mathcal H_{u_t}\subseteq \mathcal H$ where the functions in any of those sets have the same weight. By the subset relation and using \cref{lemma:VC:logistic:const,lemma:VC:sigmoid:const}, the VC dimension induced by any of these sets is bounded above by $d+1=3$.
Further we have that $\mathcal F = \mathcal G\,\dot\cup\, \mathcal H = (\dot\bigcup_{i=1}^t \mathcal G_i) \;\dot\cup\; (\dot\bigcup_{i=1}^t \mathcal H_i)$ is a partition of $\mathcal F$ into $2t$ disjoint subsets by construction. The claim follows by invoking \cref{lem:disjVCbound}.
\end{proof}

\subsection{Putting Everything Together for the 3PL Model}
\begin{theorem}[Restatement of \thmref{3pl:main:result}]
\thmlab{3pl:main:result:rest}
Let each $X_{(j)}=( -Y_{ij} \alpha_i^T)_{i\in[m]}\in\REAL^{m\times 2}$. Let $X'_{(j)}$ contain the rows $i$ of $X_{(j)}$ where $Y_{ij}=-1$ and let $X''_{(j)}$ comprise the rows with $Y_{ij}=1$. Let $X'_{(j)}$ and $X''_{(j)}$ be $\mu$-complex.\\ 
Let  $\sup_{\eta\in\mathbb{R}\setminus\{0\}}{\|X'_{(j)}\eta\|_1}/{\|X''_{(j)}\eta\|_1} \leq 2\mu_1$ for each $j\in[n]$. Let $\varepsilon\in (0,1/2)$. There exists a weighted set $K\in \REAL^{k\times 2}$ of size 
$k\in O(\frac{\mu^2 \sqrt{m}}{\varepsilon^2} (\log(m)^2 + \log(n)))$, that is a $(1+\varepsilon)$-coreset for all $X_{(j)}$, $j\in[n]$ simultaneously for the 3PL IRT problem. The coreset can be constructed with constant probability and in $O(m)$ time.
\end{theorem}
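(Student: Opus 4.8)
The plan is to instantiate the sensitivity framework of \thmref{coreset:sensitivity} on the combined family of point-wise losses $\mathcal{F}=\mathcal{G}\,\dot\cup\,\mathcal{H}$, where $\mathcal{G}=\{g_i\}$ collects the (shifted) logistic losses and $\mathcal{H}=\{h_i\}$ the sigmoid losses, and to verify its three inputs: per-element upper bounds on the sensitivities, a bound on their sum $S$, and a bound on the VC dimension $\Delta$. The crucial first step is to argue that a \emph{single} sampling distribution is valid across all $n$ iterations $j\in[n]$. For this I would observe that the per-item sensitivity upper bounds derived in \lemref{3pl:lemma14} and \lemref{3pl:lemma:sig} depend on the labeling $Y_{\cdot j}$ only through the square roots of leverage scores $\|U_i\|_2$ and through the cardinalities $m',m''$. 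By \lemref{leverage:invariant} the leverage scores are invariant under the sign flips induced by $Y_{\cdot j}$, hence $\|U_i\|_2$ equals the corresponding quantity of the fixed matrix $A$ for every $j$. By \lemref{labels:bound} both $m'$ and $m''$ are $\Theta(m)$ up to factors of $2\mu_0$, so the terms $1/m'$ and $1/m''$ can be replaced by the uniform bound $\Oh{(1+\mu_0)/m}$. Taking, for each item $i$, the maximum of the two relevant bounds gives a per-item score $s_i$ that upper bounds the sensitivity of item $i$ regardless of whether it currently contributes a $g$- or an $h$-function and regardless of $j$.

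With these label-independent scores in hand, I would plug the total sensitivity bound $S\le 170\mu^2\sqrt{m}+7E\mu\in\Oh{\mu^2\sqrt m}$ from \cref{lem:totalsensitivity} into \thmref{coreset:sensitivity}. To control the VC term $\Delta$, recall that \thmref{coreset:sensitivity} measures the VC dimension of the range space induced by the \emph{reweighted} functions. The coreset weights $u_i=\tfrac{S}{|\mathcal{R}|\,s_i}$ vary with $s_i$, but since the ratio between the largest and the smallest score $s_i$ is $\Oh{m}$ (the scores lie in $[\Omega(\mu^2/m),\,O(\mu^2)]$, as $\|U_i\|_2\le 1$), rounding each $s_i$ up to the nearest power of two, which at most doubles it and is harmless for upper bounds, produces only $t\in O(\log m)$ distinct weight classes. \corref{VC:union} then yields $\Delta\le 6t\in O(\log m)$.

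It remains to assemble the size. Substituting $S\in\Oh{\mu^2\sqrt m}$, $\Delta\in O(\log m)$, and $\log S\in O(\log m)$ (using $\mu<m$) into the bound $\Oh{\tfrac{S}{\eps^2}(\Delta\log S+\log\tfrac1\delta)}$ of \thmref{coreset:sensitivity}, and choosing the failure probability $\delta=\Theta(1/n)$ so that a union bound over the $n$ iterations still succeeds with constant probability, gives
\begin{align*}
k\in \Oh{\frac{\mu^2\sqrt m}{\eps^2}\bigl(\log^2 m+\log n\bigr)},
\end{align*}
as claimed. For the running time, computing the (approximate) leverage scores of the $m\times 2$ matrix and hence all scores $s_i$ takes $\Oh{m}$ time, and the sampling step of \thmref{coreset:sensitivity} runs in $\Oh{|\mathcal{F}|}=\Oh{m}$ time; by the invariance argument above this computation is performed only once and reused across all $j\in[n]$.

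The main obstacle I expect is the first step: certifying that one fixed coreset simultaneously approximates all $n$ label-dependent objectives. This hinges on decoupling the sensitivity upper bounds from the particular labeling, by combining the sign-flip invariance of leverage scores (\lemref{leverage:invariant}) with the two-sided cardinality bound of \lemref{labels:bound}, so that the scores $s_i$, and therefore the sampling distribution, are genuinely label-independent. The VC-dimension accounting via weight rounding and \corref{VC:union} is comparatively routine once the $O(\log m)$ weight classes have been identified.
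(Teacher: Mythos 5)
Your proposal is correct and follows essentially the same route as the paper's proof: sensitivity sampling via \thmref{coreset:sensitivity} with the bounds from \cref{lemma:3pl:lemma14,lemma:3pl:lemma:sig,lem:totalsensitivity}, label-independence of the scores via \lemref{leverage:invariant} and \lemref{labels:bound}, weight rounding to powers of two giving $t\in O(\log m)$ classes and $\Delta\le 6t$ by \corref{VC:union}, and a union bound with $\delta'=\Theta(\delta/n)$ contributing the $\log n$ term. Your explicit remark that an item may switch between a $g$-type and an $h$-type contribution across different $j$, so one should take the maximum of the two per-item bounds, is if anything slightly more careful than the paper's terse ``the sensitivity scores remain the same.''
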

\begin{proof}
For a single computation of $\beta_j$, say $\beta_1$, our input consists of a matrix $X_{(1)}$ and labels $Y_{i1}$, that define the function $f_1+f_2$. We want to apply \thmref{coreset:sensitivity} to the set of functions $g_i$ and $h_i$ that occur in their respective parts of $f_1+f_2$, and obtain a $(1+\varepsilon)$-coreset $K$ for the function $f_1+f_2$ on $X_{(1)}$.

\cref{lemma:3pl:lemma14,lemma:3pl:lemma:sig} bound the sensitivities of single functions $g_i$ and $h_i$, while \cref{lem:totalsensitivity} bounds the total sensitivity $S$. \corref{VC:union} yields an upper bound of $6t$ on the VC dimension $\Delta$ of the range space induced by the functions $g_i$ and $h_i$, where $t$ denotes the number of different weights. We discuss the choice of $t$ at the end of the proof.

The algorithm to compute the coreset $K$ requires to compute the upper bounds on the sensitivities of \lemref{3pl:lemma14} for the submatrix $X'_{(1)}$ (of $X_{(1)}$), that depend on an orthonormal basis of the columnspace of $X_{(1)}$. This enables the algorithm to sample the input points with probabilities proportional to the values $s_i$ (which equal either $s'_i$ or $s''_i$, depending on the function), divided by the total sensitivity.

This can be done by computing the QR-decomposition of $X_{(1)}=QR$, in time $\Oh{m d^2}=\Oh{m}$ \citep{Golub2013matrix}. $Q$ is an orthonormal basis for the columnspace of $X_{(1)}$. From $Q=U$ we compute the row-norms $\|U_i\|_2$, and thus the values of $s'_i$. Sampling the $|K|$ elements can be done using a weighted reservoir sampler \citep{Chao82} in linear time $\Oh{m}$. The total running time is thus $\Oh{m}$.

Although $X_{(1)}$ being in $\REAL^{m\times 2}$ enables a fast (linear time) QR-decomposition, it is advisable in practice to use a fast $QR$-decomposition as in \citep{DrineasMMW12}, since this reduces the constant factors (depending on $d=2$ in this paper). The idea is that we can obtain a fast constant factor approximation to the square root of the leverage scores $\|Q_i\|_2$, with success probability $1-\delta''=1-\delta/2$, and use these as the input to the reservoir samplers. Using CountSketch, i.e., the sketching techniques of \citet{ClarksonW13}, we reduce the size of the matrix to be decomposed to only $O(d^2)$, which is a small constant rather than $O(m)$.

As in the 2PL case, for any other coordinate $\beta_j$, $2\leq j\leq n$ within one iteration, the labels $Y_{ij}$ come from $\lbrace -1,1\rbrace^m$. \lemref{leverage:invariant} implies that the leverage scores of $X_{(1)}$, that have been used for the coreset construction for $\beta_1$, remain the same for all other $X_{(j)}$, and thus can be used for all other coordinates $\beta_j$, $2\leq j\leq n$ as well. Since the sensitivity scores remain the same, we can use the same coreset for the optimization of all $\beta_j$, $j\in[n]$.

To control the success probability of sensitivity sampling over all $\beta_j, j \in [n]$, let $\delta'=\delta/(2n)$. Then the total failure probability (for the approximation of the leverage scores and the coreset sampling) is at most $\delta''+n\cdot\delta' = \delta/2 + \delta/2=\delta$. 

It remains to bound the number of different weights used for the sampling, and in the VC-dimension bound of the involved range space. Each function $g_i$ and $h_i$ is sampled with probability proportional to $s'_i/(\sum s'_i + \sum s''_i)$ and $s''_i/(\sum s'_i + \sum s''_i)$ respectively. We can round the sensitivities $s'_i$ and $s''_i$ up to the next power of $2$, and obtain the values $\hat{s}'_i$ and $\hat{s}''_i$ respectively. It holds that $s'_i\leq \hat{s}'_i \leq 2s'_i$ and $s''_i\leq \hat{s}''_i \leq 2s''_i$, for all $i\in[m]$. Then, we can sample the functions $g_i$ and $h_i$ proportional to the probabilities $\hat{s}'_i/(\sum \hat{s}'_i + \sum \hat{s}''_i)$ and $\hat{s}''_i/(\sum \hat{s}'_i + \sum \hat{s}''_i)$, respectively. It holds that $\sum \hat{s}'_i + \sum \hat{s}''_i\leq 2(\sum s'_i + \sum s''_i) = O(\mu^2\sqrt{m})$, by \cref{lem:totalsensitivity}.

We observe that:
\begin{align}
   1&\geq \hat{s}'_i\geq s'_i \geq \sup_{\beta_j} \frac{g_i(x_i\beta_j)}{f_1(X\beta_j) + f_2(X\beta_j)} \stackrel{\beta_j=0}{\geq} \frac{g_i(0)}{f_1(0)+f_2(0)} \nonumber \\
   &= \frac{\ln(2)-\ln(1-c_i)}{\sum_{Y_{ij}=-1} (\ln(2)-\ln(1-c_i)) +\sum_{Y_{ij}=1}( -\ln(c_i + \frac{1-c_i}{2}))} \nonumber \\
   &\geq \frac{\ln(2)}{m'\cdot(\ln(2)-\ln(1-c_i)) +m''\cdot(\ln(\frac{2}{1-c_i}))} \geq \frac{\ln(2)}{2\ln(2)m' +\ln(4)m''} \nonumber\\
   &= \frac{1}{2m' +2m''} = \frac{1}{2m}
   \label{numberofpowersoftwo:g}
\end{align}
We can analogously conclude that
\begin{align}
    1\geq \hat{s}''_i\geq s''_i\geq \frac{h_i(0)}{f_1(0)+f_2(0)}\geq \frac{\ln(\frac{4}{3})}{2m}.
    \label{numberofpowersoftwo:h}
\end{align}
Equations~(\ref{numberofpowersoftwo:g}) and (\ref{numberofpowersoftwo:h}) imply that there can be at most $t=O(\log(m))$ values of $\hat{s}'_i$ and $\hat{s}''_i$, which implies that $\Delta = O(\log(m))$. Thus we can construct a single coreset $K$ of size 
\begin{align}
\label{eqn:3pl:coreset:1}
|K|&=\Oh{\frac{S}{\varepsilon^2} \left( \Delta \log S + \log\left(\frac{1}{\delta'}\right)\right)} \nonumber\\
&= \Oh{\frac{\mu^2\sqrt{m}}{\varepsilon^2}\cdot (\log(\mu^2\sqrt{m}) \log(m) + \log(n))} \nonumber\\
&\!\!\!\overset{\mu\leq m}{=} \Oh{\frac{\mu^2\sqrt{m}}{\varepsilon^2}\cdot (\log(m)^2 + \log(n))},
\end{align}
 for all $X_{(j)}$, $j\in[m]$, with constant success probability at least $1-\delta$ in time $O(m)$, as claimed.
\end{proof}

Finally, we need to address the differences between the coresets for $f(\beta_j\mid A,C)$ (claimed by \thmref{3pl:main:result}), and the coresets for $f(\alpha_i,c_i \mid B)$. In the 2PL case the two cases were interchangeable, since the function depended on one parameter only. Here, for $f(\alpha_i,c_i \mid B)$ function $g_i$ and $h_i$ are functions of two parameters, $\alpha_i$ and $c_i$. We need the following result that gives us a lower bound on the sum of the logistic loss functions.

\begin{lemma}[\citealp{MunteanuOW21}, Lemma 2.2]
\lemlab{logistic:sum:lowerbound}
Let $Z\in \REAL^{n\times d}$ be a $\mu_1$-complex matrix for bounded $\mu_1<\infty$, and let $z_i$ be its rows. For all $y\in \REAL^d$ it holds that
\begin{align*}
\sum_{i\in[n]} \ln\left( 1+\exp\left( z_i y\right)\right) \geq \frac{n}{2\mu_1} \left( 1+\ln(\mu_1)\right).
\end{align*}
\end{lemma}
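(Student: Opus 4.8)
The plan is to bound the two groups of points — those with $z_i y \geq 0$ and those with $z_i y < 0$ — separately, and then to couple the two bounds through the $\mu_1$-complexity of $Z$. Writing $v_i = z_i y$ and $\phi(t) = \ln(1+\exp(t))$, I would rely on two elementary pointwise inequalities. First, $\phi(t) \geq \max\{t,\ln 2\}$ for $t \geq 0$, which immediately yields $\sum_{i:v_i\geq 0}\phi(v_i) \geq \norm{(Zy)^+}_1$. Second, $\phi(t) \geq \tfrac12\exp(t)$ for $t \leq 0$ (which follows since $\ln(1+x)-x/2$ is increasing on $[0,1]$ and vanishes at $0$), so the negative points contribute at least $\tfrac12\sum_{i:v_i<0}\exp(v_i)$.

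The heart of the argument is to reduce the second sum to the aggregate negative mass $S^-:=\norm{(Zy)^-}_1$ together with the number $m':=|\{i:v_i<0\}|$ of negative points. Since $t\mapsto\exp(t)$ is convex, Jensen's inequality gives $\sum_{i:v_i<0}\exp(v_i)\geq m'\exp(-S^-/m')$. I then invoke $\mu_1$-complexity in the form $S^-\leq\mu_1\norm{(Zy)^+}_1$, i.e.\ the left inequality of~(\ref{mu:property}) with $p=1$. Setting $s:=\norm{(Zy)^+}_1$ and combining both parts, the total loss is bounded below by $s+\tfrac{m'}{2}\exp(-\mu_1 s/m')$. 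Minimizing this one-dimensional expression over $s\geq 0$ by elementary calculus, the optimum is attained at $s^\ast=\tfrac{m'}{\mu_1}\ln(\mu_1/2)$ and equals $\tfrac{m'}{\mu_1}\bigl(1+\ln(\mu_1/2)\bigr)$, which already displays the characteristic $1+\ln\mu_1$ factor of the claim.

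It remains to replace $m'$ by $n$, which I would handle by a short case split. When $m'\geq n/2$ the displayed minimum is at least $\tfrac{n}{2\mu_1}\bigl(1+\ln(\mu_1/2)\bigr)$; when $m'<n/2$, more than half of the coordinates satisfy $v_i\geq 0$ and hence contribute at least $\ln 2$ each, giving $\sum_i\phi(v_i)\geq(n-m')\ln 2>\tfrac{n}{2}\ln 2$. In either regime the loss is $\Omega\bigl(\tfrac{n}{\mu_1}(1+\ln\mu_1)\bigr)$, matching the stated order. The degenerate cases $s=0$ or $m'\in\{0,n\}$ are trivial, since $\mu_1$-complexity then forces $Zy=0$ and the sum equals $n\ln 2$.

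The main obstacle I anticipate is squeezing these estimates down to the exact constant $\tfrac{1}{2\mu_1}(1+\ln\mu_1)$ rather than the slightly lossy $\tfrac{1}{2\mu_1}(1+\ln(\mu_1/2))$ that my inequalities produce: both the crude tail bound $\phi(t)\geq\tfrac12\exp(t)$ and the case split at $m'=n/2$ leak constant factors, so obtaining the clean constant requires a sharper negative-tail estimate and a joint (rather than case-wise) treatment of the positive count and the positive mass. A second, more conceptual subtlety is that $\mu_1$-complexity constrains only the $\ell_1$ masses of the positive and negative parts, never the individual entries of $Zy$; it is precisely the convexity step via Jensen that bridges the gap between ``total negative mass is controlled'' and ``the sum of the negative exponentials is large,'' and this is the step on which the whole bound hinges.
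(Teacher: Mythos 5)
The paper does not actually prove this lemma --- it is imported verbatim as Lemma~2.2 of \citet{MunteanuOW21} --- so there is no in-paper argument to compare yours against; your proposal has to stand on its own as a proof of the stated inequality. Structurally it is sound: the split into positive and negative coordinates, the pointwise bounds $\phi(t)\geq t$ for $t\geq 0$ and $\phi(t)\geq \tfrac12 e^{t}$ for $t\leq 0$, the Jensen step $\sum_{v_i<0}e^{v_i}\geq m'e^{-S^-/m'}$, and the use of the left inequality in~(\ref{mu:property}) to get $S^-\leq \mu_1\|(Zy)^+\|_1$ are all correct, and the Jensen step is indeed the right bridge between the aggregate $\ell_1$ control that $\mu_1$-complexity provides and the sum of exponentials.

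The genuine gap is the one you half-admit at the end: your chain establishes only $\Omega\bigl(\tfrac{n}{\mu_1}(1+\ln\mu_1)\bigr)$, not the inequality as stated, and the lemma asserts a specific constant. Concretely: (i) the unconstrained minimizer $s^\ast=\tfrac{m'}{\mu_1}\ln(\mu_1/2)$ is negative whenever $\mu_1<2$, so there the constrained minimum is $F(0)=m'/2$, which together with $m'\geq n/2$ yields only $n/4$ against the claimed $n/2$ at $\mu_1=1$; (ii) even for $\mu_1\geq 2$ you get $1+\ln(\mu_1/2)$ in place of $1+\ln\mu_1$; and (iii) the branch $m'<n/2$ yields $\tfrac{n}{2}\ln 2\approx 0.347\,n$, which is below the claimed $\tfrac{n}{2\mu_1}(1+\ln\mu_1)\approx 0.423\,n$ at $\mu_1=2$. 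So neither branch of your case split actually dominates the stated right-hand side for small $\mu_1$. For the one place the paper uses this lemma (inside \lemref{additiveerror}, where everything is absorbed into an $O(\cdot)$) your weaker constant would be perfectly adequate, but as a proof of \lemref{logistic:sum:lowerbound} as written it falls short; recovering the clean constant requires a tighter treatment that couples the positive count, the positive mass, and the negative tail jointly rather than through a hard split at $m'=n/2$, or simply deferring to the cited proof.
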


We slightly adapt the notation of the functions $g_j$ and $h_j$ (we change of the index to emphasize that the fixed parameters encoded in the rows of $X$ are now $\beta_j,j\in [n]$). To keep in mind that these functions are functions of an additional variable $c_i$, we write
\begin{align*}
    g_j(z,c_i)=-\ln\left( \frac{1-c_i}{1+\exp( z )}\right) = \ln( 1+\exp( z )) - \ln( 1-c_i)
\end{align*}
and
\begin{align*}
    h_j(z,c_i)= - \ln\left( c_i + \frac{1-c_i}{1+\exp( -z )}\right).
\end{align*}

The following lemma claims that by increasing the value of $c_i$ by a small additive value, the sum of all functions will increase only by a small multiplicative error. Since the roles of $n$ and $m$ are reversed, we also let $n'$ and $n''$ take the role of $m'$ and $m''$ respectively.

\begin{lemma}
\lemlab{additiveerror}
Let 
\begin{align*}
f(X\alpha_i, c_i) &= f_1(X\alpha_i, c_i) + f_2(X\alpha_i, c_i) \\
&= \sum_{j\in[n], Y_{ij}=-1}g_j(x_j\alpha_i, c_i) + \sum_{j\in[n], Y_{ij}=1}h_j(x_j\alpha_i, c_i).    
\end{align*}
Then it holds that 
\begin{align*}
    \left|f\Big(X\alpha_i, c_i\Big) - f\left(X\alpha_i, c_i+\frac{\varepsilon}{\mu^2}\right)\right| \leq \varepsilon f(X\alpha_i, c_i).
\end{align*}
\end{lemma}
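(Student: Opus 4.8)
The only $c_i$-dependence in $f$ enters through the additive term $-\ln(1-c_i)$ inside each $g_j$ and through the full sigmoid losses $h_j$. As $c_i$ grows, every $g_j$ increases (its $-\ln(1-c_i)$ shift grows), while every $h_j$ decreases, since its argument $c_i+(1-c_i)s_j$ with $s_j=(1+\exp(-x_j\alpha_i))^{-1}$ is increasing in $c_i$. Writing $\delta=\varepsilon/\mu^2$ and suppressing the fixed $X\alpha_i$, I would therefore express the change as a difference of two nonnegative quantities,
\[
f(c_i)-f(c_i+\delta)=\underbrace{\big(f_2(c_i)-f_2(c_i+\delta)\big)}_{\ge 0}-\underbrace{\big(f_1(c_i+\delta)-f_1(c_i)\big)}_{\ge 0},
\]
and bound the modulus by the larger of the two terms via $|a-b|\le\max\{a,b\}$ for $a,b\ge 0$. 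It then suffices to bound the increase of $f_1$ and the decrease of $f_2$ separately, each by $\varepsilon f(c_i)$.

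For $f_1$ the increase is exact, since $g_j$ depends on $c_i$ only through $-\ln(1-c_i)$:
\[
f_1(c_i+\delta)-f_1(c_i)=n'\big(\ln(1-c_i)-\ln(1-c_i-\delta)\big)=-\,n'\ln\!\Big(1-\tfrac{\delta}{1-c_i}\Big).
\]
Using $c_i<1/2$ (so $1-c_i>1/2$ and $\delta/(1-c_i)<2\delta$) together with $-\ln(1-x)\le 2x$ for small $x$, this is $O(n'\delta)$. For the lower bound on $f$ I would drop the nonnegative shifts $-\ln(1-c_i)$ and apply \lemref{logistic:sum:lowerbound} to the $\mu_1$-complex matrix $X'$ formed by the $n'$ rows with $Y_{ij}=-1$:
\[
f(c_i)\ge f_1(c_i)\ge \sum_{j:\,Y_{ij}=-1}\ln\!\big(1+\exp(x_j\alpha_i)\big)\ge \frac{n'}{2\mu_1}\big(1+\ln\mu_1\big).
\]
Since $\delta=\varepsilon/\mu^2$ and $\mu\ge\mu_1$, the $\mu^{-2}$ factor in the numerator beats the $\mu_1^{-1}$ factor in the denominator, so $O(n'\delta)\le\varepsilon\cdot\frac{n'}{2\mu_1}\le\varepsilon f(c_i)$ once $\mu$ exceeds a small constant; this disposes of the $f_1$ term.

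The decrease of $f_2$ is the crux. With $A_j=c_i+(1-c_i)s_j\in(c_i,1)$, the per-term decrease equals $\ln\!\big(1+\delta(1-s_j)/A_j\big)$, which I would bound by $\delta(1-s_j)/A_j$ through $\ln(1+x)\le x$. Two facts then control it: from $1-A_j=(1-c_i)(1-s_j)$ and $1-A_j\le-\ln A_j=h_j$ one gets $1-s_j\le 2(1-A_j)\le 2h_j$, and trivially $A_j\ge c_i$. Summing over $j$ with $Y_{ij}=1$,
\[
f_2(c_i)-f_2(c_i+\delta)\le \frac{2\delta}{c_i}\sum_{j:\,Y_{ij}=1}h_j=\frac{2\delta}{c_i}\,f_2(c_i)\le\frac{2\delta}{c_i}\,f(c_i).
\]
With $\delta=\varepsilon/\mu^2$ this is $\le\varepsilon f(c_i)$ exactly when $c_i\ge 2/\mu^2$, and this threshold is the main obstacle: for guessing probabilities $c_i$ this small the sigmoid losses have unboundedly large argument $-\ln c_i$ and admit no uniform per-term multiplicative control (a direct calculation with $s_j\to 0$ shows the bound must degrade as $c_i\to 0$). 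I would close this gap using the standing lower bound $c_i\ge c_{\min}$ on the guessing parameter — equivalently $E=\max_i\ln(1/c_i)<\infty$ — which guarantees $2\delta/c_i\le\varepsilon$ for the stated increment, and I would invoke the $\mu_0$-complexity of $X''$ (a $\ge 1/(1+\mu_0)$ fraction of the $h$-examinees have $x_j\alpha_i\ge 0$, hence $A_j\ge 1/2$) together with \lemref{labels:bound} to sharpen the constants in the denominator of the lower bound on $f$. Combining the $f_1$ and $f_2$ estimates through the $\max$ inequality of the first paragraph then yields the claim.
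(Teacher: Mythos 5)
Your proposal is correct and follows essentially the same route as the paper's proof: bound the per-term increase of the $g_j$'s via the shift $-\ln(1-c_i)$ using $c_i<1/2$, bound the per-term decrease of the $h_j$'s using the standing lower bound $c_i\geq 1/\kappa$, relate the total to $f$ via \lemref{logistic:sum:lowerbound} (and \lemref{labels:bound}), and absorb the resulting constant (depending on $\kappa$) by rescaling $\varepsilon$ — exactly as the paper does. The only cosmetic difference is that you aggregate the sigmoid part as a multiple of $f_2$ itself via $1-s_j\leq 2h_j$, whereas the paper bounds each $h_j$-change by the uniform constant $\varepsilon\kappa/\mu^2$ and then counts terms; both yield the same $O(\kappa\varepsilon)\cdot f$ conclusion.
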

\begin{proof}
For the sigmoid functions $h_j$ we have that
\begin{align*}
    h_j(z,c_i) = -\ln\left( c_i+\frac{1-c_i}{1+\exp(-z)}\right) = \ln \left( \frac{1+\exp(-z)}{1+c_i \exp(-z)}\right). 
\end{align*}
Then using the fact that the functions $h_j$ and their differences are monotonic, we have that
\begin{align}
    h_j\Big(z,c_i\Big) - h_j\left(z,c_i+\frac{\varepsilon}{\mu^2}\right) & =  \ln \left( \frac{1+\exp(-z)}{1+c_i \exp(-z)}\right) - \ln \left( \frac{1+\exp(-z)}{1+(c_i+\frac{\varepsilon}{\mu^2}) \exp(-z)}\right) \nonumber\\
    & = \ln \left( \frac{1+(c_i+\frac{\varepsilon}{\mu^2}) \exp(-z)}{1+c_i \exp(-z)}\right) \nonumber \\
    & \leq \ln\left( \frac{c_i+\frac{\varepsilon}{\mu^2}}{c_i}\right) =  \ln\left( 1+ \frac{\varepsilon}{c_i \mu^2}\right) \leq \frac{\varepsilon}{c_i \mu^2} \leq \frac{\varepsilon \kappa}{\mu^2},
    \label{otherdirection:sigmoid}
\end{align}
where we assume that $1/\kappa$ is a constant lower bound for all $c_i$, see the discussion on parameters $c_i$ in \secref{preliminaries}.

For the logistic functions $g_j$ it holds that
\begin{align}
    g_j\Big(z,c_i+\frac{\varepsilon}{\mu^2}\Big) - g_j(z,c_i) & = -\ln\Big(1-c_i-\frac{\varepsilon}{\mu^2}\Big) + \ln(1-c_i) \nonumber\\
    &= \ln \left( 1+\frac{\frac{\varepsilon}{\mu^2}}{1-c_i-\frac{\varepsilon}{\mu^2}} \right) \leq \frac{\frac{\varepsilon}{\mu^2}}{1-c_i-\frac{\varepsilon}{\mu^2}} \leq \frac{4\varepsilon}{\mu^2},
    \label{otherdirection:logistic}
\end{align}
since $c\leq 1/2$ and $\varepsilon/\mu^2\leq 1/4$. We may assume that $\kappa \geq 4$. Then, Equations~(\ref{otherdirection:sigmoid}) and (\ref{otherdirection:logistic}) imply that
\begin{align*}
    \Big|f\Big(X\alpha_i, c_i\Big) - f\Big(X\alpha_i, c_i+\frac{\varepsilon}{\mu^2}\Big)\Big|  
    & \leq n'\cdot \frac{\varepsilon \kappa}{\mu^2} + n''\cdot \frac{4\varepsilon}{\mu^2} \leq \kappa\varepsilon \frac{n}{\mu^2} \\
    & \leq 2\kappa\varepsilon (1+2\mu_0)\frac{n' }{2\mu^2} \leq 6\kappa\cdot\varepsilon f(X\alpha_i, c_i),
\end{align*}
where the last two inequalities follow from \lemref{labels:bound} and  \lemref{logistic:sum:lowerbound} (since $\ln( 1+\exp( x_j \alpha_i)) \leq g_j(x_j\alpha_i, c_i)$). Rescaling $\varepsilon$ by the constant $6\kappa$ completes the proof.
\end{proof}

Then, we can obtain coresets for the case where we wish to optimize the item parameters on a reduced number of examinees using the following corollary.

\begin{corollary}
\corlab{3pl:main:otherdirection}
Let each $X_{(i)}=( -Y_{ij} \beta_j^T)_{j\in[n]}\in\REAL^{n\times 2}$. Let $X'_{(i)}$ contain the columns $j$ of $X_{(i)}$ where $Y_{ij}=-1$ and let $X''_{(i)}$ comprise the columns with $Y_{ij}=1$. Let $X'_{(i)}$ be $\mu$-complex and $X''_{(i)}$ be $\mu$-complex for each $i\in[m]$. Let $\varepsilon\in (0,1/4)$. There exists a weighted set $K\in \REAL^{k\times 2}$ of size 
$k\in O(\frac{\mu^4 \sqrt{n}}{\varepsilon^3} (\log(n)^2 + \log(m)))$, that is a $(1+\varepsilon)$-coreset for all $X_{(i)}$, $i\in[m]$ simultaneously for the 3PL IRT problem. The coreset can be constructed with constant probability and in $O(n)$ time.
\end{corollary}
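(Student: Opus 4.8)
The plan is to reduce to the already-established \thmref{3pl:main:result} by freezing the guessing parameter, and to handle the remaining continuous dependence on $c_i$ through a fine net combined with the additive-error bound of \lemref{additiveerror}. First I would observe that for any fixed value $c_i=\gamma$, the objective $f(X\alpha_i,\gamma)$ is, as a function of $\alpha_i$ alone, exactly an instance of the 3PL objective analyzed in \thmref{3pl:main:result} with the roles of $m$ and $n$ interchanged, since the sum now runs over the $n$ examinees and the complexity hypotheses are imposed on $X'_{(i)},X''_{(i)}$. Because $c_i\geq c_{\min}=1/\kappa$, the quantity $E=\ln(1/\gamma)\leq\ln\kappa$ is a constant, so the sensitivity bounds of \cref{lemma:3pl:lemma14,lemma:3pl:lemma:sig} hold uniformly in $\gamma$ and the induced sampling distribution is independent of $\gamma$. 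Invoking the swapped version of \thmref{3pl:main:result}, together with the sign-invariance of leverage scores (\lemref{leverage:invariant}) to share one sample across all items $i\in[m]$, yields for each fixed $\gamma$ a coreset of size $O(\frac{\mu^2\sqrt n}{\varepsilon^2}(\log(n)^2+\log(m)))$ that is $(1\pm\varepsilon)$-accurate for all $\alpha_i$ simultaneously.

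Next I would discretize the admissible interval $c_i\in[c_{\min},c_{\max}]\subseteq[0,1/2)$ by an arithmetic net $N=\{c^{(1)},\dots,c^{(L)}\}$ of spacing $\varepsilon/\mu^2$ (valid since $\varepsilon\in(0,1/4)$ forces $\varepsilon/\mu^2\leq 1/4$), so that $L=O(\mu^2/\varepsilon)$, and build one coreset per net point as above, taking their union $K$. This gives the claimed size $O(L\cdot\frac{\mu^2\sqrt n}{\varepsilon^2}(\cdots))=O(\frac{\mu^4\sqrt n}{\varepsilon^3}(\log(n)^2+\log(m)))$. The coverage argument is then a triangle inequality: given an arbitrary query $(\alpha_i,c_i)$, pick the nearest net point $c^{(\ell)}$ with $|c_i-c^{(\ell)}|\leq\varepsilon/\mu^2$; by \lemref{additiveerror}, $f$ at $c_i$ differs from $f$ at $c^{(\ell)}$ by at most a $(1\pm\varepsilon)$ factor on the full input, the sub-coreset for $c^{(\ell)}$ is $(1\pm\varepsilon)$-accurate at $c^{(\ell)}$ by construction, and a matching additive-error estimate on the coreset transfers the guarantee from $c^{(\ell)}$ back to $c_i$.

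The hard part will be this last step, namely the \emph{coreset} analogue of \lemref{additiveerror}. On the full data the bound rested on the uniform pointwise increments $|g_j(z,c_i)-g_j(z,c^{(\ell)})|\leq 4\varepsilon/\mu^2$ and $|h_j(z,c_i)-h_j(z,c^{(\ell)})|\leq\varepsilon\kappa/\mu^2$ together with the lower bound $f\geq\Omega(n/\mu)$ from \lemref{logistic:sum:lowerbound}. Summing the same pointwise increments against the coreset weights $u$ bounds $|f_u(K\alpha_i,c^{(\ell)})-f_u(K\alpha_i,c_i)|$ by $O(\varepsilon\kappa/\mu^2)\cdot\sum_j u_j$, so I need two facts about the weighted sample: that its total weight is $\Theta(n)$ (which holds for sensitivity sampling, where $u_j=\frac{S}{|K|s_j}$ makes the expected total mass equal $n$), and a weighted lower bound $f_u(K\alpha_i,c^{(\ell)})\geq\Omega(n/\mu)$. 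The latter follows because the coreset already preserves the logistic part $\sum_j\ln(1+\exp(x_j\alpha_i))$ up to $(1\pm\varepsilon)$, so \lemref{logistic:sum:lowerbound} transfers to the sample. Combining the three triangle-inequality terms, the third contributes $O(\varepsilon\kappa n/\mu^2)/\Omega(n/\mu)=O(\varepsilon\kappa/\mu)=O(\varepsilon)$, and rescaling $\varepsilon$ by the constant factors yields the $(1+\varepsilon)$-coreset property for all $c_i$ simultaneously.

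Finally I would close the probabilistic accounting: since the sampling distribution is independent of $\gamma$, I only pay a union bound over the $L$ net points and the $m$ items inside the failure probability $\delta$, which is absorbed into the logarithmic terms exactly as in \thmref{3pl:main:result}, and the construction time remains $O(n)$ because the QR-based sensitivity computation is performed once and reused across the net.
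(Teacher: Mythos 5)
Your proposal shares the essential ingredients of the paper's argument: reversing the roles of $m$ and $n$, observing that the sensitivity bounds of \cref{lemma:3pl:lemma14,lemma:3pl:lemma:sig} hold uniformly in $c_i$ (so the sampling distribution does not depend on the guessing parameter), discretizing $[c_{\min},1/2)$ into $O(\mu^2/\eps)$ cells of width $\Theta(\eps/\mu^2)$, and invoking \lemref{additiveerror} to control the rounding error. Where you diverge is in how the factor $O(\mu^2/\eps)$ enters the size bound. The paper draws a \emph{single} sensitivity sample and charges the grid to the VC dimension: the function class is partitioned into $O(\frac{\mu^2}{\eps}\log n)$ subclasses by type ($g_j$ vs.\ $h_j$), grid value $\hat c_i$, and weight, each of VC dimension $d+1=3$ by \cref{lemma:VC:logistic:const,lemma:VC:sigmoid:const}, so \cref{lem:disjVCbound} gives $\Delta=O(\frac{\mu^2}{\eps}\log n)$ and the extra $\frac{\mu^2}{\eps}$ appears inside \thmref{coreset:sensitivity}. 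You instead build $L=O(\mu^2/\eps)$ separate coresets, one per grid point, and pay the factor $L$ as a multiplicity of samples.

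The structural problem with your route is that the union of the $L$ sub-coresets, viewed as one weighted set, is \emph{not} a coreset in the sense of \defref{coresets}: evaluating $f_u(K\eta)$ over the entire union sums the contributions of all $L$ sub-coresets and overcounts by a factor of roughly $L$, while renormalizing by $1/L$ destroys the guarantee because each sub-coreset is only accurate for $c$ near its own grid point. Your coverage argument therefore needs a dispatch rule (``evaluate only the sub-coreset of the nearest net point''), which yields a useful data structure of the claimed size but not literally the single weighted set $K$ asserted by the corollary; the paper's single-sample construction avoids this. On the positive side, you correctly isolate the need for a \emph{weighted} analogue of \lemref{additiveerror} on the sample side --- a step the paper passes over by implicitly evaluating the coreset at the rounded $\hat c_i$ --- and your sketch for it (total sampled weight $\Theta(n)$ plus transferring the lower bound of \lemref{logistic:sum:lowerbound} through the preserved logistic part) is plausible, though the total-weight claim holds only in expectation and would need a concentration argument to be made rigorous.
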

\begin{proof}
The correctness and the running time of the corollary follow from \thmref{3pl:main:result} with reversed roles of $n$ and $m$, and with the following adaptations. 

The claims on the sensitivity bounds can be taken verbatim, since they hold uniformly for arbitrary values of $c_i\in [0,1/2)$.

To bound the VC dimension of the induced range spaces we divide the interval $[0,1/2)$ that contains all $c_i$ into a grid of $O(\mu^2/\varepsilon)$ segments of length no larger than $\varepsilon'=\varepsilon/(6\kappa\mu^2)$, and round up each $c_i$ to the closest point on the grid (cutting off at $1/2$). Hereby, each $c_i$ is approximated by an additive error of at most $\varepsilon'$, and the function $f(X\alpha_i,c_i)$ is approximated by a multiplicative error $1+\varepsilon$ using \lemref{additiveerror}.

Then we construct a partition into $O(\frac{\mu^2}{\varepsilon} \log(n) )$ classes, as in \cref{lem:disjVCbound,cor:VC:union}, such that the functions in each class have the same type $g_j$ or $h_j$, the same grid value $\hat{c}_i$ as a discretization of $c_i$, and the same weight. We obtain that the VC dimension of the induced range space is bounded by $O(\frac{\mu^2}{\varepsilon} \log(n) )$.

Rounding up the guessing parameters $c_i$ causes an additional multiplicative error $(1+\varepsilon)$. Since $(1+\varepsilon)^2\leq 1+3\varepsilon$, we rescale $\varepsilon''=\varepsilon/3$ to obtain the claim of the corollary.
\end{proof}

\subsection{On the Quality of the Solution Found on a Coreset}
\label{sec:quality:coreset}

\thmref{2pl:main:result} and \thmref{3pl:main:result}
guarantee that the values of the IRT loss functions evaluated on the whole input set and on the coreset, respectively, differ at most by an $\varepsilon$-fraction of the optimal value of the IRT loss function of the whole set. Here we show that the parameters that realize the optimal values of the loss function on the whole input and on the coreset are also close to each other.

To this end, for any given matrix $M\in \mathbb{R}^{n\times d}$, let $\sigma^{(1)}_{\min} (M)= \inf_{x\in \mathbb{R}^d\setminus\lbrace 0\rbrace} \frac{\|Mx\|_1}{\|x\|_1}$ \citep[cf.][]{Golub2013matrix}. Recall that the loss function $f(X\eta)$ for 3PL models is represented by the sum of different functions $g_i(z)$ and $h_i(z)$, where $g_i(z)$ was lower bounded by $z$ by \lemref{3pl:reg:linbound} for all $z\geq 0$. For 2PL models, we have $h_i(z)=g_i(z)$ since $c_i=0$ for all items. From \lemref{additiveerror}, we have that the coreset produces $c_i$ that are within $O(\frac{\varepsilon}{\mu^2})$ to the corresponding optimal value. The following theorem handles the remaining parameters, conditioned on an arbitrary choice of all other parameters, in particular also for the optimal set of parameters.

\begin{theorem}
\label{thm:quality:coreset}
Let $X$ be any matrix that satisfies the conditions and $\mu$-assumptions of Theorem 3.2 resp. 3.3, and let $K$, weighted by $u\in\REAL^k$ be any $(1+\varepsilon)$-coreset for $X$. Let $\eta_{\text{opt}}$ and $\eta_{\text{core}}$ be the minimizer of the IRT loss function $f(X\eta)$ and $f_u(K\eta)$, respectively. Let $\tau=1$ for the 2PL resp. $\tau=2$ for the 3PL model.
Then
\begin{align*}
    \|\eta_{\text{opt}} - \eta_{\text{core}} \|_1 \leq \frac{(1+\mu)^\tau(2+3\varepsilon)}{\sigma^{(1)}_{\min} (X)} \cdot f(X \eta_{\text{opt}}).
\end{align*}
\end{theorem}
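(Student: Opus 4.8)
The plan is to bound the parameter gap $\|\eta_{\text{opt}}-\eta_{\text{core}}\|_1$ by the optimal loss in three stages: first pass from a distance in parameter space to a distance in the column space of $X$, then bound the $\ell_1$-norm of any image $X\eta$ by the loss $f(X\eta)$, and finally control $f(X\eta_{\text{core}})$ through the coreset guarantee. For the first stage I would invoke the definition of $\sigma^{(1)}_{\min}$, which gives $\|Xv\|_1\ge \sigma^{(1)}_{\min}(X)\,\|v\|_1$ for every $v$; taking $v=\eta_{\text{opt}}-\eta_{\text{core}}$ and applying the triangle inequality yields
\[
\|\eta_{\text{opt}}-\eta_{\text{core}}\|_1 \le \frac{\|X\eta_{\text{opt}}-X\eta_{\text{core}}\|_1}{\sigma^{(1)}_{\min}(X)} \le \frac{\|X\eta_{\text{opt}}\|_1+\|X\eta_{\text{core}}\|_1}{\sigma^{(1)}_{\min}(X)}.
\]
Here $\sigma^{(1)}_{\min}(X)>0$ as long as $X$ has full column rank, otherwise the right-hand side is vacuously infinite.

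The heart of the argument is the pointwise bound $\|X\eta\|_1\le (1+\mu)^\tau f(X\eta)$ for all $\eta$. In both models $\mu$-complexity gives $\|X\eta\|_1=\|(X\eta)^+\|_1+\|(X\eta)^-\|_1\le (1+\mu)\|(X\eta)^+\|_1$. For the 2PL model ($\tau=1$) every point loss is the logistic function $\ln(1+e^z)\ge z=|z|$ for $z\ge 0$, so $\|(X\eta)^+\|_1=\sum_{x_i\eta\ge 0}|x_i\eta|\le \sum_i \ln(1+e^{x_i\eta})=f(X\eta)$, which already gives the factor $(1+\mu)$. For the 3PL model ($\tau=2$) I would split $X$ into $X'$ (the rows with $Y_{ij}=-1$, carrying the logistic losses $g_i$) and $X''$ (the rows with $Y_{ij}=1$, carrying the bounded sigmoids $h_i$). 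The sigmoids cannot control $\|X''\eta\|_1$, so I would reroute this mass to $X'$ via the cross-complexity bound already employed in \lemref{3pl:lemma12}, namely $\|X\eta\|_1\le (1+2\mu_1)\|X'\eta\|_1$. Applying $\mu$-complexity of $X'$ and then the linear lower bound $g_i(z)\ge z$ for $z\ge 0$ of \lemref{3pl:reg:linbound},
\[
\|X\eta\|_1 \le (1+2\mu_1)\,\|X'\eta\|_1 \le (1+2\mu_1)(1+\mu)\,\|(X'\eta)^+\|_1 \le (1+2\mu_1)(1+\mu)\,f(X\eta),
\]
which is of order $(1+\mu)^2$ since $\mu_1\le\mu$, yielding the exponent $\tau=2$ after absorbing constants.

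For the last stage I would use optimality of $\eta_{\text{core}}$ for $f_u(K\cdot)$ together with the coreset property of $K$: $(1-\varepsilon)f(X\eta_{\text{core}})\le f_u(K\eta_{\text{core}})\le f_u(K\eta_{\text{opt}})\le (1+\varepsilon)f(X\eta_{\text{opt}})$, hence $f(X\eta_{\text{core}})\le \tfrac{1+\varepsilon}{1-\varepsilon}f(X\eta_{\text{opt}})$ (cf.\ \lemref{coreset:error:approx}). Consequently $f(X\eta_{\text{opt}})+f(X\eta_{\text{core}})\le (2+O(\varepsilon))f(X\eta_{\text{opt}})$, with the precise constant $2+3\varepsilon$ following from the standard $(1\pm\varepsilon)$ bookkeeping for $\varepsilon\in(0,1/2)$. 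Substituting the pointwise bound of the second stage into the first display and using this estimate gives
\[
\|\eta_{\text{opt}}-\eta_{\text{core}}\|_1 \le \frac{(1+\mu)^\tau(2+3\varepsilon)}{\sigma^{(1)}_{\min}(X)}\,f(X\eta_{\text{opt}}).
\]

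The step I expect to be the main obstacle is the pointwise bound for the 3PL model. The difficulty is the mixed loss: the logistic terms $g_i$ dominate $|x_i\eta|$ on the $Y_{ij}=-1$ rows, but the bounded sigmoids $h_i$ on the $Y_{ij}=1$ rows carry no information about the magnitude of $x_i\eta$, so their contribution to $\|X\eta\|_1$ cannot be charged to the loss directly. The cross-complexity assumption is precisely what lets me transfer the $\ell_1$-mass on the uninformative $X''$ rows onto the informative $X'$ rows, at the price of one extra factor $(1+\mu)$ relative to the 2PL case; checking that this is the only place an additional factor arises, and that the handling of $c_i$ (already controlled within $O(\varepsilon/\mu^2)$ by \lemref{additiveerror}) does not interfere with this two-dimensional estimate, are the points that require the most care.
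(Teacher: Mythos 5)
Your proposal is correct and follows essentially the same route as the paper's proof: pass from $\|\eta_{\text{opt}}-\eta_{\text{core}}\|_1$ to $\|X\eta_{\text{opt}}\|_1+\|X\eta_{\text{core}}\|_1$ via $\sigma^{(1)}_{\min}(X)$ and the triangle inequality, charge the $\ell_1$-mass to the logistic part of the loss using $\mu$-complexity (with the extra cross-complexity transfer from $X''$ to $X'$ exactly as in the 3PL case of the paper, which accounts for the exponent $\tau$), and finish with $f(X\eta_{\text{core}})\leq(1+O(\varepsilon))f(X\eta_{\text{opt}})$ from the coreset property. The only cosmetic difference is that your 3PL constant comes out as $(1+2\mu_1)(1+\mu)$ rather than exactly $(1+\mu)^2$, a bookkeeping discrepancy of at most a factor $2$ that does not affect the substance of the argument.
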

\begin{proof}
The coreset definition implies that $f(X\eta_{\text{core}}) \leq (1+3\varepsilon) \cdot f(X\eta_{\text{opt}})$. Further, we have for the 3PL model that
\begin{align*}
    \sigma^{(1)}_{\min} (X) \cdot  \|\eta_{\text{opt}} - \eta_{\text{core}} \|_1 & \leq  \|X\eta_{\text{opt}} - X\eta_{\text{core}} \|_1\\
    & \leq  \|X\eta_{\text{opt}}\|_1 + \|X\eta_{\text{core}} \|_1\\
    & \leq  (1+\mu)\left( \|(X\eta_{\text{opt}})^+\|_1 +  \|(X\eta_{\text{core}})^+ \|_1 \right)\\
    & \stackrel{(*)}{\leq}  (1+\mu)^2\left( \|(X'\eta_{\text{opt}})^+\|_1 +  \|(X'\eta_{\text{core}})^+ \|_1 \right)\\
    & = (1+\mu)^2\cdot ( \sum_{x_i\in X', x_i\eta_{\text{opt}}>0} |x_i\eta_{\text{opt}}| +  \sum_{x_i\in X', x_i\eta_{\text{core}}>0} |x_i\eta_{\text{core}}| )\\
    & \leq (1+\mu)^2\cdot ( \sum_{x_i\in X', x_i\eta_{\text{opt}}>0} g_i(x_i\eta_{\text{opt}}) +  \sum_{x_i\in X', x_i\eta_{\text{core}}>0} g_i(x_i\eta_{\text{core}}) )\\
    & \leq (1+\mu)^2\cdot ( f(X\eta_{\text{opt}}) +   f(X\eta_{\text{core}}) )\\
    & \leq (1+\mu)^2\cdot ( f(X\eta_{\text{opt}}) +   (1+3\varepsilon) f(X\eta_{\text{opt}}) )\\
    & = (1+\mu)^2\cdot (2+3\varepsilon)\cdot f(X\eta_{\text{opt}}). 
\end{align*}
Finally, for the 2PL model, the additional factor of $(1+\mu)$ in the line tagged with $(*)$ is not necessary since $X=X'$. Thus, the claim holds in both cases.
\end{proof}

\begin{lemma}
\lemlab{coreset:error:approx}
    Let $K$, weighted by the non-negative weights $u\in\REAL^k$, be any coreset for $X$ for the function $f_w$. Let $\varepsilon\in (0,1/2)$. Let $\hat{\eta}\in \argmin_{\eta\in \REAL^d} f_u (K\eta)$. Then it holds that 
    \begin{align*}
        f_{w}(X\hat{\eta}) \leq (1+4\varepsilon) \min_{\eta\in \REAL^d} f_{w}(X\eta).
    \end{align*}
\end{lemma}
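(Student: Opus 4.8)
The plan is to run the standard coreset-to-solution-quality argument: the defining coreset inequality \eqref{eqn:coreset:property} (see \defref{coresets}) controls the objective values pointwise, and chaining these bounds through the optimality of $\hat\eta$ on $K$ transfers optimality back to the full input up to a small multiplicative loss. No structural properties of the IRT functions are needed here; only the two-sided sandwich that the coreset property provides.

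First I would restate the coreset guarantee in its equivalent multiplicative form. Since $|f_w(X\eta) - f_u(K\eta)| \leq \varepsilon f_w(X\eta)$ holds for every $\eta\in\REAL^d$, we immediately obtain
\begin{align*}
(1-\varepsilon)\, f_w(X\eta) \;\leq\; f_u(K\eta) \;\leq\; (1+\varepsilon)\, f_w(X\eta)
\end{align*}
for all $\eta$. Let $\eta^\ast \in \argmin_{\eta\in\REAL^d} f_w(X\eta)$ denote an optimal solution on the full weighted input. The key chain then reads
\begin{align*}
f_w(X\hat\eta)
\;\leq\; \frac{1}{1-\varepsilon}\, f_u(K\hat\eta)
\;\leq\; \frac{1}{1-\varepsilon}\, f_u(K\eta^\ast)
\;\leq\; \frac{1+\varepsilon}{1-\varepsilon}\, f_w(X\eta^\ast),
\end{align*}
where the first step applies the lower bound to $\hat\eta$, the middle step uses that $\hat\eta$ minimizes $f_u(K\cdot)$ (hence $f_u(K\hat\eta)\leq f_u(K\eta^\ast)$), and the last step applies the upper bound to $\eta^\ast$.

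It remains only to verify the elementary inequality $\frac{1+\varepsilon}{1-\varepsilon}\leq 1+4\varepsilon$ on the range $\varepsilon\in(0,1/2)$. Cross-multiplying by the positive quantity $(1-\varepsilon)$, this is equivalent to $1+\varepsilon \leq (1+4\varepsilon)(1-\varepsilon) = 1 + 3\varepsilon - 4\varepsilon^2$, i.e.\ to $0 \leq 2\varepsilon(1-2\varepsilon)$, which holds since $\varepsilon>0$ and $1-2\varepsilon>0$ on this range. Substituting into the chain yields $f_w(X\hat\eta)\leq (1+4\varepsilon)\,f_w(X\eta^\ast) = (1+4\varepsilon)\min_{\eta\in\REAL^d} f_w(X\eta)$, as claimed.

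I expect no real obstacle in this argument; it is a textbook coreset consequence. The only point requiring a moment of care is the very first inequality, which relies on $1-\varepsilon>0$ (guaranteed by $\varepsilon<1/2$) so that dividing by $(1-\varepsilon)$ preserves the direction; the rest is bookkeeping and the one-line algebraic bound on $\frac{1+\varepsilon}{1-\varepsilon}$. One could equally absorb the loss into a cleaner constant, but $1+4\varepsilon$ is exactly what the stated range $\varepsilon\in(0,1/2)$ affords.
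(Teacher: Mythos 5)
Your proof is correct and follows exactly the paper's argument: the same chain $f_w(X\hat\eta)\leq \frac{1}{1-\varepsilon}f_u(K\hat\eta)\leq \frac{1}{1-\varepsilon}f_u(K\eta^*)\leq \frac{1+\varepsilon}{1-\varepsilon}f_w(X\eta^*)$ followed by the bound $\frac{1+\varepsilon}{1-\varepsilon}\leq 1+4\varepsilon$ for $\varepsilon\in(0,1/2)$. The only difference is that you spell out the elementary algebra the paper leaves implicit.
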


\begin{proof}
    Let $\eta^* \in \argmin_{\eta\in \REAL^d} f_{w}(X\eta)$. Then we have that
    \begin{align*}
        f_{w}(X\hat{\eta}) &\leq \frac{1}{1-\varepsilon}\cdot f_u (K\hat{\eta})  \leq \frac{1}{1-\varepsilon} \cdot f_u (K\eta^*)  \leq \frac{1+\varepsilon}{1-\varepsilon} \cdot f_{w} (X\eta^*) \leq (1+4\varepsilon) \cdot f_{w} (X\eta^*)  
    \end{align*}
    The first and the third inequality follow from the coreset property (\defref{coresets} and Eq.~(\ref{eqn:coreset:property})). The second inequality follows from the fact that $\hat{\eta}$ minimizes $f_u(K\eta)$ over all possible $\eta\in \REAL^d$. The last inequality follows from $\varepsilon\in (0,1/2)$.
\end{proof}

\clearpage
\section{ADDITIONAL EXPERIMENTAL RESULTS}
\label{appendix:experiments}

See \cref{tab:results_appendix,tab:results_appendix:b,tab:results_appendix3,tab:results_appendix3:b,tab:results_appendix4,tab:results_appendix4:b} and \cref{fig:param_exp_appendix,fig:param_exp_appendix2,fig:param_exp_appendix3,fig:param_exp_appendix4,fig:param_exp_appendix9,fig:param_exp_appendix_pareto,fig:param_exp_appendix5,fig:param_exp_appendix8,fig:param_exp_appendix6,fig:param_exp_appendix7} for additional experimental results on the parameter estimation accuracy along with the results already reported in the main paper.
\begin{table*}[hp!]
\caption{2PL Experiments on synthetic data: The means and standard deviations (std.) of running times, taken across $20$ repetitions. 
{In each repetition, the running time (in minutes) of 50 iterations of the main loop was measured}
per data set, and for different configurations of the data dimensions: the number of items $m$, the number of examinees $n$, and the coreset size $k$. The (relative) gain is defined as  $(1-\mathrm{mean}_{\sf coreset}/\mathrm{mean}_{\sf full})\cdot 100$ \%.
{The largest experiment was run only once, due to the large running time. Some measures thus do not apply, indicated by N/A values in the last row}.
}
	\label{tab:results_appendix}
	\begin{center}
		\begin{tabular}{ c r r r| r r| r r| r }
    &\multicolumn{3}{c}{}& \multicolumn{2}{c}{{{\bf Full data} (min)}}& \multicolumn{2}{c}{{{\bf Coresets} (min)}} & \multicolumn{1}{c}{ } \\
		\hline
		{\bf data} &
		{$\mathbf n$} & {$\mathbf m$} & {$\mathbf k$} &  {\bf mean} & {\bf std.} & {\bf mean} & {\bf std.} & {\bf gain} \\ \hline \hline
	2PL-Syn & $50\,000$ & $100$ & $100$ & $34.565$ & $5.220$ & $22.752$ & $3.692$ & $34.178$ \% \\ \hline
	2PL-Syn & $50\,000$ & $200$ & $500$ &  $65.745$ & $11.897$ & $30.121$ & $4.645$ & $54.185$ \% \\ \hline 
    2PL-Syn & $50\,000$ & $500$ & $500$ & $136.981$ & $12.556$ & $45.547$ & $3.863$ & $66.749$ \% \\ \hline\hline
    2PL-Syn & $100\,000$ & $100$ & $100$ & $75.135$ & $11.881$ & $51.029$ & $7.524$ & $32.084$ \% \\ \hline
    2PL-Syn & $100\,000$ & $200$ & $1\,000$ & $122.252$ & $12.043$ & $61.459$ & $10.654$ & $49.727$ \%\\ \hline
    2PL-Syn & $100\,000$ & $500$ & $1\,000$ & $231.276$ & $23.793$ & $80.861$ & $11.161$ & $65.037$ \% \\ \hline\hline
    2PL-Syn & $200\,000$ & $100$ & $1\,000$ & $155.053$ & $18.877$ & $99.352$ & $12.055$ & $35.924$ \% \\ \hline
    2PL-Syn & $200\,000$ & $200$ & $2\,000$ & $247.654$ & $34.069$ & $119.075$ & $13.717$ & $51.919$ \% \\ \hline
    2PL-Syn & $200\,000$ & $500$ & $2\,000$ & $466.832$ & $48.734$ & $169.494$ & $21.862$ & $63.693$ \%   \\ \hline\hline
    2PL-Syn & $500\,000$ & $100$ & $5\,000$ & $339.057$ & $115.382$ & $228.041$ & $75.920$ & $32.743$ \%\\ \hline
    2PL-Syn & $500\,000$ & $200$ & $5\,000$ & $518.274$ & $77.108$ & $291.678$ & $44.327$ & $43.721$ \% \\ \hline
    2PL-Syn & $500\,000$ & $500$ & $5\,000$ & $1\,278.845$ & $494.938$ & $591.878$ & $221.218$ & $53.718$ \% \\ \hline\hline 
    2PL-Syn & $500\,000$ & $5\,000$ & $5\,000$ & $9\,363.750$ & N/A & $5\,536.684$ & N/A & $40.871$ \% \\ \hline\hline
	\end{tabular}
	\end{center}
\end{table*}

\begin{table*}[hp!]
\caption{2PL Experiments on synthetic data: 
The quality of the solution found.
Let $f_{\sf full}$ and $f_{{\sf core}(j)}$ be the optimal values of the loss function on the input and on the coreset for the $j$-th repetition, respectively. Let $f_{\sf core} = \min_j f_{{\sf core}(j)}$.
Mean and standard deviation of the relative deviation $|f_{\sf core} - f_{{\sf core}(j)}| / f_{\sf core}$ (in $\%$): 
\textbf{mean dev} and \textbf{std. dev}. Relative error: \textbf{rel. error} $\hat{\varepsilon}=|f_{\sf core} - f_{\sf full}|/f_{\sf full}$ (cf. \lemref{coreset:error:approx}).
Mean Absolute Deviation: $\textbf{mad}(\alpha)=\frac{1}{n}\sum (|a_{\sf full}-a_{\sf core}| + |b_{\sf full}-b_{\sf core}|)$; $\textbf{mad}(\theta)=\frac{1}{m}\sum |\theta_{\sf full}-\theta_{\sf core}|$, evaluated on the parameters that attained the optimal $f_{\sf full}$ and $f_{\sf core}$. {The largest experiment was run only once, due to the large running time. Some measures thus do not apply, indicated by N/A in the last row}.
}
	\label{tab:results_appendix:b}
	\begin{center}
		\begin{tabular}{ c r r r| r r | r | c c}
		\hline
		{\bf data} &
		{$\mathbf n$} & {$\mathbf m$} & {$\mathbf k$} &  {\bf mean dev} & {\bf std. dev} & {\bf rel. error $\hat{\varepsilon}$}  & {\bf $\text{mad}(\alpha)$} & {\bf $\text{mad}(\theta)$}\\ \hline \hline
	2PL-Syn & $50\,000$ & $100$ & $100$ & $6.146$ \% & $2.178$ \% & $0.13452$ & $1.108$ & $0.045$ \\ \hline
	2PL-Syn & $50\,000$ & $200$ & $500$ & $2.241$ \% & $0.918$ \% & $0.05214$ & $0.508$ & $0.011$ \\ \hline 
    2PL-Syn & $50\,000$ & $500$ & $500$ & $1.533$ \% & $0.892$ \% & $0.04803$ & $0.525$ & $0.008$ \\ \hline\hline
    2PL-Syn & $100\,000$ & $100$ & $100$ & $7.203$ \% & $2.918$ \% & $0.14776$ & $0.970$ & $0.040$ \\ \hline
    2PL-Syn & $100\,000$ & $200$ & $1\,000$ & $1.086$ \% & $0.544$ \% & $0.03404$ & $0.379$ & $0.008$ \\ \hline
    2PL-Syn & $100\,000$ & $500$ & $1\,000$ & $0.999$ \% & $0.542$ \% & $0.03140$ & $0.345$ & $0.005$ \\ \hline\hline
    2PL-Syn & $200\,000$ & $100$ & $1\,000$ & $1.936$ \% & $0.849$ \% & $0.04400$ & $0.374$ & $0.008$ \\ \hline
    2PL-Syn & $200\,000$ & $200$ & $2\,000$ & $0.743$ \% & $0.411$ \% & $0.02375$ & $0.248$ & $0.003$ \\ \hline
    2PL-Syn & $200\,000$ & $500$ & $2\,000$ & $1.273$ \% & $0.565$ \% & $0.03013$ & $0.268$ & $0.002$ \\ \hline\hline
    2PL-Syn & $500\,000$ & $100$ & $5\,000$ & $0.551$ \% & $0.184$ \% & $0.01399$ & $0.142$ & $0.002$ \\ \hline
    2PL-Syn & $500\,000$ & $200$ & $5\,000$ & $0.731$ \% & $0.275$ \% & $0.01689$ & $0.180$ & $0.002$ \\ \hline
    2PL-Syn & $500\,000$ & $500$ & $5\,000$ & $0.473$ \% & $0.239$ \% & $0.01445$ & $0.171$ & $0.001$ \\ \hline\hline  
    2PL-Syn & $500\,000$ & $5\,000$ & $5\,000$ & N/A & N/A & $0.00076$ & $0.120$ & $0.013$ \\ \hline\hline
	\end{tabular}
	\end{center}
\end{table*}

\begin{figure*}[hp!]
\caption{2PL Experiments on synthetic data: Parameter estimates for the coresets compared to the full data sets. 
For each experiment the upper figure shows the bias for the item parameters $a$ and $b$. The lower figure shows a kernel density estimate for the ability parameters $\theta$ with a LOESS regression line in dark green.
The ability parameters were standardized to zero mean and unit variance. In all rows, the vertical axis is scaled such as to display $2\,{\mathrm{std.}}$ of the corresponding parameter estimate obtained from the full data set.}
\begin{center}
\begin{tabular}{ccc}
{\tiny{$\mathbf{n=50\,000,m=100,k=100}$}}&{\tiny{$\mathbf{n=50\,000,m=200,k=500}$}}&{\tiny{$\mathbf{n=50\,000,m=500,k=500}$}}
\\
\includegraphics[width=0.3\linewidth]{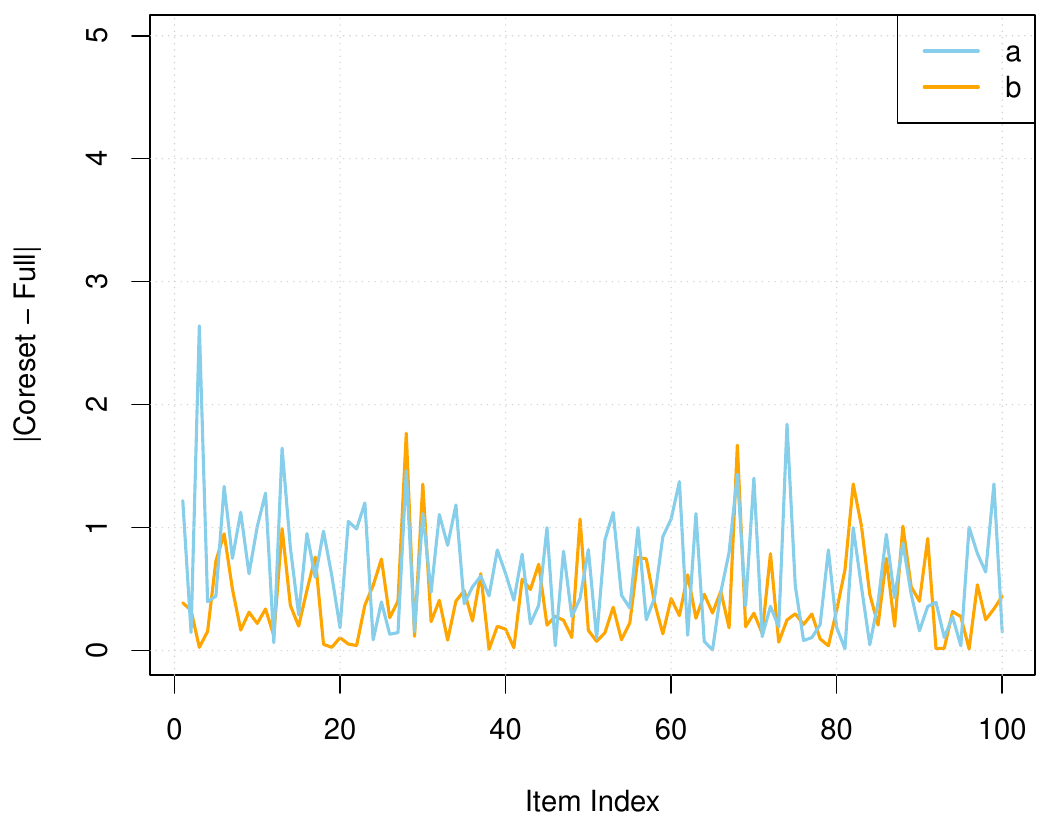}&
\includegraphics[width=0.3\linewidth]{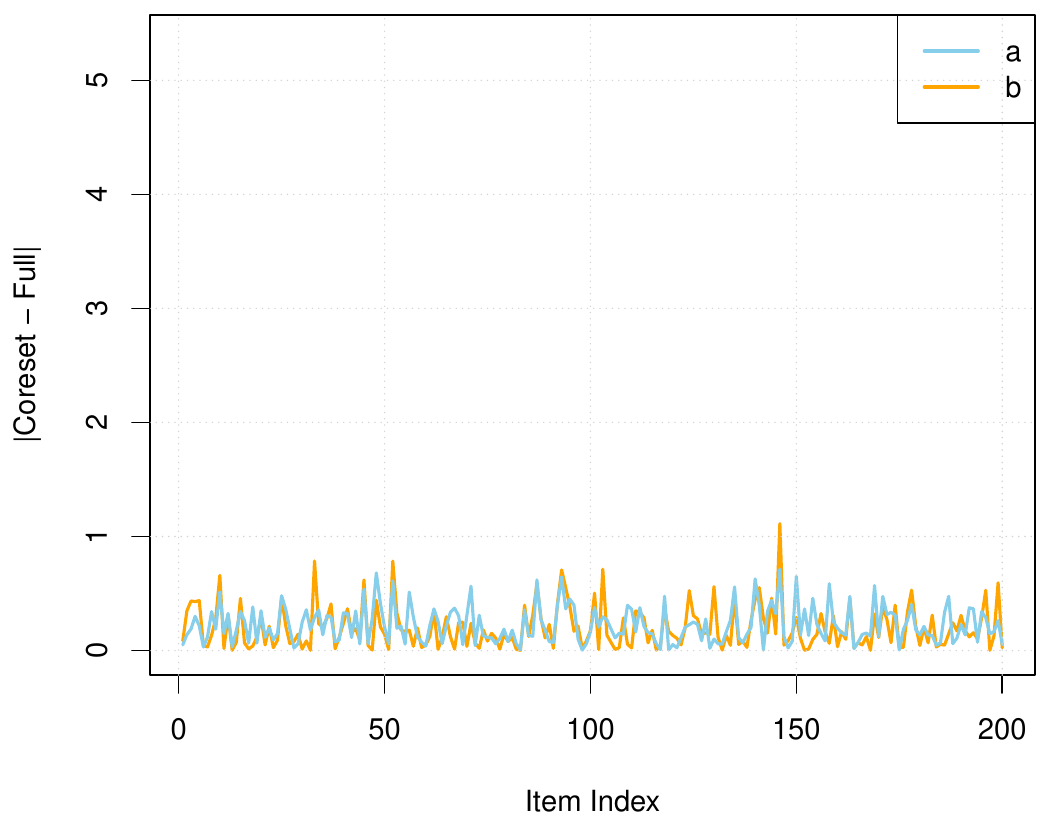}&
\includegraphics[width=0.3\linewidth]{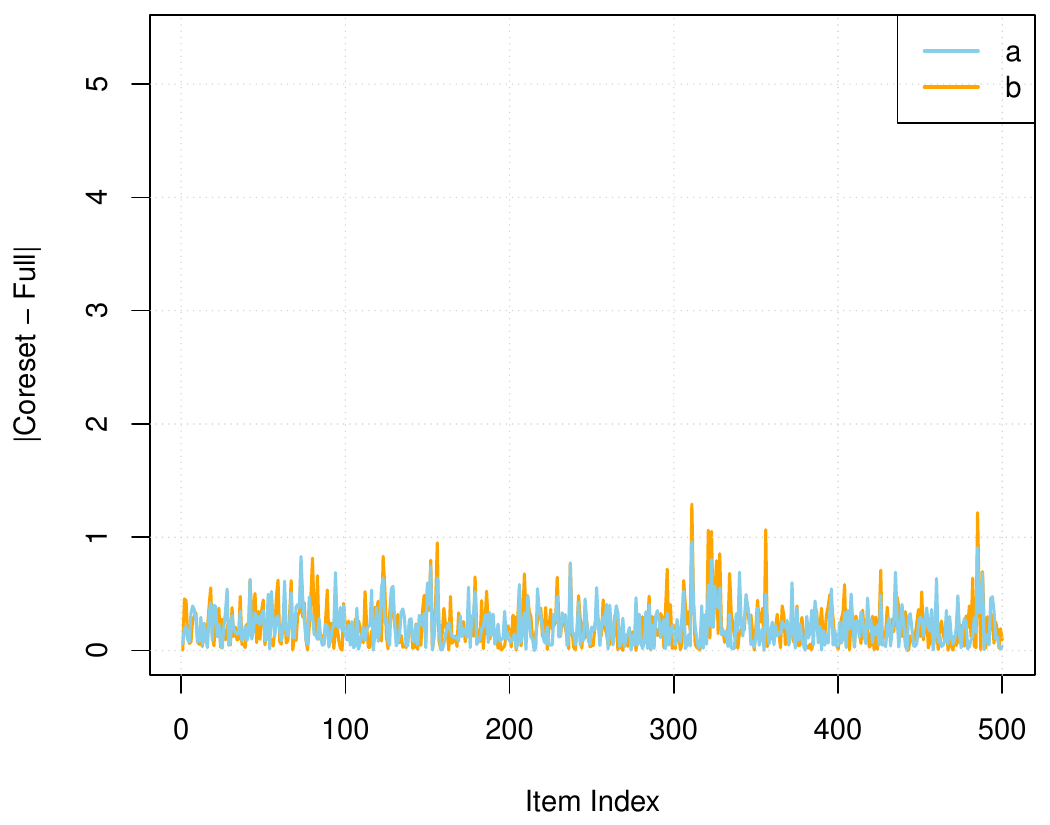}\\
\includegraphics[width=0.3\linewidth]{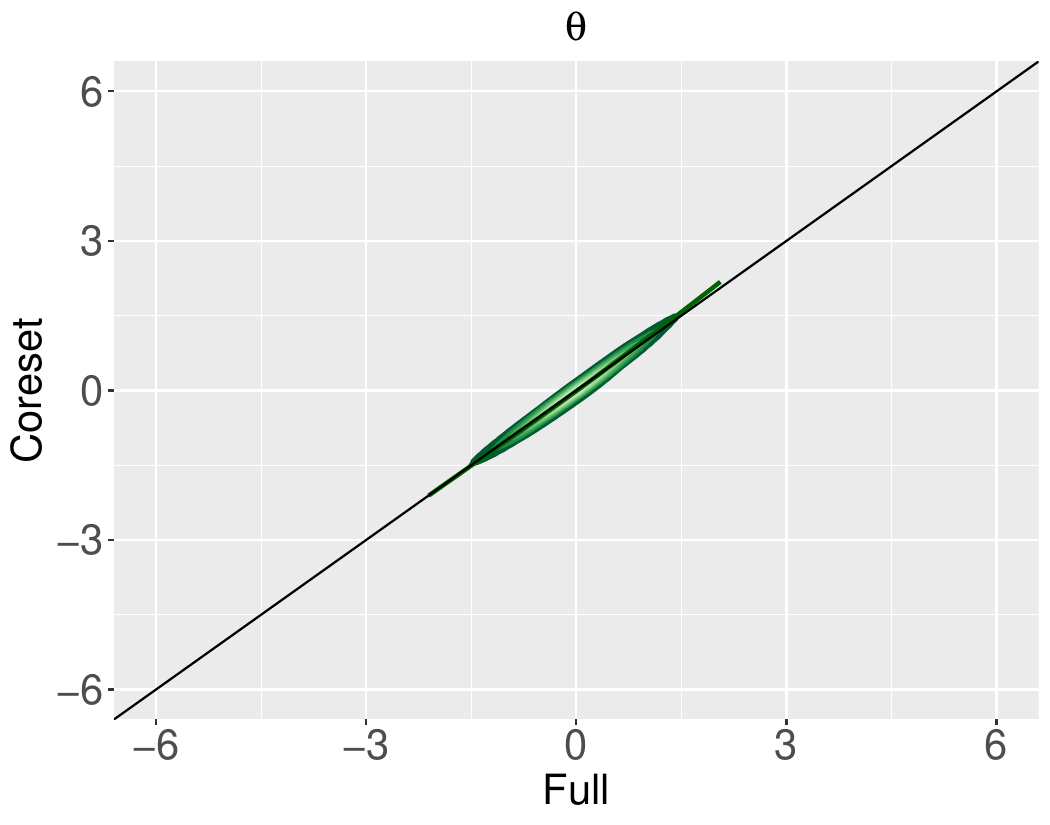}&
\includegraphics[width=0.3\linewidth]{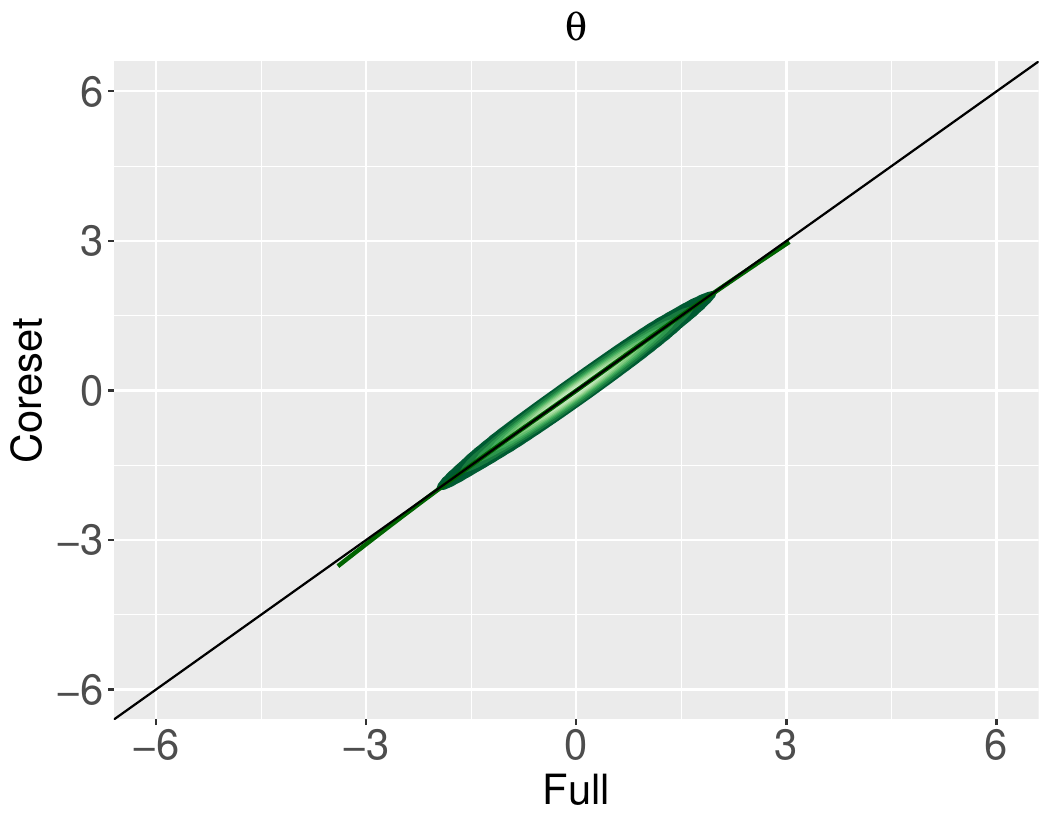}&
\includegraphics[width=0.3\linewidth]{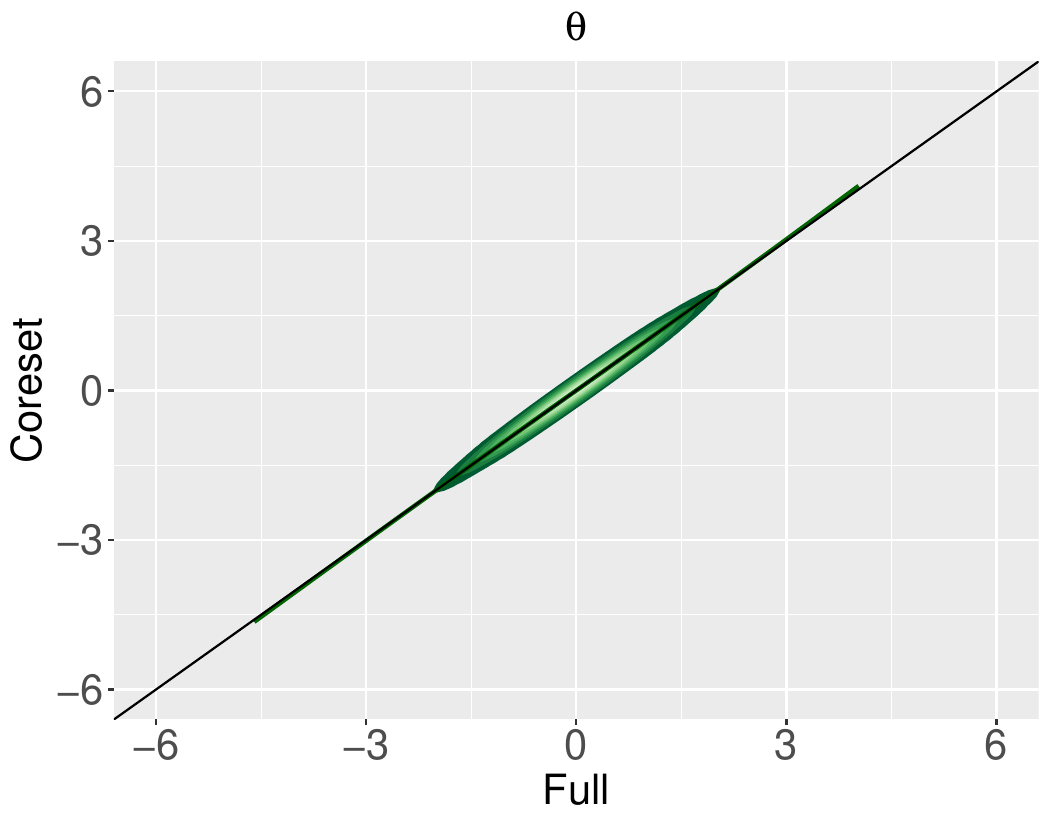}\\
{\tiny{$\mathbf{n=100\,000,m=500,k=500}$}}&{\tiny{$\mathbf{n=100\,000,m=200,k=1\,000}$}}&{\tiny{$\mathbf{n=100\,000,m=500,k=1\,000}$}}
\\
\includegraphics[width=0.3\linewidth]{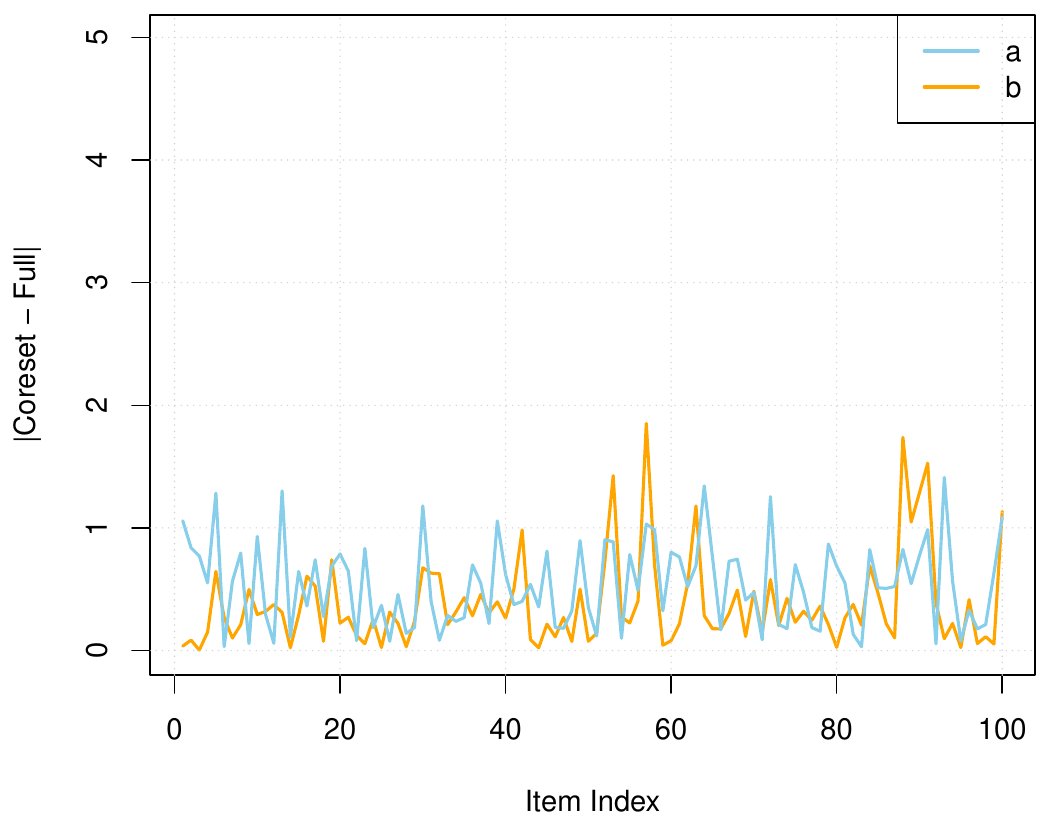}&
\includegraphics[width=0.3\linewidth]{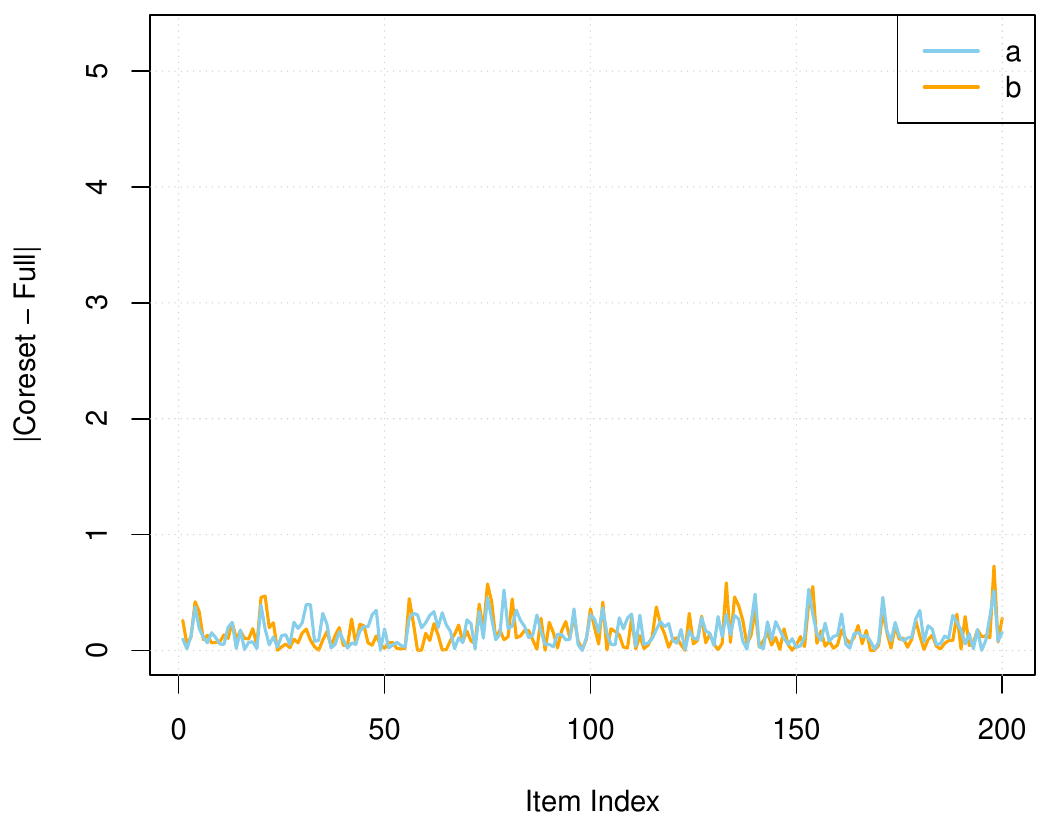}&
\includegraphics[width=0.3\linewidth]{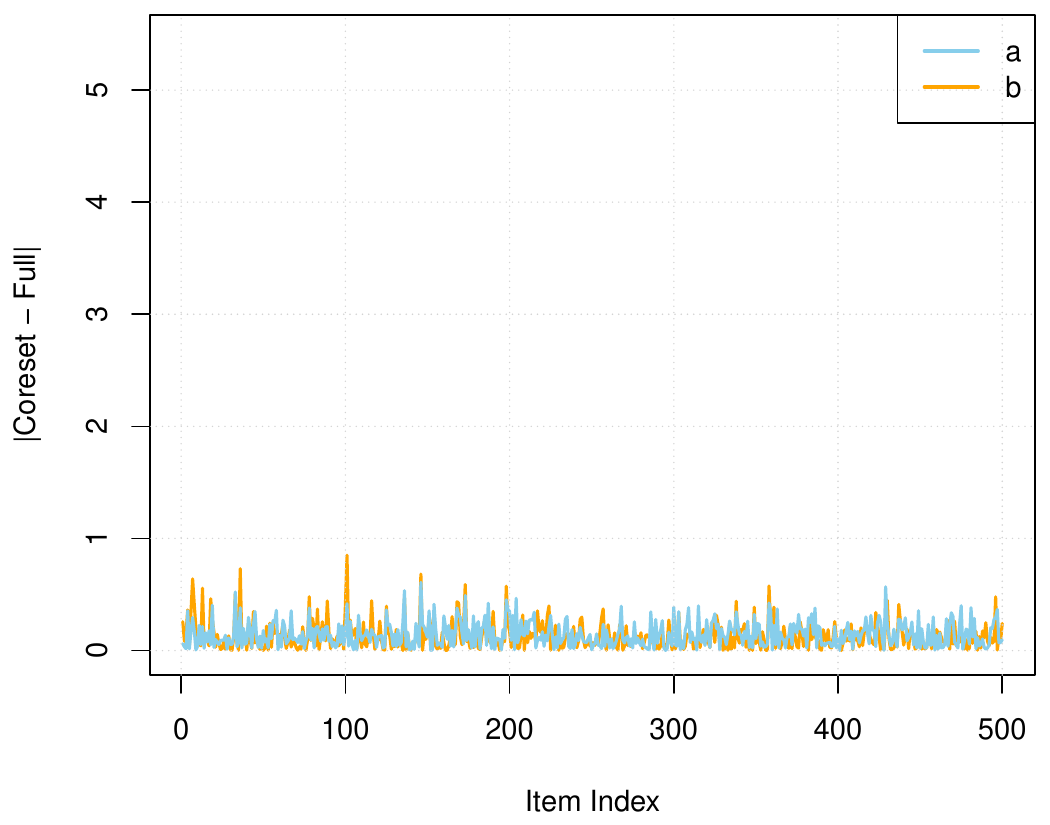}\\
\includegraphics[width=0.3\linewidth]{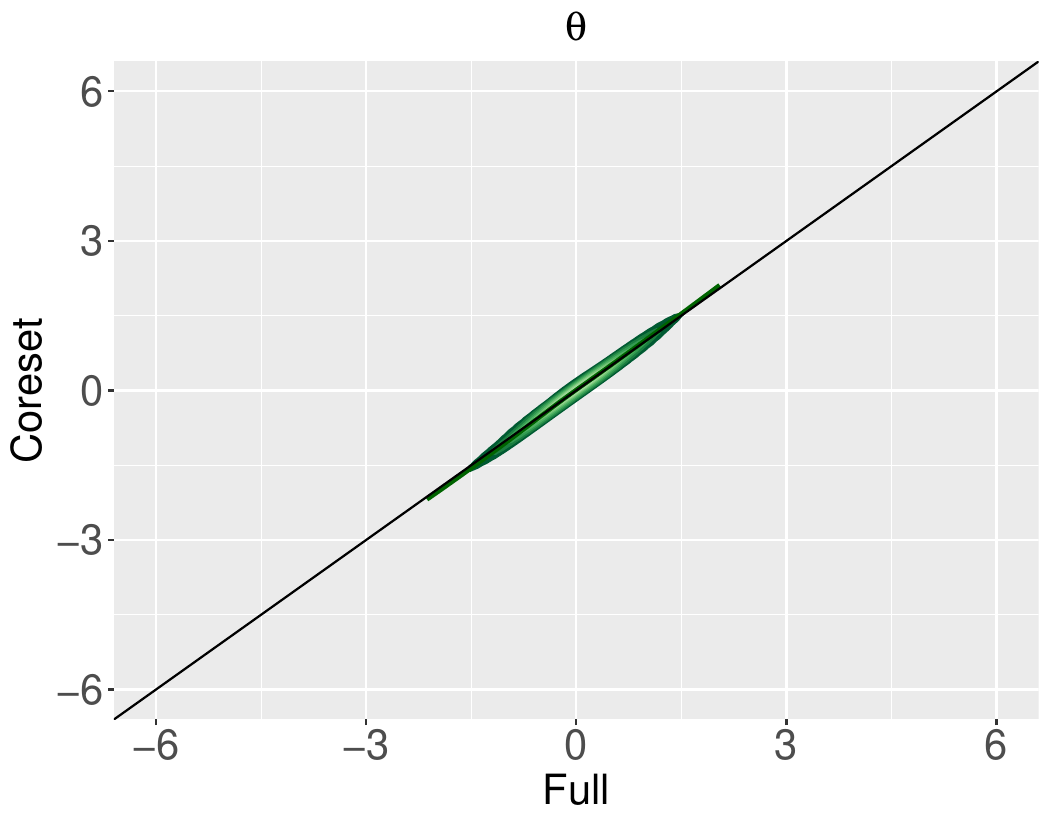}&
\includegraphics[width=0.3\linewidth]{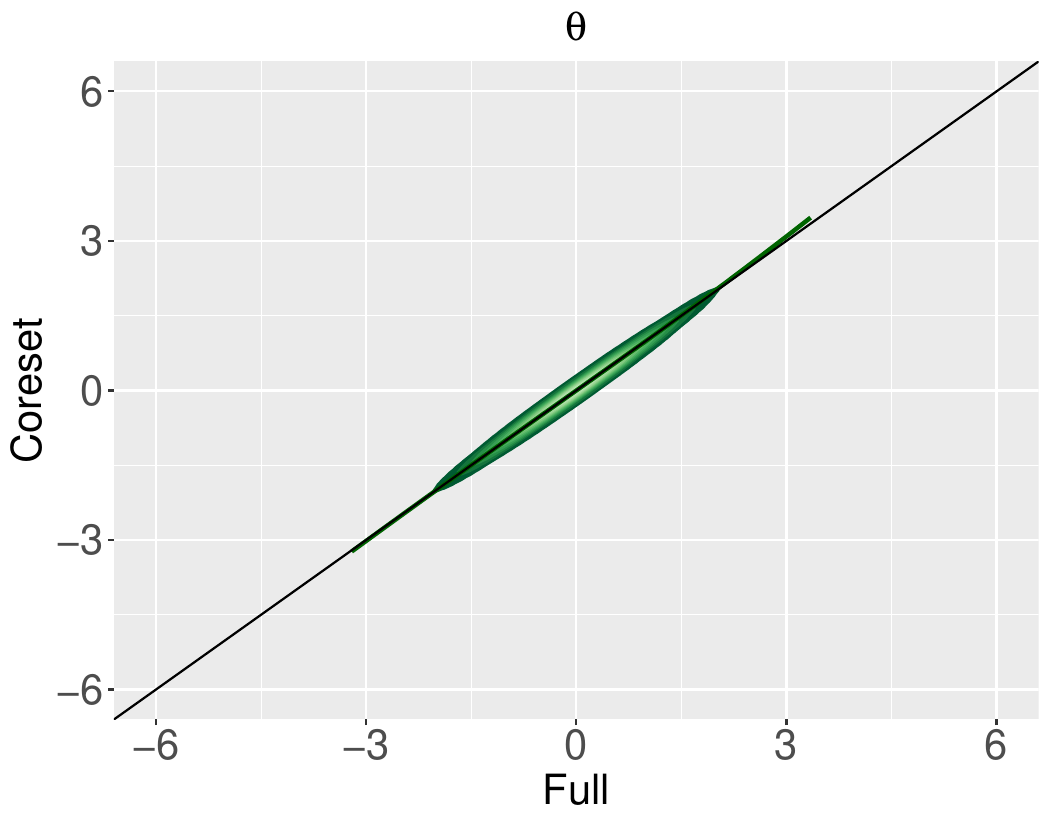}&
\includegraphics[width=0.3\linewidth]{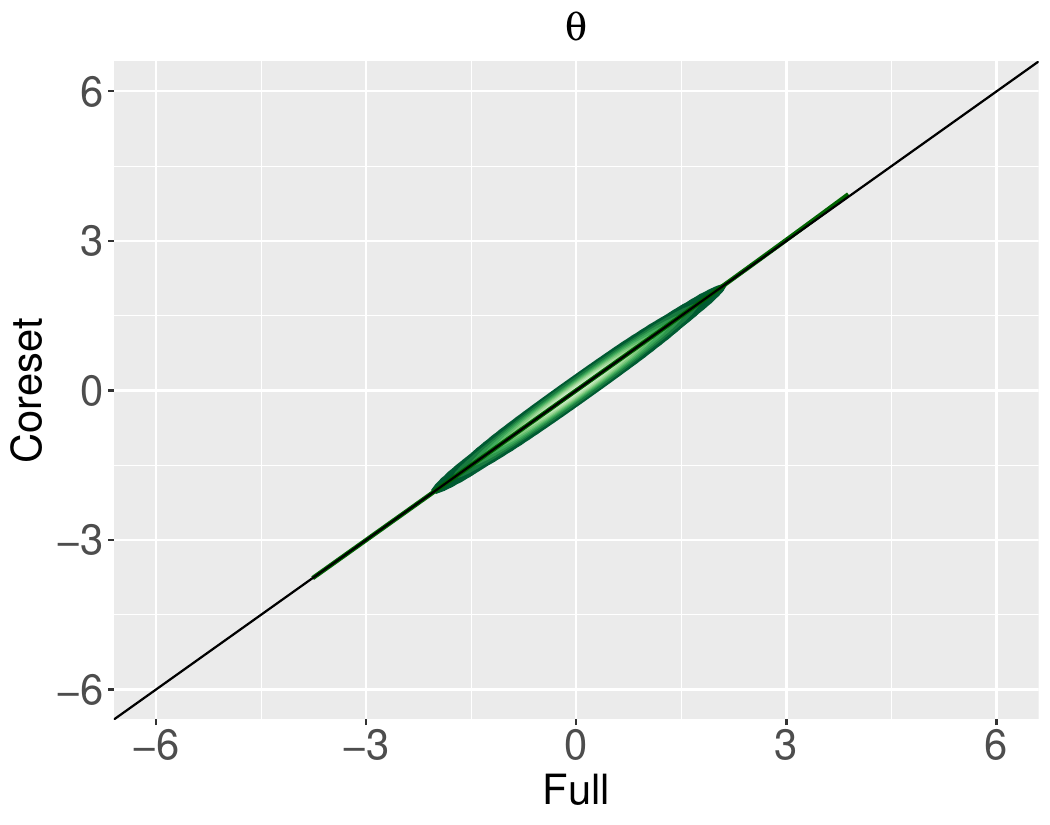}\\
\end{tabular}
\label{fig:param_exp_appendix}
\end{center}

\end{figure*}

\begin{figure*}[hp!]
\caption{2PL Experiments on synthetic data: Parameter estimates for the coresets compared to the full data sets. 
For each experiment the upper figure shows the bias for the item parameters $a$ and $b$. The lower figure shows a kernel density estimate for the ability parameters $\theta$ with a LOESS regression line in dark green.
The ability parameters were standardized to zero mean and unit variance. In all rows, the vertical axis is scaled such as to display $2\,{\mathrm{std.}}$ of the corresponding parameter estimate obtained from the full data set.}
\begin{center}
\begin{tabular}{ccc}
{\tiny{$\mathbf{n=200\,000,m=100,k=1\,000}$}}&{\tiny{$\mathbf{n=200\,000,m=200,k=2\,000}$}}&{\tiny{$\mathbf{n=200\,000,m=500,k=2\,000}$}}
\\
\includegraphics[width=0.3\linewidth]{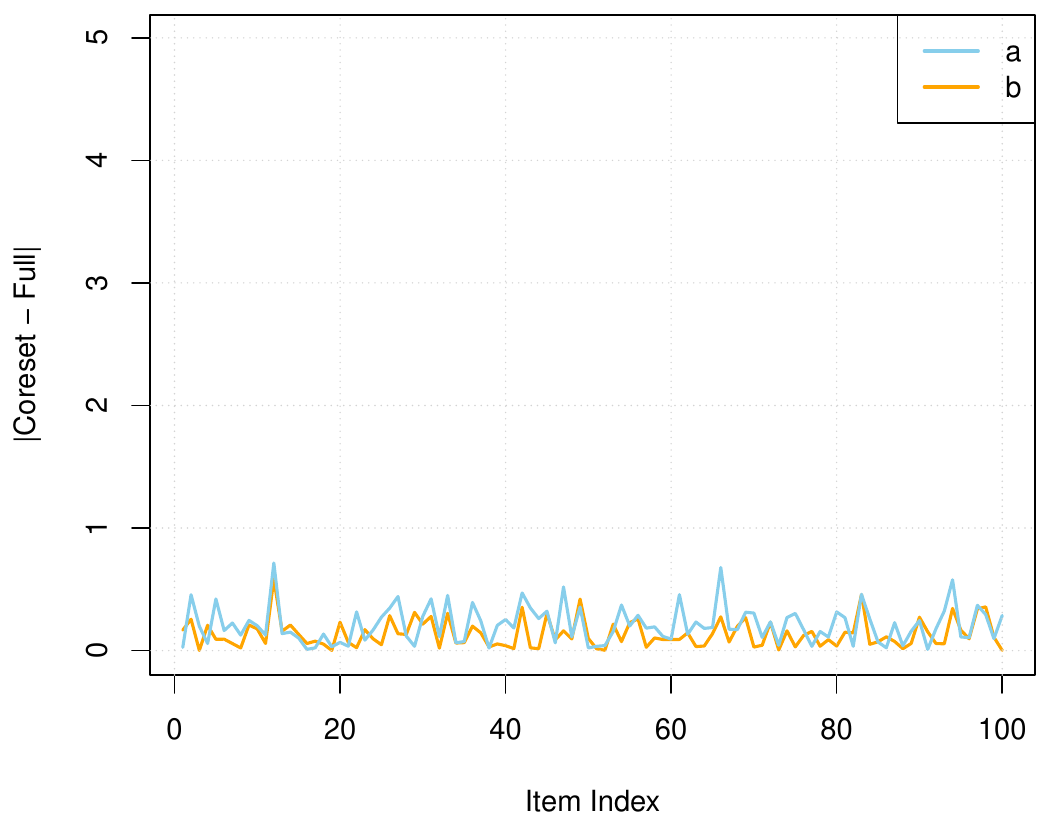}&
\includegraphics[width=0.3\linewidth]{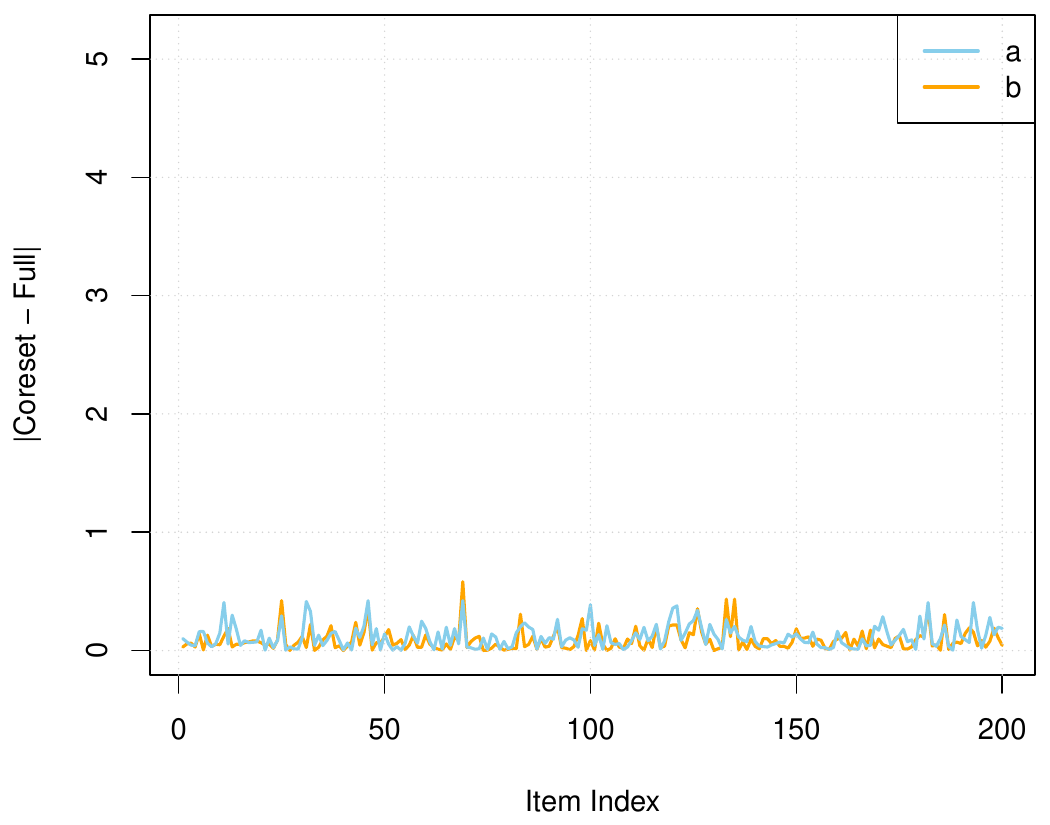}&
\includegraphics[width=0.3\linewidth]{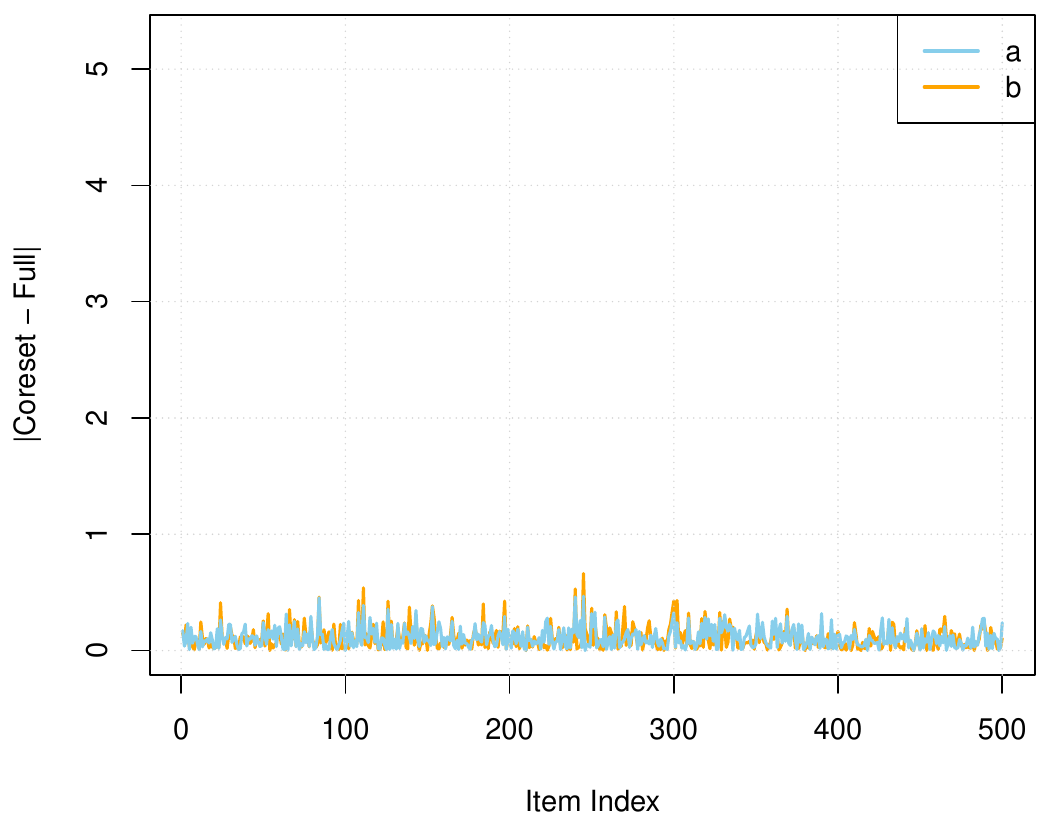}\\
\includegraphics[width=0.3\linewidth]{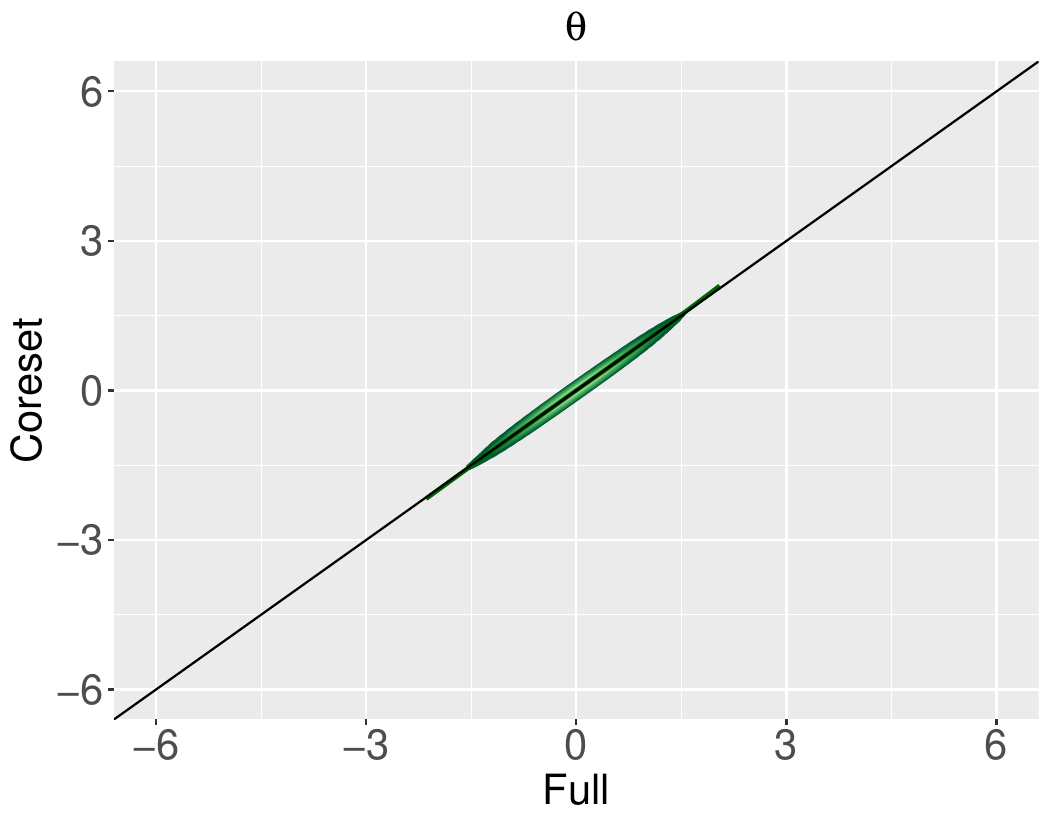}&
\includegraphics[width=0.3\linewidth]{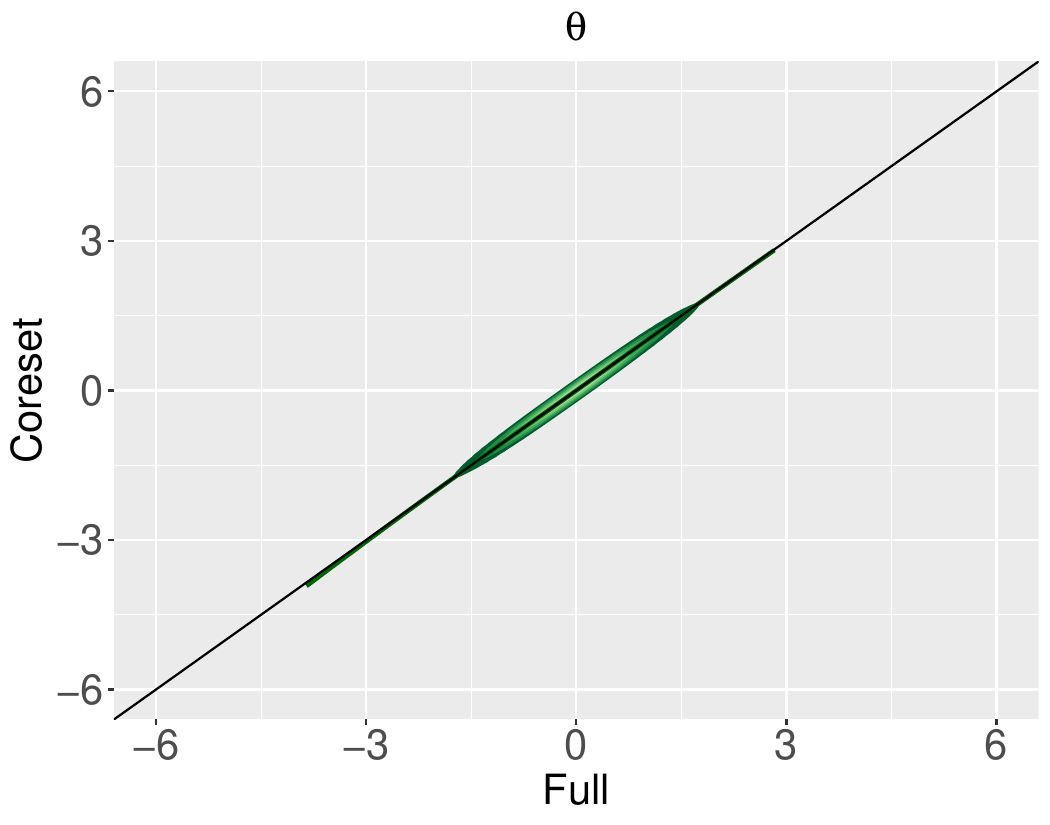}&
\includegraphics[width=0.3\linewidth]{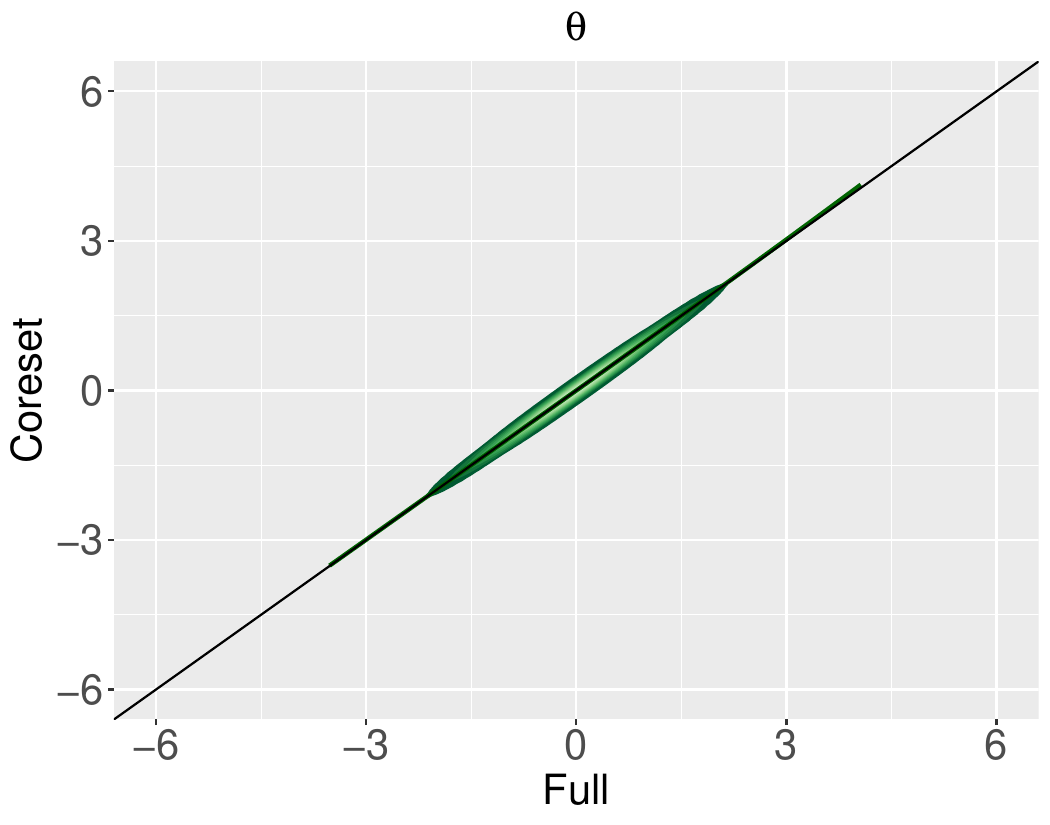}\\
{\tiny{$\mathbf{n=500\,000,m=100,k=5\,000}$}}&{\tiny{$\mathbf{n=500\,000,m=200,k=5\,000}$}}&{\tiny{$\mathbf{n=500\,000,m=500,k=5\,000}$}}
\\
\includegraphics[width=0.3\linewidth]{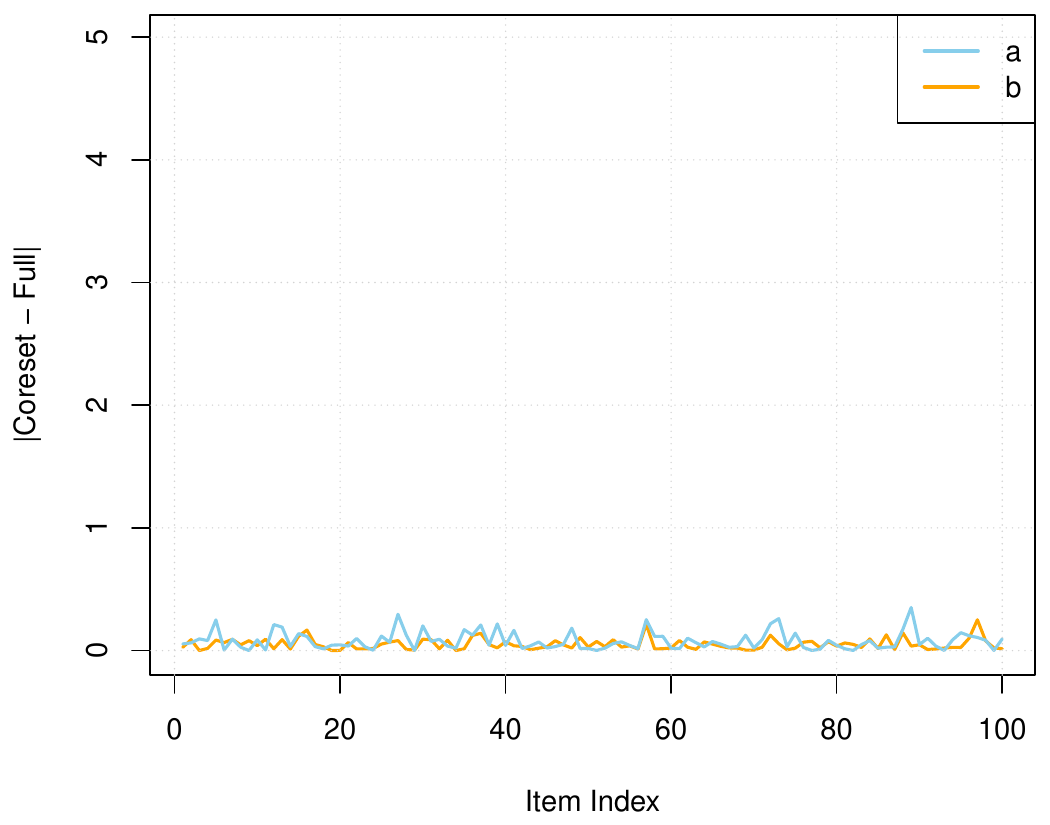}&
\includegraphics[width=0.3\linewidth]{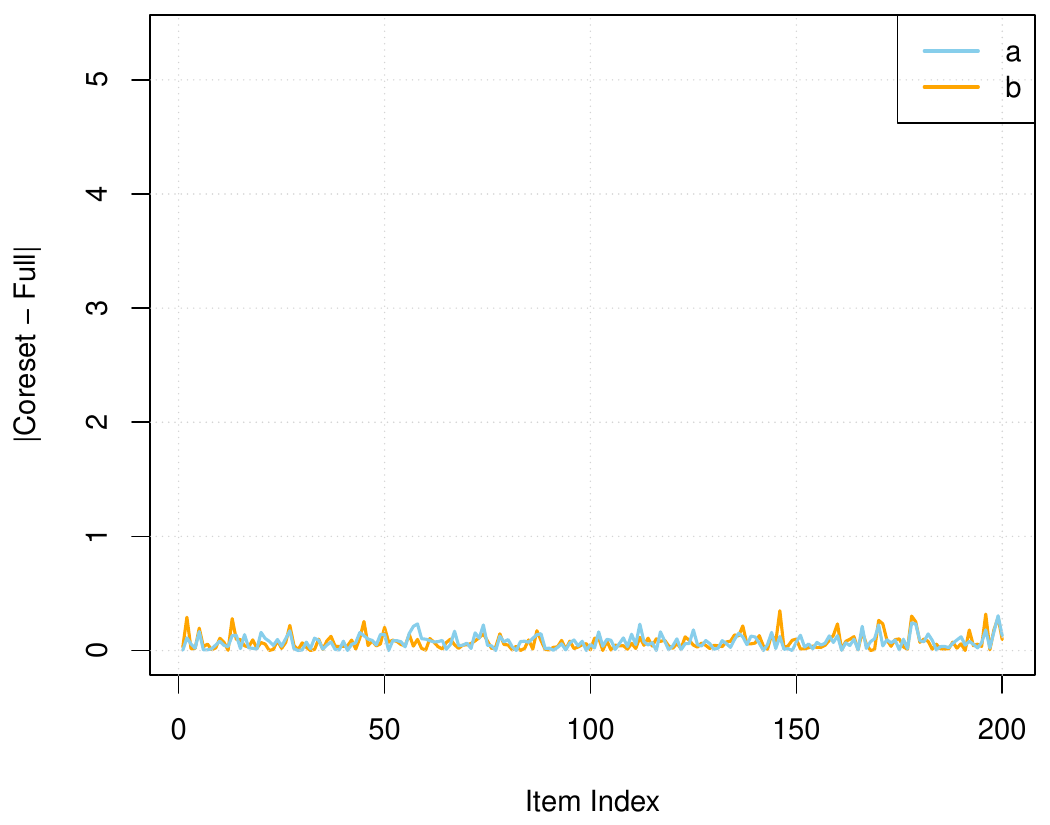}&
\includegraphics[width=0.3\linewidth]{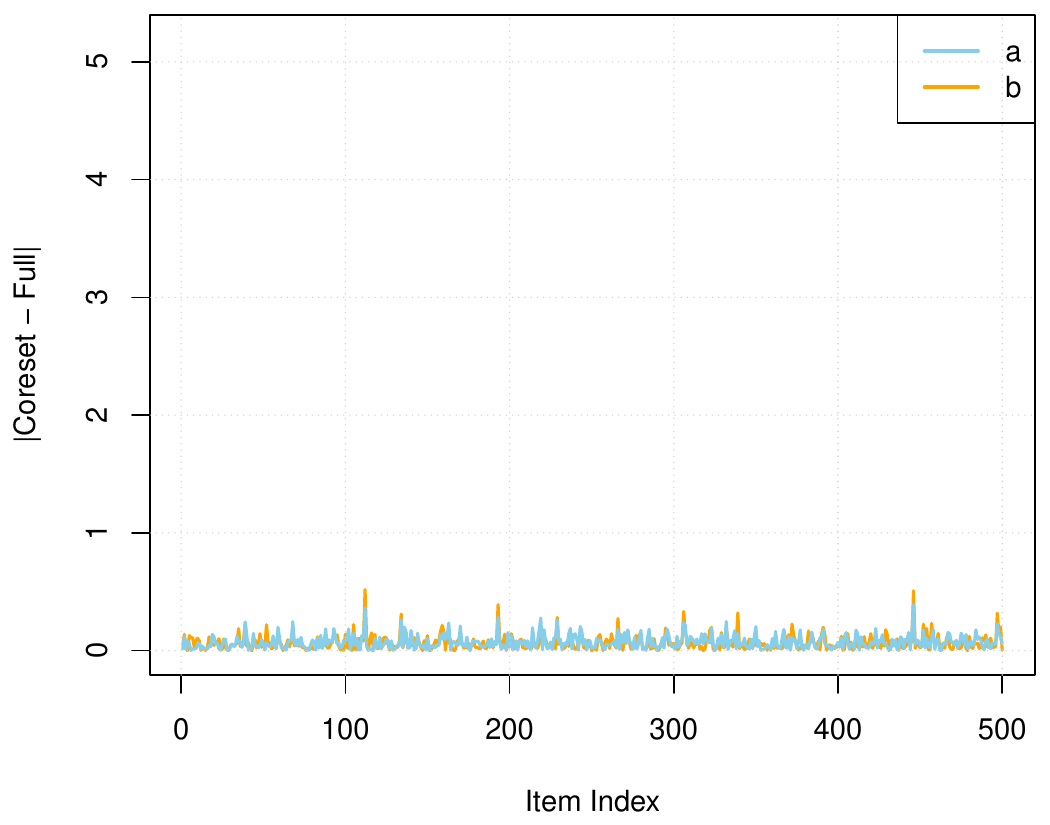}\\
\includegraphics[width=0.3\linewidth]{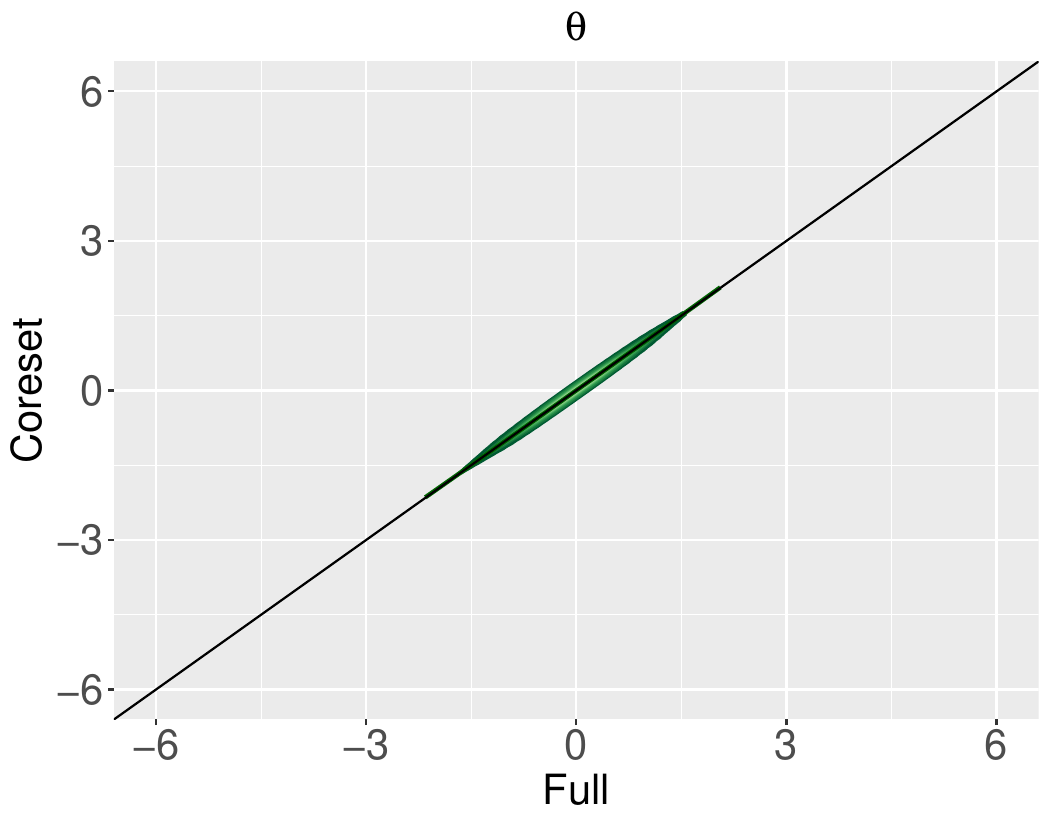}&
\includegraphics[width=0.3\linewidth]{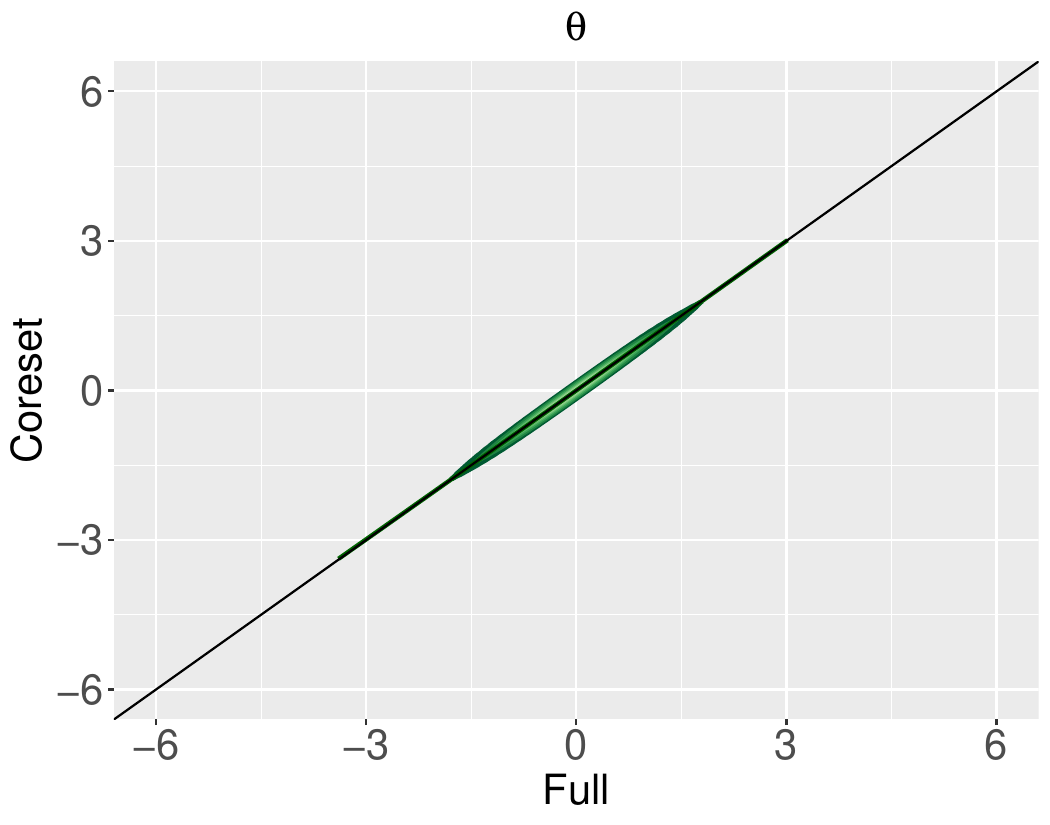}&
\includegraphics[width=0.3\linewidth]{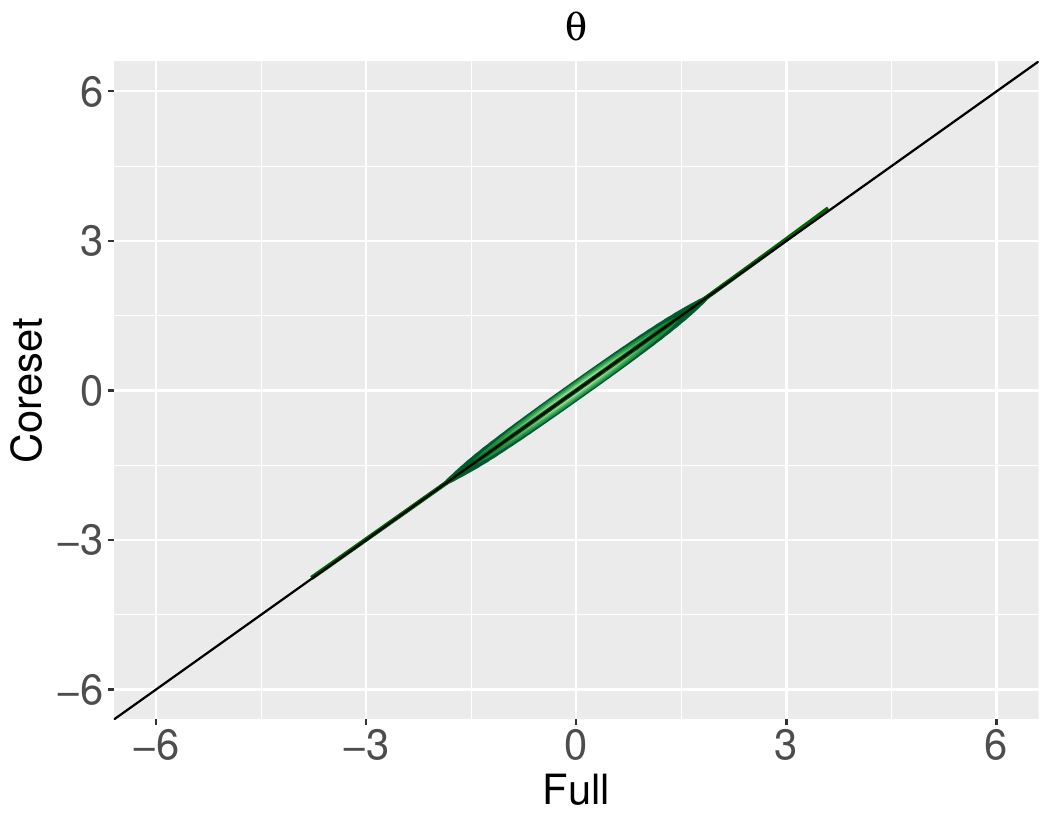}\\
\end{tabular}
\label{fig:param_exp_appendix2}
\end{center}
\end{figure*}

\begin{figure*}[hp!]
\caption{2PL Experiments on synthetic data: Parameter estimates for the coresets compared to the full data set on the largest generated set with $n=500\,000$ and $m=5\,000$. For the experiment the left figure shows the bias for the item parameters $a$ and $b$. The right figure shows a kernel density estimate for the ability parameters $\theta$ with a LOESS regression line in dark green.
The ability parameters were standardized to zero mean and unit variance. The vertical axis is scaled such as to display $2\,{\mathrm{std.}}$ of the corresponding parameter estimate obtained from the full data set.}
\begin{center}
\begin{tabular}{ccc}
{\tiny{$\mathbf{n=500\,000,m=5\,000,k=5\,000}$}}& &
\\
\includegraphics[width=0.3\linewidth]{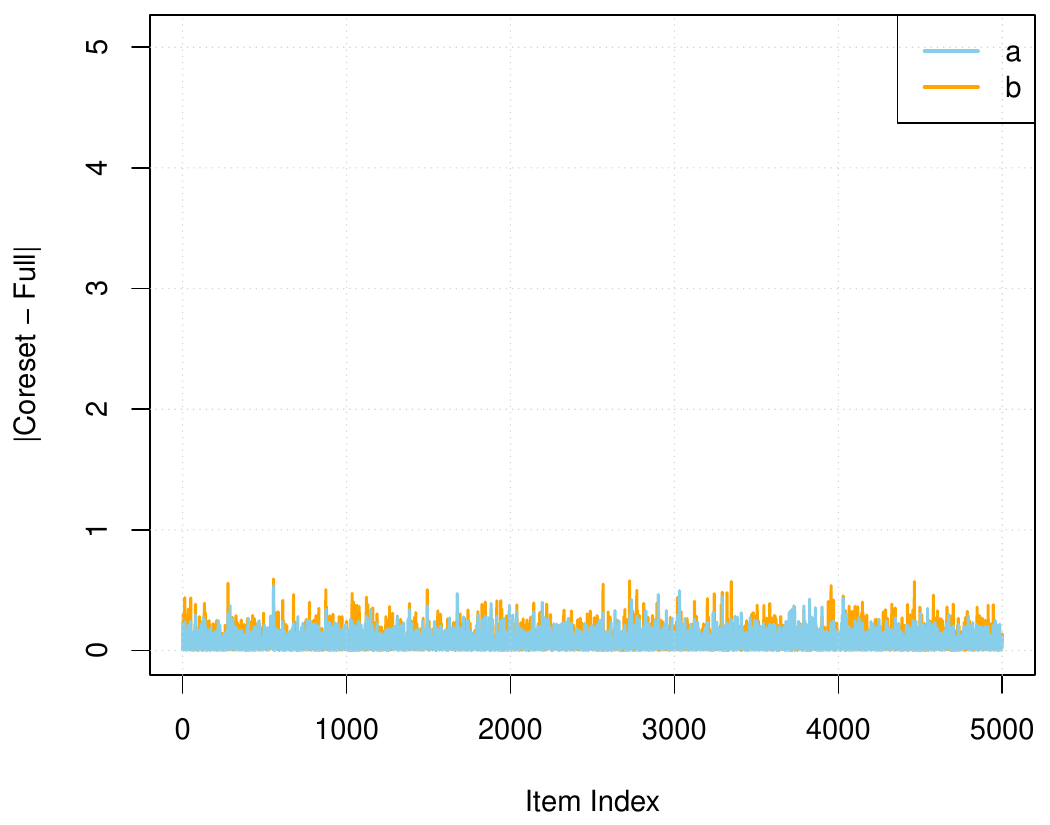}&
\includegraphics[width=0.3\linewidth]{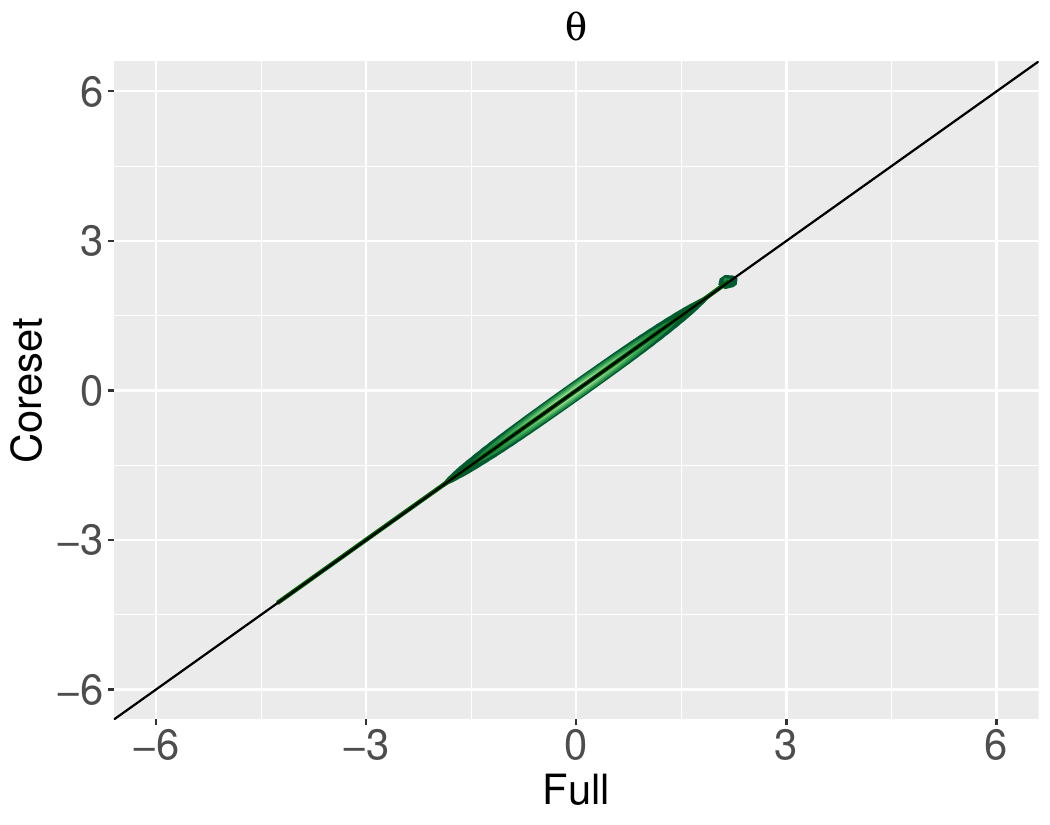}&
 \\
\end{tabular}
\label{fig:param_exp_appendix3}
\end{center}
\end{figure*}

\begin{table*}[hp!]
\caption{2PL Experiments on real world SHARE~\citep{boersch2022survey} and NEPS~\citep{NEPS-SC4} data: 
The quality of the solution found.
Let $f_{\sf full}$ and $f_{{\sf core}(j)}$ be the optimal values of the loss function on the input and on the coreset for the $j$-th repetition, respectively. Let $f_{\sf core} = \min_j f_{{\sf core}(j)}$.
Mean and standard deviation of the relative deviation $|f_{\sf core} - f_{{\sf core}(j)}| / f_{\sf core}$ (in $\%$): 
\textbf{mean dev} and \textbf{std. dev}. Relative error: \textbf{rel. error} $\hat{\varepsilon}=|f_{\sf core} - f_{\sf full}|/f_{\sf full}$ (cf. \lemref{coreset:error:approx}).
Mean Absolute Deviation: $\textbf{mad}(\alpha)=\frac{1}{n}\sum (|a_{\sf full}-a_{\sf core}| + |b_{\sf full}-b_{\sf core}|)$; $\textbf{mad}(\theta)=\frac{1}{m}\sum |\theta_{\sf full}-\theta_{\sf core}|$, evaluated on the parameters that attained the optimal $f_{\sf full}$ and $f_{\sf core}$.
}
	\label{tab:results_appendix3:b}
	\begin{center}
	\begin{tabular}{ c r r r| r r | r | c c}
		\hline
		{\bf data} &
		{$\mathbf n$} & {$\mathbf m$} & {$\mathbf k$} &  {\bf mean dev} & {\bf std. dev} & {\bf rel. error $\hat{\varepsilon}$}  & {\bf $\text{mad}(\alpha)$} & {\bf $\text{mad}(\theta)$}\\ \hline \hline
		SHARE & $138\,997$ & $10$ & $500$ & $5.335$ \% & $2.098$ \% & $0.11347$ & $0.770$ & $0.090$
		\\ \hline
		SHARE & $138\,997$ & $10$ & $1\,000$ & $1.682$ \% & $0.930$ \% & $0.06193$ & $0.307$ & $0.040$
		\\ \hline
		SHARE & $138\,997$ & $10$ & $2\,000$ & $1.251$ \% & $0.820$ \% & $0.04263$ & $0.129$ & $0.015$
		\\ \hline
		SHARE & $138\,997$ & $10$ & $4\,000$ & $0.686$ \% & $0.414$ \% & $0.02791$ & $0.108$ & $0.013$
		\\ \hline
		SHARE & $138\,997$ & $10$ & $6\,000$ & $1.930$ \% & $0.611$ \% & $0.03546$ & $0.095$ & $0.007$
		\\ \hline
		SHARE & $138\,997$ & $10$ & $8\,000$ & $0.600$ \% & $0.252$ \% & $0.01935$ & $0.061$ & $0.007$
		\\ \hline
		SHARE & $138\,997$ & $10$ & $10\,000$ & $1.557$ \% & $0.407$ \% & $0.02713$ & $0.092$ & $0.014$
		\\ \hline
		SHARE & $138\,997$ & $10$ & $20\,000$ & $0.356$ \% & $0.168$ \% & $0.01415$ & $0.045$ & $0.003$
		\\ \hline\hline
		NEPS & $11\,532$ & $88$ & $100$ & $4.363$ \% & $2.176$ \% & $0.09335$ & $1.477$ & $0.171$
		\\ \hline
		NEPS & $11\,532$ & $88$ & $200$ & $3.324$ \% & $1.480$ \% & $0.07134$ & $0.930$ & $0.142$
		\\ \hline
		NEPS & $11\,532$ & $88$ & $500$ & $1.969$ \% & $0.657$ \% & $0.03795$ & $0.499$ & $0.075$
		\\ \hline
		NEPS & $11\,532$ & $88$ & $750$ & $1.478$ \% & $0.524$ \% & $0.02675$ & $0.432$ & $0.062$
		\\ \hline
		NEPS & $11\,532$ & $88$ & $1\,000$ & $1.191$ \% & $0.395$ \% & $0.02007$ & $0.320$ & $0.045$
		\\ \hline
		NEPS & $11\,532$ & $88$ & $2\,000$ & $0.352$ \% & $0.120$ \% & $0.00506$ & $0.182$ & $0.026$
		\\ \hline
		NEPS & $11\,532$ & $88$ & $5\,000$ & $0.220$ \% & $0.169$ \% & $0.00147$ & $0.101$ & $0.015$
		\\ \hline
		NEPS & $11\,532$ & $88$ & $10\,000$ & $0.301$ \% & $0.200$ \% & $0.00094$ & $0.071$ & $0.012$
		\\ \hline\hline
	\end{tabular}
	\end{center}
\end{table*}

\begin{table*}[hp!]
\caption{2PL Experiments on real world SHARE~\citep{boersch2022survey} and NEPS~\citep{NEPS-SC4} data: The means and standard deviations (std.) of running times, taken across $20$ repetitions. 
{In each repetition, the running time (in minutes) of 50 iterations of the main loop was measured}
per data set for different configurations of the data dimensions: the number of items $m$, the number of examinees $n$, and the coreset size $k$. The (relative) gain is defined as  $(1-\mathrm{mean}_\mathrm{coreset}/\mathrm{mean}_\mathrm{full})\cdot 100$ \%.
}
	\label{tab:results_appendix3}
	\begin{center}
	\begin{tabular}{ c r r r| r r| r r| r }
    &\multicolumn{3}{c}{}& \multicolumn{2}{c}{{{\bf Full data} (min)}}& \multicolumn{2}{c}{{{\bf Coresets} (min)}} & \multicolumn{1}{c}{ } \\
		\hline
		{\bf data} &
		{$\mathbf n$} & {$\mathbf m$} & {$\mathbf k$} &  {\bf mean} & {\bf std.} & {\bf mean} & {\bf std.} & {\bf gain} \\ \hline \hline
        SHARE & $138\,997$ & $10$ & $500$ & $28.853$ & $1.618$ & $30.436$ & $1.451$ & $-5.484$ \% 
		\\ \hline
		SHARE & $138\,997$ & $10$ & $1\,000$ & $28.853$ & $1.618$ & $29.649$ & $1.375$ & $-2.758$ \% 
		\\ \hline
		SHARE & $138\,997$ & $10$ & $2\,000$ & $28.853$ & $1.618$ & $28.578$ & $0.195$ & $0.953$ \%
		\\ \hline
		SHARE & $138\,997$ & $10$ & $4\,000$ & $28.853$ & $1.618$ & $27.861$ & $0.070$ & $3.439$ \%
		\\ \hline
		SHARE & $138\,997$ & $10$ & $6\,000$ & $28.853$ & $1.618$ & $27.746$ & $0.080$ & $3.837$ \% 
		\\ \hline
		SHARE & $138\,997$ & $10$ & $8\,000$ & $28.853$ & $1.618$ & $27.637$ & $0.085$ & $4.216$ \%
		\\ \hline
		SHARE & $138\,997$ & $10$ & $10\,000$ & $28.853$ & $1.618$ & $27.560$ & $0.082$ & $4.481$ \% 
		\\ \hline
		SHARE & $138\,997$ & $10$ & $20\,000$ & $28.853$ & $1.618$ & $27.525$ & $0.085$ & $4.603$ \%
		\\ \hline\hline
		NEPS & $11\,532$ & $88$ & $100$ & $5.968$ & $0.061$ & $4.020$ & $0.010$ & $32.640$ \%
		\\ \hline
		NEPS & $11\,532$ & $88$ & $200$ & $5.968$ & $0.061$ & $4.113$ & $0.257$ & $31.084$ \%
		\\ \hline
		NEPS & $11\,532$ & $88$ & $500$ & $5.968$ & $0.061$ & $4.402$ & $0.333$ & $26.237$ \% 
		\\ \hline
		NEPS & $11\,532$ & $88$ & $750$ & $5.968$ & $0.061$ & $4.036$ & $0.014$ & $32.373$ \% 
		\\ \hline
		NEPS & $11\,532$ & $88$ & $1\,000$ & $5.968$ & $0.061$ & $4.009$ & $0.016$ & $32.829$ \%
		\\ \hline
		NEPS & $11\,532$ & $88$ & $2\,000$ & $5.968$ & $0.061$ & $3.940$ & $0.057$ & $33.983$ \% 
		\\ \hline
		NEPS & $11\,532$ & $88$ & $5\,000$ & $5.968$ & $0.061$ & $4.779$ & $0.105$ & $19.920$ \%
		\\ \hline
		NEPS & $11\,532$ & $88$ & $10\,000$ & $5.968$ & $0.061$ & $5.849$ & $0.064$ & $2.003$ \%
		\\ \hline\hline
	\end{tabular}
	\end{center}
\end{table*}

\begin{figure*}[hp!]
\caption{2PL Experiments on the real world SHARE data~\citep{boersch2022survey}.  Parameter estimates for the coresets compared to the full data sets. 
For each experiment the upper figure shows the bias for the item parameters $a$ and $b$. The lower figure shows a kernel density estimate for the ability parameters $\theta$ with a LOESS regression line in dark green.
The ability parameters were standardized to zero mean and unit variance. In all rows, the vertical axis is scaled such as to display $2\,{\mathrm{std.}}$ of the corresponding parameter estimate obtained from the full data set.}
\begin{center}
\begin{tabular}{ccc}
{\tiny{$\mathbf{n=138\,997,m=10,k=500}$}}&{\tiny{$\mathbf{n=138\,997,m=10,k=1\,000}$}}&{\tiny{$\mathbf{n=138\,997,m=10,k=2\,000}$}}
\\
\includegraphics[width=0.3\linewidth]{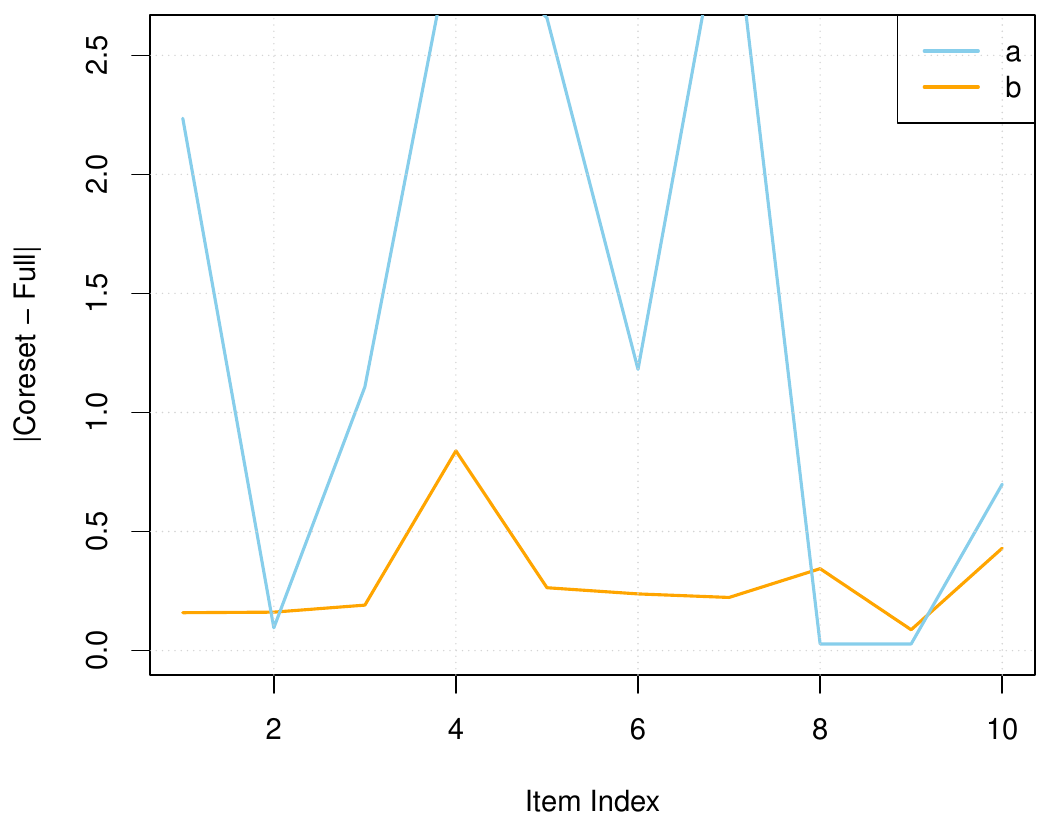}&
\includegraphics[width=0.3\linewidth]{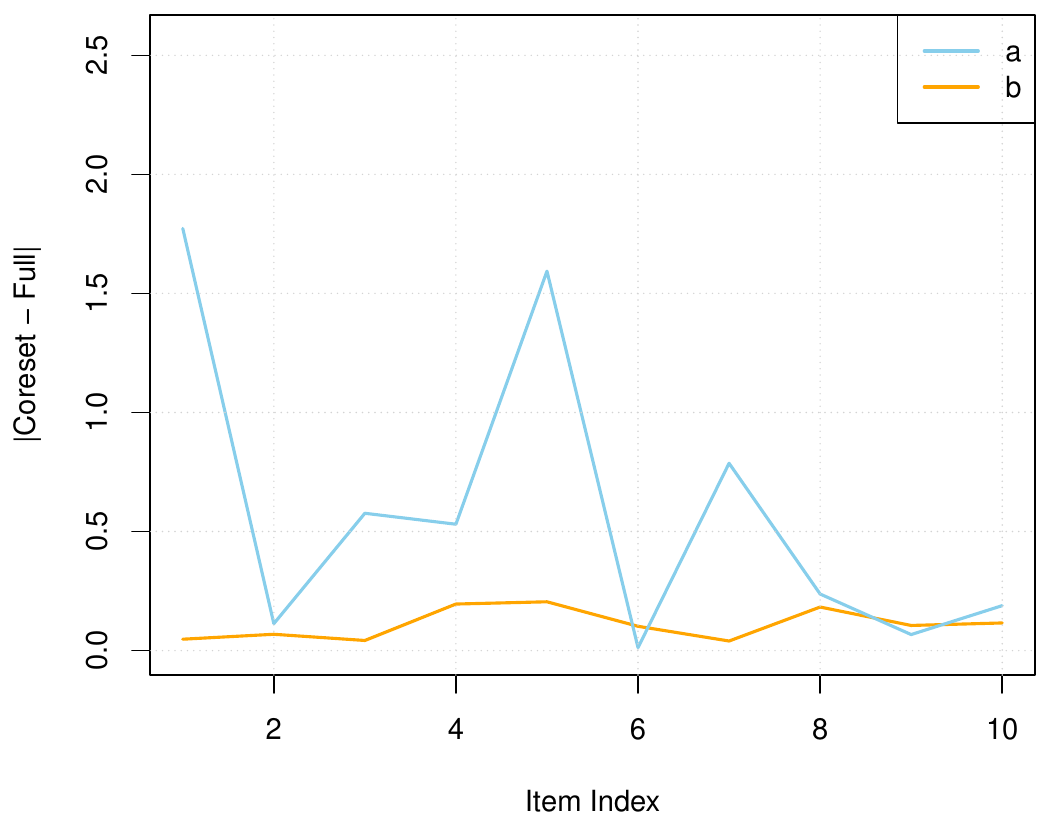}&
\includegraphics[width=0.3\linewidth]{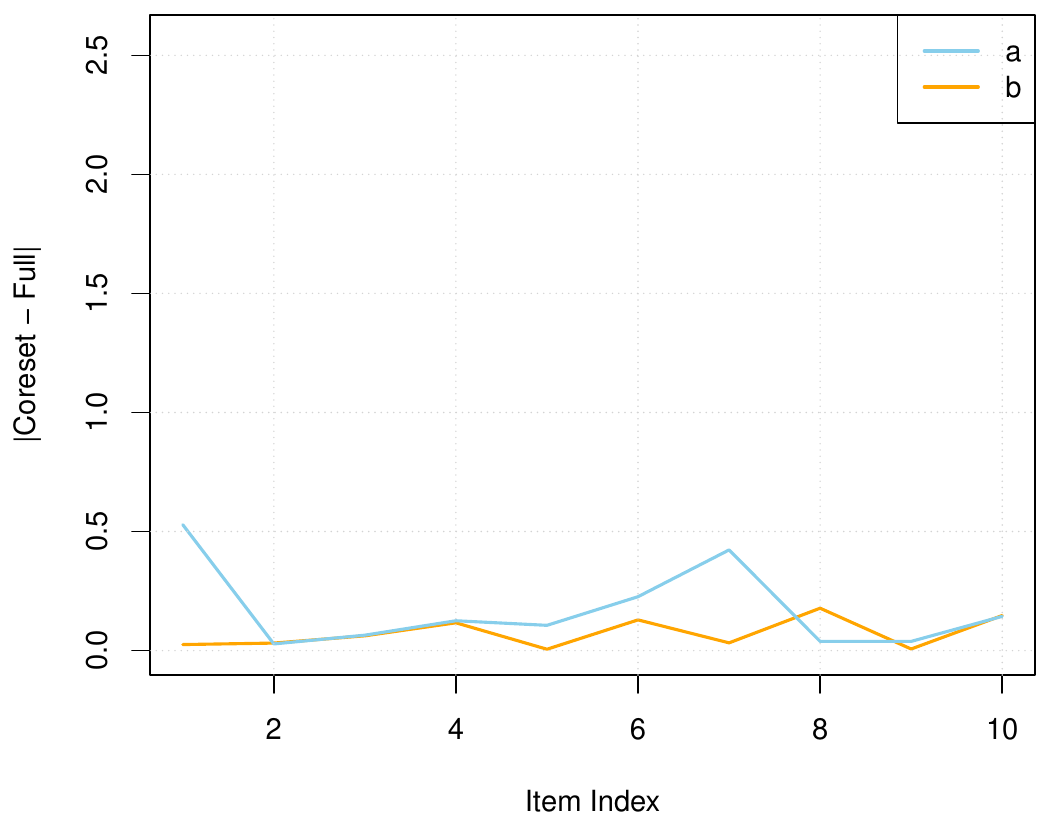}\\
\includegraphics[width=0.3\linewidth]{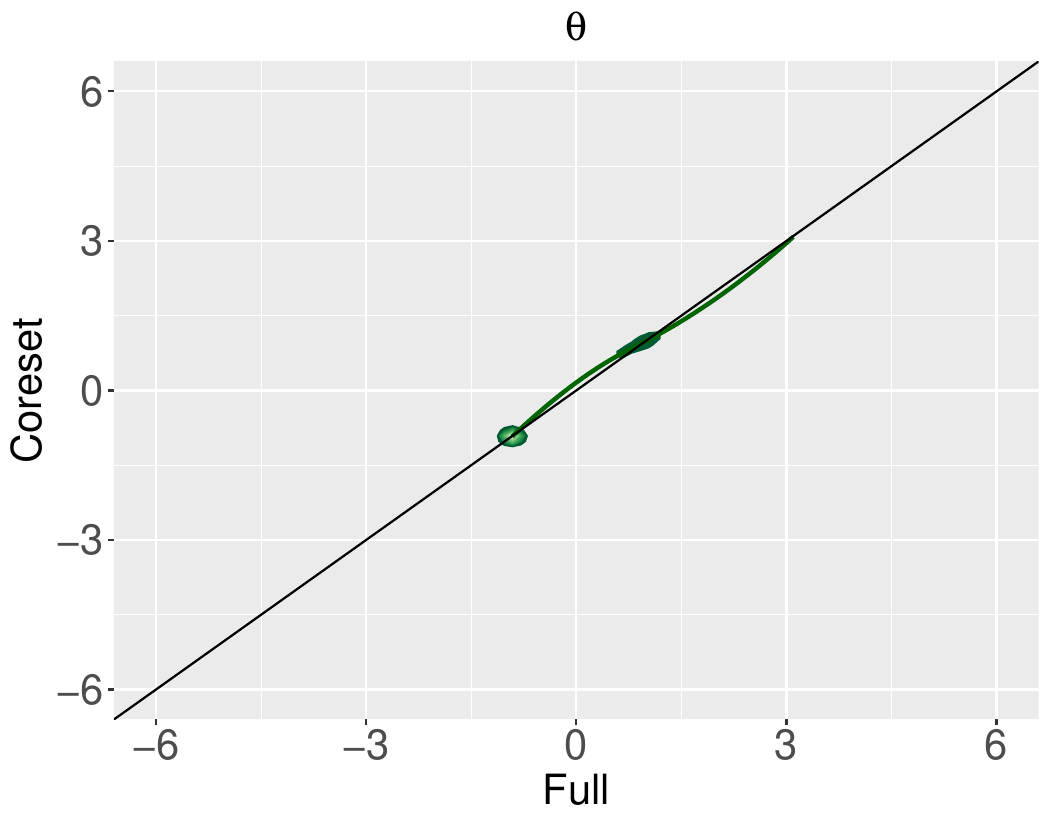}&
\includegraphics[width=0.3\linewidth]{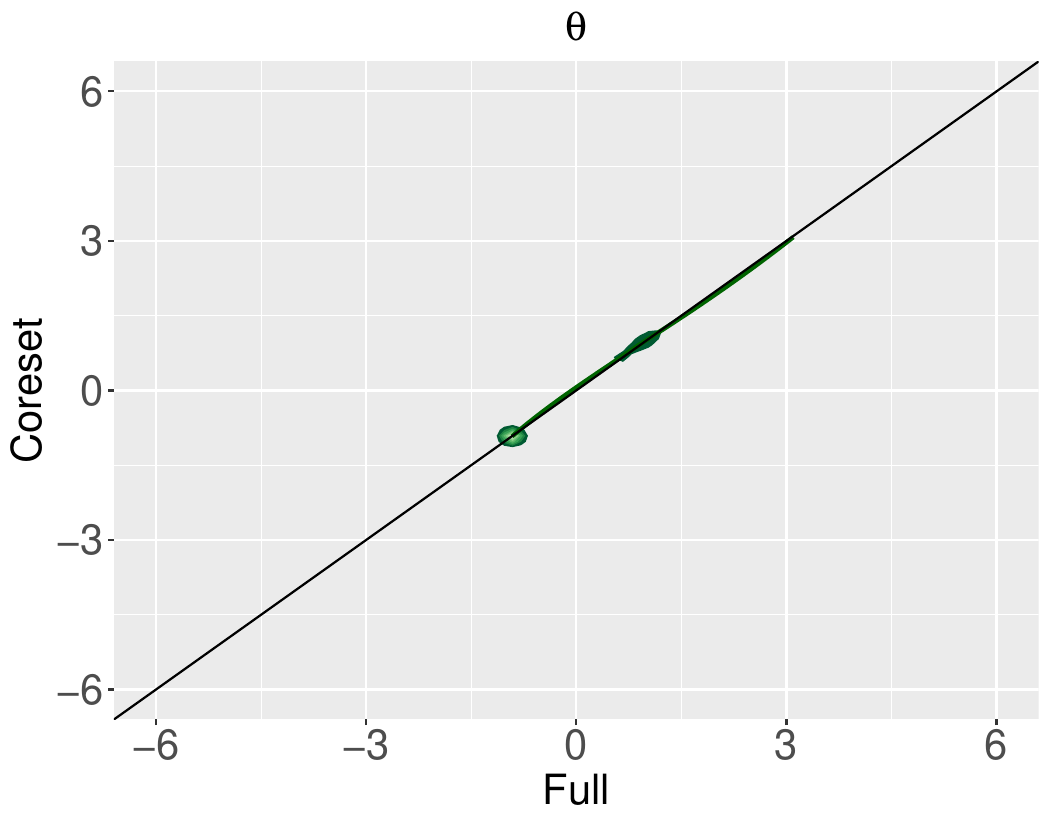}&
\includegraphics[width=0.3\linewidth]{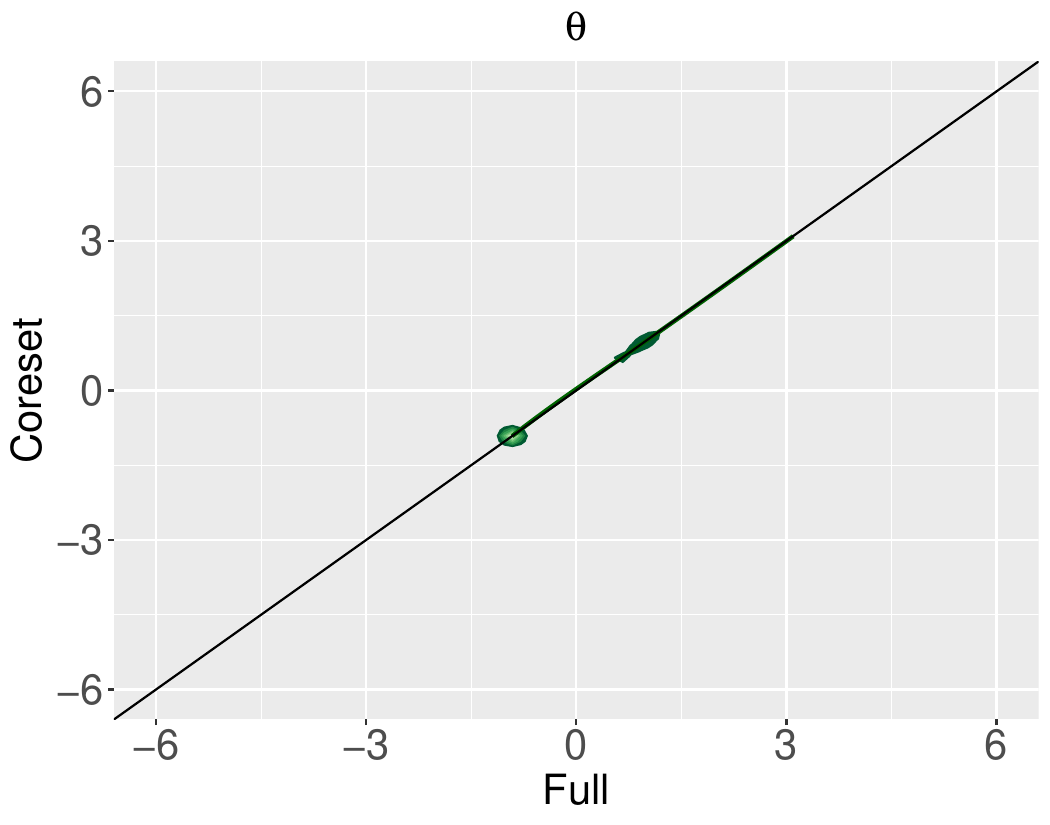}\\
\end{tabular}
\label{fig:param_exp_appendix4}
\end{center}
\end{figure*}

\begin{figure*}[hp!]
\caption{2PL Experiments on the real world SHARE data~\citep{boersch2022survey}.  Parameter estimates for the coresets compared to the full data sets. 
For each experiment the upper figure shows the bias for the item parameters $a$ and $b$. The lower figure shows a kernel density estimate for the ability parameters $\theta$ with a LOESS regression line in dark green.
The ability parameters were standardized to zero mean and unit variance. In all rows, the vertical axis is scaled such as to display $2\,{\mathrm{std.}}$ of the corresponding parameter estimate obtained from the full data set.}
\begin{center}
\begin{tabular}{ccc}
{\tiny{$\mathbf{n=138\,997,m=10,k=4\,000}$}}&{\tiny{$\mathbf{n=138\,997,m=10,k=6\,000}$}}&{\tiny{$\mathbf{n=138\,997,m=10,k=8\,000}$}}
\\
\includegraphics[width=0.3\linewidth]{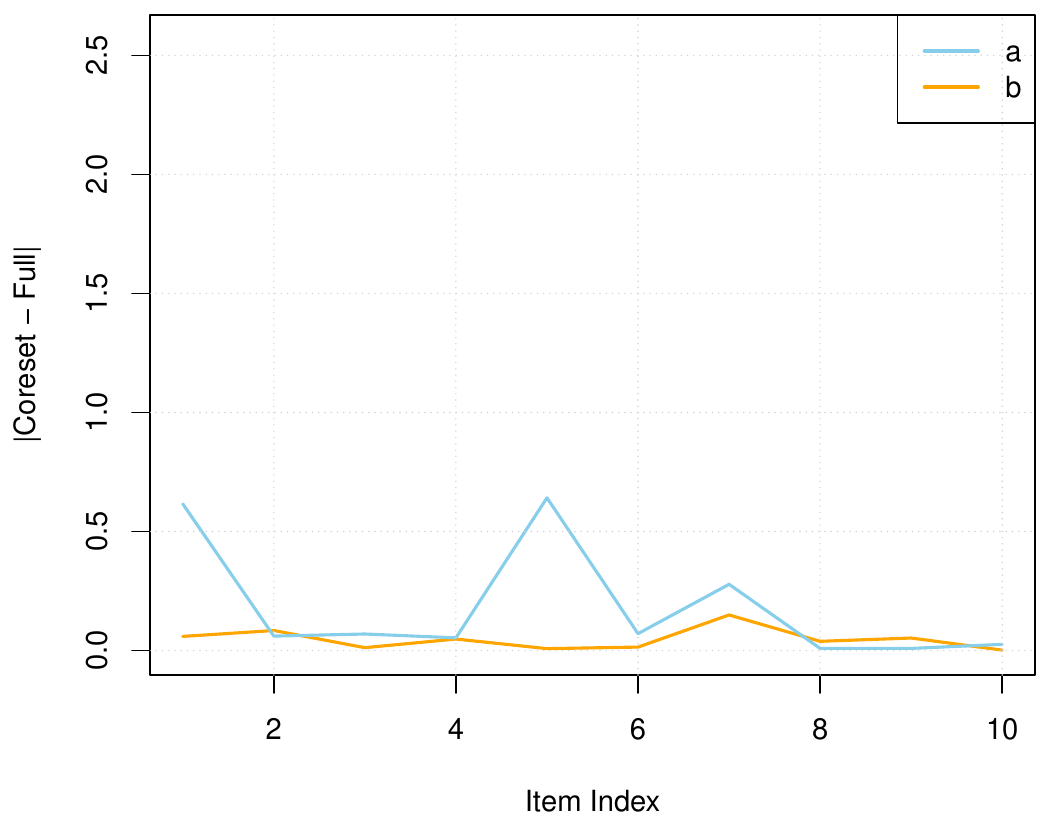}&
\includegraphics[width=0.3\linewidth]{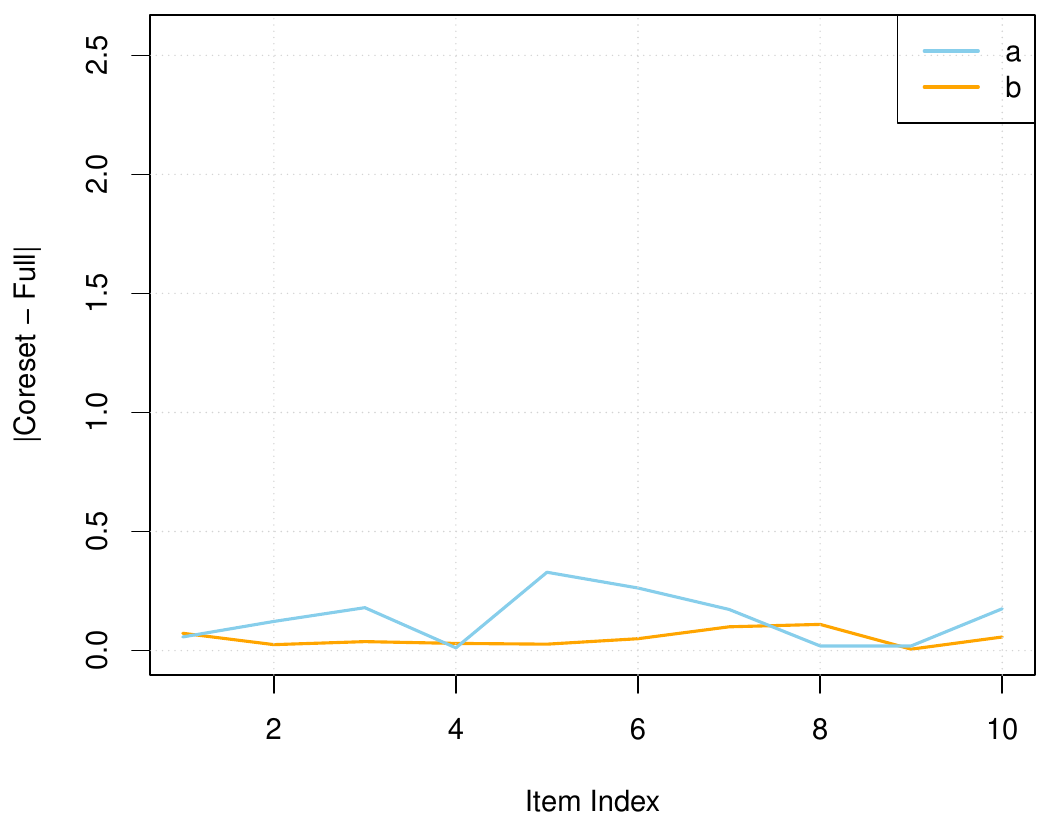}&
\includegraphics[width=0.3\linewidth]{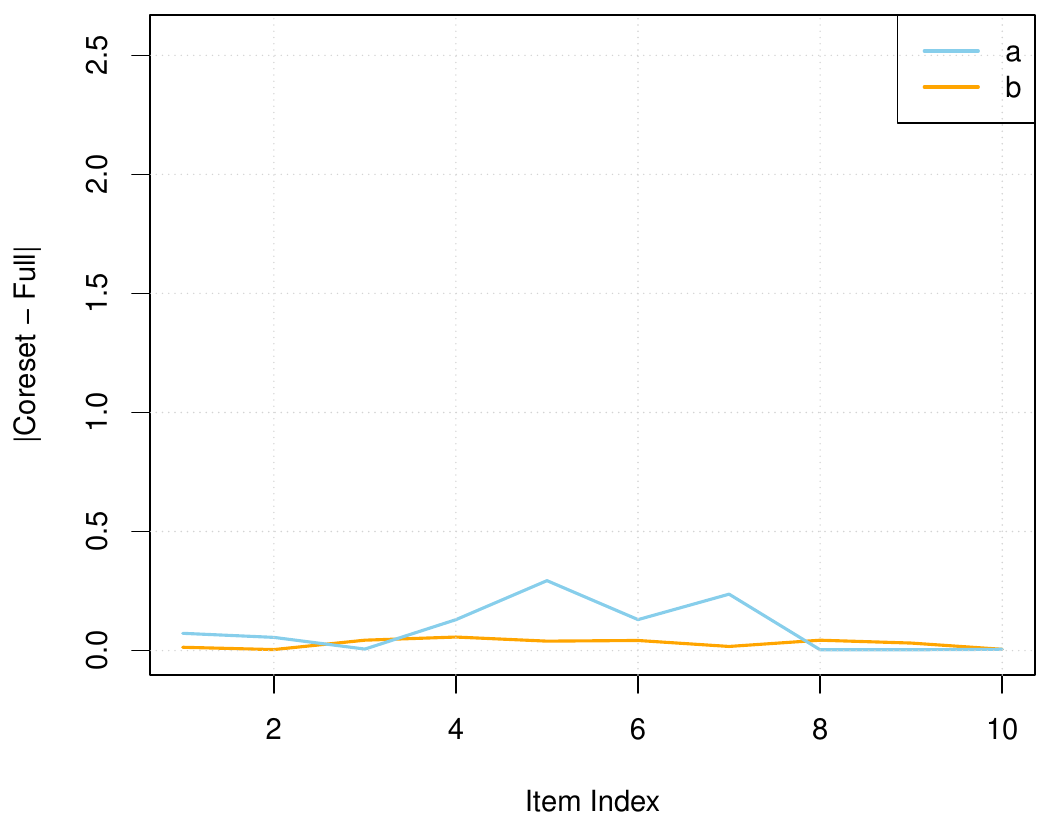}\\
\includegraphics[width=0.3\linewidth]{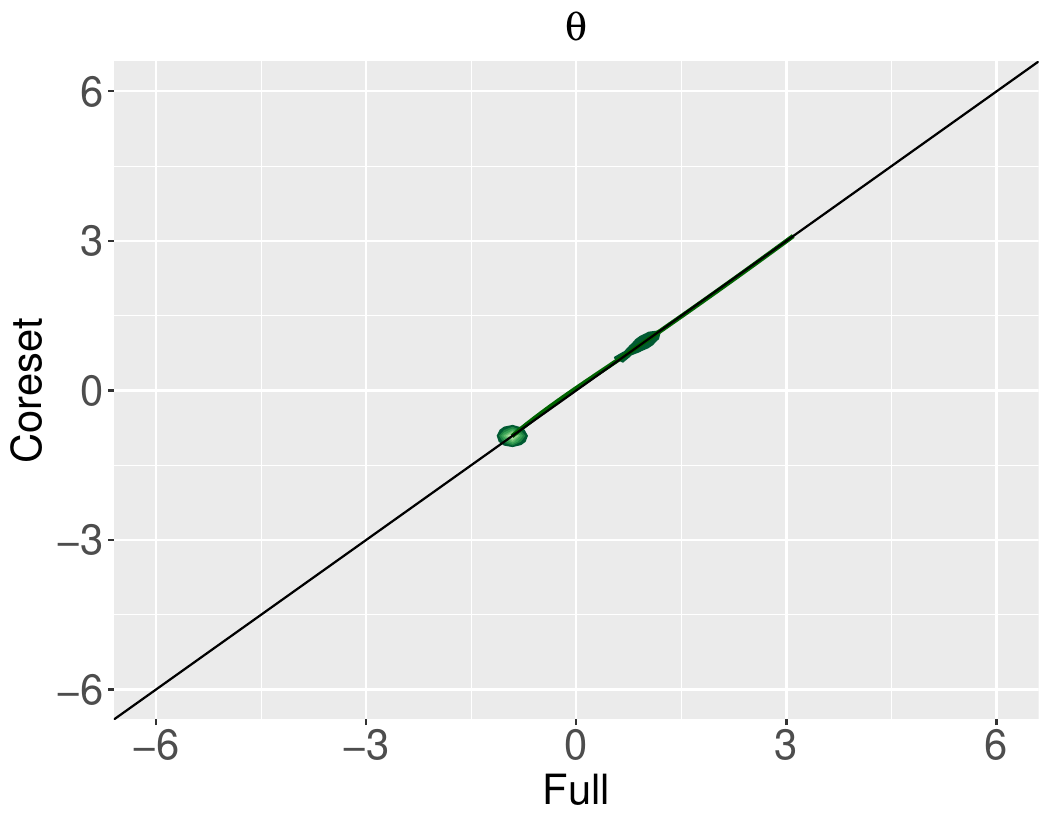}&
\includegraphics[width=0.3\linewidth]{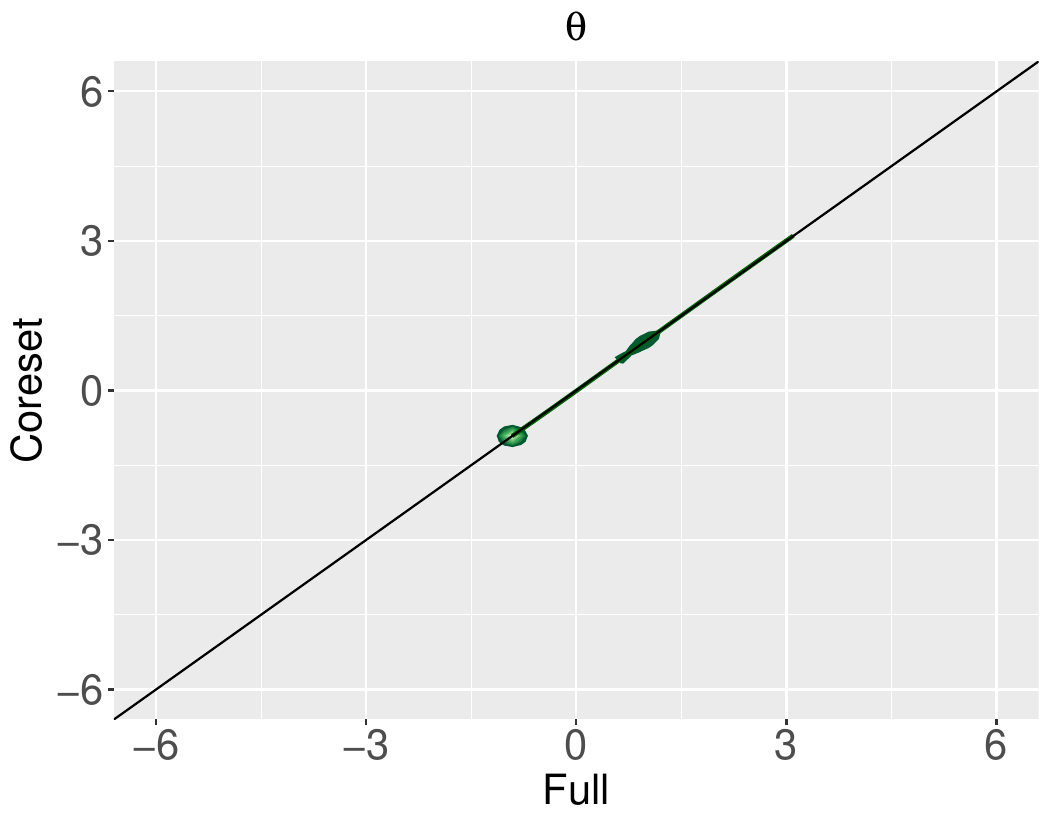}&
\includegraphics[width=0.3\linewidth]{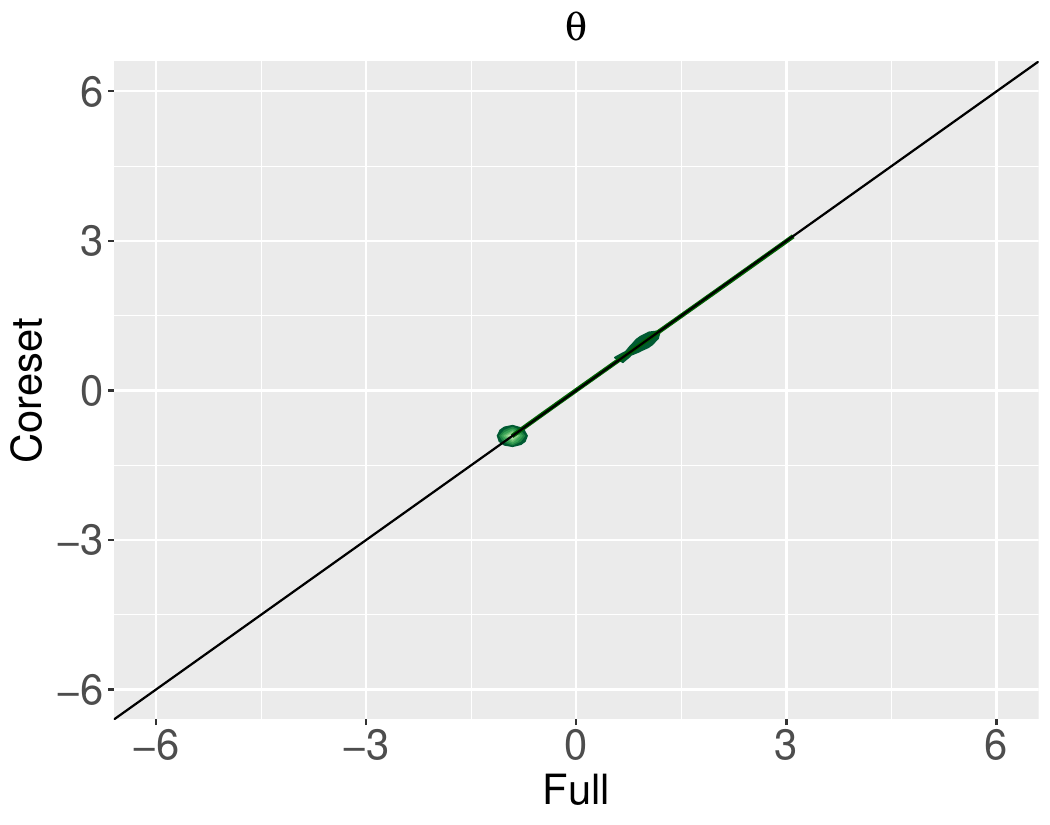}
\\
{\tiny{$\mathbf{n=138\,997,m=10,k=10\,000}$}}&{\tiny{$\mathbf{n=138\,997,m=10,k=20\,000}$}}&
\\
\includegraphics[width=0.3\linewidth]{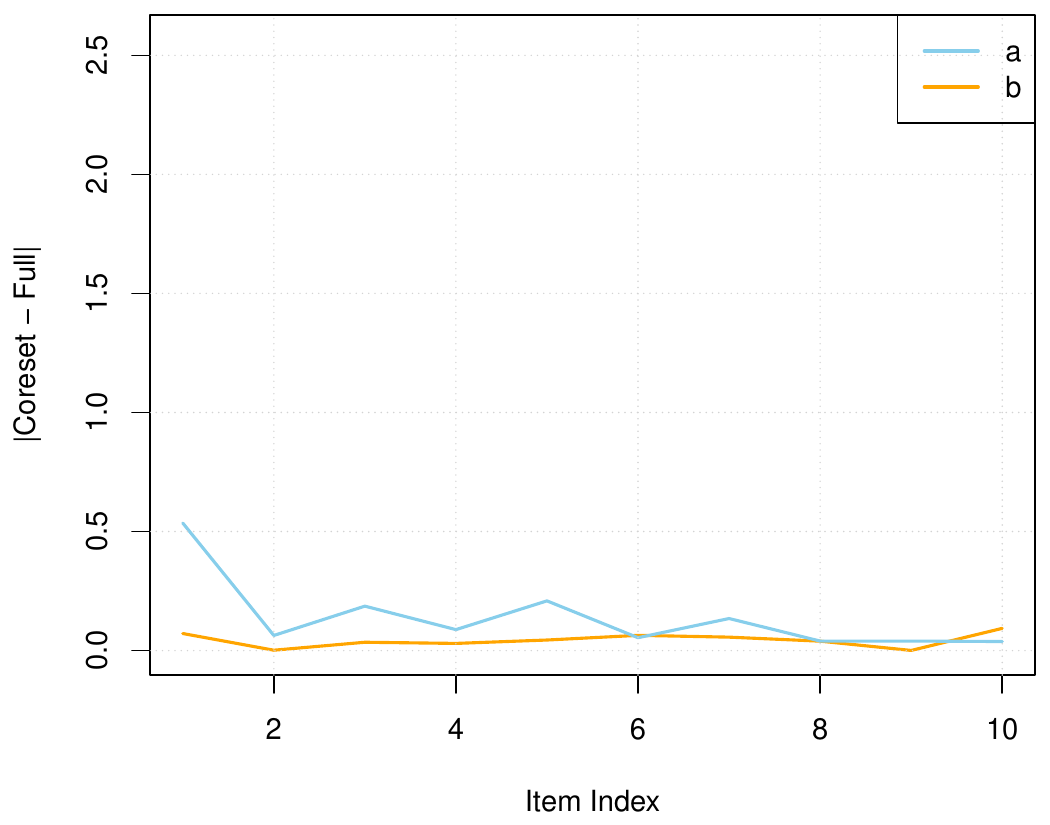}&
\includegraphics[width=0.3\linewidth]{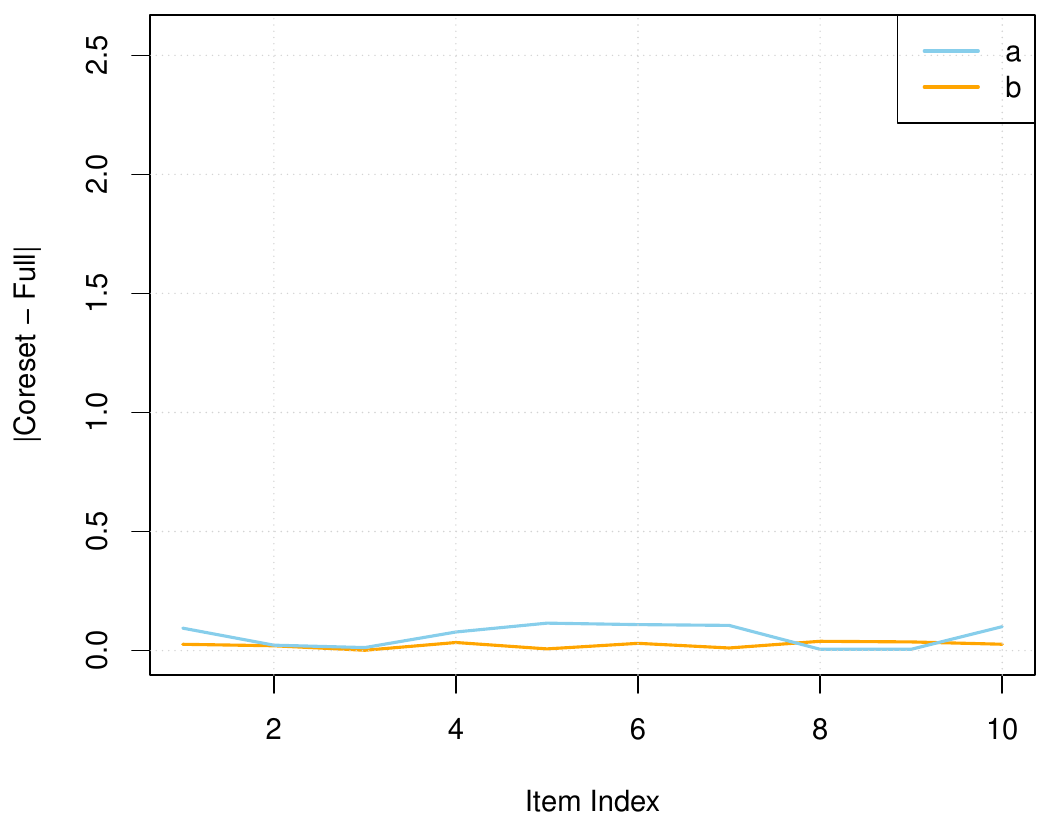}&
\\
\includegraphics[width=0.3\linewidth]{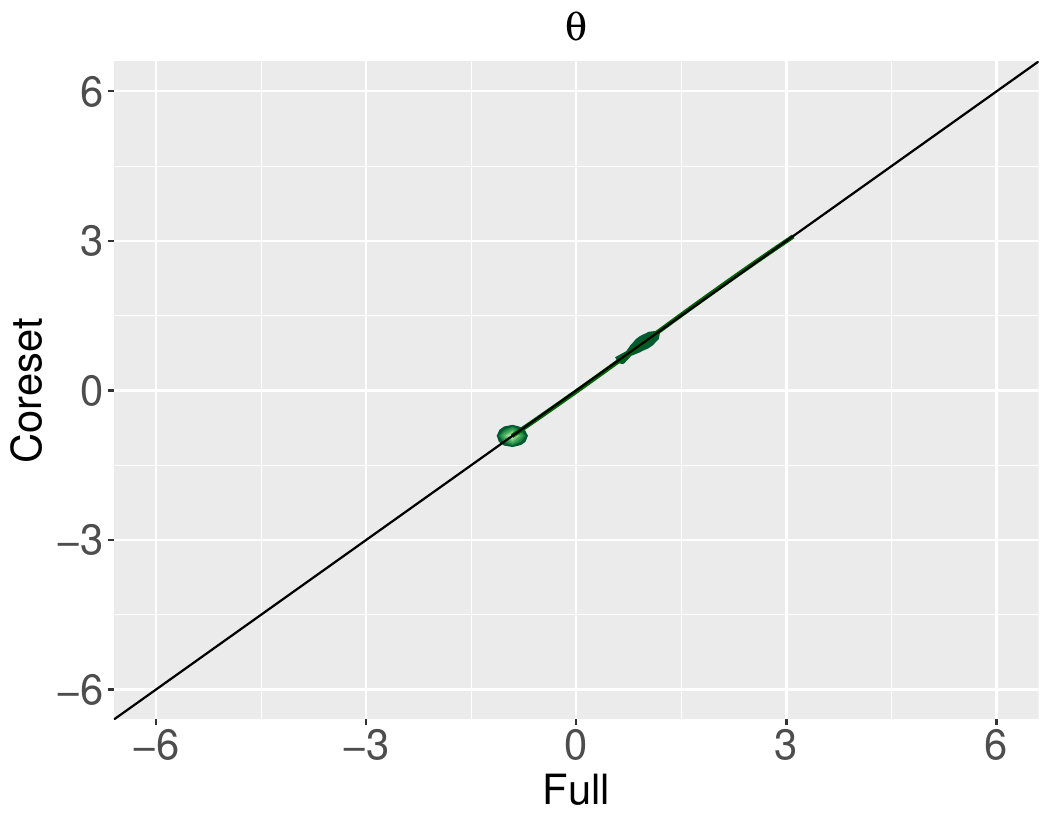}&
\includegraphics[width=0.3\linewidth]{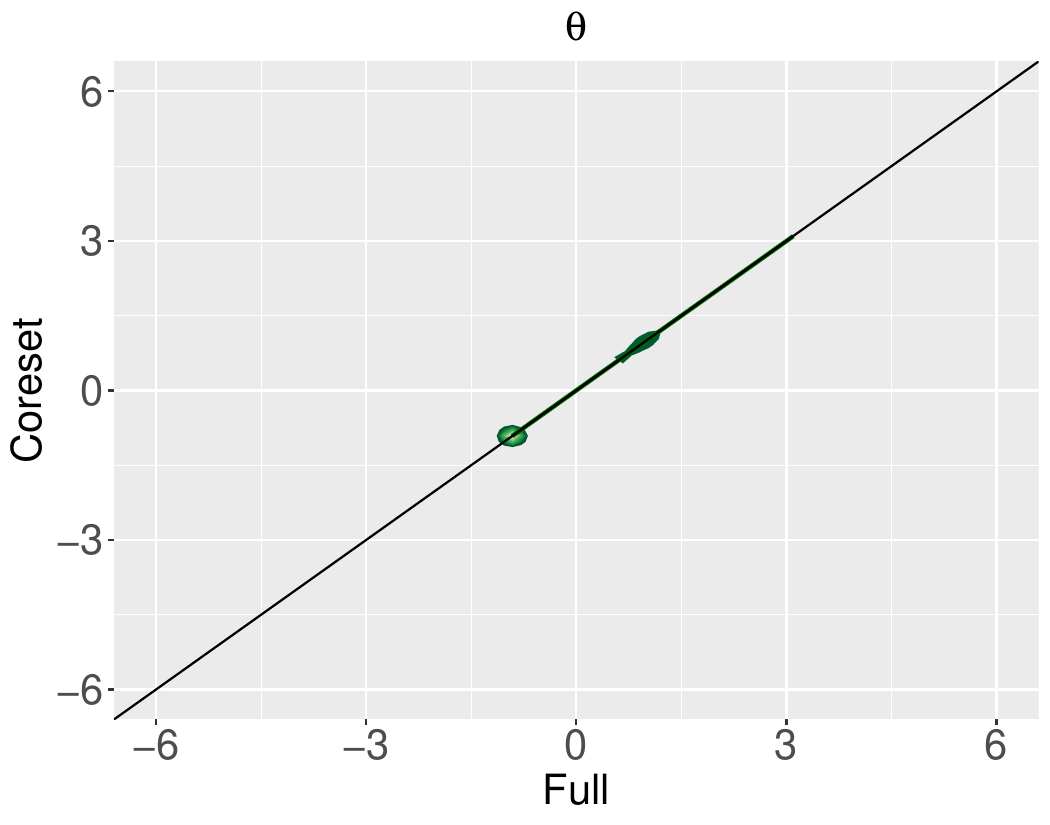}&
\end{tabular}
\label{fig:param_exp_appendix9}
\end{center}
\end{figure*}

\begin{figure*}[hp!]
\caption{2PL Experiments on real world SHARE~\citep{boersch2022survey} and NEPS data~\citep{NEPS-SC4}: A comparison between the coreset sizes and the the quality of the solution found, by the relative error and the mean absolute deviation ($\alpha$), cf.~\cref{tab:results_appendix3:b}.
Let $f_{\sf full}$ and $f_{{\sf core}(j)}$ be the optimal values of the loss function on the input and on the coreset for the $j$-th repetition, respectively. Let $f_{\sf core} = \min_j f_{{\sf core}(j)}$.
Relative error: \textbf{rel. error} $\hat{\varepsilon}=|f_{\sf core} - f_{\sf full}|/f_{\sf full}$ (cf. \lemref{coreset:error:approx}).
Mean Absolute Deviation: $\textbf{mad}(\alpha)=\frac{1}{n}\sum (|a_{\text{full}}-a_{\text{core}}| + |b_{\text{full}}-b_{\text{core}}|)$, 
evaluated on the parameters that attained the optimal $f_{\sf full}$ and $f_{\sf core}$. 
The coreset sizes for the NEPS data end at $10\,000$, to not exceed the input data size. 
}
\begin{center}
\includegraphics[width=0.8\linewidth]{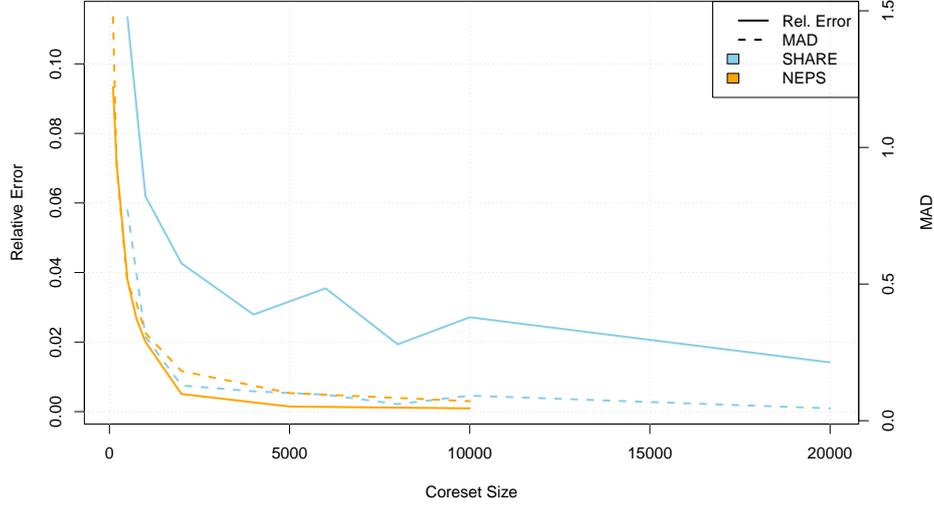} 
\label{fig:param_exp_appendix_pareto}
\end{center}
\end{figure*}

\begin{figure*}[hp!]
\caption{2PL Experiments on real world NEPS data~\citep{NEPS-SC4}: Parameter estimates for the coresets compared to the full data sets. 
For each experiment the upper figure shows the bias for the item parameters $a$ and $b$. The lower figure shows a kernel density estimate for the ability parameters $\theta$ with a LOESS regression line in dark green. 
The ability parameters were standardized to zero mean and unit variance. In all rows, the vertical axis is scaled such as to display $2\,{\mathrm{std.}}$ of the corresponding parameter estimate obtained from the full data set.}
\begin{center}
\begin{tabular}{ccc}
{\tiny{$\mathbf{n=11\,532, m=88, k=100}$}}&{\tiny{$\mathbf{n=11\,532, m=88, k=200}$}}&{\tiny{$\mathbf{n=11\,532, m=88, k=500}$}}
\\
\includegraphics[width=0.3\linewidth]{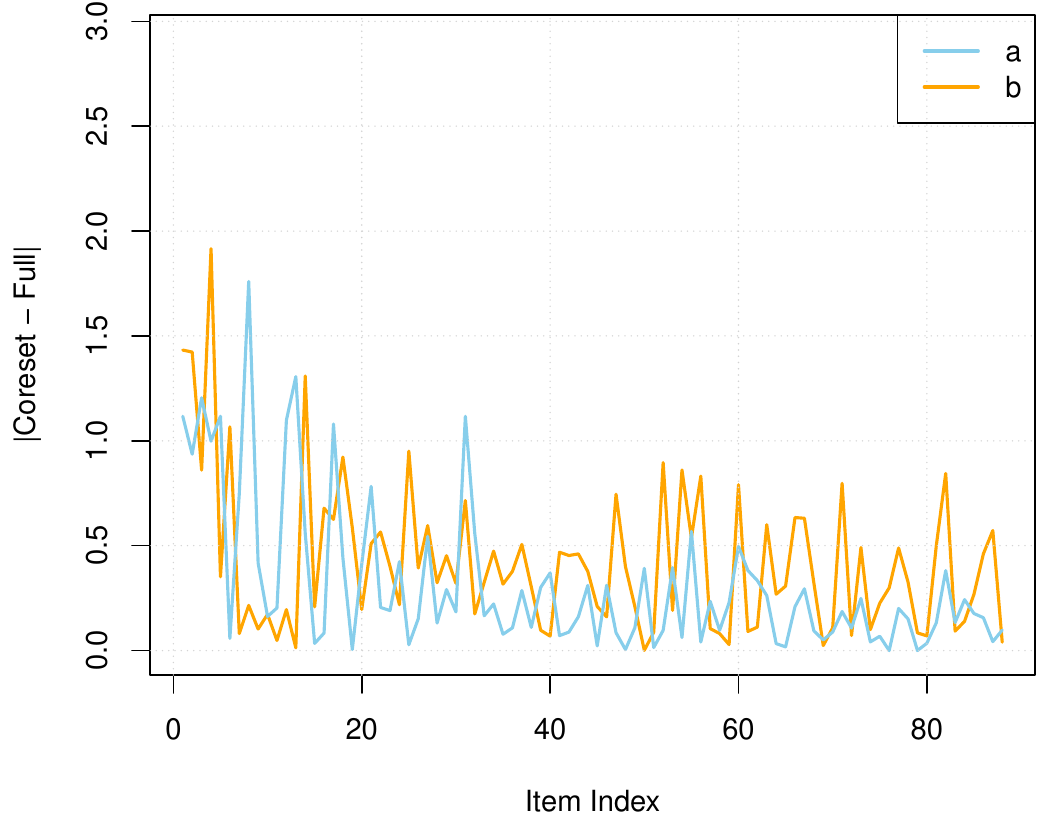} &
\includegraphics[width=0.3\linewidth]{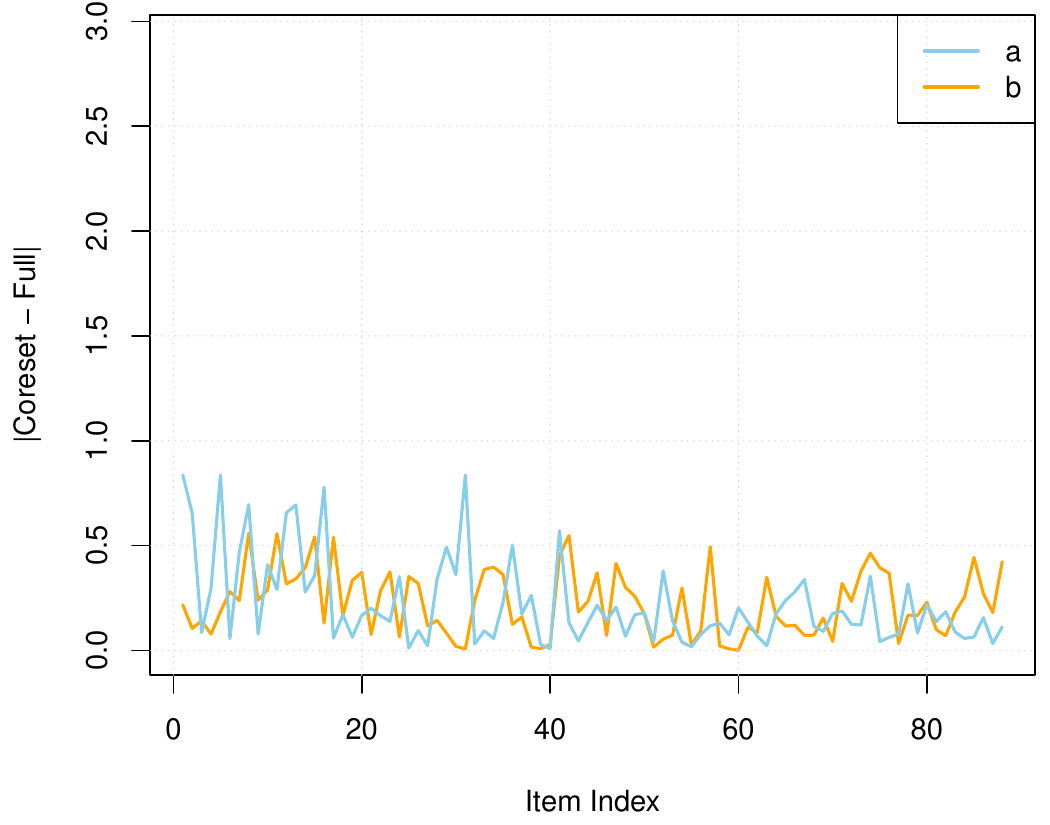} &
\includegraphics[width=0.3\linewidth]{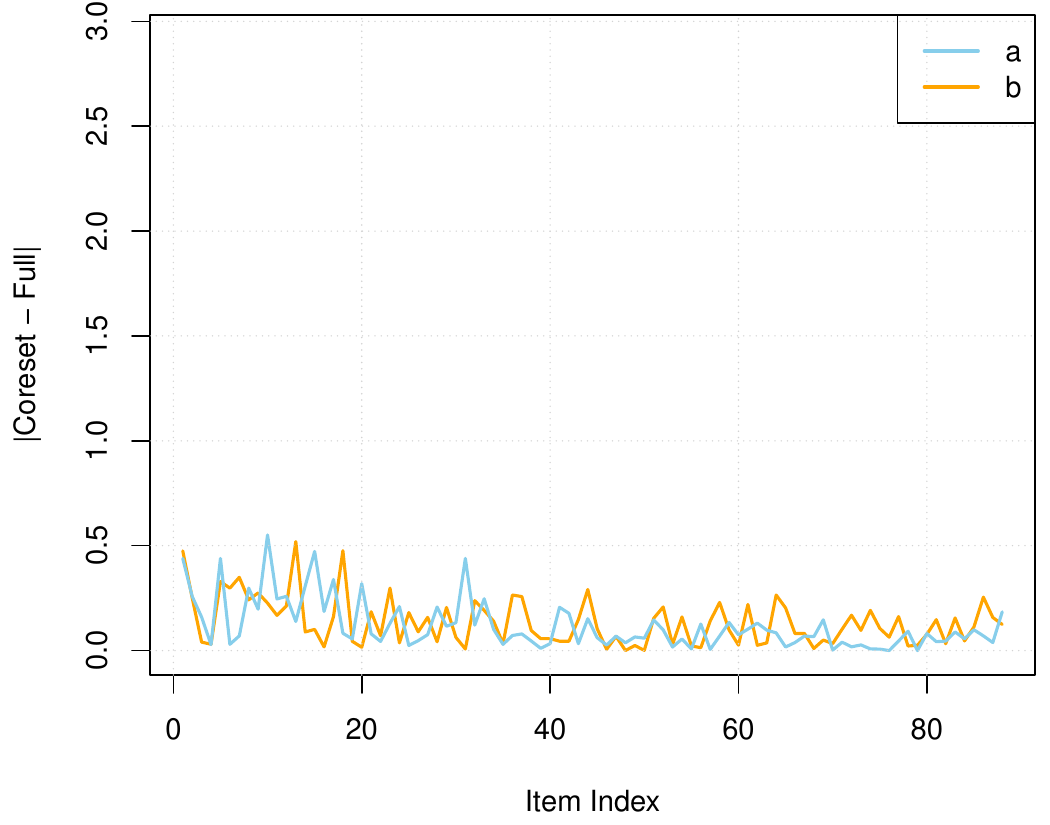}
 \\
 \includegraphics[width=0.3\linewidth]{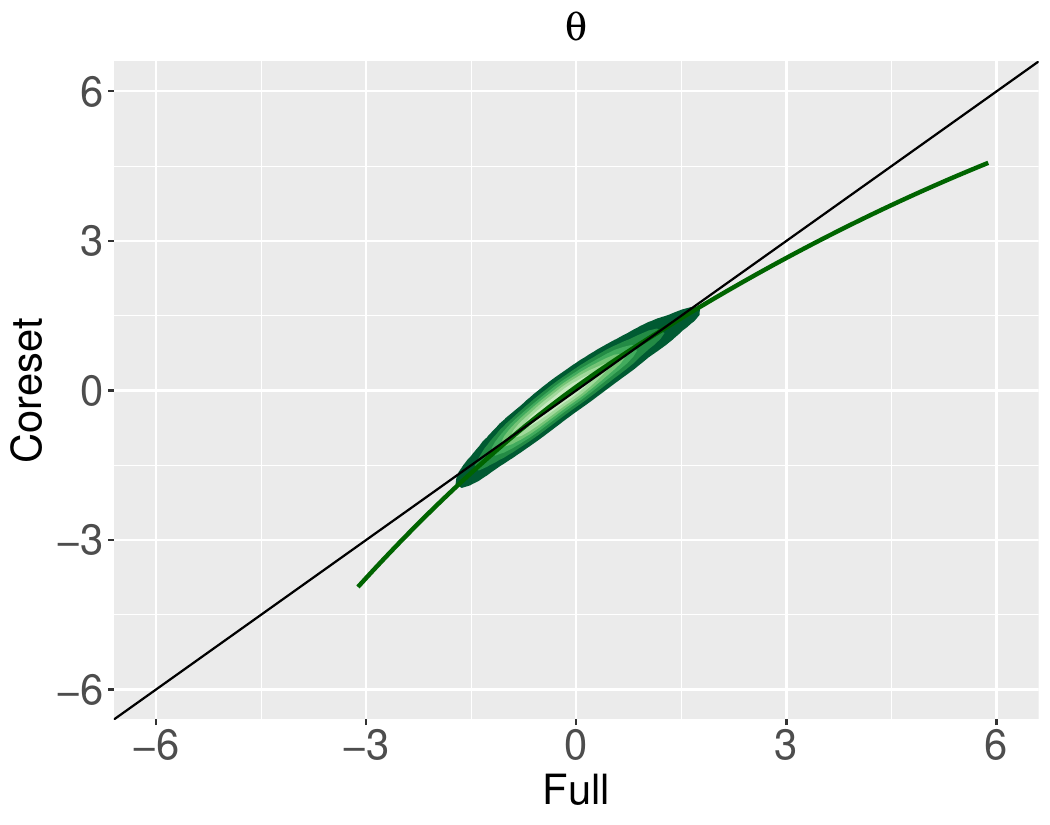} &
\includegraphics[width=0.3\linewidth]{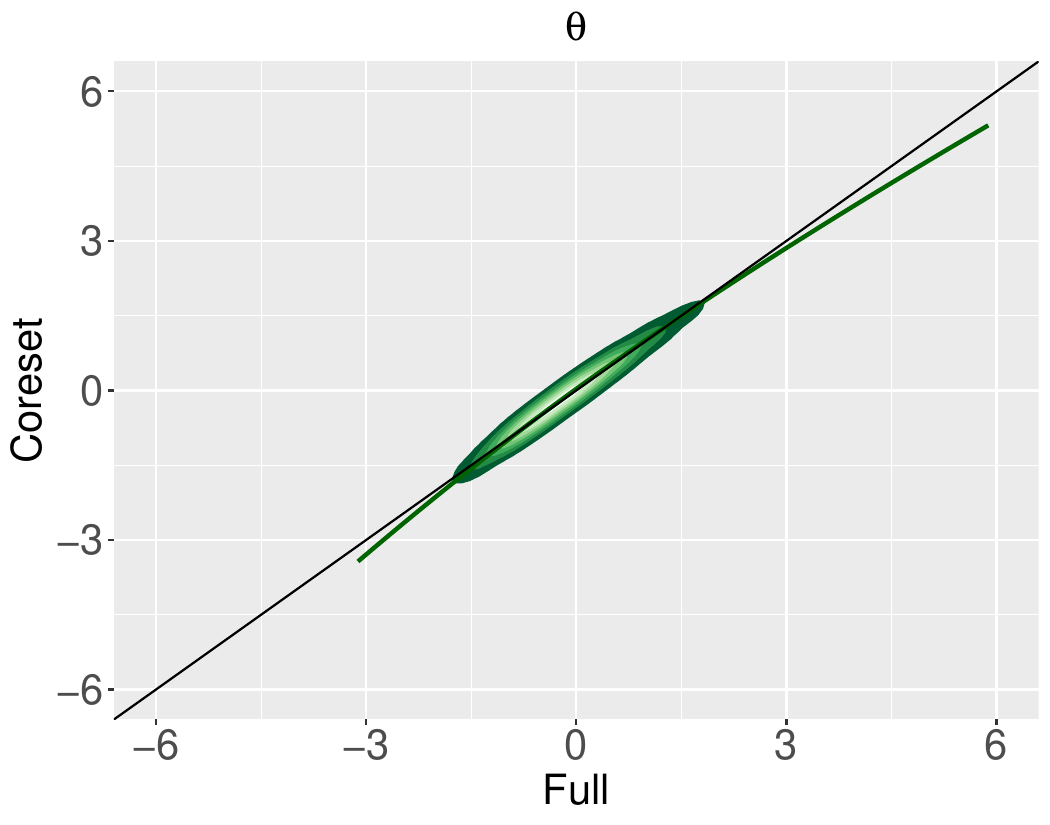} &
\includegraphics[width=0.3\linewidth]{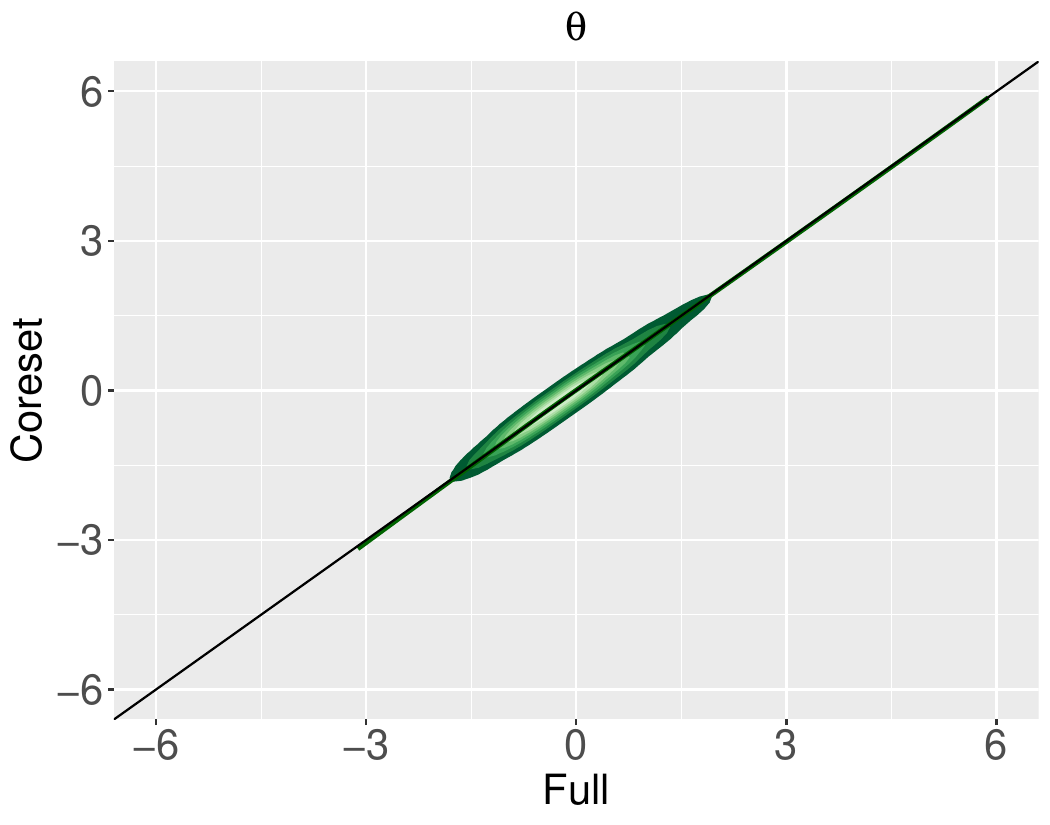}
\end{tabular}
\label{fig:param_exp_appendix5}
\end{center}
\end{figure*}

\begin{figure*}[hp!]
\caption{2PL Experiments on real world NEPS data~\citep{NEPS-SC4}: Parameter estimates for the coresets compared to the full data sets. 
For each experiment the upper figure shows the bias for the item parameters $a$ and $b$. The lower figure shows a kernel density estimate for the ability parameters $\theta$ with a LOESS regression line in dark green. 
The ability parameters were standardized to zero mean and unit variance. In all rows, the vertical axis is scaled such as to display $2\,{\mathrm{std.}}$ of the corresponding parameter estimate obtained from the full data set.}
\begin{center}
\begin{tabular}{ccc}
{\tiny{$\mathbf{n=11\,532, m=88, k=750}$}}&{\tiny{$\mathbf{n=11\,532, m=88, k=1\,000}$}}&{\tiny{$\mathbf{n=11\,532, m=88, k=2\,000}$}}
\\
\includegraphics[width=0.3\linewidth]{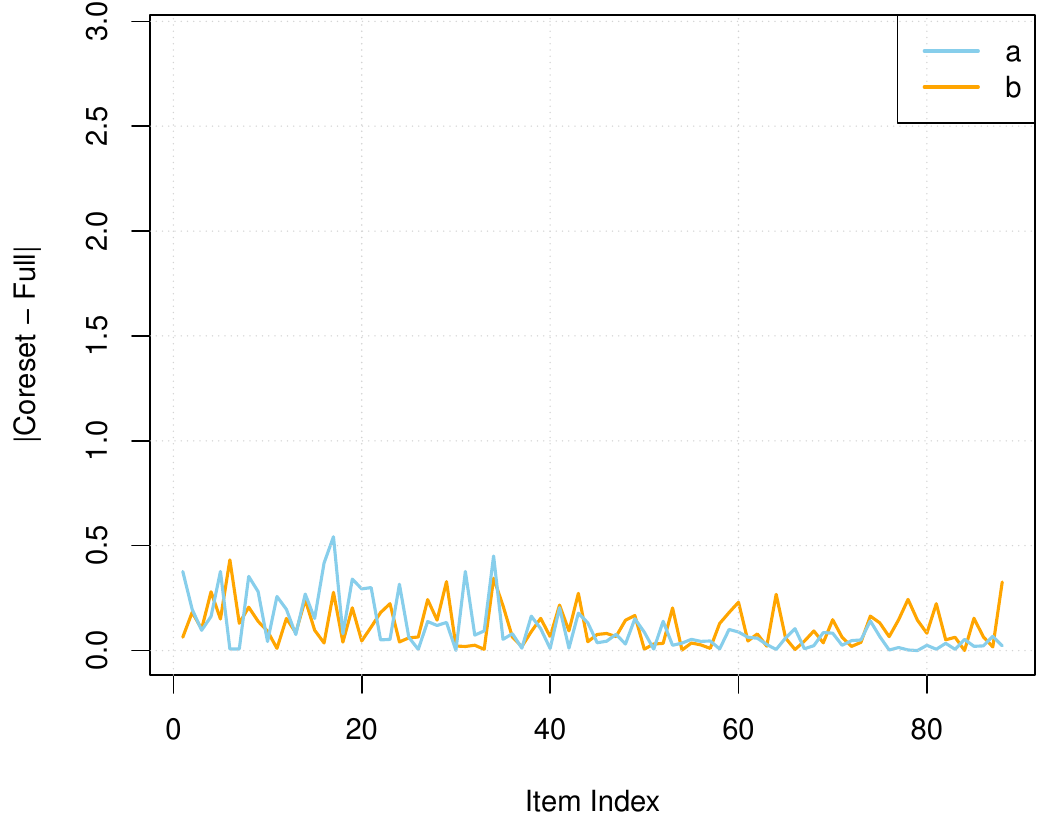} &
\includegraphics[width=0.3\linewidth]{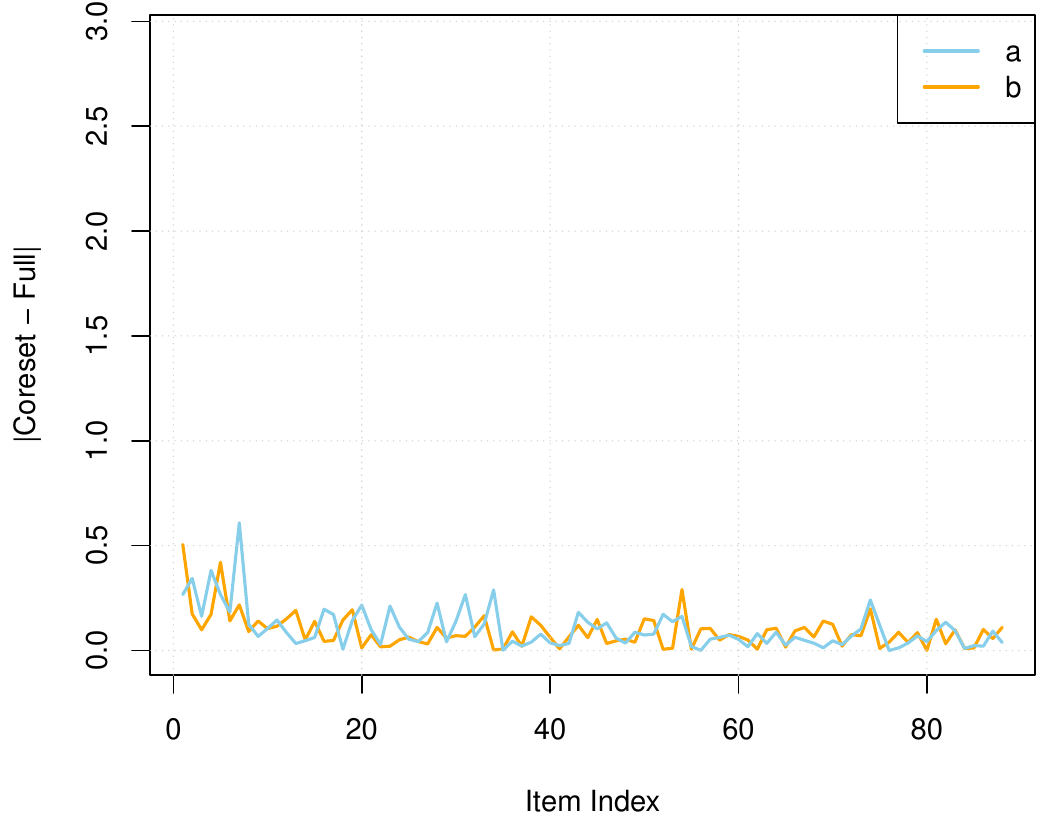} &
\includegraphics[width=0.3\linewidth]{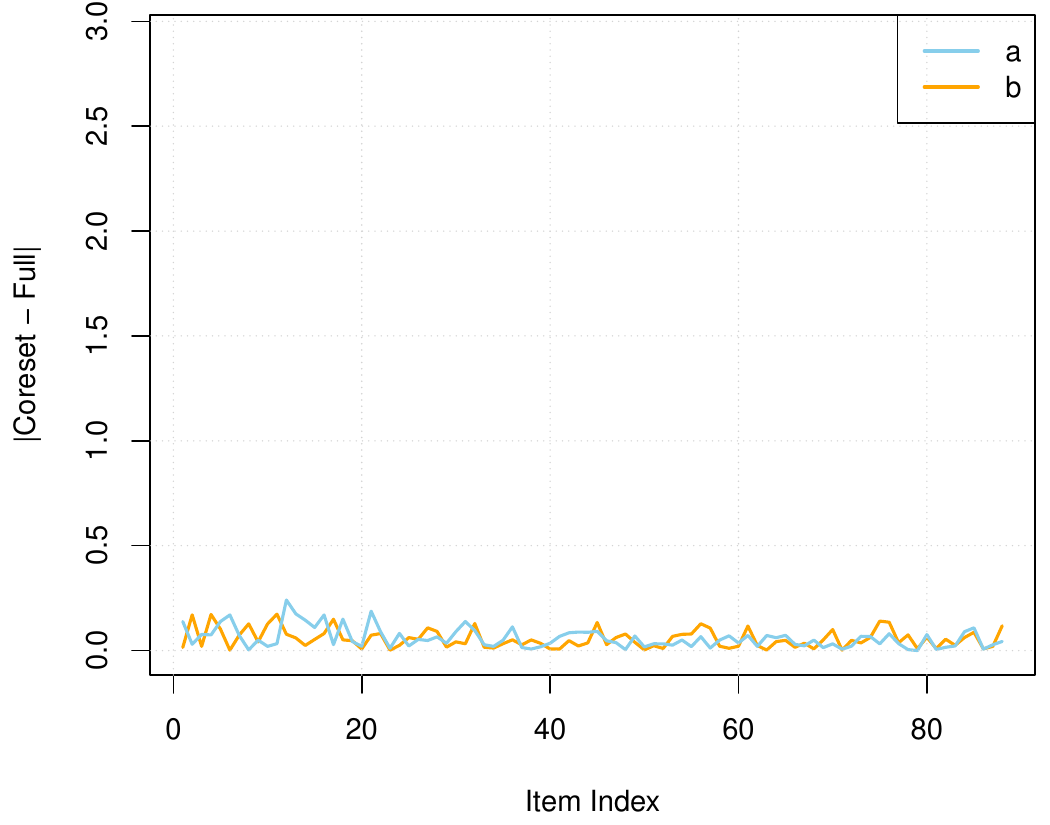}
 \\
 \includegraphics[width=0.3\linewidth]{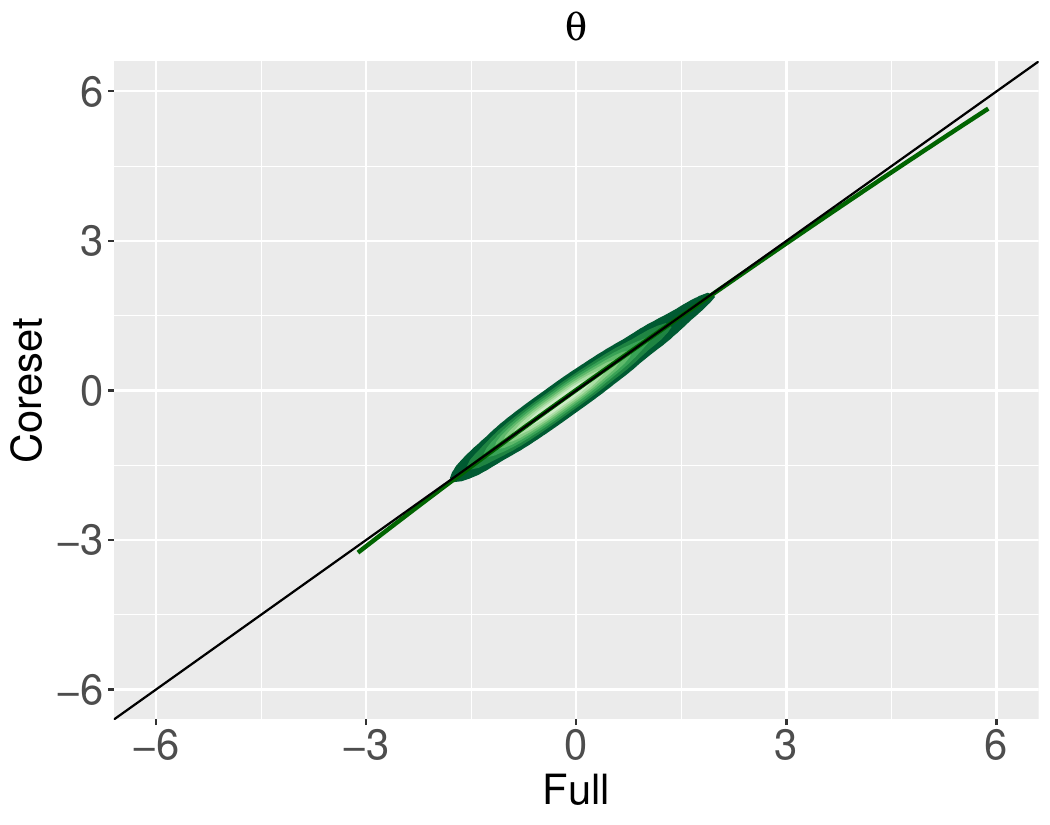} &
\includegraphics[width=0.3\linewidth]{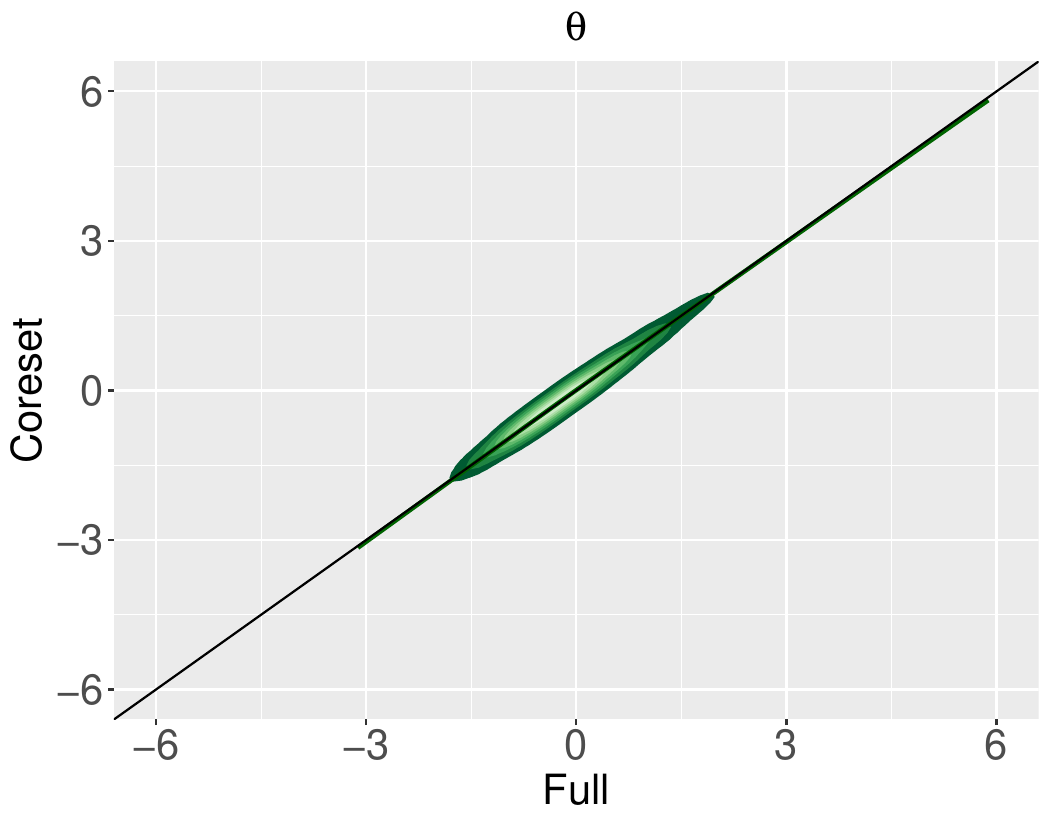} &
\includegraphics[width=0.3\linewidth]{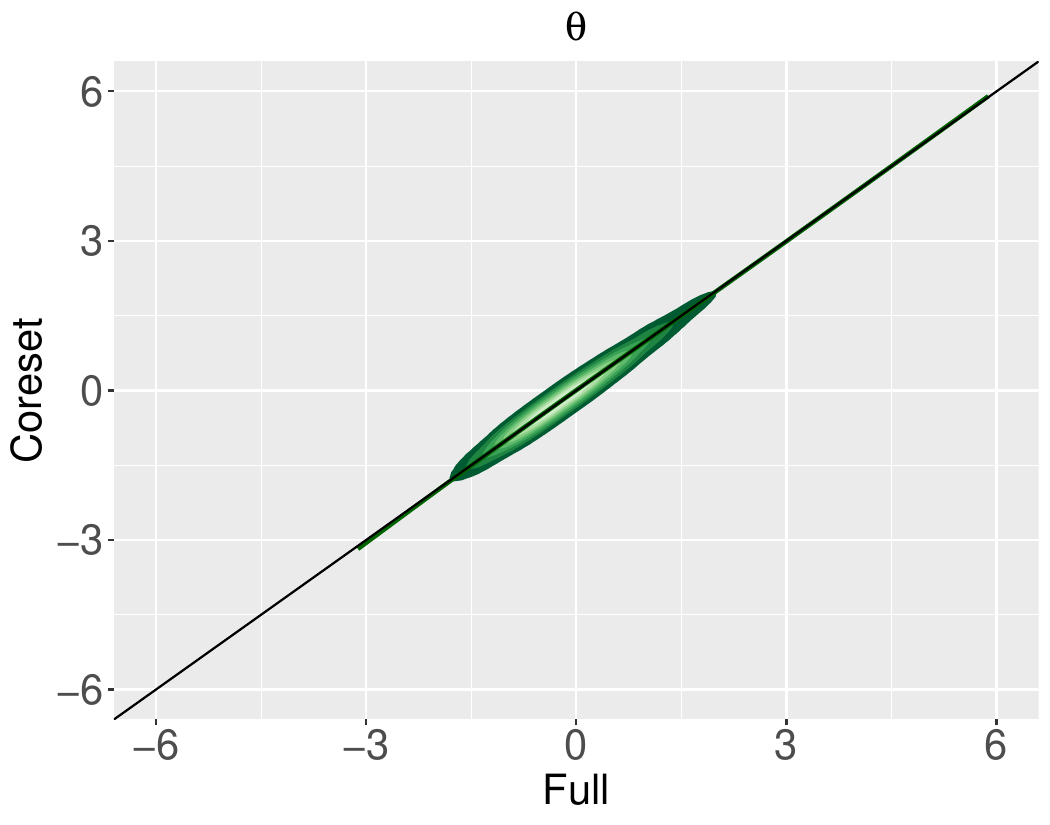}
\\
{\tiny{$\mathbf{n=11\,532, m=88, k=5\,000}$}}&{\tiny{$\mathbf{n=11\,532, m=88, k=10\,000}$}}&
\\
\includegraphics[width=0.3\linewidth]{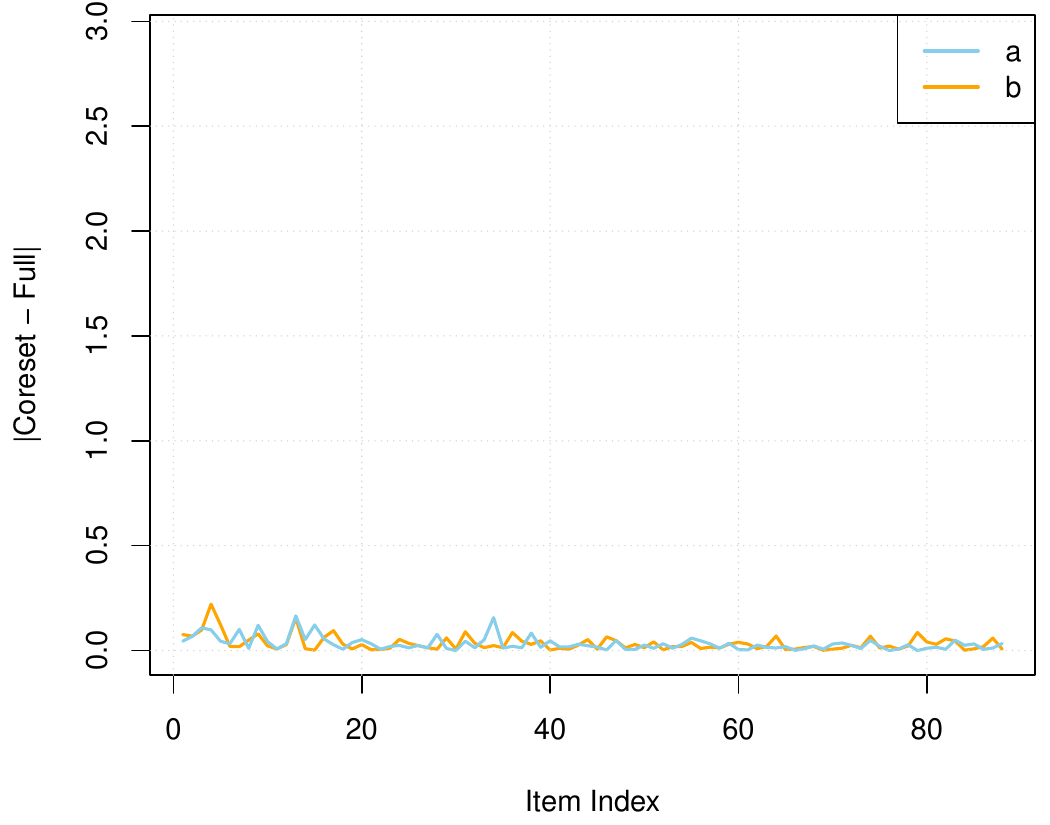} &
\includegraphics[width=0.3\linewidth]{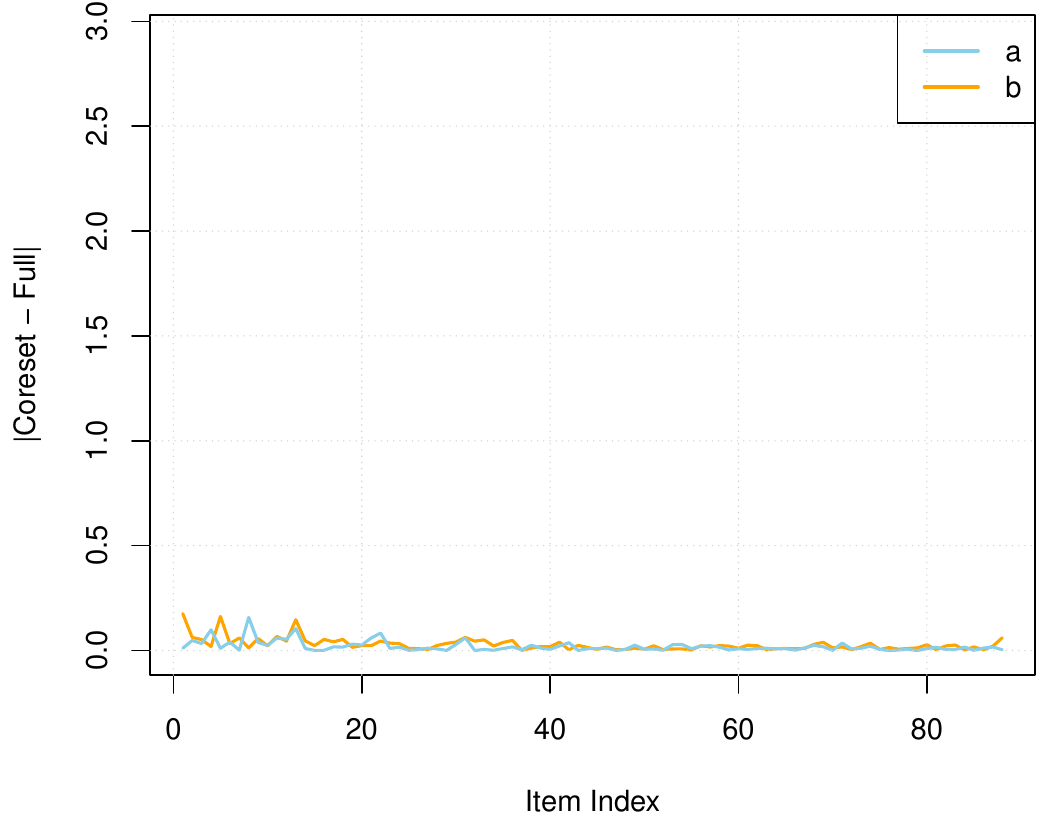} &
 \\
 \includegraphics[width=0.3\linewidth]{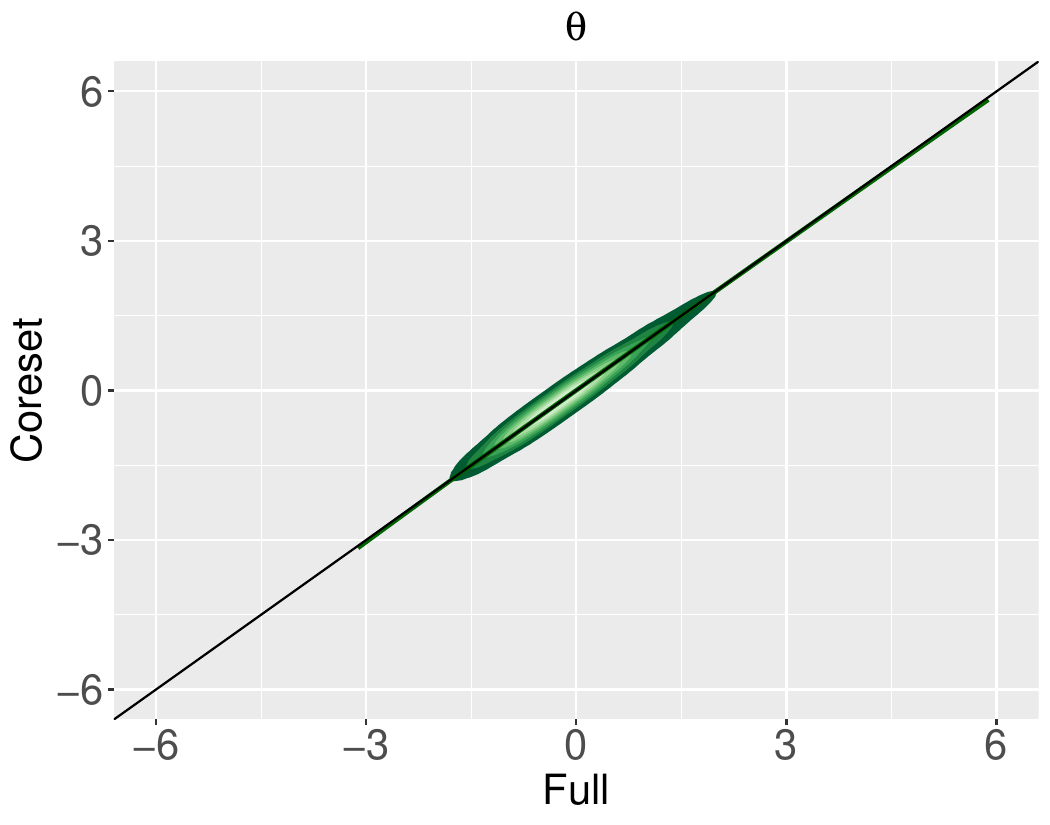} &
\includegraphics[width=0.3\linewidth]{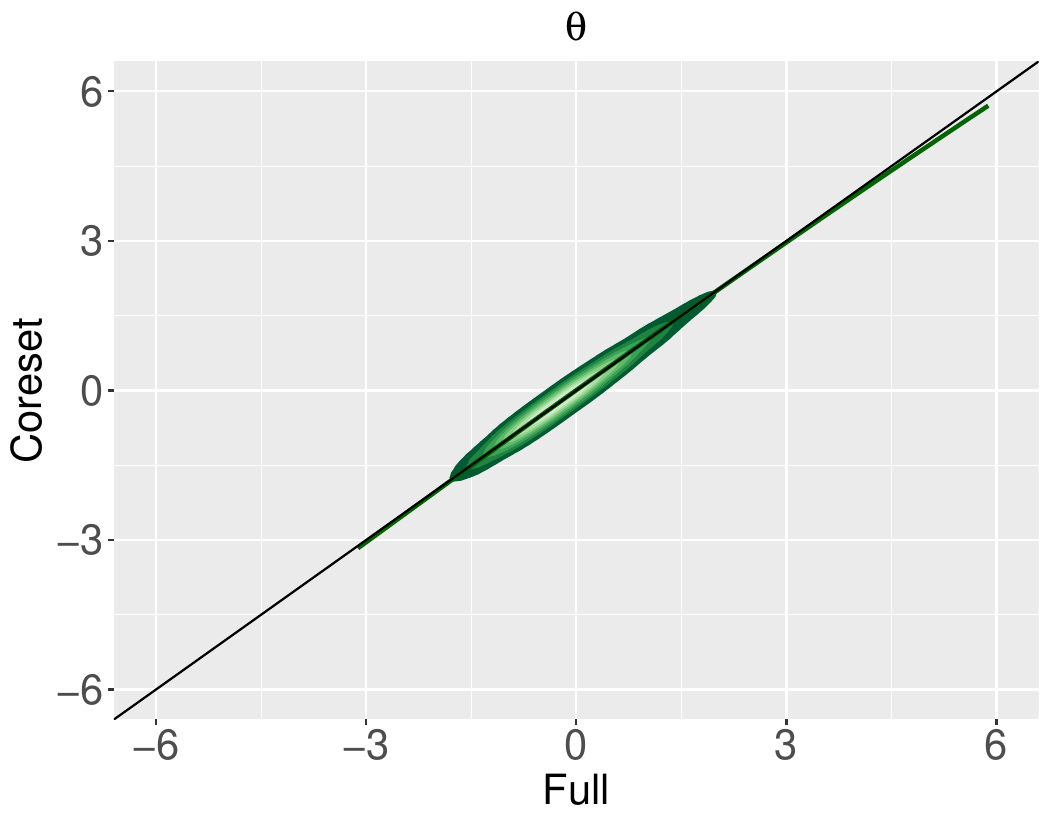} &
\end{tabular}
\label{fig:param_exp_appendix8}
\end{center}
\end{figure*}

\begin{table*}[hp!]
\caption{3PL Experiments on synthetic data: The means and standard deviations (std.) of running times, taken across $20$ repetitions. 
{In each repetition, the running time (in minutes) of 50 iterations of the main loop was measured}
per data set for different configurations of the data dimensions: the number of items $m$, the number of examinees $n$, and the coreset size $k$. The (relative) gain is defined as  $(1-\mathrm{mean}_{\sf coreset}/\mathrm{mean}_{\sf full})\cdot 100$ \%.
}
	\label{tab:results_appendix4}
	\begin{center}
	\begin{tabular}{ c r r r| r r| r r| r }
    &\multicolumn{3}{c}{}& \multicolumn{2}{c}{{{\bf Full data} (min)}}& \multicolumn{2}{c}{{{\bf Coresets} (min)}} & \multicolumn{1}{c}{ } \\
		\hline
		{\bf data} &
		{$\mathbf n$} & {$\mathbf m$} & {$\mathbf k$} &  {\bf mean} & {\bf std.} & {\bf mean} & {\bf std.} & {\bf gain} \\ \hline \hline
    3PL-Syn & $50\,000$ & $100$ & $2\,000$ & $211.468$ & $31.355$ & $41.648$ & $5.197$ & $80.305$ \%  \\ \hline
    3PL-Syn & $50\,000$ & $100$ & $5\,000$ & $211.468$ & $31.355$ & $90.243$ & $12.134$ & $57.325$ \% \\ \hline
    3PL-Syn & $50\,000$ & $100$ & $10\,000$ & $211.468$ & $31.355$ & $93.780$ & $13.929$ & $55.653$ \%  \\ \hline\hline
    3PL-Syn & $50\,000$ & $200$ & $2\,000$ & $369.816$ & $36.676$ & $50.588$ & $1.962$ & $86.321$ \%  \\ \hline
    3PL-Syn & $50\,000$ & $200$ & $5\,000$ & $369.816$ & $36.676$ & $89.274$ & $30.368$ & $75.860$ \%  \\ \hline
    3PL-Syn & $50\,000$ & $200$ & $10\,000$ &  $369.816$ & $36.676$ & $145.674$ & $25.702$ & $60.609$ \% \\ \hline\hline
    3PL-Syn & $100\,000$ & $100$ & $5\,000$ &  $412.616$ & $65.389$ & $125.407$ & $15.408$ & $69.607$ \% \\ \hline
    3PL-Syn & $100\,000$ & $200$ & $5\,000$ &  $722.319$ & $118.262$ & $150.164$ & $26.767$ & $79.211$ \%  \\ \hline
    3PL-Syn & $200\,000$ & $100$ & $10\,000$ &  $893.183$ & $112.257$ & $196.802$ & $14.608$ & $77.966$ \% \\ \hline\hline
	\end{tabular}
	\end{center}
\end{table*}

\begin{table*}[hp!]
\caption{3PL Experiments on synthetic data: 
The quality of the solution found.
Let $f_{\sf full}$ and $f_{{\sf core}(j)}$ be the optimal values of the loss function on the input and on the coreset for the $j$-th repetition, respectively. Let $f_{\sf core} = \min_j f_{{\sf core}(j)}$.
Mean and standard deviation of the relative deviation $|f_{\sf core} - f_{{\sf core}(j)}| / f_{\sf core}$ (in $\%$): 
\textbf{mean dev} and \textbf{std. dev}. Relative error: \textbf{rel. error} $\hat{\varepsilon}=|f_{\sf core} - f_{\sf full}|/f_{\sf full}$ (cf. \lemref{coreset:error:approx}).
Mean Absolute Deviation: $\textbf{mad}(\alpha)=\frac{1}{n}\sum (|a_{\sf full}-a_{\sf core}| + |b_{\sf full}-b_{\sf core}| + |c_{\sf full}-c_{\sf core}| )$; $\textbf{mad}(\theta)=\frac{1}{m}\sum |\theta_{\sf full}-\theta_{\sf core}|$, evaluated on the parameters that attained the optimal $f_{\sf full}$ and $f_{\sf core}$.
}
	\label{tab:results_appendix4:b}
	\begin{center}
	\begin{tabular}{ c r r r| r r | r | c c}
		\hline
		{\bf data} &
		{$\mathbf n$} & {$\mathbf m$} & {$\mathbf k$} &  {\bf mean dev} & {\bf std. dev} & {\bf rel. error $\hat{\varepsilon}$}  & {\bf $\text{mad}(\alpha)$} & {\bf $\text{mad}(\theta)$}\\ \hline \hline
	3PL-Syn & $50\,000$ & $100$ & $2\,000$ & $4.495$ \% & $2.392$ \% & $0.45212$ & $2.820$ & $0.625$ \\ \hline
    3PL-Syn & $50\,000$ & $100$ & $5\,000$ & $2.061$ \% & $1.935$ \% & $0.03228$ & $0.968$ & $0.048$ \\ \hline
    3PL-Syn & $50\,000$ & $100$ & $10\,000$ & $2.237$ \% & $2.417$ \% & $0.00212$ & $0.384$ & $0.010$ \\ \hline\hline
    3PL-Syn & $50\,000$ & $200$ & $2\,000$ & $5.280$ \% & $3.065$ \% & $0.43784$ & $2.832$ & $0.649$ \\ \hline
    3PL-Syn & $50\,000$ & $200$ & $5\,000$ & $4.536$ \% & $2.615$ \% & $0.01662$ & $0.906$ & $0.037$ \\ \hline
    3PL-Syn & $50\,000$ & $200$ & $10\,000$ & $3.306$ \% & $1.459$ \% & $0.02186$ & $0.488$ & $0.001$ \\ \hline\hline
    3PL-Syn & $100\,000$ & $100$ & $5\,000$ & $8.370$ \% & $3.944$ \% & $0.02065$ & $1.375$ & $0.101$ \\ \hline
    3PL-Syn & $100\,000$ & $200$ & $5\,000$ & $4.819$ \% & $1.784$ \% & $0.06281$ & $1.545$ & $0.140$\\ \hline
    3PL-Syn & $200\,000$ & $100$ & $10\,000$ & $3.413$ \% & $2.529$ \% & $0.01789$ & $0.524$ & $0.003$ \\ \hline\hline
	\end{tabular}
	\end{center}
\end{table*}

\begin{figure*}[hp!]
\caption{3PL Experiments on synthetic data. Parameter estimates for the coresets compared to the full data sets. 
For each experiment the upper figure shows the bias for the item parameters $a$ and $b$. The lower figure shows a kernel density estimate for the ability parameters $\theta$ with a LOESS regression line in dark green. 
The ability parameters were standardized to zero mean and unit variance. In all rows, the vertical axis is scaled such as to display $4\,{\mathrm{std.}}$ of the corresponding parameter estimate obtained from the full data set.}
\begin{center}
\begin{tabular}{ccc}
{\tiny{$\mathbf{n=50\,000,m=100,k=2\,000}$}}&{\tiny{$\mathbf{n=50\,000,m=100,k=5\,000}$}}&{\tiny{$\mathbf{n=50\,000,m=100,k=10\,000}$}}
\\
\includegraphics[width=0.3\linewidth]{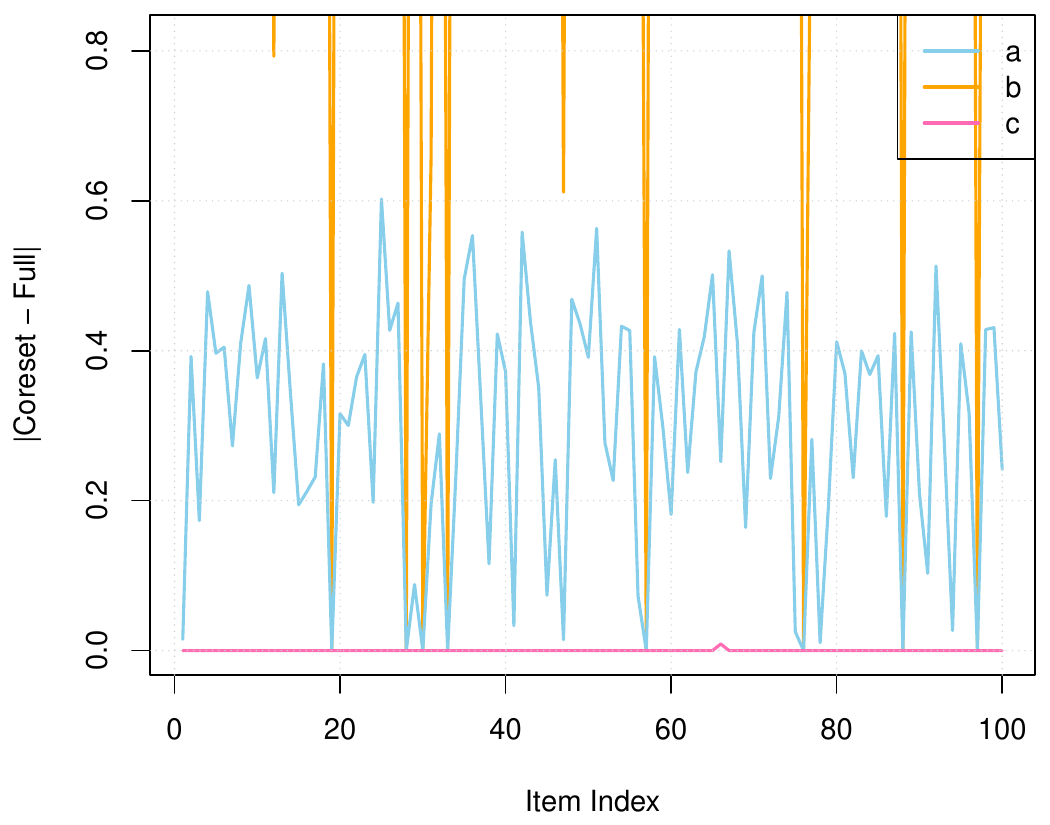}&
\includegraphics[width=0.3\linewidth]{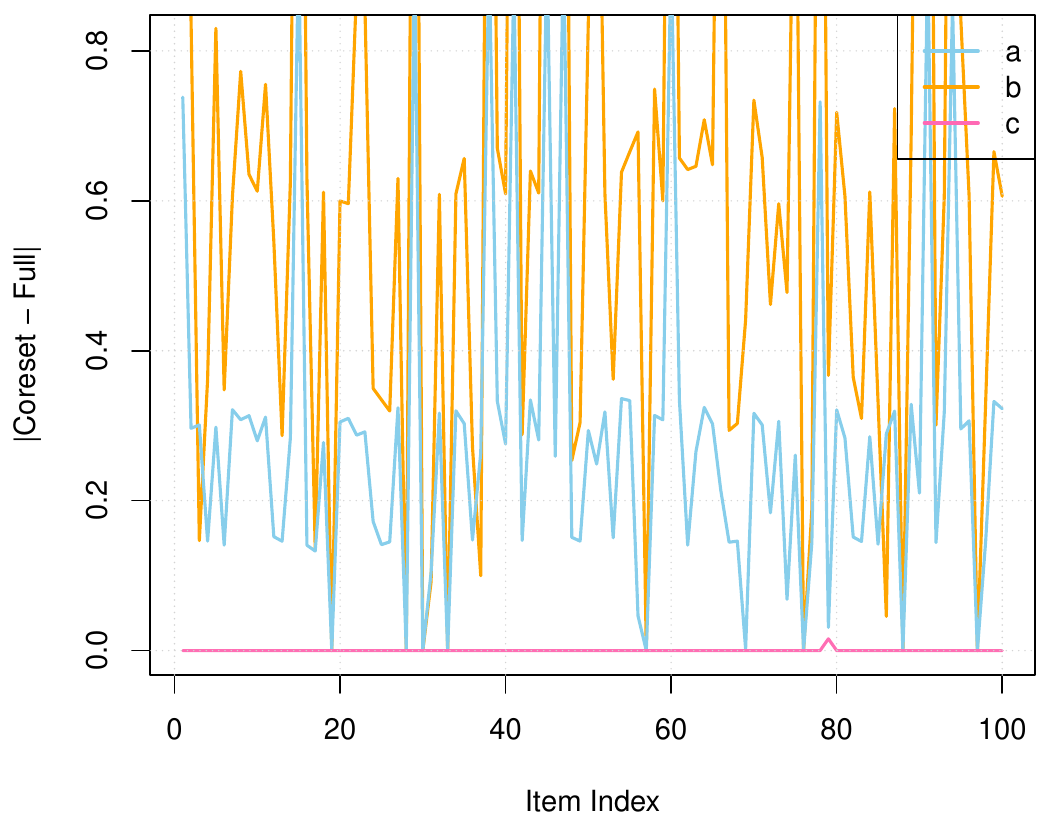}
 &
 \includegraphics[width=0.3\linewidth]{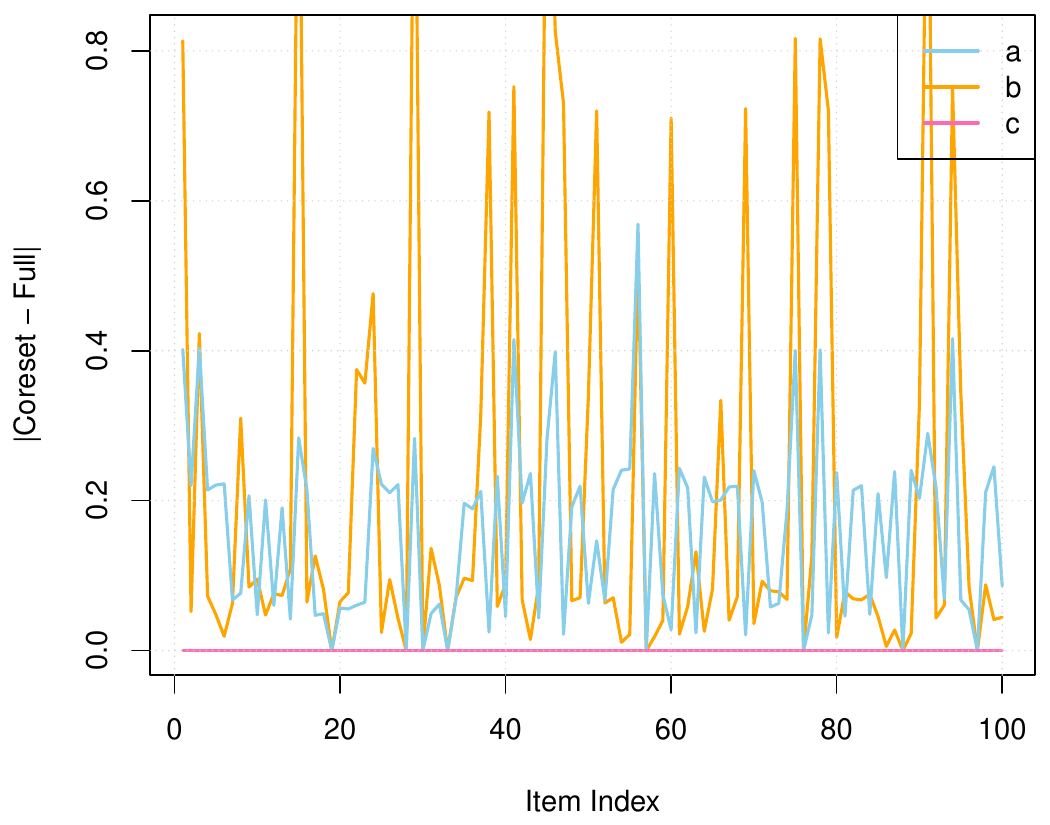}
 \\
\includegraphics[width=0.3\linewidth]{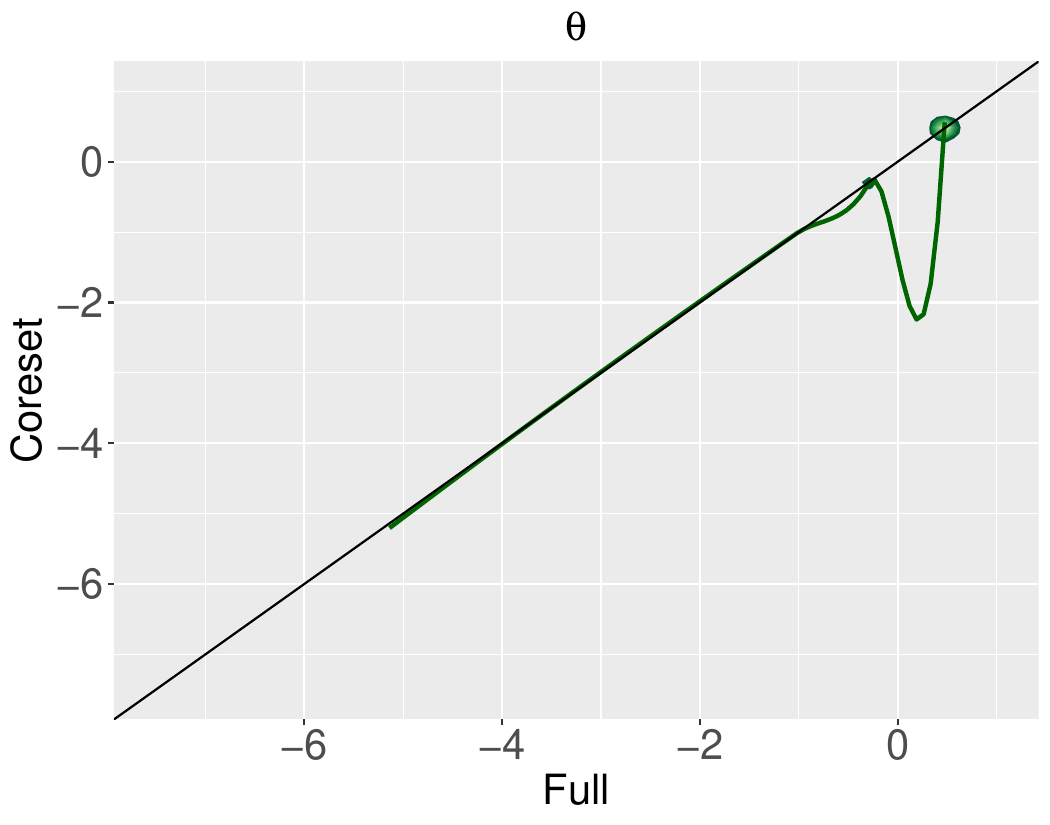}&
\includegraphics[width=0.3\linewidth]{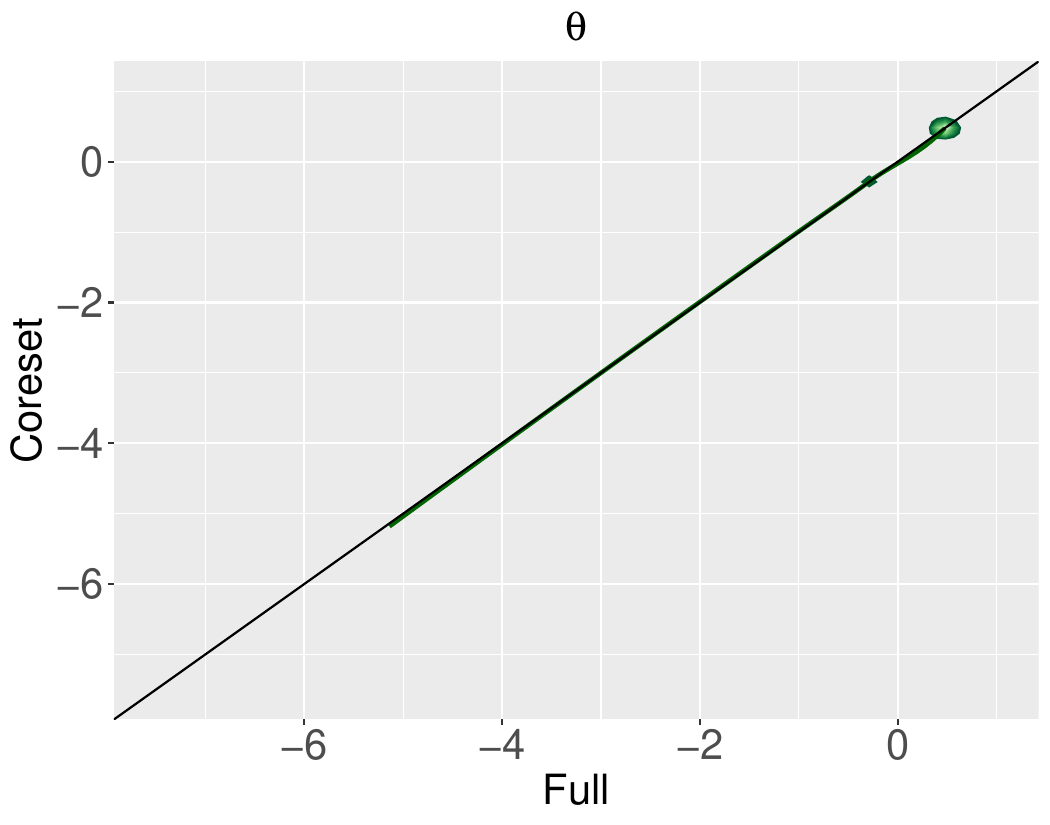} &
\includegraphics[width=0.3\linewidth]{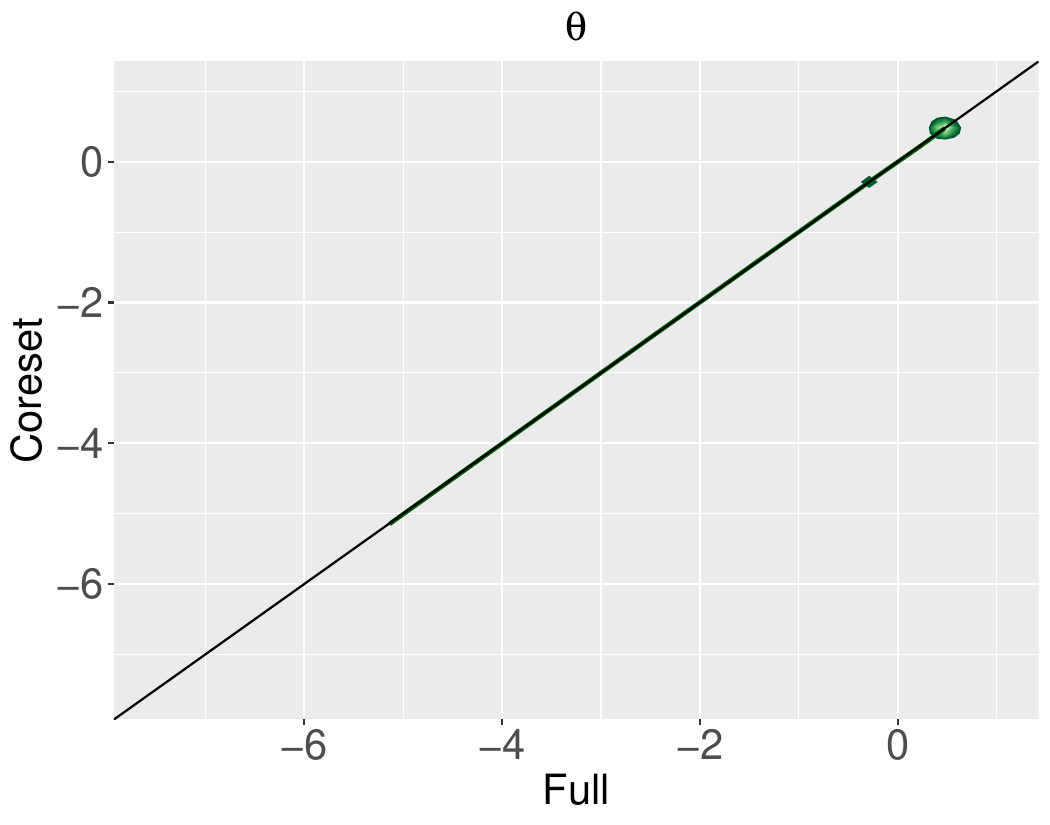}
\\
{\tiny{$\mathbf{n=50\,000,m=200,k=2\,000}$}}&{\tiny{$\mathbf{n=50\,000,m=200,k=5\,000}$}}&{\tiny{$\mathbf{n=50\,000,m=200,k=10\,000}$}}
\\
\includegraphics[width=0.3\linewidth]{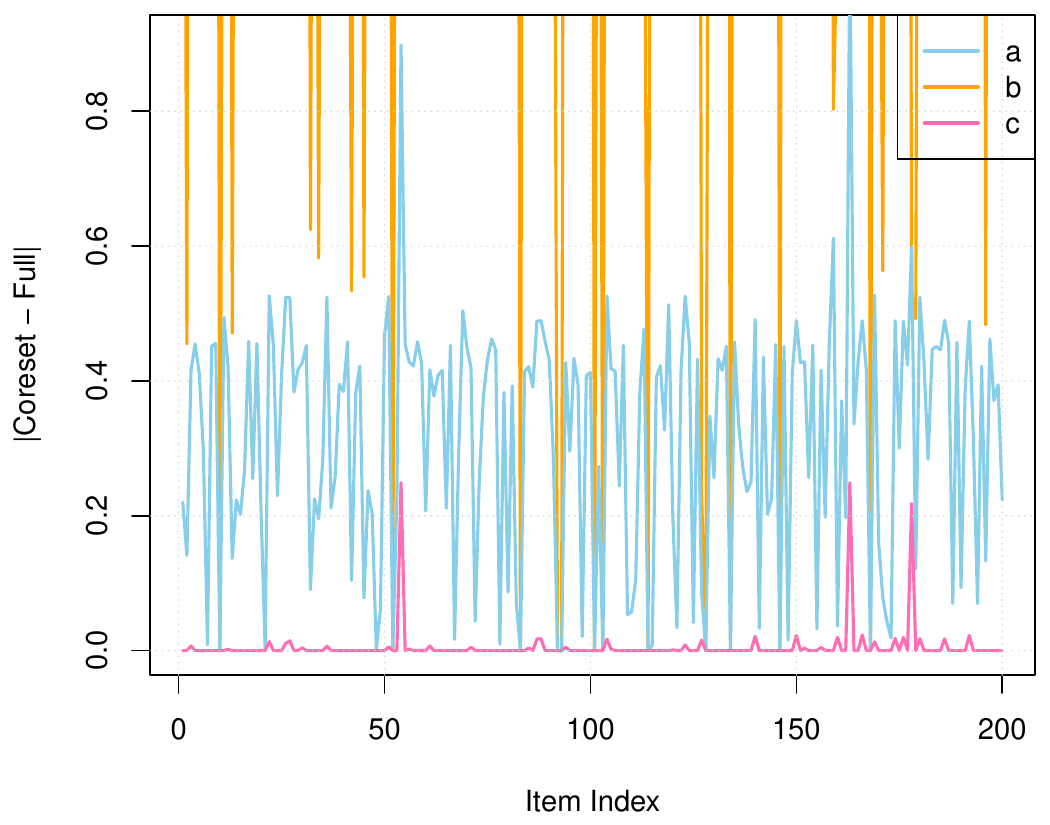}&
\includegraphics[width=0.3\linewidth]{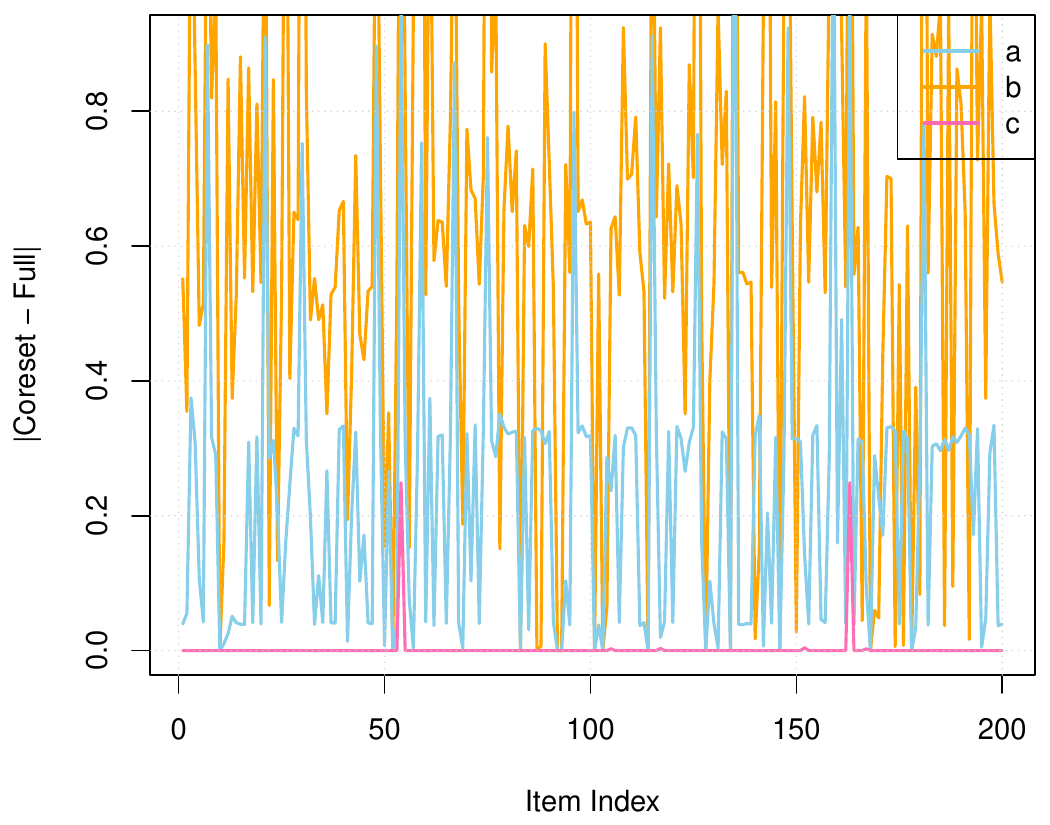}
 & \includegraphics[width=0.3\linewidth]{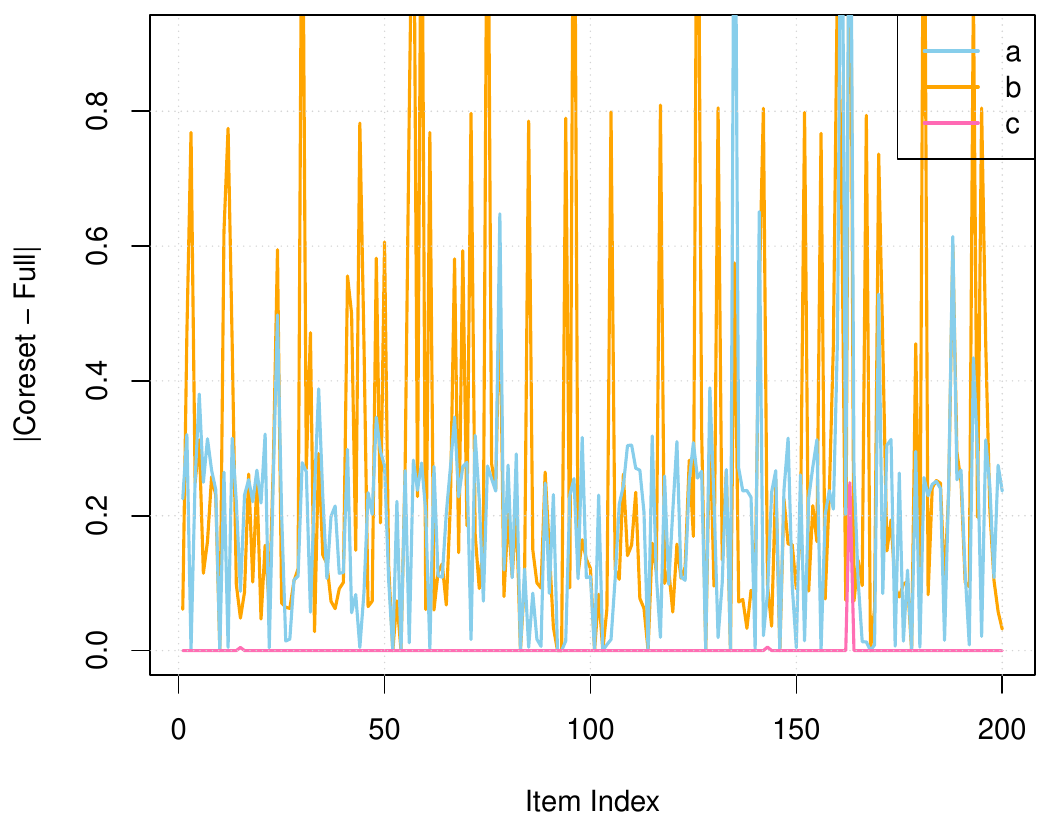}
 \\
\includegraphics[width=0.3\linewidth]{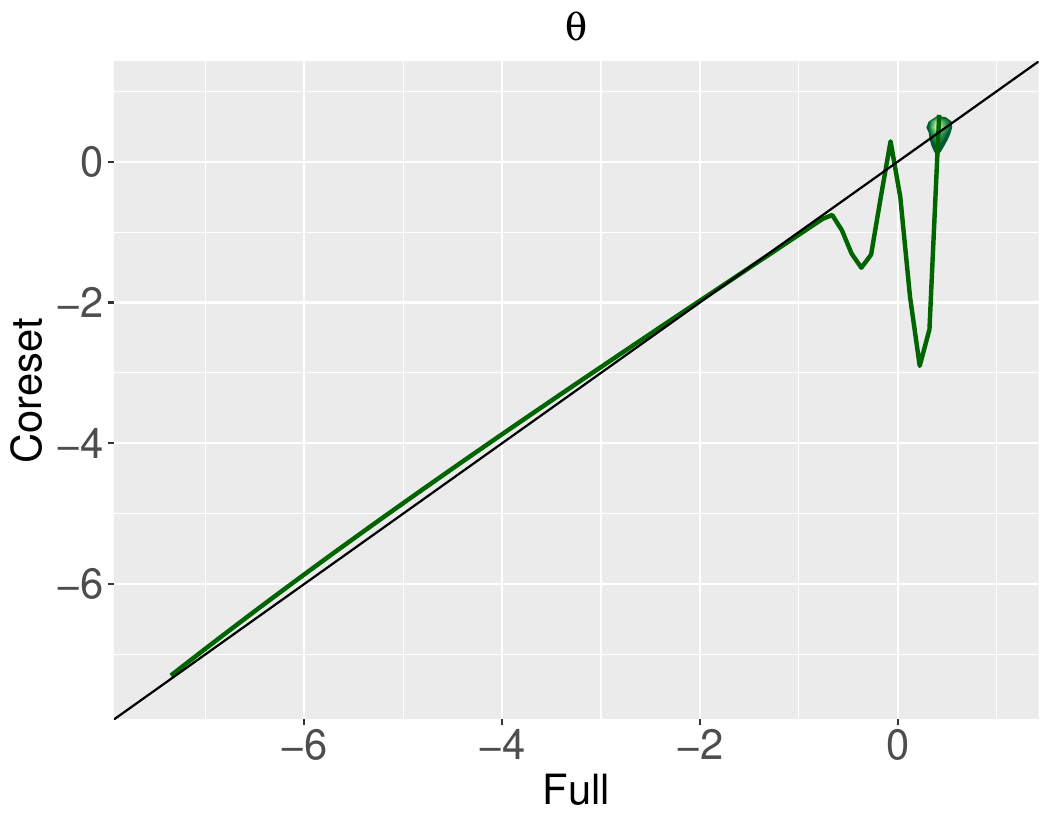}&
\includegraphics[width=0.3\linewidth]{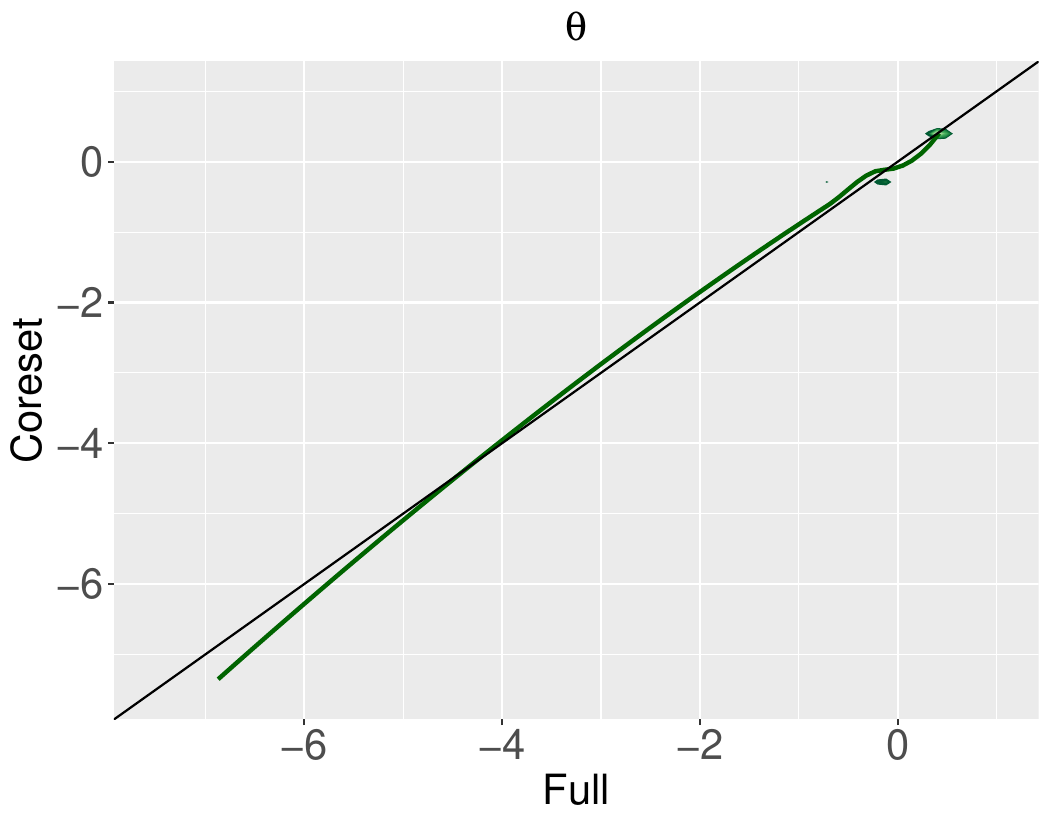}
 &
 \includegraphics[width=0.3\linewidth]{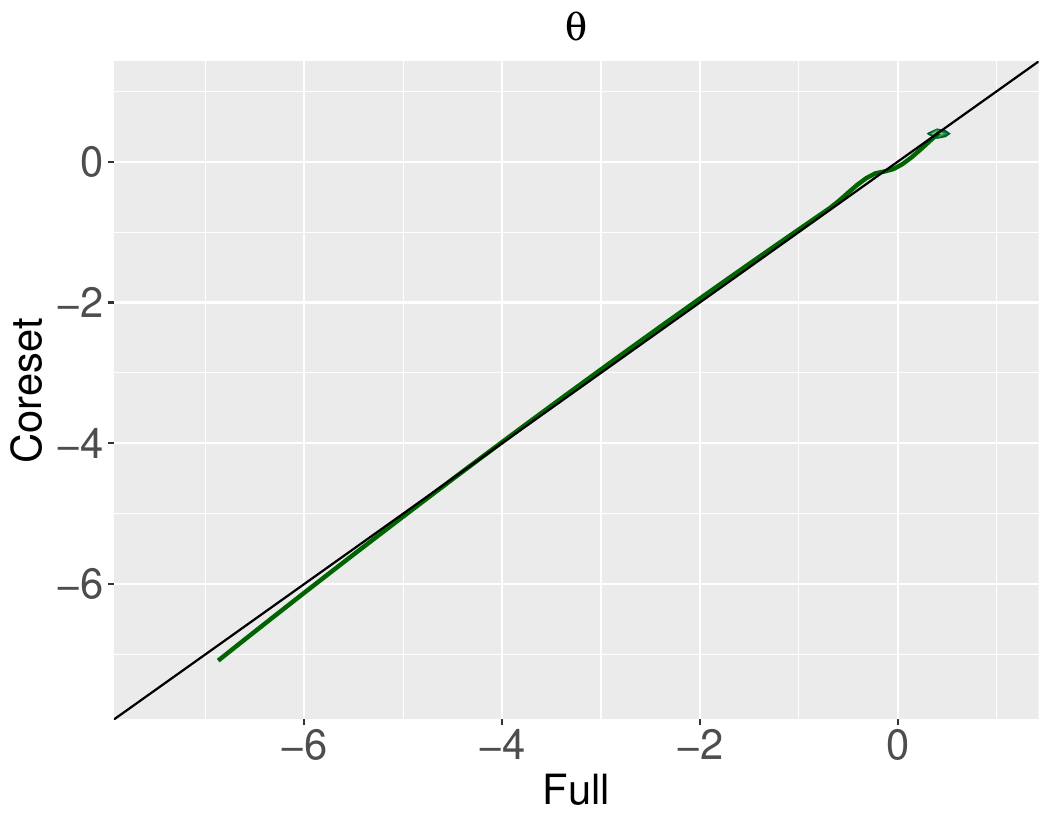}
 \\ 
\end{tabular}
\label{fig:param_exp_appendix6}
\end{center}
\end{figure*}

\begin{figure*}[hp!]
\caption{3PL Experiments on synthetic data. Parameter estimates for the coresets compared to the full data sets. 
For each experiment the upper figure shows the bias for the item parameters $a$ and $b$. The lower figure shows a kernel density estimate for the ability parameters $\theta$ with a LOESS regression line in dark green. 
The ability parameters were standardized to zero mean and unit variance. In all rows, the vertical axis is scaled such as to display $4\,{\mathrm{std.}}$ of the corresponding parameter estimate obtained from the full data set.}
\begin{center}
\begin{tabular}{ccc}
{\tiny{$\mathbf{n=100\,000,m=100,k=5\,000}$}}&{\tiny{$\mathbf{n=100\,000,m=200,k=5\,000}$}}&{\tiny{$\mathbf{n=200\,000,m=100,k=10\,000}$}}
\\
\includegraphics[width=0.3\linewidth]{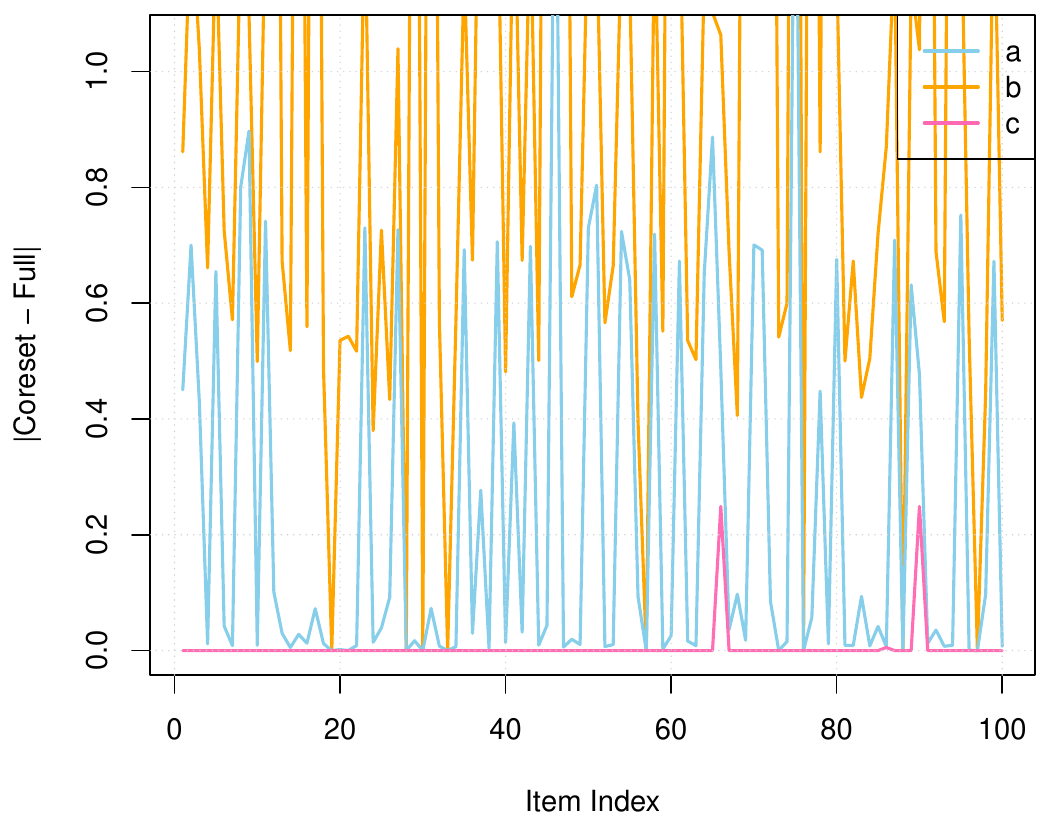}&
\includegraphics[width=0.3\linewidth]{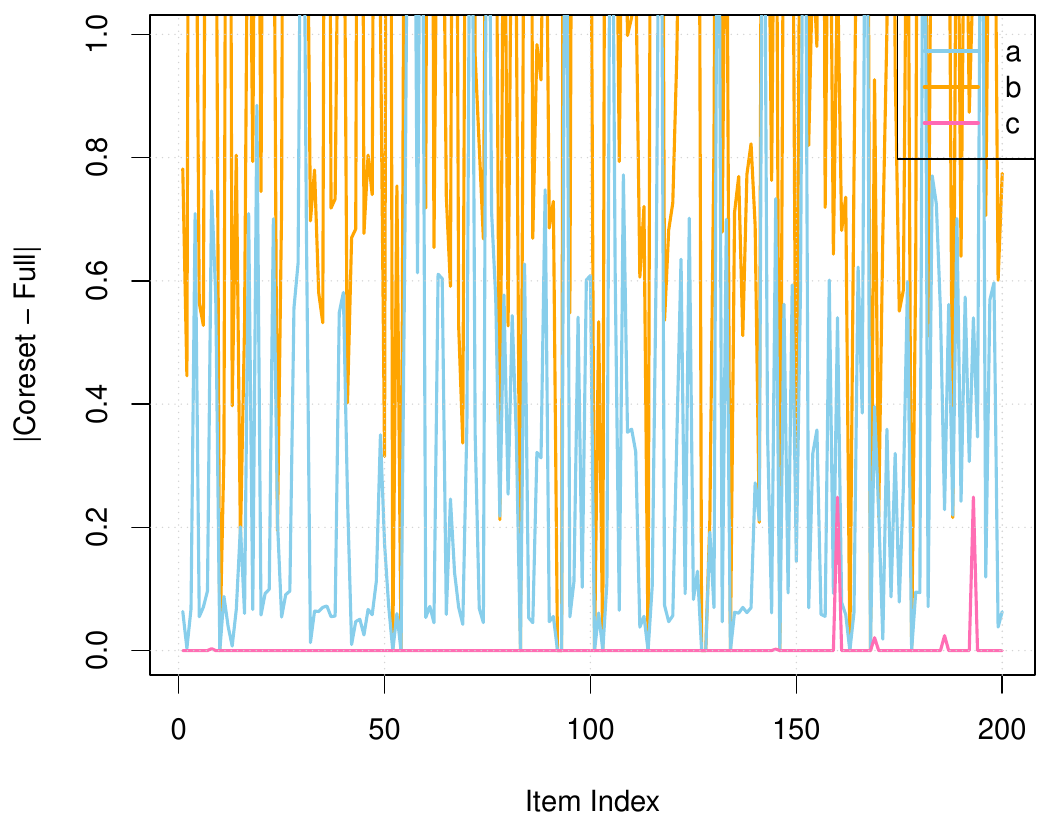}
 &
\includegraphics[width=0.3\linewidth]{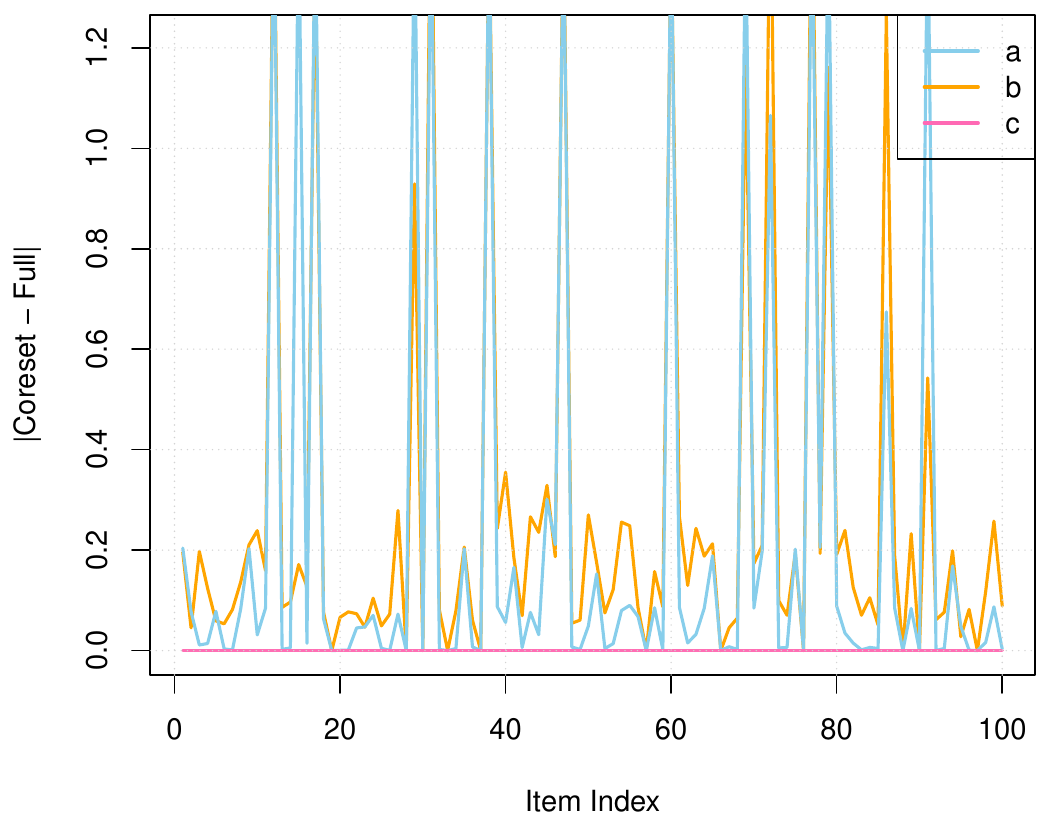}
 \\
\includegraphics[width=0.3\linewidth]{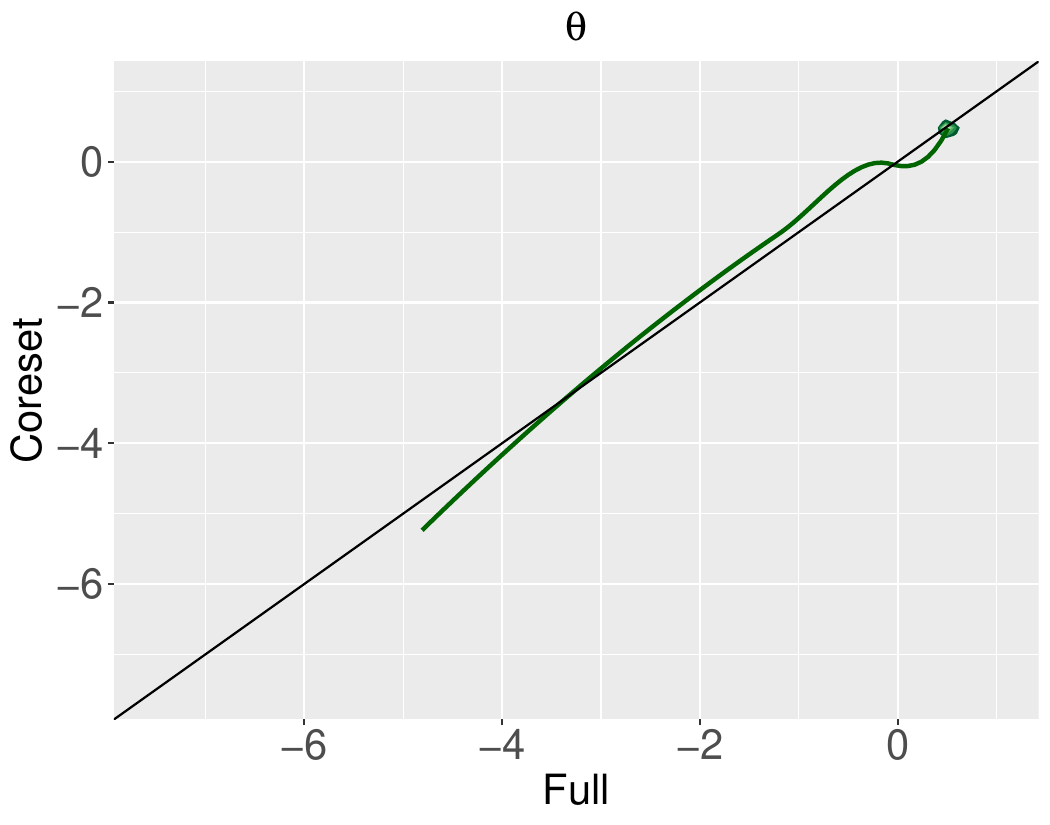}&
\includegraphics[width=0.3\linewidth]{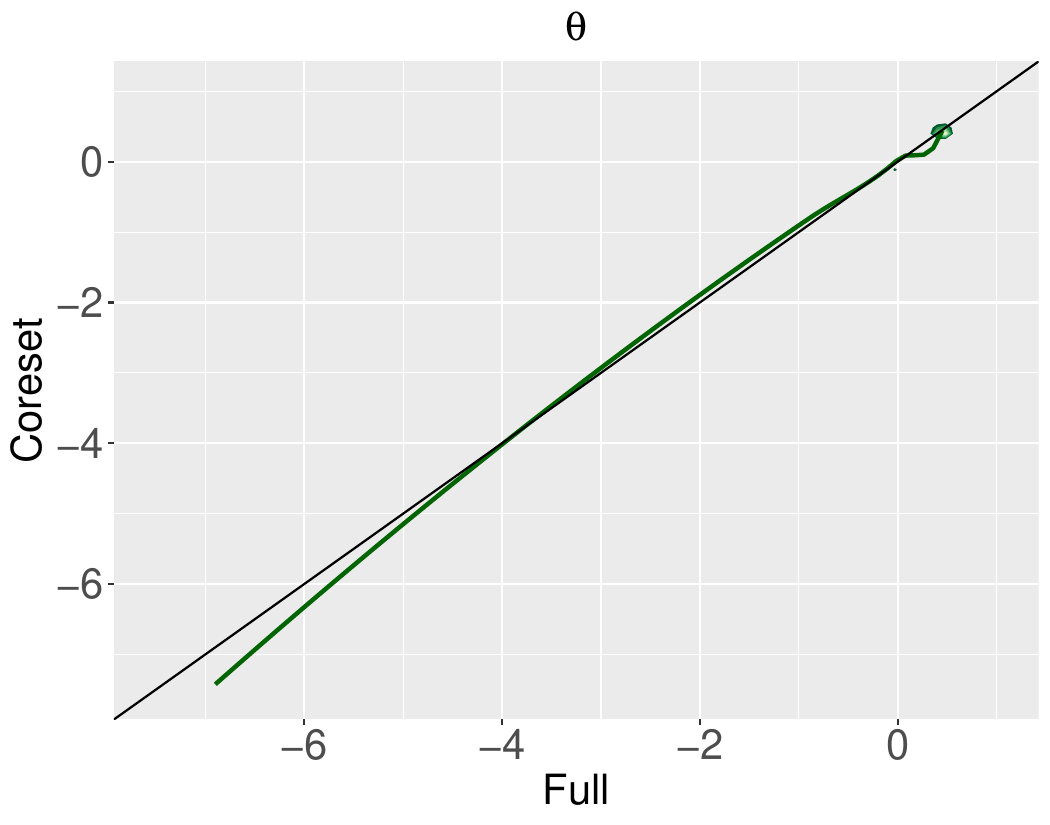} &
\includegraphics[width=0.3\linewidth]{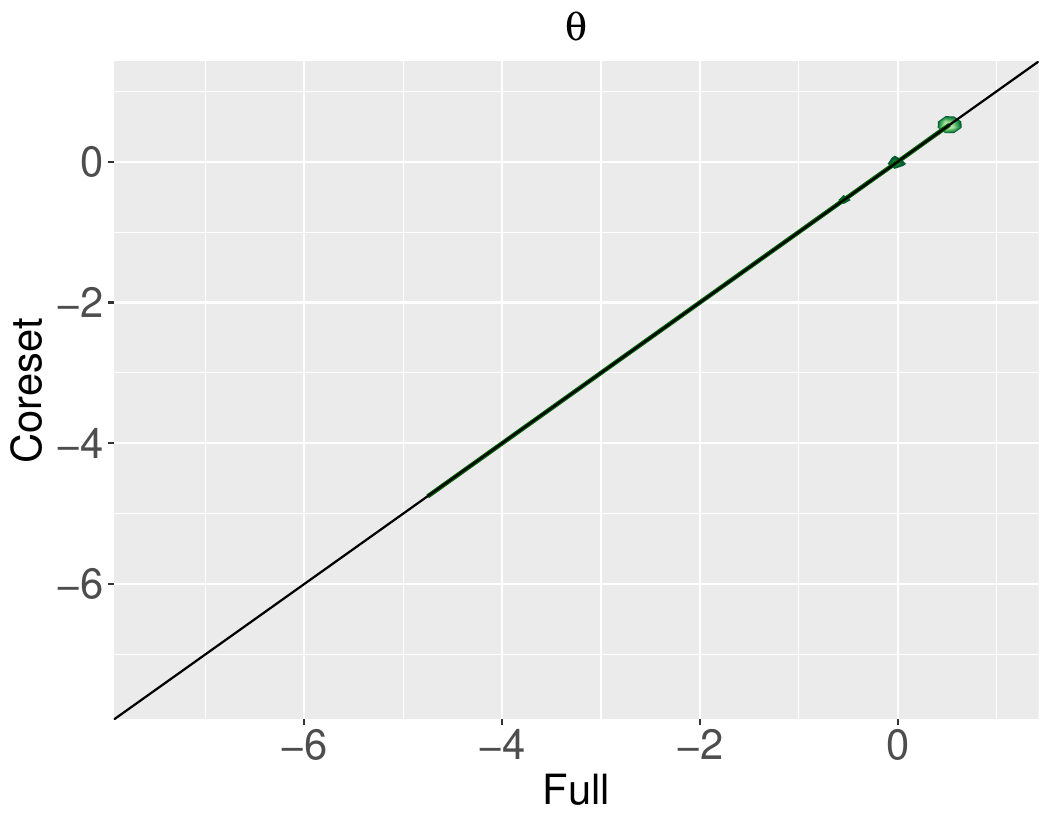}
 \\
\end{tabular}
\label{fig:param_exp_appendix7}
\end{center}
\end{figure*}

\clearpage
\section{COMPARISON TO UNIFORM SAMPLING}
\label{appendix:uniform}

The interested reader may ask why not to simply use uniformly sampled subsets of the input instead of coresets, as this is arguably the de facto standard baseline used for estimating IRT models from subsamples. For instance, \citet{karadavut2016comparison} showed in an extensive comparison that uniform sampling works better than standard $\ell_2$-leverage score methods (note that we use \emph{square root} $\ell_2$-leverage scores, which makes a large difference). Further, uniform sampling is commonly used for constructing training data by subsampling from the complete data space $\{-1,1\}^{m\times n}$~\citep{bonifay2017complexity}.

However, it is well known that uniform samples of sublinear size cannot yield strong multiplicative approximation guarantees, even for mild data with $\mu=1$. This also holds for other techniques that rely on uniform subsampling, such as stochastic gradient descent (SGD) as the authors demonstrate theoretically, and practically in \citep{MunteanuSSW18}. Coresets, in contrast, are designed to provably approximate the loss to within a $(1+\varepsilon)$ factor with  sublinear sample size in the natural case where $\mu$ is bounded.

To corroborate this in the context of IRT models, we compared between the approximation achieved by uniformly sampled subsets of the input and our coresets, after 50 iterations for 2PL IRT models on synthetic data (generated as described in the main body) and on real-world SHARE~\citep{boersch2022survey} and NEPS data~\citep{NEPS-SC4}.
The results are measured for both methods in terms of mean absolute deviations of calculated estimates from the actual item parameters and from the actual ability parameter, as well in terms of the relative error of the objective function, cf. \lemref{coreset:error:approx}, summarized in~\cref{tab:results_uniform,tab:results_uniform:share,tab:results_uniform:neps}.

Initial experiments showed that the uniform samples were consistently less accurate by (at least) one order of magnitude regarding the Mean Absolute Deviation (MAD). To get an impression of the best performance of the two methods, we repeat both experiments using uniform samples and the coresets 20 times independently and compare the best result for each method to one another. Note that the information on which repetition gave the best result is not available in practice, so this is an overly optimistic scenario.

Indeed, for the best performing repetition, the parameter estimates from uniform samples w.r.t MAD are comparable up to a negligible amount. But the relative error of the objective function approximation using uniform samples is very large. For the synthetic data, the relative error is always around $50\,\%$, while for the real-world data, we see that the error actually decreases as the data sample size grows. However, to get a result of comparable quality to the coresets, the uniform sample needs to comprise almost the whole input, while our coresets achieve the same error using a tiny fraction of the input (cf.~\cref{tab:results_uniform:neps}). 

We also note that downstream tasks, such as calculating gradients, uncertainty quantification measures, Hessian, Fisher information etc. require a close approximation of the objective function. We thus conclude that coresets are better suited than uniform sampling, even in optimistic situations where the latter yields accurate point estimation results.

\begin{table*}[ht!]
\caption{2PL experiments on synthetic data for uniformly sampled subsets vs. coresets. 
Comparison of the best solutions found taken across $20$ repetitions (each running $50$ iterations of the main loop) per data set for different configurations of the data dimensions: the number of items $m$, the number of examinees $n$, and the uniform sample/coreset size $k$. 
Let $f_{\sf full}$, $f_{{\sf unif}(j)}$ and $f_{{\sf core}(j)}$ be the optimal values of the loss function on the input, on the uniform sample for the $j$-th repetition, and on the coreset for the $j$-th repetition, respectively. Let $f_{\sf unif} = \min_j f_{{\sf unif}(j)}$, and $f_{\sf core} = \min_j f_{{\sf core}(j)}$.
Comparison made w.r.t. Relative errors: {\bf r.err. $\hat{\varepsilon}_{\sf unif}$}$=|f_{\sf unif} - f_{\sf full}|/f_{\sf full}$, {\bf r.err. $\hat{\varepsilon}_{\sf core}$}$=|f_{\sf core} - f_{\sf full}|/f_{\sf full}$ (cf. \lemref{coreset:error:approx}), and Mean Absolute Deviations (MAD): $\text{mad}_{\sf core}(\alpha)=\frac{1}{n}\sum (|a_{\sf full}-a_{\sf core}| + |b_{\sf full}-b_{\sf core}|)$; $\text{mad}_{\sf core}(\theta)=\frac{1}{m}\sum |\theta_{\sf full}-\theta_{\sf core}|$. $\text{mad}_{\sf unif}(\alpha)=\frac{1}{n}\sum (|a_{\sf full}-a_{\sf unif}| + |b_{\sf full}-b_{\sf unif}|)$; $\text{mad}_{\sf unif}(\theta)=\frac{1}{m}\sum |\theta_{\sf full}-\theta_{\sf unif}|$.
}
	\label{tab:results_uniform}
	\begin{center}
		\begin{tabular}{ r r r| c c r | c c r}
    \multicolumn{3}{c}{}& \multicolumn{3}{c}{{\bf Uniform sampling}}& \multicolumn{3}{c}{{\bf Coresets}} \\
		\hline
		{$\mathbf n$} & {$\mathbf m$} & {$\mathbf k$} &  {\bf $\text{mad}_{\sf unif}(\alpha)$} & {\bf $\text{mad}_{\sf unif}(\theta)$} & {\bf r.err. $\hat{\varepsilon}_{\sf unif}$} & {\bf $\text{mad}_{\sf core}(\alpha)$} & {\bf $\text{mad}_{\sf core}(\theta)$}  & {\bf r.err. $\hat{\varepsilon}_{\sf core}$} \\ \hline \hline
		$50\,000$ & $100$ & $100$ & $1.023$ & $0.029$ & $0.49127$ & $1.108$ & $0.045$ & $0.13452$ \\ \hline
		$50\,000$ & $200$ & $500$ & $0.475$ & $0.009$ & $0.49284$ & $0.508$ & $0.011$ & $0.05214$ \\ \hline
		$50\,000$ & $500$ & $500$ & $0.450$ & $0.004$ & $0.49262$ & $0.525$ & $0.008$ & $0.04803$  \\ \hline\hline
		$100\,000$ & $100$ & $100$ & $0.975$ & $0.077$ & $0.49173$ & $0.970$ & $0.040$ & $0.14776$  \\ \hline
		$100\,000$ & $200$ & $1\,000$ & $0.318$ & $0.007$ & $0.49389$ & $0.379$ & $0.008$ & $0.03404$  \\ \hline
		$100\,000$ & $500$ & $1\,000$ & $0.351$ & $0.002$ & $0.49377$ & $0.345$ & $0.005$ & $0.03140$ \\ \hline\hline
		$200\,000$ & $100$ & $1\,000$ & $0.331$ & $0.005$ & $0.49643$ & $0.374$ & $0.008$ & $0.04400$ \\ \hline
		$200\,000$ & $200$ & $2\,000$ & $0.241$ & $0.003$ & $0.49442$ & $0.248$ & $0.003$ & $0.02375$ \\ \hline
		$200\,000$ & $500$ & $2\,000$ & $0.239$ & $0.002$ & $0.49436$ & $0.268$ & $0.002$ & $0.03013$ \\ \hline\hline
		$500\,000$ & $100$ & $5\,000$ & $0.146$ & $0.002$ & $0.49479$ & $0.142$ & $0.002$ & $0.01399$ \\ \hline
		$500\,000$ & $200$ & $5\,000$ & $0.157$ & $0.002$ & $0.49478$ & $0.180$ & $0.002$ & $0.01689$ \\ \hline
		$500\,000$ & $500$ & $5\,000$ & $0.167$ & $0.001$ & $0.49477$ & $0.171$ & $0.001$ & $0.01445$ \\ \hline\hline
	\end{tabular}
	\end{center}
\end{table*}

\begin{table*}[ht!]
\caption{{
2PL experiments on real-world SHARE data~\citep{boersch2022survey} for uniformly sampled subsets vs. coresets. 
Comparison of the best solutions found taken across $20$ repetitions (each running $50$ iterations of the main loop) per data set for different configurations of the data dimensions: the number of items $m$, the number of examinees $n$, and the uniform sample/coreset size $k$. 
Let $f_{\sf full}$, $f_{{\sf unif}(j)}$ and $f_{{\sf core}(j)}$ be the optimal values of the loss function on the input, on the uniform sample for the $j$-th repetition, and on the coreset for the $j$-th repetition, respectively. Let $f_{\sf unif} = \min_j f_{{\sf unif}(j)}$, and $f_{\sf core} = \min_j f_{{\sf core}(j)}$.
Comparison made w.r.t. Relative errors: {\bf r.err. $\hat{\varepsilon}_{\sf unif}$}$=|f_{\sf unif} - f_{\sf full}|/f_{\sf full}$, {\bf r.err. $\hat{\varepsilon}_{\sf core}$}$=|f_{\sf core} - f_{\sf full}|/f_{\sf full}$ (cf. \lemref{coreset:error:approx}), and Mean Absolute Deviations (MAD): $\text{mad}_{\sf core}(\alpha)=\frac{1}{n}\sum (|a_{\sf full}-a_{\sf core}| + |b_{\sf full}-b_{\sf core}|)$; $\text{mad}_{\sf core}(\theta)=\frac{1}{m}\sum |\theta_{\sf full}-\theta_{\sf core}|$. $\text{mad}_{\sf unif}(\alpha)=\frac{1}{n}\sum (|a_{\sf full}-a_{\sf unif}| + |b_{\sf full}-b_{\sf unif}|)$; $\text{mad}_{\sf unif}(\theta)=\frac{1}{m}\sum |\theta_{\sf full}-\theta_{\sf unif}|$. 
}
}
	\label{tab:results_uniform:share}
	\begin{center}
		\begin{tabular}{ r r r| c c r | c c r}
    \multicolumn{3}{c}{}& \multicolumn{3}{c}{{\bf Uniform sampling}}& \multicolumn{3}{c}{{\bf Coresets}} \\
		\hline
		{$\mathbf n$} & {$\mathbf m$} & {$\mathbf k$} &  {\bf $\text{mad}_{\sf unif}(\alpha)$} & {\bf $\text{mad}_{\sf unif}(\theta)$} & {\bf r.err. $\hat{\varepsilon}_{\sf unif}$} & {\bf $\text{mad}_{\sf core}(\alpha)$} & {\bf $\text{mad}_{\sf core}(\theta)$}  & {\bf r.err. $\hat{\varepsilon}_{\sf core}$} \\ \hline \hline
$138\,997$ & $10$ & $500$ & $0.722$ & $0.071$ & $0.49618$ & $0.770$ & $0.090$ & $0.11347$
		\\ \hline
$138\,997$ & $10$ & $1\,000$  & $0.232$ & $0.034$ & $0.49534$ & $0.307$ & $0.040$ & $0.06193$
		\\ \hline
$138\,997$ & $10$ & $2\,000$ & $0.179$ & $0.020$ & $0.49255$ & $0.129$ & $0.015$ & $0.04263$
		\\ \hline
$138\,997$ & $10$ & $4\,000$ & $0.083$ & $0.004$ & $0.48608$ & $0.108$ & $0.013$ & $0.02791$
		\\ \hline
$138\,997$ & $10$ & $6\,000$ & $0.086$ & $0.005$ & $0.47939$ & $0.095$ & $0.007$ & $0.03546$
		\\ \hline
$138\,997$ & $10$ & $8\,000$ & $0.082$ & $0.006$ & $0.47202$ & $0.061$ & $0.007$ & $0.01935$
		\\ \hline
$138\,997$ & $10$ & $10\,000$ & $0.059$ & $0.008$ & $0.46502$ & $0.092$ & $0.014$ & $0.02713$
		\\ \hline
$138\,997$ & $10$ & $20\,000$ & $0.058$ & $0.010$ & $0.42961$ & $0.045$ & $0.003$ & $0.01415$
		\\ \hline
\end{tabular}
	\end{center}
\end{table*}

\begin{table*}[ht!]
\caption{{
2PL experiments on real-world NEPS data~\citep{NEPS-SC4} for uniformly sampled subsets vs. coresets. 
Comparison of the best solutions found taken across $20$ repetitions (each running $50$ iterations of the main loop) per data set for different configurations of the data dimensions: the number of items $m$, the number of examinees $n$, and the uniform sample/coreset size $k$. 
Let $f_{\sf full}$, $f_{{\sf unif}(j)}$ and $f_{{\sf core}(j)}$ be the optimal values of the loss function on the input, on the uniform sample for the $j$-th repetition, and on the coreset for the $j$-th repetition, respectively. Let $f_{\sf unif} = \min_j f_{{\sf unif}(j)}$, and $f_{\sf core} = \min_j f_{{\sf core}(j)}$.
Comparison made w.r.t. Relative errors: {\bf r.err. $\hat{\varepsilon}_{\sf unif}$}$=|f_{\sf unif} - f_{\sf full}|/f_{\sf full}$, {\bf r.err. $\hat{\varepsilon}_{\sf core}$}$=|f_{\sf core} - f_{\sf full}|/f_{\sf full}$ (cf. \lemref{coreset:error:approx}), and Mean Absolute Deviations (MAD): $\text{mad}_{\sf core}(\alpha)=\frac{1}{n}\sum (|a_{\sf full}-a_{\sf core}| + |b_{\sf full}-b_{\sf core}|)$; $\text{mad}_{\sf core}(\theta)=\frac{1}{m}\sum |\theta_{\sf full}-\theta_{\sf core}|$. $\text{mad}_{\sf unif}(\alpha)=\frac{1}{n}\sum (|a_{\sf full}-a_{\sf unif}| + |b_{\sf full}-b_{\sf unif}|)$; $\text{mad}_{\sf unif}(\theta)=\frac{1}{m}\sum |\theta_{\sf full}-\theta_{\sf unif}|$.
}
}
	\label{tab:results_uniform:neps}
	\begin{center}
		\begin{tabular}{ r r r| c c r | c c r}
    \multicolumn{3}{c}{}& \multicolumn{3}{c}{{\bf Uniform sampling}}& \multicolumn{3}{c}{{\bf Coresets}} \\
		\hline
		{$\mathbf n$} & {$\mathbf m$} & {$\mathbf k$} &  {\bf $\text{mad}_{\sf unif}(\alpha)$} & {\bf $\text{mad}_{\sf unif}(\theta)$} & {\bf r.err. $\hat{\varepsilon}_{\sf unif}$} & {\bf $\text{mad}_{\sf core}(\alpha)$} & {\bf $\text{mad}_{\sf core}(\theta)$}  & {\bf r.err. $\hat{\varepsilon}_{\sf core}$} \\ \hline \hline
$11\,532$ & $88$ & $100$ & $1.561$ & $0.185$ & $0.48878$ & $1.477$ & $0.171$ & $0.09335$
		\\ \hline
$11\,532$ & $88$ & $200$ & $1.056$ & $0.131$ & $0.48762$ & $0.930$ & $0.142$ & $0.07134$
		\\ \hline
$11\,532$ & $88$ & $500$ & $0.635$ & $0.096$ & $0.47713$ & $0.499$ & $0.075$ & $0.03795$
		\\ \hline
$11\,532$ & $88$ & $750$ & $0.486$ & $0.068$ & $0.46702$ & $0.432$ & $0.062$ & $0.02675$
		\\ \hline
$11\,532$ & $88$ & $1\,000$ & $0.393$ & $0.053$ & $0.45664$ & $0.320$ & $0.045$ & $0.02007$
		\\ \hline
$11\,532$ & $88$ & $2\,000$ & $0.227$ & $0.030$ & $0.41390$ & $0.182$ & $0.026$ & $0.00506$
		\\ \hline
$11\,532$ & $88$ & $5\,000$ & $0.107$ & $0.011$ & $0.28429$ & $0.101$ & $0.015$ & $0.00147$
		\\ \hline
$11\,532$ & $88$ & $10\,000$ & $0.029$ & $0.002$ & $0.06711$ & $0.071$ & $0.012$ & $0.00094$
		\\ \hline
\end{tabular}
	\end{center}
\end{table*}

\clearpage
\section{COMPARISON TO CORESETS FOR CLUSTERING}
\label{appendix:clustering}

The interested reader may find that the alternating optimization algorithm resembles some kind of EM-type algorithm, akin to the popular Lloyd's algorithm for the $k$-means clustering problem. One crucial difference, however, is that in the IRT context, both sets of parameters to be estimated are unknown latent variables, while for the $k$-means problem, one set of 'parameters', is implicitly given by the data, and the task reduces to finding the other set (the $k$ cluster centers). We also note that in the IRT problem, the desired output is an explicit description of $m$ ability parameters, and $n$ item parameters. One can thus not hope to reduce one (or both) of the dimensions only once and work only on this single reduced coreset, as is possible for $k$-means. 

Despite the above mentioned differences, the interested reader may ask why we should construct new coresets for the IRT models, if already existing solutions from a plethora of coresets designed for clustering problems would serve as well. 

Recently, \citet{SchwiegelshohnS22} provided an extensive empirical comparison of various coreset constructions. The best performing coresets in practice were generated by `distance sampling', which is based on sensitivity sampling \citep{FeldmanL11,LangbergS10}, the same coreset design pattern that we also used for our coreset construction. In the case of clustering problems, first an initial (and rough) bi-criteria approximation is computed. Then, subsampling is performed proportionally to the squared Euclidean distance of input points to their closest center from this approximation. This coreset construction consistently outperformed all competitors in \citep{SchwiegelshohnS22}, even the relatively new group sampling technique that achieves optimal theoretical bounds \citep{CohenAddadSS21}.

Thus, we compare our coresets to the winning `distance sampling' in terms of their approximation quality when applied to IRT models. The results are given in \cref{tab:results_sensitivity,tab:results:sensitivity:share,tab:results_sensitivity:neps}.

For all data sets, our coresets outperform the distance sampling coresets in terms of their approximation quality, for both, mean absolute deviation (MAD) and the relative error. The MAD obtained from distance sampling coresets is at least twice as large as the MAD on our coresets. The relative error of the distance sampling coresets is at least $20\,\%$ larger than using our coresets, sometimes as much as two or three times larger, or even worse on the real-world data sets. Indeed, on the real-world SHARE data set~\citep{boersch2022survey}, which is very sparse, the distance sampling coresets cannot approximate the loss function well enough (the relative error remains $\hat{\varepsilon}>0.30$), even if we allow $72\,\%$ of the input ($100\,000$ examinees) to be selected into the coresets. In comparison, our coresets approximate the loss function up to relative error $\hat{\varepsilon}<0.03$ by taking a coreset that comprises only $6\,\%$ of the input set ($8\,000$ examinees). Our construction seems much more robust to this sparse data setting.

We conclude that the distance sampling coresets can in some settings provide good approximations that are competitive to our coresets, but their performance deteriorates in the presence of sparse data. Only coresets that are specifically tailored for IRT models provide an approximation of guaranteed quality.

\begin{table*}[ht!]
\caption{2PL experiments on synthetic data for distance sampling coresets, based on sensitivity sampling, vs. IRT coresets. 
Comparison of the best solutions found taken across $10$ repetitions (each running $50$ iterations of the main loop) per data set for different configurations of the data dimensions: the number of items $m$, the number of examinees $n$, and the distance sample/coreset size $k$. 
Let $f_{\sf full}$, $f_{{\sf dist}(j)}$ and $f_{{\sf core}(j)}$ be the optimal values of the loss function on the input, on the distance sample for the $j$-th repetition, and on the coreset for the $j$-th repetition, respectively. Let $f_{\sf dist} = \min_j f_{{\sf dist}(j)}$, and $f_{\sf core} = \min_j f_{{\sf core}(j)}$.
Comparison made w.r.t. Relative errors: {\bf r.err. $\hat{\varepsilon}_{\sf dist}$}$=|f_{\sf dist} - f_{\sf full}|/f_{\sf full}$, {\bf r.err. $\hat{\varepsilon}_{\sf core}$}$=|f_{\sf core} - f_{\sf full}|/f_{\sf full}$ (cf. \lemref{coreset:error:approx}), and Mean Absolute Deviations (MAD): $\text{mad}_{\sf core}(\alpha)=\frac{1}{n}\sum (|a_{\sf full}-a_{\sf core}| + |b_{\sf full}-b_{\sf core}|)$; $\text{mad}_{\sf core}(\theta)=\frac{1}{m}\sum |\theta_{\sf full}-\theta_{\sf core}|$. $\text{mad}_{\sf dist}(\alpha)=\frac{1}{n}\sum (|a_{\sf full}-a_{\sf dist}| + |b_{\sf full}-b_{\sf dist}|)$; $\text{mad}_{\sf dist}(\theta)=\frac{1}{m}\sum |\theta_{\sf full}-\theta_{\sf dist}|$.
}
	\label{tab:results_sensitivity}
	\begin{center}
		\begin{tabular}{ r r r| c c r | c c r}
    \multicolumn{3}{c}{}& \multicolumn{3}{c}{{\bf Distance sampling coresets}}& \multicolumn{3}{c}{{\bf IRT coresets}} \\
		\hline
		{$\mathbf n$} & {$\mathbf m$} & {$\mathbf k$} &  { $\textbf{mad}_{\sf dist}(\alpha)$} & {$\textbf{mad}_{\sf dist}(\theta)$} & {\textbf{r.err.} $\hat{\varepsilon}_{\sf dist}$} & {$\textbf{mad}_{\sf core}(\alpha)$} & { $\textbf{mad}_{\sf core}(\theta)$}  & {\textbf{r.err.} $\hat{\varepsilon}_{\sf core}$} \\ \hline \hline
		$50\,000$ & $100$ & $100$ & $1.146$ & $0.058$ & $0.15496$ & $1.108$ & $0.045$ & $0.13452$ \\ \hline
		$50\,000$ & $200$ & $500$ & $0.659$ & $0.013$ & $0.08284$ & $0.508$ & $0.011$ & $0.05214$ \\ \hline
		$50\,000$ & $500$ & $500$ & $0.609$ & $0.013$ & $0.08582$ & $0.525$ & $0.008$ & $0.04803$  \\ \hline\hline
		$100\,000$ & $100$ & $100$ & $1.149$ & $0.027$ & $0.14136$ & $0.970$ & $0.040$ & $0.14776$  \\ \hline
		$100\,000$ & $200$ & $1\,000$ & $0.760$ & $0.009$ & $0.05923$ & $0.379$ & $0.008$ & $0.03404$  \\ \hline
		$100\,000$ & $500$ & $1\,000$ & $0.448$ & $0.011$ & $0.07641$ & $0.345$ & $0.005$ & $0.03140$ \\ \hline\hline
		$200\,000$ & $100$ & $1\,000$ & $0.543$ & $0.022$ & $0.06787$ & $0.374$ & $0.008$ & $0.04400$ \\ \hline
		$200\,000$ & $200$ & $2\,000$ & $0.343$ & $0.005$ & $0.04916$ & $0.248$ & $0.003$ & $0.02375$ \\ \hline
		$200\,000$ & $500$ & $2\,000$ & $0.354$ & $0.005$ & $0.04667$ & $0.268$ & $0.002$ & $0.03013$ \\ \hline\hline
		$500\,000$ & $100$ & $5\,000$ & $0.252$ & $0.013$ & $0.03292$ & $0.142$ & $0.002$ & $0.01399$ \\ \hline
		$500\,000$ & $200$ & $5\,000$ & $0.295$ & $0.005$ & $0.03394$ & $0.180$ & $0.002$ & $0.01689$ \\ \hline
		$500\,000$ & $500$ & $5\,000$ & $0.259$ & $0.003$ & $0.03424$ & $0.171$ & $0.001$ & $0.01445$ \\ \hline\hline
	\end{tabular}
	\end{center}
\end{table*}

\begin{table*}[ht!]
\caption{{
2PL experiments on real-world SHARE data for distance sampling coresets, based on sensitivity sampling, vs. IRT coresets. 
Comparison of the best solutions found taken across $10$ repetitions (each running $50$ iterations of the main loop) per data set for different configurations of the data dimensions: the number of items $m$, the number of examinees $n$, and the distance sample/coreset size $k$. 
Let $f_{\sf full}$, $f_{{\sf dist}(j)}$ and $f_{{\sf core}(j)}$ be the optimal values of the loss function on the input, on the distance sample for the $j$-th repetition, and on the coreset for the $j$-th repetition, respectively. Let $f_{\sf dist} = \min_j f_{{\sf dist}(j)}$, and $f_{\sf core} = \min_j f_{{\sf core}(j)}$.
Comparison made w.r.t. Relative errors: {\textbf{r.err.} $\hat{\varepsilon}_{\sf dist}$}$=|f_{\sf dist} - f_{\sf full}|/f_{\sf full}$, {\textbf{r.err.} $\hat{\varepsilon}_{\sf core}$}$=|f_{\sf core} - f_{\sf full}|/f_{\sf full}$ (cf. \lemref{coreset:error:approx}), and Mean Absolute Deviations (MAD): $\text{mad}_{\sf core}(\alpha)=\frac{1}{n}\sum (|a_{\sf full}-a_{\sf core}| + |b_{\sf full}-b_{\sf core}|)$; $\text{mad}_{\sf core}(\theta)=\frac{1}{m}\sum |\theta_{\sf full}-\theta_{\sf core}|$. $\text{mad}_{\sf dist}(\alpha)=\frac{1}{n}\sum (|a_{\sf full}-a_{\sf dist}| + |b_{\sf full}-b_{\sf dist}|)$; $\text{mad}_{\sf dist}(\theta)=\frac{1}{m}\sum |\theta_{\sf full}-\theta_{\sf dist}|$.
}
}
	\label{tab:results:sensitivity:share}
	\begin{center}
		\begin{tabular}{ r r r| c c r | c c r}
    \multicolumn{3}{c}{}
    & \multicolumn{3}{c}{{\bf Distance sampling coresets}}& \multicolumn{3}{c}{{\bf IRT coresets}} \\
		\hline
		{$\mathbf n$} &  {$\mathbf m$} &  {$\mathbf k$} &  { $\textbf{mad}_{\sf dist}(\alpha)$} & { $\textbf{mad}_{\sf dist}(\theta)$} & {\textbf{r.err.} $\hat{\varepsilon}_{\sf dist}$} & { $\textbf{mad}_{\sf core}(\alpha)$} & { $\textbf{mad}_{\sf core}(\theta)$}  & { 
  $\hat{\varepsilon}_{\sf core}$} \\ \hline \hline
$138\,997$ & $10$ & $500$ & $3.581$ & $0.329$ & $0.3843766731629$ & $0.770$ & $0.090$ & $0.11347$ 
		\\ \hline
$138\,997$ & $10$ & $1\,000$  & $3.580$ & $0.328$ & $0.3843766731630$ & $0.307$ & $0.040$ & $0.06193$ 
		\\ \hline
$138\,997$ & $10$ & $2\,000$ & $3.586$ & $0.330$ & $0.3843766731634$ & $0.129$ & $0.015$ &  $0.04263$ 
		\\ \hline
$138\,997$ & $10$ & $4\,000$ & $3.579$ & $0.328$ & $0.3843766731618$ & $0.108$ & $0.013$ & $0.02791$ 
		\\ \hline
$138\,997$ & $10$ & $6\,000$ & $3.581$ & $0.328$ & $0.3843766731613$ & $0.095$ & $0.007$ & $0.03546$ 
		\\ \hline
$138\,997$ & $10$ & $8\,000$ & $3.580$ & $0.328$ & $0.3843766731606$ & $0.061$ & $0.007$ & $0.01935$ 
		\\ \hline
$138\,997$ & $10$ & $10\,000$ & $3.581$ & $0.328$ & $0.3843766731605$ & $0.092$ & $0.014$ & $0.02713$ 
		\\ \hline
$138\,997$ & $10$ & $20\,000$ & $3.580$ & $0.328$ & $0.3843766731608$ & $0.045$ & $0.003$ & $0.01415$ 
        \\ \hline
\end{tabular}
	\end{center}
\end{table*}

\begin{table*}[ht!]
\caption{{
2PL experiments on real-world NEPS data for distance sampling coresets, based on sensitivity sampling, vs. IRT coresets. 
Comparison of the best solutions found taken across $10$ repetitions (each running $50$ iterations of the main loop) per data set for different configurations of the data dimensions: the number of items $m$, the number of examinees $n$, and the distance sample/coreset size $k$. 
Let $f_{\sf full}$, $f_{{\sf dist}(j)}$ and $f_{{\sf core}(j)}$ be the optimal values of the loss function on the input, on the distance sample for the $j$-th repetition, and on the coreset for the $j$-th repetition, respectively. Let $f_{\sf dist} = \min_j f_{{\sf dist}(j)}$, and $f_{\sf core} = \min_j f_{{\sf core}(j)}$.
Comparison made w.r.t. Relative errors: {\bf r.err. $\hat{\varepsilon}_{\sf dist}$}$=|f_{\sf dist} - f_{\sf full}|/f_{\sf full}$, {\bf r.err. $\hat{\varepsilon}_{\sf core}$}$=|f_{\sf core} - f_{\sf full}|/f_{\sf full}$ (cf. \lemref{coreset:error:approx}), and Mean Absolute Deviations (MAD): $\text{mad}_{\sf core}(\alpha)=\frac{1}{n}\sum (|a_{\sf full}-a_{\sf core}| + |b_{\sf full}-b_{\sf core}|)$; $\text{mad}_{\sf core}(\theta)=\frac{1}{m}\sum |\theta_{\sf full}-\theta_{\sf core}|$. $\text{mad}_{\sf dist}(\alpha)=\frac{1}{n}\sum (|a_{\sf full}-a_{\sf dist}| + |b_{\sf full}-b_{\sf dist}|)$; $\text{mad}_{\sf dist}(\theta)=\frac{1}{m}\sum |\theta_{\sf full}-\theta_{\sf dist}|$.
}
}
	\label{tab:results_sensitivity:neps}
	\begin{center}
		\begin{tabular}{ r r r| c c r | c c r}
    \multicolumn{3}{c}{}& \multicolumn{3}{c}{{\bf Distance sampling coresets}}& \multicolumn{3}{c}{{\bf IRT coresets}} \\
		\hline
		{$\mathbf n$} & {$\mathbf m$} & {$\mathbf k$} &  { $\textbf{mad}_{\sf dist}(\alpha)$} & { $\textbf{mad}_{\sf dist}(\theta)$} & {\textbf{r.err.} $\hat{\varepsilon}_{\sf dist}$} & {$\textbf{mad}_{\sf core}(\alpha)$} & { $\textbf{mad}_{\sf core}(\theta)$}  & {\textbf{r.err.} $\hat{\varepsilon}_{\sf core}$} \\ \hline \hline
$11\,532$ & $88$ & $100$ & $2.244$ & $0.433$ & $0.12674$ & $1.477$ & $0.171$ & $0.09335$
		\\ \hline
$11\,532$ & $88$ & $200$ & $1.818$ & $0.198$ & $0.11617$ & $0.930$ & $0.142$ & $0.07134$
		\\ \hline
$11\,532$ & $88$ & $500$ & $0.959$ & $0.138$ & $0.07654$ & $0.499$ & $0.075$ & $0.03795$
		\\ \hline
$11\,532$ & $88$ & $750$ & $0.432$ & $0.103$ & $0.07988$ & $0.432$ & $0.062$ & $0.02675$
		\\ \hline
$11\,532$ & $88$ & $1\,000$ & $0.654$ & $0.101$ & $0.06035$ & $0.320$ & $0.045$ & $0.02007$
		\\ \hline
$11\,532$ & $88$ & $2\,000$ & $0.490$ & $0.068$ & $0.06295$ & $0.182$ & $0.026$ & $0.00506$
		\\ \hline
$11\,532$ & $88$ & $5\,000$ & $0.101$ & $0.043$ & $0.04319$ & $0.101$ & $0.015$ & $0.00147$
		\\ \hline
$11\,532$ & $88$ & $10\,000$ & $0.301$ & $0.031$ & $0.04802$ & $0.071$ & $0.012$ & $0.00094$
		\\ \hline
\end{tabular}
	\end{center}
\end{table*}

\clearpage

\section{COMPARISON TO \texorpdfstring{$\ell_1$}{L1} 
LEVERAGE SCORES AND \texorpdfstring{$\ell_1$}{L1} 
LEWIS WEIGHTS }
\label{sec:app:L1scores}

Further baselines for subsampling the input that are used in the literature to approximate the objective functions related to logistic regression, are sampling proportional to $\ell_1$-leverage scores \citep{MunteanuOP22}, resp. to $\ell_1$-Lewis weights \citep{MaiMR21}.

We compared our coresets to the $\ell_1$-leverage scores, resp. $\ell_1$-Lewis weights, in terms of their approximation quality, their mean absolute deviation (MAD) and their relative error.

Our IRT coresets show very similar, and often slightly better performance compared to both alternative subsampling techniques, when applied to the synthetic, and the real-world data instances for the 2PL IRT model.

See~\cref{tab:results_l1score,tab:results_l1score:share,tab:results_l1score:neps} below for the comparison of our coresets to sampling based on $\ell_1$-leverage scores. Also, see \cref{tab:results_lewisweights,tab:results_lewisweights:share,tab:results_lewisweights:neps} below for the comparison of our coresets to sampling based on $\ell_1$-Lewis weights.

\subsection{\texorpdfstring{$\ell_1$}{L1}-Leverage Score Sampling}
\label{app:additional:l1leverage}

\begin{table*}[ht!]
\caption{2PL experiments on synthetic data for $\ell_1$-leverage score sampling coresets, vs. IRT coresets. 
Comparison of the best solutions found taken across $10$ repetitions (each running $50$ iterations of the main loop) per data set for different configurations of the data dimensions: the number of items $m$, the number of examinees $n$, and the $\ell_1$-leverage score sample/coreset size $k$. 
Let $f_{\sf full}$, $f_{{\sf L1s}(j)}$ and $f_{{\sf core}(j)}$ be the optimal values of the loss function on the input, on the distance sample for the $j$-th repetition, and on the coreset for the $j$-th repetition, respectively. Let $f_{\sf L1s} = \min_j f_{{\sf L1s}(j)}$, and $f_{\sf core} = \min_j f_{{\sf core}(j)}$.
Comparison made w.r.t. Relative errors: {\bf r.err. $\hat{\varepsilon}_{\sf L1s}$}$=|f_{\sf L1s} - f_{\sf full}|/f_{\sf full}$, {\bf r.err. $\hat{\varepsilon}_{\sf core}$}$=|f_{\sf core} - f_{\sf full}|/f_{\sf full}$ (cf. \lemref{coreset:error:approx}), and Mean Absolute Deviations (MAD): $\text{mad}_{\sf core}(\alpha)=\frac{1}{n}\sum (|a_{\sf full}-a_{\sf core}| + |b_{\sf full}-b_{\sf core}|)$; $\text{mad}_{\sf core}(\theta)=\frac{1}{m}\sum |\theta_{\sf full}-\theta_{\sf core}|$. $\text{mad}_{\sf L1s}(\alpha)=\frac{1}{n}\sum (|a_{\sf full}-a_{\sf L1s}| + |b_{\sf full}-b_{\sf L1s}|)$; $\text{mad}_{\sf L1s}(\theta)=\frac{1}{m}\sum |\theta_{\sf full}-\theta_{\sf L1s}|$.
}
	\label{tab:results_l1score}
	\begin{center}
		\begin{tabular}{ r r r| c c r | c c r}
    \multicolumn{3}{c}{}& \multicolumn{3}{c}{{\bf $\ell_1$-score sampling coresets}}& \multicolumn{3}{c}{{\bf IRT coresets}} \\
		\hline
		{$\mathbf n$} & {$\mathbf m$} & {$\mathbf k$} &  { $\textbf{mad}_{\sf L1s}(\alpha)$} & {$\textbf{mad}_{\sf L1s}(\theta)$} & {\textbf{r.err.} $\hat{\varepsilon}_{\sf L1s}$} & {$\textbf{mad}_{\sf core}(\alpha)$} & { $\textbf{mad}_{\sf core}(\theta)$}  & {\textbf{r.err.} $\hat{\varepsilon}_{\sf core}$} \\ \hline \hline
		$50\,000$ & $100$ & $100$ & $1.150$ & $0.045$ & $0.12357$ & $1.108$ & $0.045$ & $0.13452$ \\ \hline
		$50\,000$ & $200$ & $500$ & $0.466$ & $0.009$ & $0.04835$ & $0.508$ & $0.011$ & $0.05214$ \\ \hline
		$50\,000$ & $500$ & $500$ & $0.494$ & $0.005$ & $0.04893$ & $0.525$ & $0.008$ & $0.04803$  \\ \hline\hline
		$100\,000$ & $100$ & $100$ & $1.149$ & $0.036$ & $0.10821$ & $0.970$ & $0.040$ & $0.14776$  \\ \hline
		$100\,000$ & $200$ & $1\,000$ & $0.377$ & $0.009$ & $0.03051$ & $0.379$ & $0.008$ & $0.03404$  \\ \hline
		$100\,000$ & $500$ & $1\,000$ & $0.353$ & $0.005$ & $0.03865$ & $0.345$ & $0.005$ & $0.03140$ \\ \hline\hline
		$200\,000$ & $100$ & $1\,000$ & $0.323$ & $0.006$ & $0.03437$ & $0.374$ & $0.008$ & $0.04400$ \\ \hline
		$200\,000$ & $200$ & $2\,000$ & $0.290$ & $0.003$ & $0.02033$ & $0.248$ & $0.003$ & $0.02375$ \\ \hline
		$200\,000$ & $500$ & $2\,000$ & $0.252$ & $0.002$ & $0.02683$ & $0.268$ & $0.002$ & $0.03013$ \\ \hline\hline
		$500\,000$ & $100$ & $5\,000$ & $0.183$ & $0.002$ & $0.01142$ & $0.142$ & $0.002$ & $0.01399$ \\ \hline
		$500\,000$ & $200$ & $5\,000$ & $0.169$ & $0.002$ & $0.01371$ & $0.180$ & $0.002$ & $0.01689$ \\ \hline
		$500\,000$ & $500$ & $5\,000$ & $0.166$ & $0.001$ & $0.01265$ & $0.171$ & $0.001$ & $0.01445$ \\ \hline\hline
	\end{tabular}
	\end{center}
\end{table*}

\begin{table*}[ht!]
\caption{
2PL experiments on real-world SHARE data for $\ell_1$-leverage score sampling coresets, vs. IRT coresets. 
Comparison of the best solutions found taken across $10$ repetitions (each running $50$ iterations of the main loop) per data set for different configurations of the data dimensions: the number of items $m$, the number of examinees $n$, and the $\ell_1$-leverage score sample/coreset size $k$.
Let $f_{\sf full}$, $f_{{\sf L1s}(j)}$ and $f_{{\sf core}(j)}$ be the optimal values of the loss function on the input, on the distance sample for the $j$-th repetition, and on the coreset for the $j$-th repetition, respectively. Let $f_{\sf L1s} = \min_j f_{{\sf L1s}(j)}$, and $f_{\sf core} = \min_j f_{{\sf core}(j)}$.
Comparison made w.r.t. Relative errors: {\bf r.err. $\hat{\varepsilon}_{\sf L1s}$}$=|f_{\sf L1s} - f_{\sf full}|/f_{\sf full}$, {\bf r.err. $\hat{\varepsilon}_{\sf core}$}$=|f_{\sf core} - f_{\sf full}|/f_{\sf full}$ (cf. \lemref{coreset:error:approx}), and Mean Absolute Deviations (MAD): $\text{mad}_{\sf core}(\alpha)=\frac{1}{n}\sum (|a_{\sf full}-a_{\sf core}| + |b_{\sf full}-b_{\sf core}|)$; $\text{mad}_{\sf core}(\theta)=\frac{1}{m}\sum |\theta_{\sf full}-\theta_{\sf core}|$. $\text{mad}_{\sf L1s}(\alpha)=\frac{1}{n}\sum (|a_{\sf full}-a_{\sf L1s}| + |b_{\sf full}-b_{\sf L1s}|)$; $\text{mad}_{\sf L1s}(\theta)=\frac{1}{m}\sum |\theta_{\sf full}-\theta_{\sf L1s}|$.
}
	\label{tab:results_l1score:share}
	\begin{center}
		\begin{tabular}{ r r r| c c r | c c r}
    \multicolumn{3}{c}{}& \multicolumn{3}{c}{{\bf $\ell_1$-score sampling coresets}}& \multicolumn{3}{c}{{\bf IRT coresets}} \\
		\hline
		{$\mathbf n$} & {$\mathbf m$} & {$\mathbf k$} &  { $\textbf{mad}_{\sf L1s}(\alpha)$} & {$\textbf{mad}_{\sf L1s}(\theta)$} & {\textbf{r.err.} $\hat{\varepsilon}_{\sf L1s}$} & {$\textbf{mad}_{\sf core}(\alpha)$} & { $\textbf{mad}_{\sf core}(\theta)$}  & {\textbf{r.err.} $\hat{\varepsilon}_{\sf core}$} \\ \hline \hline
$138\,997$ & $10$ & $500$ & $0.875$ & $0.107$ & $0.13267$ & $0.770$ & $0.090$ & $0.11347$ 
		\\ \hline
$138\,997$ & $10$ & $1\,000$  & $0.320$ & $0.030$ & $0.09216$ & $0.307$ & $0.040$ & $0.06193$ 
		\\ \hline
$138\,997$ & $10$ & $2\,000$ & $0.172$ & $0.023$ & $0.04204$ & $0.129$ & $0.015$ &  $0.04263$ 
		\\ \hline
$138\,997$ & $10$ & $4\,000$ & $0.179$ & $0.027$ & $0.02958$ & $0.108$ & $0.013$ & $0.02791$ 
		\\ \hline
$138\,997$ & $10$ & $6\,000$ & $0.083$ & $0.010$ & $0.02851$ & $0.095$ & $0.007$ & $0.03546$ 
		\\ \hline
$138\,997$ & $10$ & $8\,000$ & $0.080$ & $0.005$ & $0.01958$ & $0.061$ & $0.007$ & $0.01935$ 
		\\ \hline
$138\,997$ & $10$ & $10\,000$ & $0.070$ & $0.008$ & $0.01386$ & $0.092$ & $0.014$ & $0.02713$ 
		\\ \hline
$138\,997$ & $10$ & $20\,000$ & $0.044$ & $0.004$ & $0.01200$ & $0.045$ & $0.003$ & $0.01415$ 
        \\ \hline
\end{tabular}
	\end{center}
\end{table*}

\begin{table*}[ht!]
\caption{
2PL experiments on real-world NEPS data for $\ell_1$-leverage score sampling coresets, vs. IRT coresets. 
Comparison of the best solutions found taken across $10$ repetitions (each running $50$ iterations of the main loop) per data set for different configurations of the data dimensions: the number of items $m$, the number of examinees $n$, and the $\ell_1$-leverage score sample/coreset size $k$.
Let $f_{\sf full}$, $f_{{\sf L1s}(j)}$ and $f_{{\sf core}(j)}$ be the optimal values of the loss function on the input, on the distance sample for the $j$-th repetition, and on the coreset for the $j$-th repetition, respectively. Let $f_{\sf L1s} = \min_j f_{{\sf L1s}(j)}$, and $f_{\sf core} = \min_j f_{{\sf core}(j)}$.
Comparison made w.r.t. Relative errors: {\bf r.err. $\hat{\varepsilon}_{\sf L1s}$}$=|f_{\sf L1s} - f_{\sf full}|/f_{\sf full}$, {\bf r.err. $\hat{\varepsilon}_{\sf core}$}$=|f_{\sf core} - f_{\sf full}|/f_{\sf full}$ (cf. \lemref{coreset:error:approx}), and Mean Absolute Deviations (MAD): $\text{mad}_{\sf core}(\alpha)=\frac{1}{n}\sum (|a_{\sf full}-a_{\sf core}| + |b_{\sf full}-b_{\sf core}|)$; $\text{mad}_{\sf core}(\theta)=\frac{1}{m}\sum |\theta_{\sf full}-\theta_{\sf core}|$. $\text{mad}_{\sf L1s}(\alpha)=\frac{1}{n}\sum (|a_{\sf full}-a_{\sf L1s}| + |b_{\sf full}-b_{\sf L1s}|)$; $\text{mad}_{\sf L1s}(\theta)=\frac{1}{m}\sum |\theta_{\sf full}-\theta_{\sf L1s}|$.
}
	\label{tab:results_l1score:neps}
	\begin{center}
		\begin{tabular}{ r r r| c c r | c c r}
    \multicolumn{3}{c}{}& \multicolumn{3}{c}{{\bf $\ell_1$-score sampling coresets}}& \multicolumn{3}{c}{{\bf IRT coresets}} \\
		\hline
		{$\mathbf n$} & {$\mathbf m$} & {$\mathbf k$} &  { $\textbf{mad}_{\sf L1s}(\alpha)$} & {$\textbf{mad}_{\sf L1s}(\theta)$} & {\textbf{r.err.} $\hat{\varepsilon}_{\sf L1s}$} & {$\textbf{mad}_{\sf core}(\alpha)$} & { $\textbf{mad}_{\sf core}(\theta)$}  & {\textbf{r.err.} $\hat{\varepsilon}_{\sf core}$} \\ \hline \hline
$11\,532$ & $88$ & $100$ & $1.388$ & $0.191$ & $0.06670$ & $1.477$ & $0.171$ & $0.09335$
		\\ \hline
$11\,532$ & $88$ & $200$ & $1.040$ & $0.132$ & $0.05428$ & $0.930$ & $0.142$ & $0.07134$
		\\ \hline
$11\,532$ & $88$ & $500$ & $0.559$ & $0.082$ & $0.02556$ & $0.499$ & $0.075$ & $0.03795$
		\\ \hline
$11\,532$ & $88$ & $750$ & $0.503$ & $0.061$ & $0.01956$ & $0.432$ & $0.062$ & $0.02675$
		\\ \hline
$11\,532$ & $88$ & $1\,000$ & $0.316$ & $0.040$ & $0.02133$ & $0.320$ & $0.045$ & $0.02007$
		\\ \hline
$11\,532$ & $88$ & $2\,000$ & $0.207$ & $0.023$ & $0.00468$ & $0.182$ & $0.026$ & $0.00506$
		\\ \hline
$11\,532$ & $88$ & $5\,000$ & $0.097$ & $0.006$ & $0.00162$ & $0.101$ & $0.015$ & $0.00147$
		\\ \hline
$11\,532$ & $88$ & $10\,000$ & $0.077$ & $0.010$ & $0.00194$ & $0.071$ & $0.012$ & $0.00094$
		\\ \hline
\end{tabular}
	\end{center}
\end{table*}

\clearpage 

\subsection{\texorpdfstring{$\ell_1$}{L1}-Lewis Weight Sampling}
\label{app:additional:l1lewis}

\begin{table*}[ht!]
\caption{2PL experiments on synthetic data for Lewis weights sampling coresets, vs. IRT coresets. 
Comparison of the best solutions found taken across $10$ repetitions (each running $50$ iterations of the main loop) per data set for different configurations of the data dimensions: the number of items $m$, the number of examinees $n$, and the Lewis weights sample/coreset size $k$. 
Let $f_{\sf full}$, $f_{{\sf lewis}(j)}$ and $f_{{\sf core}(j)}$ be the optimal values of the loss function on the input, on the distance sample for the $j$-th repetition, and on the coreset for the $j$-th repetition, respectively. Let $f_{\sf lewis} = \min_j f_{{\sf lewis}(j)}$, and $f_{\sf core} = \min_j f_{{\sf core}(j)}$.
Comparison made w.r.t. Relative errors: {\bf r.err. $\hat{\varepsilon}_{\sf lewis}$}$=|f_{\sf lewis} - f_{\sf full}|/f_{\sf full}$, {\bf r.err. $\hat{\varepsilon}_{\sf core}$}$=|f_{\sf core} - f_{\sf full}|/f_{\sf full}$ (cf. \lemref{coreset:error:approx}), and Mean Absolute Deviations (MAD): $\text{mad}_{\sf core}(\alpha)=\frac{1}{n}\sum (|a_{\sf full}-a_{\sf core}| + |b_{\sf full}-b_{\sf core}|)$; $\text{mad}_{\sf core}(\theta)=\frac{1}{m}\sum |\theta_{\sf full}-\theta_{\sf core}|$. $\text{mad}_{\sf lewis}(\alpha)=\frac{1}{n}\sum (|a_{\sf full}-a_{\sf lewis}| + |b_{\sf full}-b_{\sf lewis}|)$; $\text{mad}_{\sf lewis}(\theta)=\frac{1}{m}\sum |\theta_{\sf full}-\theta_{\sf lewis}|$.
}
	\label{tab:results_lewisweights}
	\begin{center}
		\begin{tabular}{ r r r| c c r | c c r}
    \multicolumn{3}{c}{}& \multicolumn{3}{c}{{\bf Lewis weights sampling coresets}}& \multicolumn{3}{c}{{\bf IRT coresets}} \\
		\hline
		{$\mathbf n$} & {$\mathbf m$} & {$\mathbf k$} &  { $\textbf{mad}_{\sf lewis}(\alpha)$} & {$\textbf{mad}_{\sf lewis}(\theta)$} & {\textbf{r.err.} $\hat{\varepsilon}_{\sf lewis}$} & {$\textbf{mad}_{\sf core}(\alpha)$} & { $\textbf{mad}_{\sf core}(\theta)$}  & {\textbf{r.err.} $\hat{\varepsilon}_{\sf core}$} \\ \hline \hline
		$50\,000$ & $100$ & $100$ & $1.011$ & $0.038$ & $0.10458$ & $1.108$ & $0.045$ & $0.13452$ \\ \hline
		$50\,000$ & $200$ & $500$ & $0.515$ & $0.011$ & $0.05234$ & $0.508$ & $0.011$ & $0.05214$ \\ \hline
		$50\,000$ & $500$ & $500$ & $0.481$ & $0.008$ & $0.05444$ & $0.525$ & $0.008$ & $0.04803$  \\ \hline\hline
		$100\,000$ & $100$ & $100$ & $1.149$ & $0.043$ & $0.09635$ & $0.970$ & $0.040$ & $0.14776$  \\ \hline
		$100\,000$ & $200$ & $1\,000$ & $0.342$ & $0.008$ & $0.02718$ & $0.379$ & $0.008$ & $0.03404$  \\ \hline
		$100\,000$ & $500$ & $1\,000$ & $0.338$ & $0.005$ & $0.03687$ & $0.345$ & $0.005$ & $0.03140$ \\ \hline\hline	
		$200\,000$ & $100$ & $1\,000$ & $0.378$ & $0.007$ & $0.03894$ & $0.374$ & $0.008$ & $0.04400$ \\ \hline
		$200\,000$ & $200$ & $2\,000$ & $0.311$ & $0.003$ & $0.02077$ & $0.248$ & $0.003$ & $0.02375$ \\ \hline
		$200\,000$ & $500$ & $2\,000$ & $0.257$ & $0.003$ & $0.02620$ & $0.268$ & $0.002$ & $0.03013$ \\ \hline\hline	
		$500\,000$ & $100$ & $5\,000$ & $0.169$ & $0.002$ & $0.01121$ & $0.142$ & $0.002$ & $0.01399$ \\ \hline
		$500\,000$ & $200$ & $5\,000$ & $0.164$ & $0.002$ & $0.01438$ & $0.180$ & $0.002$ & $0.01689$ \\ \hline
		$500\,000$ & $500$ & $5\,000$ & $0.165$ & $0.001$ & $0.01518$ & $0.171$ & $0.001$ & $0.01445$ \\ \hline\hline
	\end{tabular}
	\end{center}
\end{table*}

\begin{table*}[ht!]
\caption{
2PL experiments on real-world SHARE data for Lewis weights sampling coresets, vs. IRT coresets. 
Comparison of the best solutions found taken across $10$ repetitions (each running $50$ iterations of the main loop) per data set for different configurations of the data dimensions: the number of items $m$, the number of examinees $n$, and the Lewis weights sample/coreset size $k$. 
Let $f_{\sf full}$, $f_{{\sf lewis}(j)}$ and $f_{{\sf core}(j)}$ be the optimal values of the loss function on the input, on the distance sample for the $j$-th repetition, and on the coreset for the $j$-th repetition, respectively. Let $f_{\sf lewis} = \min_j f_{{\sf lewis}(j)}$, and $f_{\sf core} = \min_j f_{{\sf core}(j)}$.
Comparison made w.r.t. Relative errors: {\bf r.err. $\hat{\varepsilon}_{\sf lewis}$}$=|f_{\sf lewis} - f_{\sf full}|/f_{\sf full}$, {\bf r.err. $\hat{\varepsilon}_{\sf core}$}$=|f_{\sf core} - f_{\sf full}|/f_{\sf full}$ (cf. \lemref{coreset:error:approx}), and Mean Absolute Deviations (MAD): $\text{mad}_{\sf core}(\alpha)=\frac{1}{n}\sum (|a_{\sf full}-a_{\sf core}| + |b_{\sf full}-b_{\sf core}|)$; $\text{mad}_{\sf core}(\theta)=\frac{1}{m}\sum |\theta_{\sf full}-\theta_{\sf core}|$. $\text{mad}_{\sf lewis}(\alpha)=\frac{1}{n}\sum (|a_{\sf full}-a_{\sf lewis}| + |b_{\sf full}-b_{\sf lewis}|)$; $\text{mad}_{\sf lewis}(\theta)=\frac{1}{m}\sum |\theta_{\sf full}-\theta_{\sf lewis}|$.
}
	\label{tab:results_lewisweights:share}
	\begin{center}
		\begin{tabular}{ r r r| c c r | c c r}
    \multicolumn{3}{c}{}& \multicolumn{3}{c}{{\bf Lewis weights sampling coresets}}& \multicolumn{3}{c}{{\bf IRT coresets}} \\
		\hline
		{$\mathbf n$} & {$\mathbf m$} & {$\mathbf k$} &  { $\textbf{mad}_{\sf lewis}(\alpha)$} & {$\textbf{mad}_{\sf lewis}(\theta)$} & {\textbf{r.err.} $\hat{\varepsilon}_{\sf lewis}$} & {$\textbf{mad}_{\sf core}(\alpha)$} & { $\textbf{mad}_{\sf core}(\theta)$}  & {\textbf{r.err.} $\hat{\varepsilon}_{\sf core}$} \\ \hline \hline
$138\,997$ & $10$ & $500$ & $0.400$ & $0.057$ & $0.07814$ & $0.770$ & $0.090$ & $0.11347$ 
		\\ \hline
$138\,997$ & $10$ & $1\,000$  & $0.277$ & $0.019$ & $0.10915$ & $0.307$ & $0.040$ & $0.06193$ 
		\\ \hline
$138\,997$ & $10$ & $2\,000$ & $0.467$ & $0.053$ & $0.03697$ & $0.129$ & $0.015$ &  $0.04263$ 
		\\ \hline
$138\,997$ & $10$ & $4\,000$ & $0.147$ & $0.015$ & $0.02871$ & $0.108$ & $0.013$ & $0.02791$ 
		\\ \hline
$138\,997$ & $10$ & $6\,000$ & $0.119$ & $0.011$ & $0.02210$ & $0.095$ & $0.007$ & $0.03546$ 
		\\ \hline
$138\,997$ & $10$ & $8\,000$ & $0.086$ & $0.011$ & $0.01785$ & $0.061$ & $0.007$ & $0.01935$ 
		\\ \hline
$138\,997$ & $10$ & $10\,000$ & $0.053$ & $0.005$ & $0.01543$ & $0.092$ & $0.014$ & $0.02713$ 
		\\ \hline
$138\,997$ & $10$ & $20\,000$ & $0.045$ & $0.007$ & $0.01398$ & $0.045$ & $0.003$ & $0.01415$ 
        \\ \hline
\end{tabular}
	\end{center}
\end{table*}

\begin{table*}[ht!]
\caption{
2PL experiments on real-world NEPS data for Lewis weights sampling coresets, vs. IRT coresets. 
Comparison of the best solutions found taken across $10$ repetitions (each running $50$ iterations of the main loop) per data set for different configurations of the data dimensions: the number of items $m$, the number of examinees $n$, and the Lewis weights sample/coreset size $k$. 
Let $f_{\sf full}$, $f_{{\sf lewis}(j)}$ and $f_{{\sf core}(j)}$ be the optimal values of the loss function on the input, on the distance sample for the $j$-th repetition, and on the coreset for the $j$-th repetition, respectively. Let $f_{\sf lewis} = \min_j f_{{\sf lewis}(j)}$, and $f_{\sf core} = \min_j f_{{\sf core}(j)}$.
Comparison made w.r.t. Relative errors: {\bf r.err. $\hat{\varepsilon}_{\sf lewis}$}$=|f_{\sf lewis} - f_{\sf full}|/f_{\sf full}$, {\bf r.err. $\hat{\varepsilon}_{\sf core}$}$=|f_{\sf core} - f_{\sf full}|/f_{\sf full}$ (cf. \lemref{coreset:error:approx}), and Mean Absolute Deviations (MAD): $\text{mad}_{\sf core}(\alpha)=\frac{1}{n}\sum (|a_{\sf full}-a_{\sf core}| + |b_{\sf full}-b_{\sf core}|)$; $\text{mad}_{\sf core}(\theta)=\frac{1}{m}\sum |\theta_{\sf full}-\theta_{\sf core}|$. $\text{mad}_{\sf lewis}(\alpha)=\frac{1}{n}\sum (|a_{\sf full}-a_{\sf lewis}| + |b_{\sf full}-b_{\sf lewis}|)$; $\text{mad}_{\sf lewis}(\theta)=\frac{1}{m}\sum |\theta_{\sf full}-\theta_{\sf lewis}|$.
}
	\label{tab:results_lewisweights:neps}
	\begin{center}
		\begin{tabular}{ r r r| c c r | c c r}
    \multicolumn{3}{c}{}& \multicolumn{3}{c}{{\bf Lewis weights sampling coresets}}& \multicolumn{3}{c}{{\bf IRT coresets}} \\
		\hline
		{$\mathbf n$} & {$\mathbf m$} & {$\mathbf k$} &  { $\textbf{mad}_{\sf lewis}(\alpha)$} & {$\textbf{mad}_{\sf lewis}(\theta)$} & {\textbf{r.err.} $\hat{\varepsilon}_{\sf lewis}$} & {$\textbf{mad}_{\sf core}(\alpha)$} & { $\textbf{mad}_{\sf core}(\theta)$}  & {\textbf{r.err.} $\hat{\varepsilon}_{\sf core}$} \\ \hline \hline
$11\,532$ & $88$ & $100$ & $1.276$ & $0.165$ & $0.09161$ & $1.477$ & $0.171$ & $0.09335$
		\\ \hline
$11\,532$ & $88$ & $200$ & $0.916$ & $0.163$ & $0.05222$ & $0.930$ & $0.142$ & $0.07134$
		\\ \hline
$11\,532$ & $88$ & $500$ & $0.563$ & $0.082$ & $0.02108$ & $0.499$ & $0.075$ & $0.03795$
		\\ \hline
$11\,532$ & $88$ & $750$ & $0.465$ & $0.070$ & $0.02639$ & $0.432$ & $0.062$ & $0.02675$
		\\ \hline
$11\,532$ & $88$ & $1\,000$ & $0.323$ & $0.051$ & $0.01581$ & $0.320$ & $0.045$ & $0.02007$
		\\ \hline
$11\,532$ & $88$ & $2\,000$ & $0.213$ & $0.025$ & $0.00563$ & $0.182$ & $0.026$ & $0.00506$
		\\ \hline
$11\,532$ & $88$ & $5\,000$ & $0.105$ & $0.008$ & $0.00011$ & $0.101$ & $0.015$ & $0.00147$
		\\ \hline
$11\,532$ & $88$ & $10\,000$ & $0.063$ & $0.013$ & $0.00174$ & $0.071$ & $0.012$ & $0.00094$
		\\ \hline
\end{tabular}
	\end{center}

\end{table*}

\clearpage

\section{ON THE \texorpdfstring{$\mu$}{μ}-COMPLEXITY OF THE INPUT}
\label{appendix:estimation:mu} 

In the theoretical part of this paper, we assumed the $\mu$-complexity parameter to be a constant. An interested reader could ask: how large is this constant in reality, i.e., in the data sets we used to perform our experiments? 

In \citep{MunteanuSSW18} the value of $\mu_1$ was approximated up to a factor $\text{poly}(d)$, in time polynomial in $n$ and $d$ using linear programming, where $d$ is the dimension of the parameter space.
Recently, \citet{DexterKRD2023} showed how to compute $\mu_1$ exactly using linear programming in polynomial time.

In this work, we have $d=2$ for both, 2PL and 3PL models, and the definition of $\mu$ is extended to be the maximum of $\mu_0$ and $\mu_1$ across a wide sequence of iterations (cf.~\secref{preliminaries}). Calculating $\mu$ would thus require to solve a huge number of LPs, which is not viable in our setting. 

A good and fast approximation for $\mu$ can be obtained by evaluating it on the optimal solutions which need to be calculated anyway for the sake of comparison.
Intuitively, this works since logistic regression is tending to minimize the positive part (which corresponds to misclassifications), and to maximize the negative part (which corresponds to correct classifications). This heuristic approach is useful and in practice but it gives only a lower bound for $\mu$ which can in principle be far from the actual value.

Since we use coresets only to reduce the number of examinees in our experiments (cf.~Equation~(\ref{eqn:2pl:logistic:b}) for the 2PL model, resp. optimizing $f(\alpha_i,c_i\mid B)$ in the 3PL model, cf.~\corref{3pl:main:otherdirection}), we report only the values of $\mu_0$ and $\mu_1$ for this case. That is, when $X$ in the definition of $\mu$ depends on the labels $Y$ and the ability parameters $B$ of the complete input, while the supremum is taken over the item parameter vectors in $A$.

We present in \cref{tab:results_appendix:mu} our estimates on $\mu$: the median, the mean and the maximum over all possible items. On average the values of $\mu_0$ and $\mu_1$ are small constants ranging between $2$ and $30$. Only in rare cases $\mu$ takes large maximum values for some label vectors. We checked the corresponding labels, and found that the large values occur only in degenerate cases, in which the maximum likelihood estimator of the model is undefined, for example when an item is solved by all or none of the students. 

\begin{table*}[hp!]
\caption{The approximated values of the parameters $\mu_0$ and $\mu_1$: the mean, median and maximum values over all items $i\in[m]$, where the abilities of $n$ examinees and the respective labels are used as the input and for each $i$ the supremum is taken over item parameters $\alpha_i\in\mathbb{R}^2$.}
	\label{tab:results_appendix:mu}
	\begin{center}
	\begin{tabular}{ l r r || r r | r r | r r }
\multicolumn{3}{c}{}    & \multicolumn{2}{c}{{\bf Mean}} & \multicolumn{2}{c}{{\bf Median}} & \multicolumn{2}{c}{{\bf Maximum}} \\ \hline
Experiment& $n$ & $m$ & $\mu_0$ & $\mu_1$ & $\mu_0$ & $\mu_1$ & $\mu_0$ & $\mu_1$ \\ \hline \hline
2PL-Synt& $50\,000$ & $100$ & $7.85$ & $25.65$ & $5.80$ & $18.09$ & $48.21$ & $165.28$ \\ \hline
2PL-Synt& $50\,000$ & $200$ & $9.56$ & $29.87$ & $6.03$ & $17.57$ & $134.50$ & $377.11$ \\ \hline
2PL-Synt& $50\,000$ & $500$ & $10.41$ & $31.31$ & $5.95$ & $17.20$ & $305.75$ & $703.92$ \\ \hline
2PL-Synt& $100\,000$ & $100$ & $7.86$ & $25.79$ & $5.85$ & $18.16$ & $48.48$ & $164.74$ \\ \hline
2PL-Synt& $100\,000$ & $200$ & $9.41$ & $28.57$ & $5.79$ & $16.90$ & $124.16$ & $329.13$ \\ \hline
2PL-Synt& $100\,000$ & $500$ & $9.65$ & $29.99$ & $5.90$ & $17.12$ & $119.34$ & $296.16$ \\ \hline
2PL-Synt& $200\,000$ & $100$ & $7.84$ & $25.70$ & $5.75$ & $17.72$ & $48.75$ & $164.23$ \\ \hline
2PL-Synt& $200\,000$ & $200$ & $10.05$ & $29.10$ & $5.95$ & $17.50$ & $372.83$ & $715.77$ \\ \hline
2PL-Synt& $200\,000$ & $500$ & $8.98$ & $27.38$ & $5.84$ & $16.95$ & $282.29$ & $557.48$ \\ \hline
2PL-Synt& $500\,000$ & $100$ & $7.83$ & $25.67$ & $5.76$ & $17.82$ & $47.79$ & $161.16$ \\ \hline
2PL-Synt& $500\,000$ & $200$ & $9.65$ & $29.94$ & $6.02$ & $17.60$ & $140.80$ & $383.50$ \\ \hline
2PL-Synt& $500\,000$ & $500$ & $8.90$ & $27.61$ & $5.82$ & $16.76$ & $148.79$ & $427.87$ \\ \hline 
2PL-Synt& $500\,000$ & $5\,000$	& $11.22$ & $34.18$ & $6.29$ & $19.10$ & $1\,765.78$ & $2\,503.41$ \\ \hline
\hline
2PL-SHARE& $138\,997$ & $10$ & $12.87$ & $121.51$ & $11.86$ & $63.58$ & $33.21$ & $382.89$ \\ \hline
2PL-NEPS& $11\,532$ & $88$ & $7.05$ & $14.02$ & $3.02$ & $5.16$ & $58.14$ & $153.18$ \\ \hline
\hline
3PL-Synt& $50\,000$ & $100$ & $3.39$ & $3.36$& $2.00$ & $2.01$ & $38.00$ & $37.03$ \\ \hline
3PL-Synt& $50\,000$ & $200$ & $5.23$ & $5.19$ & $2.15$ & $2.15$ & $120.95$ & $118.47$ \\ \hline
3PL-Synt& $100\,000$ & $100$ & $3.38$ & $3.35$ &$2.00$ & $2.00$ & $37.99$ & $36.90$ \\ \hline
3PL-Synt& $100\,000$ & $200$ & $5.30$ & $5.25$ & $2.19$ & $2.19$ & $136.93$ & $133.64$ \\ \hline
3PL-Synt& $200\,000$ & $100$ & $3.40$ & $3.37$ & $2.01$ & $2.01$ & $38.38$ & $37.24$ \\ \hline
\hline
\end{tabular}
	\end{center}
\end{table*}

\end{document}